\newtheorem*{theorem*}{Theorem} 
\newtheorem*{proposition*}{Proposition}
\newtheorem*{corollary*}{Corollary}
\title[Optimal Scheduling of Dynamic Transport]{Optimal Scheduling of Dynamic Transport}
\begin{document}

\maketitle

\begin{abstract}
  Flow-based methods for sampling and generative modeling use continuous-time dynamical systems to represent a {transport map} that pushes forward a source measure to a target measure.
The introduction of a time axis provides considerable design freedom, and a central question is how to exploit this freedom. Though many popular methods seek straight line (i.e., zero acceleration) trajectories, we show here that a specific class of ``curved'' trajectories can significantly improve approximation and learning. In particular, we consider the unit-time interpolation of any given transport map $T$ and seek the schedule $\tau: [0,1] \to [0,1]$ that minimizes the spatial Lipschitz constant of the corresponding velocity field over all times $t \in [0,1]$. This quantity is crucial as it allows for control of the approximation error when the velocity field is learned from data. We show that, for a broad class of source/target measures and transport maps $T$, the \emph{optimal schedule} can be computed in closed form, and that the resulting optimal Lipschitz constant is \emph{exponentially smaller} than that induced by an identity schedule  (corresponding to, for instance, the Wasserstein geodesic). 
  Our proof technique relies on the calculus of variations and $\Gamma$-convergence, allowing us to approximate the aforementioned degenerate objective by a family of smooth, tractable problems.
\end{abstract}

\begin{keywords}
  Measure transport, optimal transport, calculus of variations, $\Gamma$-convergence, approximation theory, flow-based models
\end{keywords}

\medskip

\section{Introduction}
\label{Introduction}

Characterizing complex probability distributions is a central task in computational statistics and machine learning. A typical goal is to draw samples from a distribution of interest, given access to its unnormalized density or to a finite set of training samples. %
Alternatively, given training samples, one might wish to estimate the density function itself. \textit{Transportation of measure} provides a unifying approach to these goals \citep{EricDensityEstimation2, MoselhyM12/BayesianInferenceOT, EricDensityEstimation1, RezendeM15/VariationalNormalizingFlow,MMPS16, nips/KingmaD18/Glow, papamakarios2021normalizing}.
The essential idea of transport is to represent $\nu$, the target distribution of interest, as the \textit{pushforward} of a tractable source distribution $\mu$ by some map $T$, i.e., $T_\sharp\mu = \nu$. Sampling then follows by evaluating $T$ on samples from $\mu$, and simple expressions for the density of  $T_\sharp\mu$ are available when $T$ is invertible and differentiable \citep{wang2022MiniMaxDensityEstimation}.

To realize these approaches, we must find representations of transport maps $T$ that are sufficiently expressive and that allow easy computation of Jacobian determinants. To this end, a variety of parameterizations have been proposed, using classical function approximation schemes (such as polynomials or wavelets) \citep{focm/BaptistaMZ24}, neural networks \citep{icml/HuangKLC18/NeuralAutoFlow,wehenkel2019unconstrained,/nips/DurkanB0P19/NeuralSplineFlow}, and combinations thereof  \citep{icml/JainiSY19/SOSFlow}. Generally speaking, we can divide these representations into two categories:  \textit{static approaches} that represent the displacement from $x$ to $T(x)$ directly \citep{RezendeM15/VariationalNormalizingFlow,icml/HuangKLC18/NeuralAutoFlow,icml/JainiSY19/SOSFlow,wang2022MiniMaxDensityEstimation}, and \textit{dynamic approaches} that employ evolution over some fictitious time \citep{chen2018neuralODE, iclr/Grathwohl19/FFJORD,onken2021OTFlow,/iclr/LipmanCBNL23/FlowMatching,Liu2022FlowStraight, /corr/abs-2209-15571/EricStochasticInterpolant}. The latter is our focus here.

In the dynamical approach, a recurring structure is to represent the transport as the flow map of an ODE system:
\begin{equation}\label{eq:ODE-geneal-form}
    \begin{cases}
      \frac{d}{dt}X(x,t) &= v(X(x,t), t),\qquad t\in [0,1],\\
      X(x,0) &= x.
    \end{cases} 
\end{equation} %
Under mild assumptions,
\eqref{eq:ODE-geneal-form}
is solvable
  and induces trajectories %
  $t \mapsto X(x,t)$ for each $x$ that satisfy
\begin{equation}\label{eq:flowmap}
  X(x,t) = x + \int_0^t v(X(x,s),s)ds,\qquad  t\in [0,1].
\end{equation}
We will refer to the mapping $x \mapsto X(x, t)$ as the time-$t$ \textit{flow map} of the ODE.
The transport map obtained by
evaluating this flow map at the terminal time $t=1$ is denoted by
$T^v \coloneqq X(\cdot,1)$. Such parameterizations of the transport map guarantee invertibility and allow efficient computation of the Jacobian log-determinant $ \log \det \nabla_x T^v$ through the \textit{instantaneous-change-of-variables-formula}  \citep{chen2018neuralODE}.

\subsection{Challenges and motivations}

Though the dynamic approach to transport has demonstrated considerable success in applications, it is also understood that the extra freedom afforded by the time coordinate raises important challenges. \citet{finlay2020HowToTrain,onken2021OTFlow} observe that unfettered freedom in the dynamics can be practically detrimental: for ODE models trained through likelihood maximization, or any other criterion that depends only on a divergence between $T^v_\sharp\mu$ and $\nu$, the trajectory of the particles for $t < 1$ does \textit{not} directly impact the training objective. As a result, one often obtains highly irregular paths for $t \in [0,1]$ that produce large approximation and numerical integration errors.

For perspective, we note that for any absolutely continuous pair of measures $(\mu, \nu)$, there are in general infinitely many transport maps $T$ that achieve $T_\sharp \mu = \nu$; moreover, there are infinitely many velocity fields $v$ whose time-one flow maps \eqref{eq:flowmap} realize any given $T$. It is thus natural to ask what is the ``best'' velocity field in some relevant sense, e.g., from the perspective of approximation and statistical learning. While this general question remains very challenging, here we focus on its ``inner layer'' and ask:
\textit{Given a transport map $T$, what is the \emph{optimal} velocity field realizing $T$ as the time-one flow map of the ODE system \eqref{eq:ODE-geneal-form}?}
We shall define a specific notion of optimality, and a specific class of velocity fields over which we optimize, below.

A widely used strategy to address irregular paths is to seek \emph{straight-line} trajectories interpolating $x$ and $T(x)$ for some transport map $T$; this can be achieved through regularization of a log-likelihood training objective \citep{finlay2020HowToTrain,onken2021OTFlow, approximation_paper} or via methods that learn the velocity field directly via least squares \citep{Liu2022FlowStraight}. Straight-line trajectories are desirable in the sense that they minimize numerical integration errors; an explicit Euler scheme with a step size of one is exact if the trajectories are exactly straight. Yet such constructions essentially do \textit{not} exploit the time axis, and invite the question of why be dynamic in the first place. We also note that straight-line trajectories may not create space-time velocity fields $v(x,t)$ that are easy to approximate: in general, near regions of strong concentration or dilation, the velocity field realizing straight-line trajectories can have very large spatial derivatives. Consider, as a simple example, transport of a diffuse distribution at $t=0$ to a highly concentrated distribution at $t=1$: the Lipschitz constant of  $v(\cdot, t)$ will be very large near $t=1$. Figure~\ref{fig:Gaussianflow} (left) illustrates this phenomenon for two univariate Gaussians.
Simplicity in the Lagrangian picture (e.g., zero acceleration) does not translate to simplicity in the Eulerian sense. 

\begin{figure}[h]
\centering
      \includegraphics[trim=0 190 0 210,clip,width=2.8in]{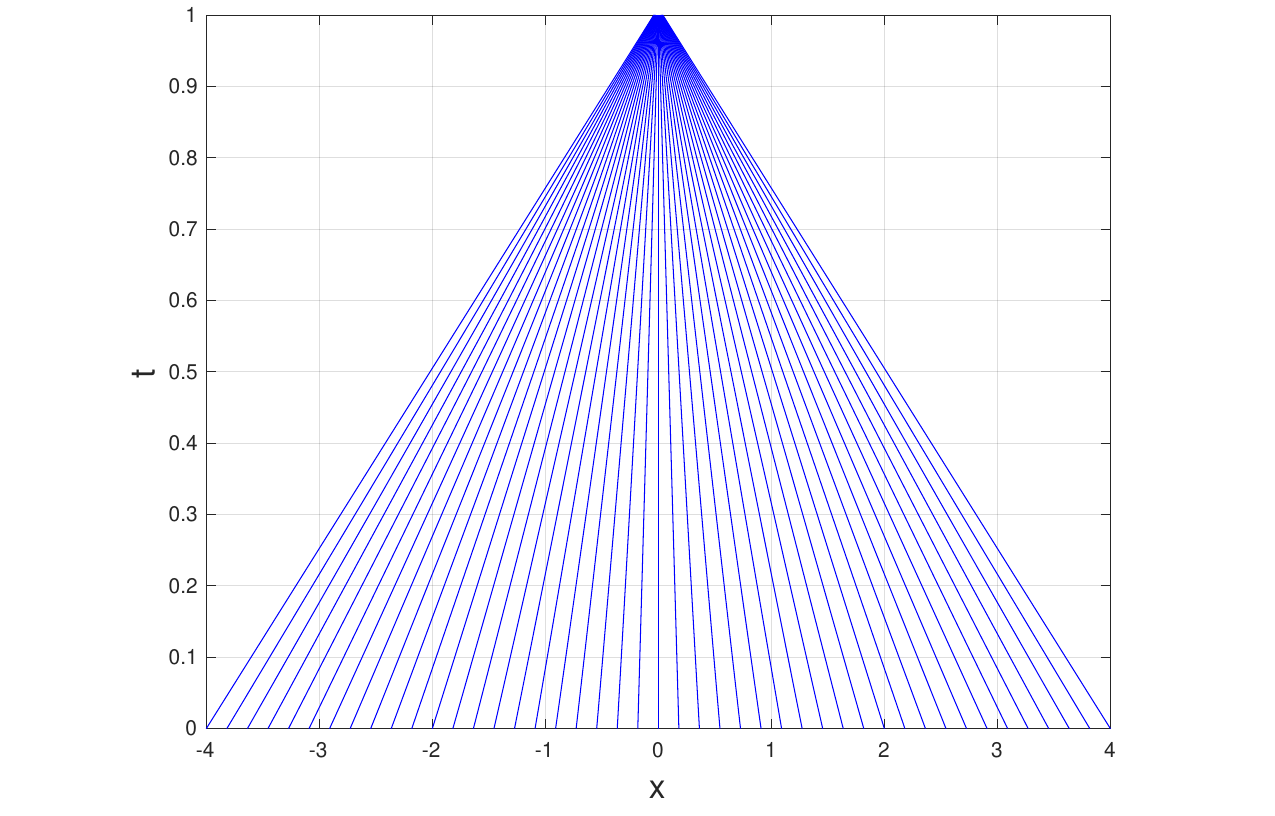}
      \includegraphics[trim=0 190 0 210,clip,width=2.8in]{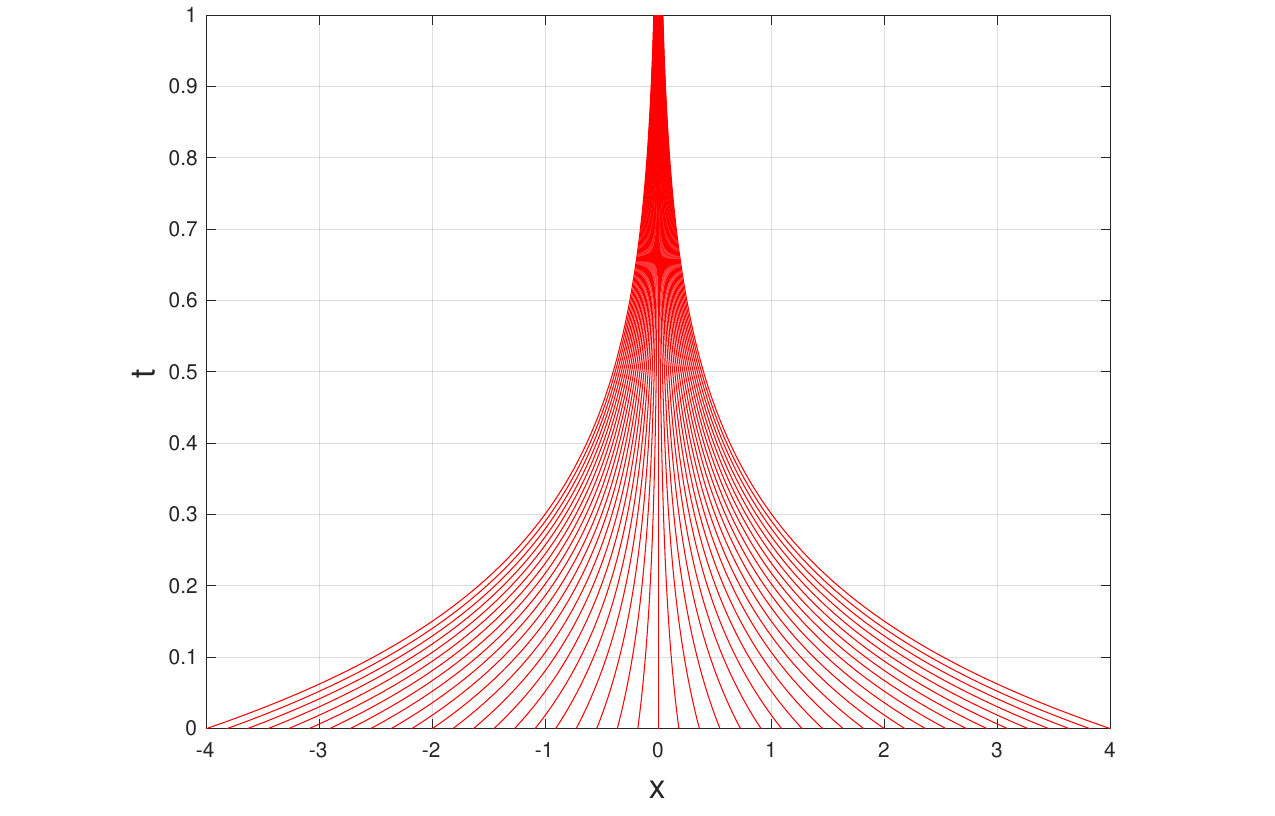}      
      \caption{Trajectories  $t \mapsto X(x, t)$ transporting $\mu = \mathcal{N}(0, \theta_1^2)$ at $t=0$ to $\nu = \mathcal{N}(0, \theta_2^2)$ at $t=1$, with $\theta_2 \ll \theta_1$.
       \textit{(left)}  Straight-line trajectories, with strong spatial derivatives $\partial_x v(x,t)$ apparent near $t=1$;  \textit{(right)} Trajectories produced by the optimal schedule derived in this work.}
  \label{fig:Gaussianflow}
   \end{figure}

The (spatial) Lipschitz constant of the velocity field $v$ plays a key role in stability analysis and approximation theory for ODE-based models: \citet{BentonErrorBoundsFlowMatching,StatisticalNeuralODE} show that distribution error in such models is controlled by the spatial Lipschitz constant of the underlying velocity field.  \citet{StatisticalNeuralODE, approximation_paper, /GuhringR21/NNapproximation1, /nn/Yarotsky17/NNapproximation2} additionally show that the error in a neural network approximation of any target velocity field $v$ is controlled by the spatial Lipschitz constant of the latter.

To make the question highlighted above precise, we will therefore focus on spatial Lipschitz constant of $v$, seeking to minimize it for any given transport $T$. We will do so without significantly increasing the small numerical integration error associated with straight-line trajectories (see Remark \ref{remark:approximation_error} below).
Our idea is intuitive. Given the one-parameter family of flow maps obtained by interpolating $T$ with the identity map, $X(\cdot, t): x \mapsto tT(x) + (1-t)x$ for $t \in [0,1]$, we introduce a \textit{schedule} $\tau: [0,1] \to [0,1]$ that is continuously differentiable and monotone increasing, while satisfying $\tau(0) = 0, \tau(1) = 1$.  The schedule captures {the rate at which we traverse time} and naturally defines a new family of flow maps $X_\tau(\cdot, t): x \mapsto \tau(t)\, T(x) + (1-\tau(t)) \, x$. This family satisfies $X_\tau(\cdot, 0) = \idmap$ and $X_\tau (\cdot, 1) = T$ and is indeed a dynamic transport scheme.
Now we consider velocity fields $v$ that realize $X_\tau(\cdot, t)$ via \eqref{eq:ODE-geneal-form}, and search over all admissible $\tau$ to find the schedule yielding the smallest spatial Lipschitz constant of $v$, uniformly over $t \in [0,1]$. Due to the chain rule, the velocity field that drives the dynamics is now scaled by $\dot{\tau}(t) \coloneqq \frac{d}{dt} \tau(t)$ at time $t$.
As we will show, the largest spatial Lipschitz constant of the original flow occurs at time $t=0$ and/or $t=1$, and a good $\tau$ slows down near these endpoints, effectively ``averaging out'' the spatial derivative over the entire time domain $[0,1]$. The trajectories traveled are still ``straight lines'' but with respect to $\tau$. %

To the best of our knowledge, the problem of finding schedules that maximize the {spatial regularity of a velocity field} has not yet been addressed in the literature. 
We will describe other related work below.

\subsection{Our contributions}
The main contributions of this work are as follows: 
\begin{itemize}
\item We formulate the \textit{optimal scheduling} problem: for linear interpolation of given a transport map $T$ and its realization via the flow map of an ODE system, we seek a schedule that minimizes a uniform (in time) bound on the spatial Lipschitz constant of the ODE velocity field $v$.
    
    \item We develop a variational approach to the problem of optimal schedules. Specifically, we show how one can tackle the problem of minimizing the desired Lipschitz bound by using the theory of $\Gamma$-convergence \citep{de1975tipo,dalmaso_gammaconvergence} to cast this non-differentiable problem as a limit of smooth objectives. Then we solve each problem by casting it in Lagrangian form and applying the direct method of the calculus of variations.  
    
    \item We show that the solution of the optimal scheduling problem satisfies an ODE that can be solved explicitly and in closed form. We then explore the properties of this optimal schedule $\tau_\infty$ in a series of examples and observe a universal behavior: $\tau_\infty$ has a \textit{sigmoid-like shape} with a unique inflection point $t_0 \in [0,1]$ and positive curvature in $[0,t_0]$.

    \item We show that, under the optimal schedule, the uniform Lipschitz bound decreases \emph{exponentially}, and discuss how this can dramatically improve the approximation power of flow models.
    
    \end{itemize}

While the focus of this paper is on the variational formulation and its analysis, we believe that the optimal schedules found here have many algorithmic implications. In practice, the transport map $T$ is unavailable; otherwise one could directly use $T$ for sampling and there is no need to learn its dynamical representation. As we will show, however, optimal schedules $\tau_\infty$ have a certain \textit{universal structure} that is independent of the underlying transport map $T$. As a result, they can be easily parameterized and made part of the learning problem. Moreover, the optimal schedules found in this work are \textit{independent of any training scheme}: the idea of introducing a time re-parameterized schedule can thus find use in  neural ODEs \citep{chen2018neuralODE}, flow matching \citep{/iclr/LipmanCBNL23/FlowMatching}, and stochastic interpolants \citep{/corr/abs-2209-15571/EricStochasticInterpolant}.

\subsection{Related work}
The problem of finding optimal schedules has drawn much attention lately, mostly in the context of \textit{diffusion models} \citep{/iclr/0011SKKEP21/DiffusionModels}. 
  Here we provide a brief review of relevant concurrent work. 

\citet{aranguri2025/OptScheduleInStochasticInterpolant} propose a time-dilated stochastic interpolant that effectively alters the noising schedule of a diffusion-based generative model; they show that the associated probability flow ODE can then be discretized in time with a non-uniform grid whose size is asymptotically independent of the dimension of the data-generating distribution. The notion of a noising schedule, while natural for diffusion paths, does not appear explicitly in our work. Moreover, the analysis in \citet{aranguri2025/OptScheduleInStochasticInterpolant} focuses on two-mode Gaussian mixture targets, in contrast with the much more general target distributions we consider here.

Again in the context of diffusion models, \citet{ScoreOptimalDiffusionSchedules} propose an algorithm for adapting the discretization of the backwards diffusion process, based on a notion of cost associated with each step. This cost is derived from a hypothetical predictor-corrector scheme for simulating the backwards diffusion equation; a discretization schedule that minimizes the sum of these costs, over all steps, is found iteratively. While the discretization found by this scheme is analogous to our $\tau$ applied to a uniform discretization of the unit time interval, the notion of cost is quite different from our Lipschitz objective.
\citet{icml/SabourFK24/OptScheduleinDiffusionModels} also consider the discretization of backwards diffusion paths, using numerical optimization over discretization points to minimize a certain upper bound on the Kullback--Leibler divergences between marginals of the continuous and discretized paths.

With the goal of solving stochastic optimal control problems, \citet{StochasticOptimalControlMatching1} propose a least squares-type objective that is analogous to a diffusion model loss function. To make the loss tractable, the authors introduce a family of matrix-valued time reparameterizations $M_t$ that are learned along with the control. The functions $t \mapsto M_t$ have the flavor of a generalized schedule, but no explicit form of the optimal choice is given, except to establish that it satisfies a complex integral equation.
\citet{StochasticOptimalControlMatching2} consider the reward fine-tuning of flow-based generative models, where, given a generative diffusion model that produces samples from some base distribution $\pi$ on $\R^d$, one wishes to modify it so that it samples from the tilted distribution $\pi \exp(r)$, for some reward function $r: \mathbb{R}^d \to \mathbb{R}$. The fine-tuning problem is recast as a stochastic optimal control problem, and the authors argue that a so-called ``memoryless'' noise schedule is necessary to achieve fine-tuning in this setting. Memorylessness turns out to be equivalent to a specific form of noise schedule for the forward process. While this schedule is given explicitly, the setting of diffusion models and the objective of fine-tuning are very different from those considered in this work.

Broadly, we comment that all these works have different notions of optimality, and none of them studies optimal schedules from the viewpoint of velocity approximation, as we do in this work.    

\section{Preliminaries}
\label{sec:preliminaries}

\subsection{Notation}
\label{subsec:notation}
Let $\N = \{ 1 , 2 , \dots \}$ be the set of positive integers and let $\N_\infty = \N \cup \{ \infty \}$.
For a closed subset $\Omega \subseteq \Rd$, we denote by $\mathcal{P}(\Omega)_{\textup{ac}}$ the set of Borel probability measures on $\Omega$ that are absolutely continuous with respect to the Lebesgue measure.
For $p \in \N$, we write $L^p(\Omega)$ for the space of measurable functions $f: \Omega \to \R$ such that $\int_{\Omega} |f|^p \, dx < \infty$, with $dx$ the Lebesgue measure. $L^\infty(\Omega)$ is then the space of measurable functions on $\Omega$ with finite essential supremum.
For $k \in \N$  and $p \in \N_\infty$, we write $W^{k,p}(\Omega)$ for the space of measurable functions $f: \Omega \to \R$ whose $k$-th order weak derivatives are in $L^p(\Omega)$; we also use the shorthand $H^k(\Omega) \coloneq W^{k,2}(\Omega)$.
Furthermore, for $k \in \N_\infty$, we write $C^k(\Omega)$ for the space of $k$-times continuously differentiable functions $f: \Omega \to \R$ whose $i$-th order derivatives, for $i \leq k$, are continuously extendable to the boundary $\partial \Omega$.  For $F: \Omega \to \Rd$
we write $F \in C^k(\Omega \, ; \Rd)$ to indicate that each component function $F_i$ of $F$ satisfies $F_i \in C^k(\Omega)$, for $1 \leq i \leq d$.
Finally, we introduce a few function spaces particular to our work. For $p \in \N$, let:
\begin{align}
    \label{eq:intro-T-2p}
    \mathcal{T}_{2p} &\coloneq \left\{ \tau \in W^{1,2p}([0,1]) \, : \, \tau(0) = 0 \, , \, \tau(1) = 1 \right\} \\
    \label{eq:intro-T-2p-b}
    \mathcal{T}_{2p}^{b} &\coloneq  \left\{ \tau \in \mathcal{T}_{2p} \, : \, 0 \leq \tau \leq 1 \right\} \\
    \label{eq:intro-T-infty}
    \mathcal{T}_{\infty} &\coloneq \left\{ \tau \in C^1([0,1]) \, : \, \tau(0) = 0 \, , \, \tau(1) = 1 \, , \, \dot{\tau} \geq 0 \right\}
\end{align}
where we note that \eqref{eq:intro-T-2p-b} is well defined due to the Sobolev embedding theorem; see, e.g., \citet[Theorem 5, Section 5.6]{evans2022partial}. These spaces satisfy $\mathcal{T}_\infty \subset \mathcal{T}_{2p}^{b} \subset \mathcal{T}_{2p}$ with strict inclusions.

\subsection{Sampling with measure transport}
\label{subsec:sampling-with-transport}
Now we present the main problem of our paper. Motivated by the discussion in Section~\ref{Introduction}, we consider two absolutely continuous measures $\mu, \nu \in \mathcal{P}_{\textup{ac}}(\Omega)$ supported on a  compact, convex domain $\Omega \subset \Rd$ and assume that there exists a measurable map $T: \Omega \to \Omega$ such that the pushforward of $\mu$ under $T$ is $\nu$, i.e., for all Borel sets $A \subset \Omega$ we have $\nu(A) = \mu(T^{-1}(A))$. The map additionally satisfies the following conditions:
\begin{assumption}
    \label{assumption:regularity-of-transport}
    For a compact and convex
    set $\Omega \subset \Rd$, the transport map $T: \Omega \to \Omega$ is of class $C^1(\Omega \, ; \Rd)$.
    Moreover, for all $s \in \Omega$, the Jacobian $\nabla T(s)$ is symmetric positive definite.
  \end{assumption}
For example, $T$ can be the optimal transport (Brenier) map induced by a strictly convex potential.

Now consider a path of measures with endpoints $\mu$ and $\nu$, given by $\mu_t = (T_t)_{\sharp} \mu$ for $t \in [0,1]$, with $T_t(x) \coloneq (1-t) x + t T(x)$. The one-parameter family of transport maps $x \mapsto T_t(x)$ is equivalently the time-parameterized flow map $x \mapsto X(x, t)$ of the following ODE:
\begin{equation}\label{eq:ODE}
    \begin{cases}
      \frac{d}{dt}X(x,t) &= T(x) - x,\qquad t\in [0,1],\\
      X(x,0) &= x.
    \end{cases}
  \end{equation}
Under the conditions of Assumption~\ref{assumption:regularity-of-transport}, \citet[Theorem 3.4]{approximation_paper} establishes the invertibility of $T_t$ for any $t \in [0,1]$. We can then use the chain rule and inverse function theorem to write the above ODE in the more standard form of \eqref{eq:ODE-geneal-form}, with a velocity field given by
\begin{equation*}
  v(\cdot, t) = (T - \idmap) \circ T_t^{-1}(\cdot) .
\end{equation*}

As discussed in Section~\ref{Introduction},  a key quantity from the perspective of approximation theory is the Lipschitz constant of $x \mapsto v(x,t)$, for each time $t$. We would like to control this quantity, $\Lip(v( \cdot , t))$, uniformly over $t \in [0,1]$.
By Proposition \ref{prop:lip-equiv-to-jacobian} in Appendix \ref{sec:appendix_T2}, we have:
\begin{equation}\label{eq:lip-jacobian}
    \Lip(v(\cdot, t)) = \sup_{x \in \Omega_t} \OperatorNorm{\nabla_x v(x, t)},
\end{equation}
where $\Omega_t = \{X(s,t) \, : \, s \in \Omega\}$ is the domain of $v(\cdot, t)$.
Hence, we want to control:
\begin{equation*}
    \Lambda = \sup_{t \in [0,1]} \sup_{x \in \Omega_t} \OperatorNorm{\nabla_x v( x, t)}  . 
  \end{equation*}
In the sequel, we will also make the following assumption:
\begin{assumption}
    \label{assumption:non-isometry}
    Let $(\sigma_j(s))_{j=1}^d$ be the eigenvalues of $\nabla T(s)$ at $s \in \Omega$. There exist $s_0 \in \Omega$ and $i$ such that $\sigma_i(s_0) \neq 1$.
\end{assumption}
This condition, albeit technical, makes the problem non-trivial by ruling out ``trivial'' transports, for example a translation of a distribution. 
Indeed, if $\sigma_i(s) = 1$ for all $i$ and $s \in \Omega$, then $\Lambda$ is already zero and there is no benefit in seeking an alternative schedule.
\subsection{Schedules}
\label{subsec:schedules}
Our goal is to reduce $\Lambda$ while staying on the trajectories prescribed by $T_t$.
A natural degree of freedom to exploit is the schedule $\tau: [0,1] \to [0,1]$ discussed in Section~\ref{Introduction}, which controls the ``rate'' at which we traverse time.
\begin{definition}
  A $C^1$ function $\tau: [0,1] \to [0,1]$ is called a \emph{schedule} if it is non-decreasing, with $\tau(0) = 0$ and $\tau(1) = 1$.
\end{definition}
Recall that the path of measures $\left( \mu_t \right)_{t \in [0,1]}$ is induced by flow maps $X(\cdot, t) \equiv T_t(\cdot)$.
A schedule naturally modifies the latter by setting $X_\tau(\cdot, t) \coloneqq X(\cdot, \tau(t))$.
The boundary conditions on $\tau$ ensure that $X_\tau(\cdot, 0) = \idmap$ and $X_\tau(\cdot, 1) = T$; hence, transport from $\mu = \mu_0$ to $\nu = \mu_1$ is still achieved. The ODE satisfied by this modified flow is
\begin{equation}\label{eq:ODE-schedule}
    \begin{cases}
      \frac{d}{dt}X_\tau(x,t) &= \dot{\tau}(t) \left( T(x) - x \right),\qquad t\in [0,1],\\
      X_\tau(x,0) &= x,
    \end{cases}
\end{equation}
and its associated velocity field is
\begin{align*}
    v_\tau(\cdot, t) &= \dot{\tau}(t) (T - \idmap) \circ (X_\tau)^{-1}(\cdot, t) \\
                     &= \dot{\tau}(t) (T - \idmap) \circ \left[ \, (1-\tau(t))\idmap + \tau(t)T \, \right]^{-1}, 
\end{align*}
defined on $\Omega_{\tau(t)}$.
We can, therefore, proceed as above to derive an expression for a modified uniform Lipschitz bound $\Lambda[\tau]$:
\begin{align}\label{eq:lip-jacobian-schedule}
    \Lambda[\tau] & \coloneq \sup_{t \in [0,1]} \sup_{x \in \Omega_{\tau(t)}} \, \dot{\tau}(t)  \OperatorNorm{\nabla_x v(x, \tau(t))} .
\end{align}
The key idea is that the velocity term $\dot{\tau}$ will allow us to suppress the spatial supremum locally in time.
The optimization problem that we will investigate is then
\begin{equation}\label{eq:opt-problem}
    \inf_{\tau \in \mathcal{T}_{\infty}} \Lambda[\tau].
\end{equation}
In the next section, we will show how to solve problem \eqref{eq:opt-problem} to optimality.

\section{Main Results}
\label{sec:main-results}

Given a transport map $T$ that pushes forward $\mu $ to $\nu$, with $\mu, \nu \in \mathcal{P}_{\text{ac}}(\Omega)$ and both $T$ and $\Omega$ satisfying Assumption~\ref{assumption:regularity-of-transport}, we seek an optimizer $\tau_\infty$ of problem \eqref{eq:opt-problem}, called the \textit{optimal schedule}. Our strategy is to put \eqref{eq:opt-problem} in the form $\inf_{\tau \in \mathcal{T}_{2}} \int_{0}^{1} \lambda(\tau(t), \dot{\tau}(t)) \, dt$ for a search space $\mathcal{T}_{2}$ (cf.\ \eqref{eq:intro-T-2p}) and a \textit{Lagrangian function} $\lambda: \R \times \R \to \R$. The space $\mathcal{T}_{2}$ strictly contains the space $\mathcal{T}_{\infty}$ defined in \eqref{eq:intro-T-infty}. A subsequent argument will establish that the minimizer $\tau_\infty \in \mathcal{T}_{2}$ is, in fact, in $\mathcal{T}_{\infty}$.

The first step is to reformulate problem \eqref{eq:opt-problem} in a more explicit manner, i.e., without appealing to the Lipschitz constant directly.
\begin{theorem}
    \label{thm:reformulate-opt-problem}
    Let $T$ satisfy Assumptions~\ref{assumption:regularity-of-transport} and \ref{assumption:non-isometry}, and let $(\sigma_i(s))_i$ denote the eigenvalues of $\nabla T(s)$, for $s \in \Omega$.
    Define $f(s) \coloneq \max_i \sigma_i(s) - 1$, $g(s) \coloneq \min_i \sigma_i(s) - 1$, and $\Omega_{[0,1]} \coloneq \Omega \times [0,1]$.
    Then $\Lambda[\tau]$ can be written as
    \begin{align}
        \label{eq:reformulated-Lambda}
        \Lambda[\tau] = \sup_{ (x,t) \in \Omega_{[0,1]} } |\dot{\tau}(t)| \, \max \Bigg\{ 
            \left| \frac{f(x)}{1 + \tau(t) f(x)} \right| \, , \,
            \left| \frac{g(x)}{1 + \tau(t) g(x)} \right| 
        \Bigg\}.
    \end{align}
\end{theorem}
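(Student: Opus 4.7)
The plan is to reduce the supremum defining $\Lambda[\tau]$ to an explicit expression in the eigenvalues of $\nabla T$ via three moves: (i) compute $\nabla_y v$ in closed form, (ii) pass from the spatial supremum over $\Omega_{\tau(t)}$ back to a supremum over $\Omega$ via the change of variables induced by $T_{\tau(t)}$, and (iii) evaluate the resulting operator norm by diagonalization.

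For (i), I would apply the chain rule to $v(\cdot, s) = (T - \idmap) \circ T_s^{-1}$. Writing $A(x) \coloneq \nabla T(x) - I$ and noting $\nabla T_s(x) = I + s A(x)$, the inverse function theorem yields
$$\nabla_y v(y,s) = A(x)\bigl(I + s A(x)\bigr)^{-1}, \qquad x = T_s^{-1}(y).$$
For (ii), Assumption~\ref{assumption:regularity-of-transport} together with $s \in [0,1]$ implies that $(1-s)I + s\nabla T$ is positive definite, so $T_s$ is a $C^1$ diffeomorphism of $\Omega$ onto $\Omega_s$ (this is Theorem~3.4 of \citet{approximation_paper}, already cited in the paper). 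I can therefore reparameterize $\sup_{y \in \Omega_{\tau(t)}}$ as $\sup_{x \in \Omega}$ through $x = T_{\tau(t)}^{-1}(y)$, eliminating any dependence of the domain on $\tau$.

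For (iii), I would use that Assumption~\ref{assumption:regularity-of-transport} (consistent with the Brenier-map example highlighted in the paper) provides $\nabla T(x)$ as symmetric positive definite. Then $A(x)$ is symmetric with eigenvalues $\sigma_i(x) - 1 > -1$, and since $A(x)$ and $(I + sA(x))^{-1}$ commute and share eigenvectors, their product is symmetric with eigenvalues $(\sigma_i(x)-1)/(1 + s(\sigma_i(x)-1))$. The operator norm of the product then equals the maximum absolute eigenvalue. A quick monotonicity check on $\phi_s(u) \coloneq u/(1+su)$ gives $\phi_s'(u) = (1+su)^{-2} > 0$ on $(-1, \infty)$, so $\phi_s$ is strictly increasing; hence $\max_i |\phi_s(\sigma_i(x) - 1)|$ is attained either at the largest or smallest eigenvalue of $\nabla T(x)$, i.e.\ at $f(x)$ or $g(x)$. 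Substituting $s = \tau(t)$ and using $\dot\tau(t) \geq 0$ on $\mathcal{T}_\infty$ (so $\dot\tau(t) = |\dot\tau(t)|$, matching the stated formula) recovers the reformulation.

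The main obstacle, I expect, lies in step (iii): without symmetry of $\nabla T(x)$, the operator norm of $A(x)(I + s A(x))^{-1}$ would be determined by singular values rather than eigenvalues, and the clean expression in terms of $\sigma_i$ would fail. A secondary technical point is ensuring the interchange between suprema and the change of variables is justified, which is guaranteed by the $C^1$ diffeomorphism property above. Beyond these, the argument is a direct computation; no variational machinery or passage to the limit is required at this stage, which is consistent with the theorem serving as the setup for the subsequent $\Gamma$-convergence analysis.
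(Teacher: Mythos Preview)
Your proposal is correct and follows essentially the same route as the paper: compute $\nabla v_\tau$ via the chain rule and inverse function theorem, reparameterize the supremum over $\Omega_{\tau(t)}$ back to $\Omega$ using that $T_{\tau(t)}$ is a $C^1$ diffeomorphism, then exploit the orthogonal diagonalizability of $\nabla T(s)$ (from Assumption~\ref{assumption:regularity-of-transport}) to express the operator norm as $\max_i |\phi_{\tau(t)}(\sigma_i(s)-1)|$, and finally reduce to the extreme eigenvalues by the monotonicity of $u \mapsto u/(1+su)$. The paper packages the last reduction as a separate lemma (its Lemma~\ref{lemma:silly-but-useful}), but your one-line derivative argument for $\phi_s$ accomplishes the same thing.
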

The proof is in Appendix \ref{sec:appendix_T2}.

To proceed with the variational approach, we wish to cast \eqref{eq:reformulated-Lambda} as an integral of a Lagrangian. Appealing to Lemma \ref{lemma:exchange-max-sup}, we can rewrite \eqref{eq:reformulated-Lambda} as
\begin{equation*}
    \label{eq:reformulated-Lambda-swapped-max-sup}
    \Lambda[\tau] = \max \left\{
        \left\| \frac{\dot{\tau}(t) f(x)}{1 + \tau(t) f(x)} \right\|_{L^\infty(\Omega_{[0,1]})} \, , \, \left\| \frac{\dot{\tau}(t) g(x)}{1 + \tau(t) g(x)} \right\|_{L^\infty(\Omega_{[0,1]})}
    \right\} \, .
\end{equation*}
The key observation here is that an $L^\infty$ norm is the limit of $L^p$ norms.
In fact, this approximation can be iterated in the sense that 
\begin{equation*}
    \max \left\{ \norm{F}_{L^\infty}, \norm{G}_{L^\infty} \right\} = \lim_{p \to \infty} \left(  \norm{F}_{L^p}^p + \norm{G}_{L^p}^p \right)^{\frac{1}{p}}
\end{equation*}
for functions $F, G \in C^0(\Omega_{[0,1]})$; see Lemma \ref{lemma:Linfty-to-Lp-appendix} for a proof.
It is therefore natural to study the problems
\begin{equation}
    \label{eq:opt-problem-lagrangian-with-p}
    \inf_{\tau \in \mathcal{T}_{\infty}} \int_{0}^{1} \lambda_p(\tau(t), \dot{\tau}(t)) \, dt \, ,
\end{equation}
where
\begin{align}
  \label{eq:lambda-p-def}
    \lambda_p(\tau(t), \dot{\tau}(t)) \coloneq \int_{\Omega} 
    \left( \frac{\dot{\tau}(t) f(x)}{1 + \tau(t) f(x)} \right)^{2p} \, +
    \left( \frac{\dot{\tau}(t) g(x)}{1 + \tau(t) g(x)} \right)^{2p} \, dx \, .
\end{align}
Notice that for each $\tau \in \mathcal{T}_{\infty}$ we have the pointwise limit $\Lambda[\tau] = \lim_{p \to \infty} \left( \int_{0}^{1} \lambda_p(\tau(t), \dot{\tau}(t)) \, dt \right)^{\frac{1}{2p}}$
and the monotonicity of $x \mapsto x^{\frac{1}{2p}}$ allows us to optimize under the exponent. 

While this relaxation is intuitive, pointwise convergence of objectives does \textit{not} guarantee convergence of minimizers.
Understanding the latter is exactly the aim of $\Gamma$-convergence, a powerful tool in the calculus of variations. 
For a general topological space $X$, a functional $F: X \to \R$ and a family of functionals $(F_p)_{p \in \N}$ with $F_p: X \to \R$, one says that $F_p$ $\Gamma$-converges to $F$ if for any $x \in X$ we have \citep{braides2006handbook}
\begin{align*}
    & F(x) \leq \liminf_{p \to \infty} F_p(x_p) \text{ for any sequence } x_p \to x, \ \text{and} \\
    & F(x) \geq \limsup_{p \to \infty} F_p(x_p) \text{ for some sequence } x_p \to x \, .
\end{align*}
If $\Gamma$-convergence is established, together with a so-called \textit{equi-coercivity}\footnote{
    A family of functionals $F_p$ is said to be equi-coercive \citep{braides2006handbook} if for any $t > 0$ there is a compact set $K_t \subset X$ such that for all $p$ we have $\left\{ x \in X : F_p(x) \leq t \right\} \subset K_t$.
} condition, one can deduce that subsequential limits of minimizers of each $F_p$ are minimizers of $F$. This is the celebrated \textit{fundamental theorem of $\Gamma$-convergence} \citep{braides2006handbook}.
Using this machinery, we prove the following result.
\begin{theorem}
    \label{thm:Linfty-to-Lp-main-text}
    Let $\mathcal{T}_2$ be as in \eqref{eq:intro-T-2p} with $p=1$.
    For arbitrary $p \in \N$, consider the optimization problem
    \begin{equation}
        \label{eq:opt-problem-Lp-approx-in-theorem}
        \inf_{\tau \in \mathcal{T}_{2}} \int_{0}^{1} \, \lambda_p \left (\tau(t), \dot{\tau}(t) \right ) \, dt  \, ,
    \end{equation}
    with $\lambda_p$ defined in \eqref{eq:lambda-p-def}.
    If $\tau_p$ is a minimizer of \eqref{eq:opt-problem-Lp-approx-in-theorem}, then any subsequential $L^2$ limit $\tau_{p_j} \to \tau_\infty$ as $j\to\infty$ is a minimizer of \eqref{eq:opt-problem}.
  \end{theorem}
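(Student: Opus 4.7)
The plan is to invoke the fundamental theorem of $\Gamma$-convergence. Two preliminary reductions simplify the setup: first, since $x \mapsto x^{1/(2p)}$ is strictly increasing on $[0,\infty)$, a minimizer of \eqref{eq:opt-problem-Lp-approx-in-theorem} is the same as a minimizer of $F_p(\tau) := \left(\int_0^1 \lambda_p(\tau,\dot\tau)\,dt\right)^{1/(2p)}$, so I would work with $F_p$ instead; second, extend $\Lambda$ to a functional $\bar\Lambda$ on $\mathcal{T}_2$ by setting $\bar\Lambda[\tau] := \Lambda[\tau]$ for $\tau \in \mathcal{T}_\infty$ and $\bar\Lambda[\tau] := +\infty$ otherwise. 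Since $\mathcal{T}_\infty \subset \mathcal{T}_2$ and $\Lambda$ is finite on $\mathcal{T}_\infty$ (e.g., at $\tau(t)=t$), we have $\inf_{\mathcal{T}_2}\bar\Lambda = \inf_{\mathcal{T}_\infty}\Lambda$, and any finite minimizer of $\bar\Lambda$ automatically lies in $\mathcal{T}_\infty$. It then suffices to prove (a) equi-coercivity of $(F_p)$, and (b) $F_p \xrightarrow{\Gamma} \bar\Lambda$, both in the $L^2([0,1])$-topology; the fundamental theorem will then yield that any $L^2$-cluster point of minimizers $\tau_p$ is a minimizer of $\bar\Lambda$, hence of \eqref{eq:opt-problem}.

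For the limsup inequality, the constant recovery sequence $\tau_p \equiv \tau$ suffices: when $\tau \in \mathcal{T}_\infty$, Lemma \ref{lemma:Linfty-to-Lp-appendix} gives $F_p(\tau) \to \Lambda[\tau]$, and outside $\mathcal{T}_\infty$ the bound is trivial. The liminf inequality is the substantive step: given $\tau_p \to \tau$ in $L^2$ with $L := \liminf F_p(\tau_p) < \infty$, pass to a subsequence attaining the liminf. Assumption \ref{assumption:non-isometry} combined with continuity of $f,g$ on the compact set $\Omega$ ensures that the inner weight
\begin{equation*}
h_p(y) := \int_\Omega \left[\frac{f(s)}{1+yf(s)}\right]^{2p} + \left[\frac{g(s)}{1+yg(s)}\right]^{2p} ds
\end{equation*}
satisfies $\liminf_p h_p(y)^{1/(2p)} \geq c > 0$ uniformly for $y \in [0,1]$, whence $F_p(\tau_p) \geq c\,\|\dot\tau_p\|_{L^{2p}([0,1])}$. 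H\"older interpolation then gives uniform $L^q$-bounds on $\dot\tau_p$ for every finite $q$; weak $L^q$-convergence to $\dot\tau$ and lower semicontinuity yield $\|\dot\tau\|_{L^q} \leq C$, and sending $q \to \infty$ places $\tau \in W^{1,\infty}([0,1])$. Arzel\`a--Ascoli then upgrades the $L^2$-convergence to uniform convergence $\tau_p \to \tau$, preserving the boundary conditions. The inequality $\Lambda[\tau] \leq L$ follows by combining this uniform convergence with the monotone passage $\|\cdot\|_{L^{2p}(\Omega_{[0,1]})} \nearrow \|\cdot\|_{L^\infty(\Omega_{[0,1]})}$ for bounded continuous integrands, together with weak-$\ast$ lower semicontinuity.

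Equi-coercivity is then essentially free from the estimates above: the uniform $L^{2p}$-bound on $\dot\tau_p$ combined with $\tau_p(0) = 0$ gives a uniform $H^1$-bound on every sublevel set $\{F_p \leq M\}$, and the compact embedding $H^1([0,1]) \hookrightarrow L^2([0,1])$ gives the required precompactness. The hardest step I expect is the liminf inequality, and in particular the deduction $\Lambda[\tau] \leq L$: $F_p$ is an $L^{2p}$-norm on $\Omega_{[0,1]}$ of an integrand depending on $p$ both through the exponent and through $\tau_p$, so the passage to the $L^\infty$-norm $\Lambda[\tau]$ requires carefully orchestrating uniform convergence of $\tau_p$, weak $L^q$-convergence of $\dot\tau_p$ for every $q < \infty$, and the monotone upgrade $L^{2p} \to L^\infty$. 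The companion task of verifying $C^1$-regularity and monotonicity of the limit, so that $\tau \in \mathcal{T}_\infty$ in the strict sense of \eqref{eq:intro-T-infty}, is a related regularity issue that the paper explicitly defers to its subsequent analysis of the Euler--Lagrange equation.
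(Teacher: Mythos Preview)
Your overall strategy—establish equi-coercivity and $\Gamma$-convergence, then invoke the fundamental theorem—is exactly the paper's, and your coercivity and limsup arguments are essentially right. The genuine gap is your choice of $\Gamma$-limit. You extend $\Lambda$ to $\bar\Lambda$ by declaring it $+\infty$ on $\mathcal{T}_2 \setminus \mathcal{T}_\infty$, but $\bar\Lambda$ is \emph{not} the $\Gamma$-limit of $(F_p)$. For the liminf inequality $\bar\Lambda[\tau] \leq \liminf_p F_p(\tau_p)$ to hold with a finite right-hand side you would need $\tau \in \mathcal{T}_\infty$, i.e.\ $C^1$ and nondecreasing; your argument only places $\tau$ in $W^{1,\infty}$, and you say the rest is ``deferred.'' But this cannot be deferred \emph{inside} a $\Gamma$-convergence proof: take any piecewise-linear $\tau \in \mathcal{T}_2^b \cap W^{1,\infty}$ that is not $C^1$, and the constant sequence $\tau_p \equiv \tau$; then $\liminf_p F_p(\tau)$ is finite while $\bar\Lambda[\tau] = +\infty$, so the liminf inequality fails outright. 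By uniqueness of $\Gamma$-limits, $\bar\Lambda$ is simply the wrong functional, and the fundamental theorem tells you nothing about its minimizers.

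The paper's fix is small but decisive: extend $\Lambda$ by its supremum formula \eqref{eq:reformulated-Lambda} to all of $\mathcal{T}_2^b$ (calling this $\Lambda_\infty$), and set it to $+\infty$ only outside $\mathcal{T}_2^b$. Then the liminf argument only needs the limit to stay in $\mathcal{T}_2^b$, which follows from uniform convergence. The paper executes the liminf by rescaling so that $p \mapsto (1/b_p)\Lambda_p$ is monotone increasing, and then combines weak-$H^1$ lower semicontinuity of each fixed $\Lambda_q$ with this monotonicity and pointwise convergence—cleaner than orchestrating weak-$L^q$ limits for every $q$ as you sketch. The fundamental theorem then yields a minimizer of $\Lambda_\infty$ over $\mathcal{T}_2$; showing this minimizer actually lies in $\mathcal{T}_\infty$, and hence solves \eqref{eq:opt-problem}, is indeed a separate regularity step (via the Euler--Lagrange analysis of Theorem~\ref{thm:characterizarion-of-Lp-solutions-main-text}), but that step lives \emph{after} the $\Gamma$-convergence, not inside it.
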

The proof is in Appendix \ref{subsec:proof-of-Linfty-to-Lp}. As discussed above, the key idea is to establish that the functionals
\begin{align*}
    \Lambda_p :  \mathcal{T}_{\infty} & \to \R \\
    \tau & \mapsto \left( \int_{0}^{1} \, \lambda_p(\tau(t), \dot{\tau}(t)) \, dt \right)^{\frac{1}{2p}}
\end{align*}
$\Gamma$-converge to $\Lambda$, and that the family $(\Lambda_p)_{p \in \N}$ is equi-coercive. Notice that in our previous discussion of $\Gamma$-convergence, the topology of the underlying space $X$ was intentionally left vague: too strong of a topology
will break the coercivity condition, whereas too weak of a topology will make verifying $\Gamma$-convergence impossible. Here, the balance is struck by expanding the space of feasible schedules to a closed subset of the Sobolev space $H^{2}([0,1])$ and endowing
this subset
with the topology of weak convergence (or, equivalently, the weak topology). Inside sets of bounded $H^1$ norm,
this topology is equivalent to the strong $L^2$ topology; hence the conclusion of the theorem.

Having established the validity of our approximation, we now solve the approximating problem \eqref{eq:opt-problem-Lp-approx-in-theorem} by employing calculus of variations.
\begin{theorem}
    \label{thm:characterizarion-of-Lp-solutions-main-text}
    There exists a minimizer $\tau_p$ of \eqref{eq:opt-problem-Lp-approx-in-theorem} that satisfies the following ODE:
    \begin{align}
        \label{eq:ODE-solution-of-Lp-problem-in-theorem}
        \frac{d}{dt} \tau_p(t) = \frac{1}{Z_p} \Bigg( 
        \int_{\Omega} 
        \frac{f(s)^{2p}}{\left (1 + \tau_p(t) f(s) \right)^{2p}} \, + 
        \frac{g(s)^{2p}}{\left (1 + \tau_p(t) g(s) \right)^{2p}} \, ds 
        \Bigg)^{-1/(2p)} \, ,
    \end{align}    
    with boundary conditions $\tau_p(0) = 0$ and $\tau_p(1) = 1$ and
    \begin{equation*}
        Z_p = \int_{0}^{1} \left( \int_{\Omega} \left( \frac{f(s)}{1 + \tau_p(t) f(s)} \right)^{2p} \, ds + \int_{\Omega} \left( \frac{g(s)}{1 + \tau_p(t) g(s)} \right)^{2p} \, ds \right)^{-\frac{1}{2p}} \, dt.
    \end{equation*}
\end{theorem}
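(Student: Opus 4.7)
My plan proceeds in three steps: (i) establish existence of a minimizer via the direct method of the calculus of variations, (ii) exploit the fact that the Lagrangian $\lambda_p(\tau,\dot\tau) = \dot\tau^{2p}\,h_p(\tau)$ is autonomous to obtain a first integral through the Beltrami identity, and (iii) fix the integration constant using the boundary data. Here
\[
h_p(\tau) \coloneqq \int_\Omega \frac{f(s)^{2p}}{(1+\tau f(s))^{2p}}\,ds + \int_\Omega \frac{g(s)^{2p}}{(1+\tau g(s))^{2p}}\,ds,
\]
which is smooth and strictly positive on $[0,1]$: Assumption~\ref{assumption:regularity-of-transport} forces $\sigma_i(s)>0$, hence $f,g>-1$, so the denominators are uniformly bounded away from zero when $\tau\in[0,1]$.

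For existence I would apply Tonelli's theorem on $\mathcal{T}_2$, a weakly closed affine subset of $H^1([0,1])$. Weak lower semicontinuity of $J_p(\tau):=\int_0^1 \lambda_p(\tau,\dot\tau)\,dt$ follows because $\xi\mapsto \xi^{2p}$ is convex and $h_p$ is continuous, so $\lambda_p$ is a normal convex integrand in the velocity slot. For coercivity, along any minimizing sequence $(\tau_n)$ the boundary conditions together with Morrey's embedding give a uniform $L^\infty$ bound on $\tau_n$; on the resulting compact range, $h_p\ge c>0$, so $J_p(\tau_n)\ge c\int_0^1 \dot\tau_n^{2p}\,dt$ yields uniform control of $\dot\tau_n$ in $L^{2p}\hookrightarrow L^2$. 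Extracting a weakly convergent subsequence produces a minimizer $\tau_p\in\mathcal{T}_2$.

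Because $\lambda_p$ has no explicit $t$-dependence, the Beltrami identity $\dot\tau_p\,\partial_{\dot\tau}\lambda_p(\tau_p,\dot\tau_p)-\lambda_p(\tau_p,\dot\tau_p)\equiv \text{const}$ holds along any sufficiently regular extremal. A short computation gives
\[
\dot\tau_p\,\partial_{\dot\tau}\lambda_p - \lambda_p \;=\; (2p-1)\,\dot\tau_p^{2p}\,h_p(\tau_p),
\]
so $\dot\tau_p^{2p}\,h_p(\tau_p) \equiv \kappa^{2p}$ for some $\kappa>0$, equivalently $\dot\tau_p = \kappa\, h_p(\tau_p)^{-1/(2p)}$. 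Integrating over $[0,1]$ with $\tau_p(0)=0$ and $\tau_p(1)=1$ yields $1=\kappa\,Z_p$, so $\kappa=1/Z_p$, reproducing \eqref{eq:ODE-solution-of-Lp-problem-in-theorem}.

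The main obstacle is justifying the pointwise application of the Euler--Lagrange / Beltrami machinery. A bootstrap argument, using strict convexity (ellipticity) of $\lambda_p$ in $\dot\tau$ on $\{\dot\tau>0\}$ combined with the smooth $\tau$-dependence of $h_p$, should upgrade the initial $H^1$ regularity of $\tau_p$ to $C^\infty$ on the open set where $\dot\tau_p>0$. A secondary technical point is that $\mathcal{T}_2$ does not enforce $\tau\in[0,1]$, which is where $h_p$ is controlled; however, for any admissible $\tau$ the truncation $\tilde\tau\coloneqq \max(0,\min(1,\tau))$ lies in $\mathcal{T}_2^b$ and satisfies $J_p(\tilde\tau)\le J_p(\tau)$ (since $\dot{\tilde\tau}$ vanishes wherever $\tau\notin[0,1]$ and agrees with $\dot\tau$ elsewhere), so one may minimize over $\mathcal{T}_2^b$ without loss, where the preceding argument applies cleanly.
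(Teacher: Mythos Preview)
Your overall strategy---existence via the direct method, restriction to $[0,1]$-valued schedules by truncation, then an Euler--Lagrange/first-integral computation---mirrors the paper's. The truncation argument is exactly the paper's Lemma~\ref{lemma:problem-can-be-restricted}, and your Beltrami-identity shortcut is a mild streamlining of what the paper does: it writes the full second-order Euler--Lagrange equation $\ddot\tau = \dot\tau^{2}\,K_{p+1}(\tau)/K_p(\tau)$ and then integrates once to reach the same first-order relation $\dot\tau_p = Z_p^{-1}\,K_p(\tau_p)^{-1/(2p)}$.

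The gap is precisely the point you flag as ``the main obstacle,'' and it is not closed by the bootstrap you sketch. Since $\partial_{vv}^2\lambda_p(x,v)=2p(2p-1)\,v^{2p-2}\,h_p(x)$ vanishes at $v=0$, the Lagrangian fails the uniform ellipticity needed for standard Tonelli--Morrey regularity; the paper explicitly notes that ``standard regularity theory does not apply.'' Your bootstrap would only yield smoothness on $\{\dot\tau_p>0\}$, but a priori nothing excludes $\dot\tau_p$ vanishing on an interval or at isolated points, and without $C^2$ regularity on all of $[0,1]$ you cannot invoke Beltrami pointwise. The paper's remedy is to regularize via $\lambda_p^\epsilon(x,v)=\lambda_p(x,v)+\epsilon v^2$, for which $\partial_{vv}^2\lambda_p^\epsilon\ge 2\epsilon>0$ so minimizers are $C^\infty$ and satisfy the strong Euler--Lagrange equation; it then proves $\Gamma$-convergence and equi-coercivity of the $\epsilon$-functionals as $\epsilon\to0$, obtains uniform-in-$\epsilon$ bounds on $\dot\tau_\epsilon$, and passes to the limit in the ODE via a Gr\"onwall/Arzel\`a--Ascoli argument to identify a minimizer of the original problem that solves \eqref{eq:ODE-solution-of-Lp-problem-in-theorem}. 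This $\epsilon$-regularization layer is the substantive missing ingredient in your proposal.
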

Proving the above theorem (see Appendix \ref{subsec:solving-the-Lp-ODE-variational-caculus}) is a mostly technical endeavor, i.e., verifying that the Lagrangian $\lambda_p$ is tame enough so that the minimizers of \eqref{eq:opt-problem-Lp-approx-in-theorem} satisfy the (strong) Euler--Lagrange equations.
The difficulty lies in proving that minimizers are, \textit{a priori}, in $C^2([0,1])$; standard regularity theory does not apply.
To overcome this, we further approximate $\lambda_p$ by the family $\lambda_p^\epsilon(x, v) = \lambda_p(x,v) + \epsilon \, v^2$, which possess smooth minimizers, and take another $\Gamma$-limit along $\epsilon \to 0$.

As a result, if we find \textit{some} $L^2$ subsequential limit of the solutions $\tau_p$ of the ODEs \eqref{eq:opt-problem-Lp-approx-in-theorem}, we will find a solution to the original problem \eqref{eq:opt-problem}.
One is tempted to take the (pointwise) limit $p\to\infty$ of the right hand side of \eqref{eq:ODE-solution-of-Lp-problem-in-theorem}  to obtain a new ODE. This ODE in fact characterizes a solution of \eqref{eq:opt-problem}.
\begin{theorem}
    \label{thm:solution-of-optimal-problem-is-solution-of-L-inf-ODE-main-text}
    Let $\tau_\infty$ be the solution to the following ODE,
    \begin{align}
        \label{eq:ODE-solution-of-optimal-problem-in-theorem}
        \frac{d}{dt} \tau_\infty(t) = \frac{1}{Z} \max \Bigg\{ 
        \left\| \frac{f(s)}{1 + \tau_\infty(t) f(s)} \right\|_{L^\infty},
        \left\| \frac{g(s)}{1 + \tau_\infty(t) g(s)} \right\|_{L^\infty} 
        \Bigg\}^{-1} \, ,
    \end{align}    
    with boundary conditions $\tau_\infty(0) = 0$ and $\tau_\infty(1) = 1$ and
    \begin{equation*}
        Z = \int_{0}^{1} \max \Bigg\{
        \left\| \frac{f(s)}{1 + \tau_\infty(t) f(s)} \right\|_{L^\infty},
        \left\| \frac{g(s)}{1 + \tau_\infty(t) g(s)} \right\|_{L^\infty}
        \Bigg\}^{-1} \, dt \, .
    \end{equation*}
    If $\tau_p$ are solutions to the ODEs \eqref{eq:opt-problem-Lp-approx-in-theorem}, then there is a subsequence $\tau_{p_j} \to \tau_\infty$ in $L^2$ and therefore, by Theorem \ref{thm:Linfty-to-Lp-main-text}, $\tau_\infty$ is optimal for \eqref{eq:opt-problem}.
\end{theorem}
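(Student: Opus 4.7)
The plan is to establish $\tau_{p_j} \to \tau_\infty$ in $L^2$ along a subsequence in three stages: uniform Sobolev bounds on the family $(\tau_p)$, extraction of an $L^2$-convergent subsequence by compactness, and identification of the limit via pointwise passage in the ODE \eqref{eq:ODE-solution-of-Lp-problem-in-theorem}. Optimality then follows immediately from Theorem~\ref{thm:Linfty-to-Lp-main-text}, which certifies that any $L^2$-subsequential limit of minimizers of the $L^p$ problem is a minimizer of the original problem \eqref{eq:opt-problem}.

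For the first stage, write $A_p(\tau) \coloneq \int_\Omega \frac{f(s)^{2p}}{(1+\tau f(s))^{2p}}\, ds + \int_\Omega \frac{g(s)^{2p}}{(1+\tau g(s))^{2p}}\, ds$, so that \eqref{eq:ODE-solution-of-Lp-problem-in-theorem} reads $\dot{\tau}_p(t) = A_p(\tau_p(t))^{-1/(2p)}/Z_p$ with $Z_p = \int_0^1 A_p(\tau_p)^{-1/(2p)}\, dt$ fixed by $\tau_p(1) - \tau_p(0) = 1$. Assumption~\ref{assumption:non-isometry} and continuity of $f, g$ on $\Omega$ imply that $M(\tau) \coloneq \max\{\|f/(1+\tau f)\|_{L^\infty(\Omega)},\, \|g/(1+\tau g)\|_{L^\infty(\Omega)}\}$ is continuous and strictly positive on $[0,1]$. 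Together with $|\Omega|^{1/(2p)} \to 1$, this gives uniform two-sided bounds $0 < c \leq A_p(\tau_p(t))^{1/(2p)} \leq C$ for all $t \in [0,1]$ and all $p$ large enough. Hence $\dot{\tau}_p$ is uniformly bounded in $L^\infty([0,1])$, so $(\tau_p)$ is equi-Lipschitz and uniformly bounded in $H^1([0,1])$.

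For the second and third stages, Arzelà--Ascoli (equivalently, the compact embedding $H^1 \hookrightarrow L^2$) yields a subsequence $\tau_{p_j}$ converging both uniformly on $[0,1]$ and strongly in $L^2$ to some $\tilde{\tau} \in \mathcal{T}_\infty$ that inherits monotonicity and the boundary values. For each $t$, combining the uniform-in-$\tau$ convergence $\|\cdot\|_{L^{2p}(\Omega)} \to \|\cdot\|_{L^\infty(\Omega)}$ for the $\tau$-parameterized family $s \mapsto f(s)/(1+\tau f(s))$ with continuity of $M$ and the uniform convergence $\tau_{p_j} \to \tilde{\tau}$ gives $A_{p_j}(\tau_{p_j}(t))^{1/(2p_j)} \to M(\tilde{\tau}(t))$ uniformly in $t$. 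Dominated convergence then shows $Z_{p_j} \to \int_0^1 M(\tilde{\tau})^{-1}\, dt$, and passing to the limit in the integrated form $\tau_{p_j}(t) = \int_0^t A_{p_j}(\tau_{p_j}(s))^{-1/(2p_j)}/Z_{p_j}\, ds$ identifies $\tilde{\tau}$ as a solution of \eqref{eq:ODE-solution-of-optimal-problem-in-theorem}. Declaring $\tau_\infty \coloneq \tilde{\tau}$ and invoking Theorem~\ref{thm:Linfty-to-Lp-main-text} concludes.

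\textbf{Main obstacle.} I expect the principal technical hurdle to be the uniform-in-$\tau$ convergence $A_p(\tau)^{1/(2p)} \to M(\tau)$ needed in Step 3, together with the uniform-in-$p$ positivity in Step 1. Both reduce to controlling the rate at which $L^{2p}(\Omega)$ norms approach the $L^\infty(\Omega)$ norm for the family $s \mapsto f(s)/(1+\tau f(s))$, which could degrade if the maxima are attained on thin sets whose measure depends unfavorably on $\tau$. Under Assumption~\ref{assumption:regularity-of-transport} the eigenvalues $\sigma_i$ are continuous on the compact $\Omega$, so a uniform modulus-of-continuity argument on $[0,1] \times \Omega$ should provide the required rate and thus close the proof.
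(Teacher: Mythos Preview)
Your proposal is correct, but the paper organizes the argument differently. Where you apply Arzel\`a--Ascoli to the \emph{solutions} $\tau_p$ (via equi-Lipschitz bounds) and then identify the limit by passing to the limit in the integrated ODE, the paper applies Arzel\`a--Ascoli to the \emph{vector fields} $F_p(x)\coloneqq Z_p^{-1}A_p(x)^{-1/(2p)}$. Concretely, it shows the family $(F_p)$ is equi-bounded and uniformly equi-continuous on $[0,1]$ by an explicit estimate $|F_p'(x)|\le C$ independent of $p$, extracts a uniformly convergent subsequence $F_{p_j}\to F_\infty$, and then invokes a Gr\"onwall inequality to compare the solutions of $\dot\tau_{p_j}=F_{p_j}(\tau_{p_j})$ and $\dot\tau_\infty=F_\infty(\tau_\infty)$ directly, yielding $\|\tau_{p_j}-\tau_\infty\|_{L^\infty}\to 0$.

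Both routes bottleneck on exactly the step you flag: uniform-in-$\tau$ convergence $A_p(\tau)^{1/(2p)}\to M(\tau)$. The paper's derivative bound $\sup_p\sup_x|F_p'(x)|<\infty$ disposes of this more cleanly than a modulus-of-continuity argument on $[0,1]\times\Omega$, since it avoids any lower bound on the measure of near-maximizer sets. What the Gr\"onwall route buys is that it compares $\tau_{p_j}$ directly to the designated $\tau_\infty$, so no separate identification step is needed; in your scheme you should explicitly note that the limiting ODE has a unique solution (the paper observes $F_\infty(x)=\min\{\tfrac1{f^*}+x,\,-\tfrac1{g_*}-x\}$ is $1$-Lipschitz), so that your abstract limit $\tilde\tau$ coincides with the $\tau_\infty$ in the theorem statement. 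Conversely, your compactness-then-identify approach is the more standard PDE template and avoids Gr\"onwall altogether.
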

To prove the above theorem (see Appendix \ref{subsec:appendix-subseq-convergence-in-L2}), one applies a Gr\"onwall inequality \citep{howard1998gronwall} to control the difference between solutions to the ODEs \eqref{eq:ODE-solution-of-Lp-problem-in-theorem} and \eqref{eq:ODE-solution-of-optimal-problem-in-theorem}.
The key idea, and admittedly somewhat overpowered in the context of this proof, is to combine this with the Arzelà--Ascoli theorem, allowing one to obtain uniform convergence of the right hand side of \eqref{eq:ODE-solution-of-Lp-problem-in-theorem} to that of \eqref{eq:ODE-solution-of-optimal-problem-in-theorem}.
This result, combined with the Gr\"onwall estimate, gives the required convergence.

Somewhat remarkably, the ODE \eqref{eq:ODE-solution-of-optimal-problem-in-theorem} is solvable in closed form:
\begin{theorem}
    \label{thm:solution-of-L-inf-ODE-main-text}
    Let $f^* \coloneq \sup_{s \in \Omega} f(s)$, $g_* \coloneq \inf_{s \in \Omega} g(s)$ and consider
    \begin{align}
        \label{eq:transition-time-in-theorem}
        & t_0 = \frac{\ln \left[ \frac{1}{2} \left( 1 - \frac{f^*}{g_*} \right) \right]}{\ln \left[ \frac{1}{4} \left(2 - \frac{ f^* }{ g_* } - \frac{g_*}{f^*}  \right) \right] - \ln \left( g_* + 1 \right) }, \\
        \label{eq:transition-point-in-theorem}
        & \tau(t_0) = - \frac{1}{2}\left(\frac{1}{f^*} + \frac{1}{g_*}\right) .
\end{align}
    If $0 \leq t_0 \leq 1$, then the solution to the ODE \eqref{eq:ODE-solution-of-optimal-problem-in-theorem} is given by:
    \begin{equation}
        \label{eq:solution-of-L-inf-ODE-main-text}
        \tau_\infty(t) =
        \begin{cases}
            {\frac{1}{f^*} }\left\{ {\frac{1}{4 (g_* + 1)}} \left( 2 -  \frac{f^*}{g_*} - \frac{g_*}{f^*} \right) \right\}^t - \frac{1}{f^*}, &\hspace{-2.1mm} t \leq t_0 \\[1.2em]
            \frac{1}{2} \left( \frac{1}{g_*} {-} \frac{1}{f^*} \right) 
            \hspace{-1mm}\left\{ \frac{2 (g_* + 1) }{(1 - \frac{g_*}{f^*})} \right\}^t \hspace{-1mm}
            \left\{ \frac{1 - \frac{g_*}{f^*}}{2} \right\}^{1-t} \hspace{-2mm}{-} \frac{1}{g_*}, &\hspace{-2.1mm}t \geq t_0 \, .
        \end{cases}
    \end{equation}
    Otherwise, if $t_0 \notin [0,1]$, we have two cases: if $f^* \geq -g_*$ then the solution to \eqref{eq:ODE-solution-of-optimal-problem-in-theorem} is given by
    \begin{equation}
    \label{eq:solution-of-L-inf-ODE-simple-form-main-text}
    \tau_\infty(t) = \frac{ (f^* + 1)^t - 1 }{f^*} \, ,
    \end{equation}
    and if $f^* < -g_*$, then the solution to \eqref{eq:ODE-solution-of-optimal-problem-in-theorem} is given by
    \begin{equation}
    \label{eq:solution-of-L-inf-ODE-simple-form-2-main-text}
    \tau_\infty(t) = \frac{ (g_* + 1)^{t} - 1 }{g_*} \, .
    \end{equation}
\end{theorem}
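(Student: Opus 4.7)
The plan is to recognize that the ODE \eqref{eq:ODE-solution-of-optimal-problem-in-theorem} is separable once the $\max$ is resolved, yielding an explicit piecewise-exponential solution whose structure is dictated by which of the two $L^\infty$ norms dominates at each time.

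First, I would reduce the two $L^\infty$ norms to scalar functions of $\tau$. Under Assumption \ref{assumption:regularity-of-transport}, positive definiteness of $\nabla T$ gives $\sigma_i>0$ and hence $f(s),g(s)\geq -1$, so $1+\tau u>0$ for every $\tau\in[0,1]$ and $u\in\{f(s),g(s)\}$; since $u\mapsto u/(1+\tau u)$ is strictly increasing where defined, the two norms reduce to $F(\tau)\coloneq f^*/(1+\tau f^*)$ and $G(\tau)\coloneq -g_*/(1+\tau g_*)$ in the generic case $f^*>0>g_*$. Note that $F$ is decreasing while $G$ is increasing on $[0,1]$, so they either cross exactly once in $\tau$-space or one dominates throughout.

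Next, I would locate the crossing. Solving $F(\tau)=G(\tau)$ gives $f^*+g_*+2\tau f^*g_*=0$, i.e.\ $\tau=-\tfrac{1}{2}(1/f^*+1/g_*)$, which is precisely \eqref{eq:transition-point-in-theorem}. A sign analysis (using $f^*g_*<0$) shows that this value lies in $(0,1)$ exactly when $f^*>-g_*$ \emph{and} $f^*+g_*+2f^*g_*<0$. The complementary configurations split into the sub-cases $f^*\geq -g_*$ (where $F\geq G$ on all of $[0,1]$) and $f^*<-g_*$ (where $G\geq F$), yielding the single-branch formulas \eqref{eq:solution-of-L-inf-ODE-simple-form-main-text} and \eqref{eq:solution-of-L-inf-ODE-simple-form-2-main-text} after integrating the resulting single separable ODE with both boundary conditions $\tau(0)=0,\tau(1)=1$ to fix the single constant of integration. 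The degenerate situations $f^*=0$ or $g_*=0$ (still allowed by Assumption \ref{assumption:non-isometry}) fall naturally into one of these two edge cases.

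In the generic case, setting $C\coloneq 1/Z$, I would integrate the separable ODE on each regime. For $t\leq t_0$, $F$ dominates and $f^*\dot\tau/(1+\tau f^*)=C$ with $\tau(0)=0$ integrates to $1+\tau f^*=e^{Ct}$, hence $\tau(t)=(e^{Ct}-1)/f^*$. For $t\geq t_0$, $G$ dominates and $-g_*\dot\tau/(1+\tau g_*)=C$ with $\tau(1)=1$ integrates to $1+\tau g_*=(1+g_*)e^{C(1-t)}$. Matching both branches at $t_0$ to the known value $\tau(t_0)=-\tfrac{1}{2}(1/f^*+1/g_*)$ yields
\[
e^{Ct_0}=\tfrac{1}{2}(1-f^*/g_*),\qquad (1+g_*)e^{C(1-t_0)}=\tfrac{1}{2}(1-g_*/f^*);
\]
their product, simplified via the identity $(1-f^*/g_*)(1-g_*/f^*)=2-f^*/g_*-g_*/f^*$ (which holds because $(f^*/g_*)(g_*/f^*)=1$), gives $e^{C}=(2-f^*/g_*-g_*/f^*)/(4(1+g_*))$. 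Taking logs of the first equation and dividing by $C$ reproduces \eqref{eq:transition-time-in-theorem}, and substituting $e^{Ct}=(e^{C})^{t}$ back into the two branches recovers \eqref{eq:solution-of-L-inf-ODE-main-text} after an algebraic rearrangement of the $t\geq t_0$ branch into the symmetric $\{\cdot\}^{t}\{\cdot\}^{1-t}$ form.

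The main obstacle is not any single computation but the bookkeeping: one must verify that the three scenarios in the theorem exhaustively cover all non-trivial configurations permitted by Assumption \ref{assumption:non-isometry}, that the piecewise solution is $C^1$ across $t_0$ (which reduces to showing equality of one-sided derivatives at the matching point, a direct consequence of $F(\tau(t_0))=G(\tau(t_0))$), and that the implicit normalization $Z=\int_0^{1}\max\{F,G\}^{-1}\,dt$ is automatically consistent with the constant $C$ extracted from boundary matching — this consistency follows from $\int_0^{1}\dot\tau\,dt=\tau(1)-\tau(0)=1$ combined with the ODE itself, so no further equation is needed.
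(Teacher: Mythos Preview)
Your proposal is correct and follows essentially the same route as the paper: reduce the right-hand side to $\max\{f^*/(1+\tau f^*),\,-g_*/(1+\tau g_*)\}$ (the paper does this carefully in Lemma~\ref{lemma:triviality-of-suprema}, handling the absolute values via four sign cases, whereas your monotonicity sketch glosses over why the inner extrema $f_*,g^*$ drop out of the outer $\max$), locate the unique crossing at $\tau_0=-\tfrac12(1/f^*+1/g_*)$, integrate the resulting linear ODE on each side, and match at $t_0$. The only cosmetic difference is that you multiply the two matching equations to obtain $e^{C}$ directly, while the paper equates the two expressions for $1/Z$ coming from each branch; both manipulations yield the same formulas.
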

The proof of this theorem (see Appendix \ref{appendix:solving-the-L-infty-ODE}) is a cumbersome computation. The key observation is that the infinity norms inside the $\max$ in \eqref{eq:ODE-solution-of-optimal-problem-in-theorem} can be computed explicitly in terms of the data of the problem. Once this is done, 
we notice that the $\max$ has one or zero transition points, in the sense that there is at most one $t_0 \in [0,1]$ where the two functions inside the $\max$ are equal.
We can then solve this problem explicitly by solving two ODEs, one in $[0,t_0]$ and one in $[t_0,1]$, while matching the solutions at $t_0$.

Due to Theorem \ref{thm:solution-of-L-inf-ODE-main-text}, the optimal schedule $\tau_\infty$ is available in closed form given tight upper\slash lower bounds on the largest\slash smallest eigenvalues of the Jacobian of the transport map $T$, respectively.
Intuitively, these quantities capture the worst-case dilation and concentration associated with the transport.
If $T$ is available numerically, then obtaining these values is relatively straightforward via differentiation and optimization, but that is a trivial case.
More interestingly,
if $T$ is a Brenier map, then one can estimate the largest and smallest eigenvalues of the Hessian of the convex potential $\phi$ satisfying $T = \nabla \phi$. 
It is common to assume there are constants $\alpha, \beta$ such that
\begin{equation}
\label{eq1}
\alpha I_d \preceq \nabla^2 \phi(s) \preceq \beta I_d, \quad \forall s \in \Omega \, ,
\end{equation}
i.e., that the potential is strongly convex and smooth.
This yields $f^* \leq \beta - 1$ and $g_* \geq \alpha - 1$, leading to an estimate %
of the optimal schedule; of course, if \eqref{eq1} is tight then the estimate is exact. 

Finally, we can substitute our optimal schedule $\tau_\infty$ in the objective functional $\Lambda[ \, \cdot \,]$ to compute its optimal value.
To make notation more transparent, set
\begin{align*}
    \sigma_{\min}^* \coloneq \inf_{s \in \Omega} \sigma_{\min}(s) \quad \textup{ and } \quad
    \sigma_{\max}^* \coloneq \sup_{s \in \Omega} \sigma_{\max}(s) \, ,
\end{align*}
noting that that $\sigma^*_{\max} = f^* + 1$ and $\sigma^*_{\min} = g_* + 1$.
Then we have:
\begin{theorem}
    \label{thm:non-asymptotic-result-main-text}
    For $\Lambda$ as in \eqref{eq:lip-jacobian-schedule} and the trivial schedule $\bar{\tau}: t \mapsto t$, 
    \begin{equation*}
        \Lambda[\bar{\tau}] = \max \left\{ \sigma_{\max}^* - 1 \, , \, \frac{1 - \sigma^*_{\min}}{\sigma^*_{\min}} \right\} \, .
    \end{equation*}
    For the optimal schedule $\tau_\infty$, i.e., the minimizer of problem \eqref{eq:opt-problem} given in Theorem~\ref{thm:solution-of-optimal-problem-is-solution-of-L-inf-ODE-main-text}, we have the following cases. If there exists $t_0 \in [0,1]$ such that $\tau_\infty(t_0) = - \frac{1}{2}\left( \frac{1}{\sigma_{\max}^* - 1} + \frac{1}{{\sigma_{\min}^* - 1}} \right)$, then
    \begin{align}
        \label{eq:generic-formula-non-asymptotic-result}
      \Lambda[\tau_\infty] = & \ln\left[\frac{\sigma^*_{\max}-1}{\sigma^*_{\min}}\right] + 
                        \ln\left[\frac{1}{4}\left(\frac{1}{1 - \sigma_{\min}^*} +\frac{1 - \sigma_{\min}^*}{(\sigma^*_{\max}-1)^2} + \frac{2}{\sigma_{\max}^* -1} \right)\right] \, .
    \end{align}
    Otherwise, we have
    \begin{equation*}
        \Lambda[\tau_\infty] = 
        \begin{cases}
            \ln \left( \sigma^*_{\max} \right) & \text{if } \sigma_{\max}^* + \sigma_{\min}^* \geq 2 \\
            -\ln \left( \sigma^*_{\min}\right) & \text{if } \sigma_{\max}^* + \sigma_{\min}^* < 2 \, .
        \end{cases}
    \end{equation*}
  \end{theorem}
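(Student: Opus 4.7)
\emph{Trivial schedule.} Substituting $\dot{\bar\tau} \equiv 1$ into the reformulation \eqref{eq:reformulated-Lambda} of Theorem~\ref{thm:reformulate-opt-problem}, the task is to compute a supremum over $\Omega \times [0,1]$. The elementary observation is that for any $a \in (-1,\infty)$, the map $t \mapsto |a/(1+ta)|$ is monotone on $[0,1]$: strictly decreasing with maximum $a$ at $t=0$ when $a > 0$, and strictly increasing with maximum $|a|/(1+a)$ at $t=1$ when $a < 0$. Applying this pointwise to $a = f(x)$ and to $a = g(x)$ (both values lie in $(-1,\infty)$ by Assumption~\ref{assumption:regularity-of-transport}), and using $g(x) \leq f(x)$ together with the fact that $a \mapsto -a/(1+a)$ is decreasing on $(-1,0)$, the only surviving candidate suprema are $f^* = \sigma^*_{\max}-1$ (from $f$, at $t=0$) and $-g_*/(1+g_*) = (1-\sigma^*_{\min})/\sigma^*_{\min}$ (from $g$, at $t=1$); every other contribution is dominated by one of these. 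Attainment at both bounds follows from compactness of $\Omega$ and continuity of the eigenvalues of $\nabla T$, giving the first formula.

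\emph{Optimal schedule.} The central observation is that ODE \eqref{eq:ODE-solution-of-optimal-problem-in-theorem} is precisely the statement
$$\dot\tau_\infty(t) \cdot M(\tau_\infty(t)) = 1/Z, \qquad M(u) := \max\bigl\{\|f/(1+uf)\|_{L^\infty(\Omega)},\ \|g/(1+ug)\|_{L^\infty(\Omega)}\bigr\},$$
whose right-hand side is constant in $t$. Comparing with \eqref{eq:lip-jacobian-schedule} and \eqref{eq:reformulated-Lambda} immediately gives $\Lambda[\tau_\infty] = 1/Z$. Changing variables $u = \tau_\infty(t)$ in the identity $\dot\tau_\infty \cdot M(\tau_\infty) = 1/Z$ yields
$$\frac{1}{Z} = \int_0^1 \dot\tau_\infty(t)\, M(\tau_\infty(t))\,dt = \int_0^1 M(u)\,du,$$
reducing the problem to a one-dimensional integral. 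To rewrite $M(u)$ purely in terms of $f^*$ and $g_*$, I use that $a \mapsto a/(1+ua)$ is strictly increasing on $(-1,\infty)$: the supremum of its absolute value over the range $[f_*, f^*]$ of $f$ is attained at the endpoints, contributing $(f^*)_+/(1+uf^*)$ and $(-f_*)_+/(1+uf_*)$. The bound $f_* \geq g_*$ (from $f \geq g$) combined with the monotonicity of $a \mapsto -a/(1+ua)$ on $(-1,0)$ absorbs the $f$-negative part into the $g$-contribution, and symmetrically $g^* \leq f^*$ absorbs the $g$-positive part into the $f$-contribution. Hence $M(u) = \max\{F(u), G(u)\}$ with $F(u) = (f^*)_+/(1+uf^*)$ and $G(u) = (-g_*)_+/(1+ug_*)$.

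\emph{Evaluating the integral.} In the generic case $f^* > 0$, $g_* < 0$, $F$ is strictly decreasing and $G$ strictly increasing on $[0,1]$, and a direct calculation shows $F(u^*) = G(u^*)$ iff $u^* = -\tfrac{1}{2}(1/f^* + 1/g_*)$, which is exactly \eqref{eq:transition-point-in-theorem}. When $u^* \in [0,1]$, continuity of $\tau_\infty$ produces some $t_0 \in [0,1]$ with $\tau_\infty(t_0) = u^*$, and
$$\int_0^1 M(u)\,du = \int_0^{u^*} \frac{f^*}{1+uf^*}\,du + \int_{u^*}^1 \frac{-g_*}{1+ug_*}\,du = \ln(1+u^*f^*) + \ln\frac{1+u^*g_*}{1+g_*}.$$
Substituting the explicit $u^*$ collapses this to the single logarithm $\ln\bigl[(f^*-g_*)^2/(-4 f^* g_* (1+g_*))\bigr]$, which factors as $\ln[f^*/(1+g_*)] + \ln[(f^*-g_*)^2/(-4 g_* (f^*)^2)]$; the first factor is $\ln[(\sigma^*_{\max}-1)/\sigma^*_{\min}]$ and expanding the square in the second factor recovers the three-term expression in \eqref{eq:generic-formula-non-asymptotic-result}. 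When $u^* \notin [0,1]$, one of $F, G$ dominates throughout $[0,1]$: a sign analysis shows $u^* > 1 \iff f^*+g_* > 0$ (so $M = F$ and $\int_0^1 M = \ln \sigma^*_{\max}$), while $u^* < 0 \iff f^*+g_* < 0$ (so $M = G$ and $\int_0^1 M = -\ln \sigma^*_{\min}$); the boundary cases $f^* \leq 0$ or $g_* \geq 0$ slot into the same two formulas by setting the corresponding $F$ or $G$ to zero. The main obstacle is the case analysis reducing $M(u)$ to depend only on $f^*$ and $g_*$, and the (elementary but cumbersome) algebra that rewrites the resulting single logarithm in the displayed three-term form.
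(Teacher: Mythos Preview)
Your proof is correct and shares the paper's overall architecture: both start from the reformulation \eqref{eq:reformulated-Lambda}, reduce the inner supremum to depend only on $f^*$ and $g_*$ (the paper isolates this as Lemma~\ref{lemma:triviality-of-suprema}), and exploit the ODE \eqref{eq:ODE-solution-of-optimal-problem-in-theorem} to see that $\dot\tau_\infty(t)\,M(\tau_\infty(t)) \equiv 1/Z$, so $\Lambda[\tau_\infty]=1/Z$. The genuine difference is in how $1/Z$ is evaluated. The paper computes it by back-referencing intermediate formulas from the proof of Theorem~\ref{thm:solution-of-L-inf-ODE-main-text} (specifically the expressions for $Z$ and $t_0$ obtained while solving the ODE explicitly), and handles the no-transition case by plugging in the closed form of $\tau_\infty$ from Lemma~\ref{lemma:transition}. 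Your route is more direct: the change of variables $u=\tau_\infty(t)$ gives $1/Z=\int_0^1 M(u)\,du$, which reduces everything to a single one-dimensional integral, independent of the explicit form of $\tau_\infty$. This is cleaner and self-contained---it uses only the ODE characterization and the boundary conditions, not the closed-form solution---while the paper's approach amortizes work already done in an earlier proof. One small remark: your equivalence ``$u^*>1 \iff f^*+g_*>0$'' holds only under the standing hypothesis $u^*\notin[0,1]$ (as you implicitly intend), since in general $f^*+g_*>0$ merely gives $u^*>0$; you may want to make that conditioning explicit.
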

  
This result (see proof in Appendix \ref{subsec:exponential-Lipschitz-improvement}) tells us that, for large $\sigma_{\max}^*$ and small $\sigma_{\min}^*$, the optimal uniform Lipschitz bound $\Lambda[\tau_\infty]$ is \emph{exponentially smaller} than the bound corresponding to the trivial schedule $\Lambda[\bar{\tau}]$, up to corrections of order one.
More precisely, for real-valued functions $\alpha, \beta: \R^2 \to \R$, let $\alpha \asymp \beta$ denote that there exist constants $c_1, c_2 > 0$ and some $y \in \R^2$ such that $c_1 \beta(x) \leq \alpha(x) \leq c_2 \beta(x)$ for all $x_i \geq y_i$ and $i \in \{1,2\}$. 
The Lipschitz bound then scales as follows:
\begin{corollary}
    \label{corr:Lipschitz-constant-improvement}
    Viewing $\Lambda[\tau_\infty]$ and $\Lambda[\bar{\tau}]$ as functions of $\sigma^*_{\max}$ and $1/\sigma^*_{\min}$, we have
    \begin{align*}\label{eq:LipTrivialSchedule}
        \Lambda[\bar{\tau}]  &\asymp \max \left\{ \sigma_{\max}^* \, , \, \left( \sigma_{\min}^* \right)^{-1} \right\} \\
        \Lambda[\tau_\infty] &\asymp \ln \sigma^*_{\max} - \ln \sigma^*_{\min} .
    \end{align*}
\end{corollary}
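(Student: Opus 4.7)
The plan is to read both estimates directly off the closed-form expressions provided in Theorem~\ref{thm:non-asymptotic-result-main-text} and perform elementary asymptotic analysis in a quadrant $\sigma^*_{\max} \geq y_1$, $1/\sigma^*_{\min} \geq y_2$ for thresholds chosen large enough. Using the translations $f^* = \sigma^*_{\max}-1$ and $g_* = \sigma^*_{\min}-1 \in (-1,0)$, the trivial-schedule estimate is almost immediate: Theorem~\ref{thm:non-asymptotic-result-main-text} gives $\Lambda[\bar{\tau}] = \max\{\sigma^*_{\max}-1,\, 1/\sigma^*_{\min} - 1\}$ after rewriting $(1-\sigma^*_{\min})/\sigma^*_{\min} = 1/\sigma^*_{\min} - 1$. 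The sandwich $x/2 \leq x-1 \leq x$ for $x \geq 2$, applied to each argument of the max, yields $\Lambda[\bar{\tau}] \asymp \max\{\sigma^*_{\max},\, 1/\sigma^*_{\min}\}$ as soon as $y_1, y_2 \geq 2$.

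For the optimal schedule, I first need to select the correct branch of Theorem~\ref{thm:non-asymptotic-result-main-text}. Substituting the above $f^*, g_*$ into \eqref{eq:transition-time-in-theorem} and expanding asymptotically as $\sigma^*_{\max} \to \infty$, $\sigma^*_{\min} \to 0$, one checks that both numerator and denominator of $t_0$ are positive and that the denominator exceeds the numerator by an amount of order $|\ln \sigma^*_{\min}|$; consequently $t_0 \in (0,1)$ once $y_1, y_2$ are large enough. Hence the generic formula \eqref{eq:generic-formula-non-asymptotic-result} governs the asymptotic regime, expressing $\Lambda[\tau_\infty]$ as the main term $\ln[(\sigma^*_{\max}-1)/\sigma^*_{\min}]$ plus a correction $\ln\bigl[\tfrac14\bigl(\tfrac{1}{1-\sigma^*_{\min}} + \tfrac{1-\sigma^*_{\min}}{(\sigma^*_{\max}-1)^2} + \tfrac{2}{\sigma^*_{\max}-1}\bigr)\bigr]$. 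A termwise check shows that for $\sigma^*_{\max} \geq 2$ and $\sigma^*_{\min} \leq 1/2$ the argument of the correction lies in a fixed positive, bounded interval, so the correction is uniformly $O(1)$. Meanwhile, the inequalities $\ln(\sigma^*_{\max}/2) - \ln\sigma^*_{\min} \leq \ln[(\sigma^*_{\max}-1)/\sigma^*_{\min}] \leq \ln\sigma^*_{\max} - \ln\sigma^*_{\min}$ show that the main term is $\asymp \ln\sigma^*_{\max} - \ln\sigma^*_{\min}$. Since this quantity is bounded below by $\ln(y_1 y_2) > 0$ on the quadrant, adding the $O(1)$ correction does not destroy the equivalence and gives the desired $\Lambda[\tau_\infty] \asymp \ln\sigma^*_{\max} - \ln\sigma^*_{\min}$.

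The main obstacle will be the branch selection: Theorem~\ref{thm:non-asymptotic-result-main-text} has three cases and the generic one must be confirmed as the only relevant branch in the asymptotic quadrant, ruling out the degenerate cases \eqref{eq:solution-of-L-inf-ODE-simple-form-main-text} and \eqref{eq:solution-of-L-inf-ODE-simple-form-2-main-text}. Once this is verified, the rest is routine logarithmic algebra and the sandwich estimates sketched above.
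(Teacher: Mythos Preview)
Your proposal is correct and follows essentially the same approach as the paper: both proofs read the closed forms off Theorem~\ref{thm:non-asymptotic-result-main-text}, verify that the generic branch \eqref{eq:generic-formula-non-asymptotic-result} governs the asymptotic regime, and then use elementary sandwich inequalities to extract the $\asymp$ relations. Your treatment of the branch selection (checking $t_0 \in (0,1)$ directly from \eqref{eq:transition-time-in-theorem}) is slightly more explicit than the paper's, which invokes continuity of $\tau_\infty$ and the intermediate value theorem to assert that $\tau_\infty(t_0) = -\tfrac{1}{2}(1/f^* + 1/g_*)$ is solvable in $[0,1]$, but the two arguments are otherwise interchangeable.
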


\smallskip

\begin{remark}
\label{remark:approximation_error}
This result has important consequences for control of the error in the target measure $\nu$ when the velocity is learned from data or otherwise approximated. In fact, we claim that the standard error bounds become exponentially tighter. 
Consider an approximate velocity  $(x,t) \mapsto \hat{v}(x,t)$ that is $\epsilon$-close to the true velocity field $(x,t) \mapsto v(x,t)$ in a $C^0$ sense. By \citet[Corollary 5.2]{approximation_paper}, the resulting distribution approximation error in the Wasserstein-2 metric is bounded by
$\epsilon \exp(\sup_{t \in [0,1]} L_t)$ where $L_t = \mathrm{Lip} (v(\cdot, t) )$ is the spatial Lipschitz constant of $v$. Thus, if we re-parameterize time with the optimal schedule $\tau_\infty$, the bound on the distribution error becomes $\mathcal O\left(\frac{\sigma^*_{\max}}{\sigma^*_{\min}}\epsilon\right)$ instead of $\mathcal O\left(\exp\left(\max \left\{ \sigma_{\max}^* \, , \, \left( \sigma_{\min}^* \right)^{-1} \right\}\right)\epsilon\right)$  for the trivial schedule.
On the other hand, while the straight-line trajectories obtained with the trivial schedule have essentially zero numerical integration error, we claim that the price we pay in numerical error due to non-straight trajectories is small compared to the improvement in the approximation error. Consider approximating the ODE flow map $x \mapsto X_{\tau_\infty}(x, 1)$
with a forward Euler discretization of step size $h$. The numerical integration error 
is then bounded by $\frac{hM}{2\Lambda[\tau_\infty]}\left(e^{\Lambda[\tau_\infty]} - 1\right)$ \citep{Book:NumericalAnalysis}, where $M = \sup_{x\in\Omega}\sup_{t\in[0, 1]} \left|\frac{d^2}{dt^2}X_{\tau_\infty}(x,t)\right| = \sup_{x\in\Omega}\sup_{t\in[0, 1]}\ddot{\tau}_\infty(t)|T(x) - x| = \mathcal O (1)$, if we assume the  eigenvalues are uniformly upper and lower bounded.
Using the triangle inequality, it is not hard to see that combined distribution approximation error in a Wasserstein-$2$ sense is bounded by \citep[Theorem $2$]{Sagiv2019TheWDWasserstainErrorBoundsPushForward} \[\mathcal O\left(\left(\frac{\sigma^*_{\max}}{\sigma^*_{\min}}\right)\epsilon + \left(\frac{\sigma^*_{\max}}{\sigma^*_{\min}\left( \ln \sigma^*_{\max} - \ln \sigma^*_{\min} \right)}\right)h\right),\]
which is still exponentially tighter than the combined error with the trivial schedule. We believe this heuristic argument justifies working with velocity fields that are constant speed with respect to $\tau$,
as they exponentially lower the Lipschitz constant while maintaining good control over the integration error.
\end{remark}

\begin{remark}
    \label{remark:contatn-sup-and-ansatz}
    In this paper, we have solved the optimization problem \eqref{eq:opt-problem} by variational calculus. One might instead try treating the problem directly, especially with foresight of the solution. For example, we notice that the optimizer of problem \eqref{eq:opt-problem}, as given in Theorem \ref{thm:solution-of-L-inf-ODE-main-text}, eliminates time dependence from the argument to the supremum in equation \eqref{eq:reformulated-Lambda}. Thus we might have ``guessed'' \emph{a priori} that the optimal solution satisfies
    \begin{equation*}
        |\dot{\tau}(t)| \, \sup_{ x \in \Omega }  \, \max \Bigg\{ 
            \left| \frac{f(x)}{1 + \tau(t) f(x)} \right| \, , \,
            \left| \frac{g(x)}{1 + \tau(t) g(x)} \right| 
        \Bigg\} = Z, 
    \end{equation*}    
for some constant $Z \in \R$, which anticipates the form of the ODE in Theorem \ref{thm:solution-of-optimal-problem-is-solution-of-L-inf-ODE-main-text}.  This link can be seen by using Lemma \ref{lemma:exchange-max-sup} to exchange the $\sup$ and $\max$ in \eqref{eq:reformulated-Lambda} and then Lemma \ref{lemma:triviality-of-suprema} to compute the $\sup$, for any $t \in [0,1]$. Writing down such an ansatz for the minimizer, however, does not establish its optimality.
\end{remark}

\section{Examples}
\label{sec:examples}
In this section, we present analytical examples of  optimal schedules $\tau_\infty$. In the process, qualitative features of $\tau_\infty$ are revealed.
\subsection{Univariate Gaussian distribution}
\label{subsec:univariate-gaussian}
Perhaps the  simplest  example to consider is transport between univariate Gaussian distributions\footnote{See Appendix \ref{sec:appendix-b} for a detailed discussion.} $\mu_1 = \mathcal{N}(\mu_1, \theta_1^2)$ and 
$\mu_2 = \mathcal{N}(\mu_2, \theta_2^2)$. The optimal transport map between $\mu_1$ and $\mu_2$ is linear \citep{chewi2024statistical}:
\begin{equation*}
    T(x) = \mu_2 + \frac{\theta_2}{\theta_1}(x - \mu_1) .
\end{equation*}
First, notice that working in one dimension means that for all $s \in \Omega$ we have $\sigma_{\max}(s) = \sigma_{\min}(s) = \theta_2 / \theta_1$.
Thus we must have $f^* = g_*$, meaning that the transition time is $t_0 = - \infty$. 
Substituting $f^* = g_* = \theta_2 / \theta_1  - 1 $ into \eqref{eq:solution-of-L-inf-ODE-simple-form-main-text}, we obtain
\begin{equation*}
    \tau_\infty(t) = \frac{ \left ( \frac{\theta_2}{\theta_1 }\right )^t - 1 }{\frac{\theta_2}{\theta_1} - 1} .
\end{equation*}
Notice that the boundary conditions are satisfied, i.e., $\tau_\infty(0) = 0$ and $\tau_\infty(1) = 1$, and, moreover, the concavity or convexity of the curve depends on whether $\theta_1 > \theta_2$ or $\theta_1 < \theta_2$, respectively.
This is intuitive: if $\theta_1 > \theta_2$ then we are mapping a wider Gaussian to a narrower one, causing particles traveling along the flow to concentrate at $t=1$, making the spatial Lipschitz constant largest at that time.
The optimal schedule \textit{slows} precisely near $t=1$ to suppress the increased Lipschitz constant.
Symmetrically, when $\theta_1 < \theta_2$ particles \say{explode} at small times $t \approx 0$ and the optimal schedule slows near $t=0$, tempering this behavior.
Finally, for $\theta_1 \nearrow \theta_2$,\footnote{The limit is needed since the case $\theta_1 = \theta_2$ is not covered by our framework, which assumes that our transformation $T$ is \textit{not} an isometry. } one can write $\theta_2 / \theta_1 = 1 + \epsilon$ with $\epsilon \searrow 0$ and Taylor expand to obtain $\tau_\infty(t) = \frac{ (1 + \epsilon \, t + o(\epsilon) ) - 1 }{ \epsilon } = t + \frac{o(\epsilon)}{\epsilon} \to t \quad \textup{as} \quad \epsilon \searrow 0$.
In other words, as $T$ tends to an isometry (a translation in this case) the optimal schedule tends to the trivial schedule.
Finally, one may compute:
\begin{equation*}
    \Lambda[t \mapsto t] = \max\left\{ \frac{\theta_2}{\theta_1} - 1 \, , \, \frac{\theta_1}{\theta_2} \right\} \quad \text{ and } \quad \Lambda[\tau_\infty] = \left| \ln \left( \frac{\theta_2}{\theta_1} \right) \right|\, ,
\end{equation*}
making the exponential improvement manifest.
Paths following the trivial schedule and the optimal schedule are illustrated in Figure~\ref{fig:Gaussianflow}, and an example of the optimal schedule is plotted in Figure~\ref{fig:schedules} (left).

\begin{figure}[h]
\centering
      \includegraphics[trim=0 195 0 210,clip,width=2.6in]{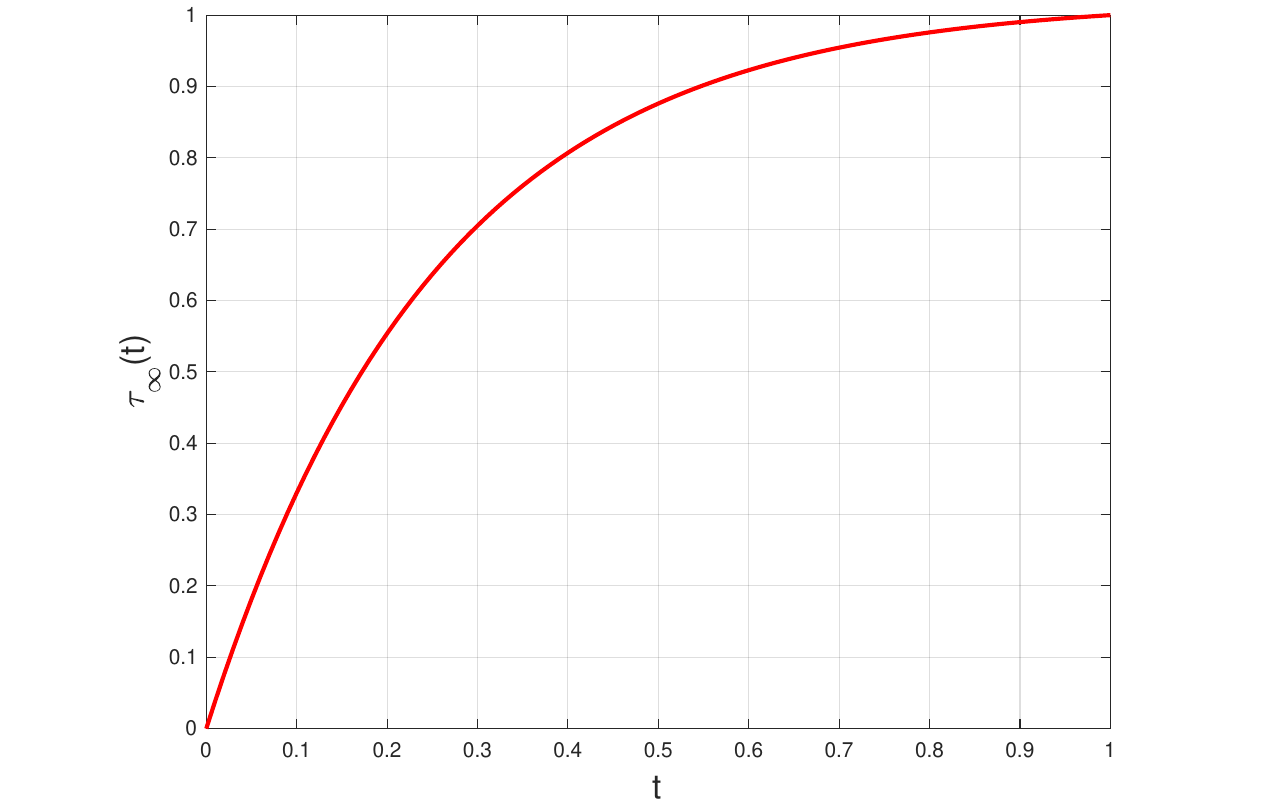}
      \includegraphics[trim=0 195 0 210,clip,width=2.6in]{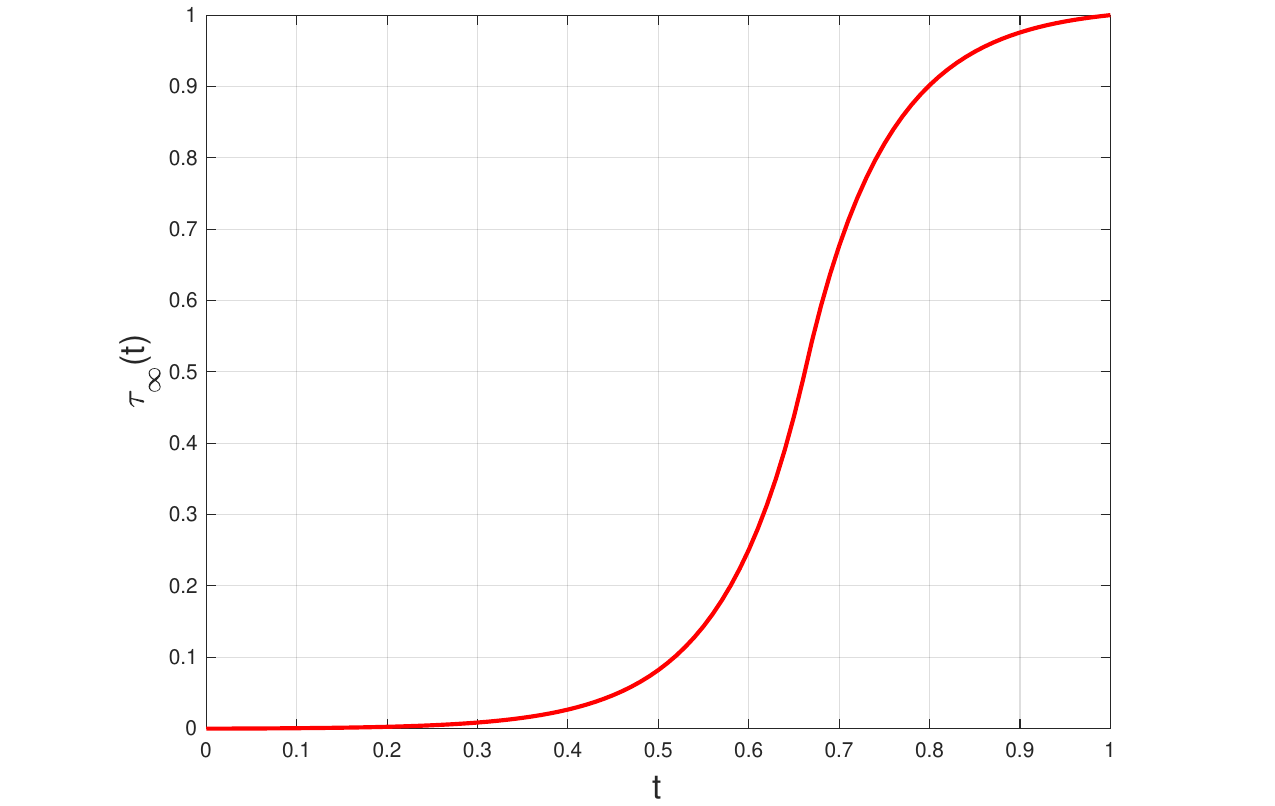}      
    \caption{Optimal schedules $\tau_\infty$: \textit{(left)} for the Gaussian example of Figure~\ref{fig:Gaussianflow} and Section~\ref{subsec:univariate-gaussian}, where $\theta_2 \ll \theta_1$; and \textit{(right)} for the Gaussian mixture example given at the end of Section~\ref{subsec:multi-modal-distributions}.}
  \label{fig:schedules}
\end{figure}

\subsection{Generic distributions $\mu$ and $\nu$ and ill-conditioned $T$}
\label{sec:illconddists}
For any $\mu, \nu \in \mathcal{P}_{\textup{ac}}(\Omega)$, let $T$ be such that $\sigma^*_{\max} \gg 1$ and $\sigma^*_{\min} \ll 1$.
This type of ill-conditioned transport is, in some sense, exactly what we aim to address in this paper.
To make notation more explicit, write:
\begin{align*}
    M & = \sigma_{\max}^* \coloneqq \sup_{s \in \Omega} \sigma_{\max}(s) \gg 1 \quad \text{ and } \quad \epsilon = \sigma_{\min}^* \coloneqq \inf_{s \in \Omega} \sigma_{\min}(s) \ll 1 \, .
\end{align*}
First, we will attempt to understand the qualitative behavior of the optimal schedule $\tau_\infty$.
Looking at \eqref{eq:solution-of-L-inf-ODE-main-text} and recalling that $f^* = \sigma_{\max}^* - 1$ and $g_* = \sigma_{\min}^* - 1$, we have:
\begin{equation*}
    \tau_\infty(t) = 
    \begin{cases}
        \frac{1}{M-1} \left\{ \frac{1}{4\epsilon} \left( \frac{M-1}{1-\epsilon} + \frac{1-\epsilon}{M-1} + 2 \right) \right\}^t - \frac{1}{M-1} \, , \, & t \leq t_0,\\
        -\frac{1}{2} \left( \frac{1}{M-1} + \frac{1}{1-\epsilon} \right) \left\{ \frac{\epsilon}{ \frac{1}{2}(1 + \frac{1 - \epsilon}{ M - 1} )} \right\}^t \, \left\{ \frac{1 + \frac{1 - \epsilon}{M-1}}{2} \right\}^{1-t} + \frac{1}{1-\epsilon} \, , \, & t > t_0 \, .
    \end{cases}
\end{equation*}
Now considering the double limit $\frac{1}{\epsilon}, M \to \infty$ and using Landau notation as needed, we have:
\begin{equation*}
    \tau_\infty(t) = 
    \begin{cases}
        \frac{1}{M} \left( e^{ t \ln \left( \frac{3}{4} \frac{M}{\epsilon} \right) + o(1) } \right)\left( 1 + o(1) \right) \, , \, & t \leq t_0,\\
        -\frac{1}{2} \left( 1 + o(1) \right) e^{t \, \ln \left( 4 \epsilon \right) + \ln\frac{1}{2} + o(1)} + 1 + o(1) \, , \, & t > t_0 \, .
    \end{cases}
\end{equation*}
What this first-order analysis reveals is that, for large $M$ and small $\epsilon$, the optimal schedule exhibits a \textit{sigmoid-like shape} with a transition time $t_0 \in (0,1)$.
Specifically, for $t \in [0, t_0]$ the curve will be of the form $A e^{\alpha t}$, for $A, \alpha \in \R^+$, whereas for $t \in [t_0, 1]$ the curve will behave as $-B e^{-\beta t}$, for $B, \beta \in \R^+$.
This behavior is intuitive: the optimal schedule is attempting to slow the dynamics at the beginning \textit{and} at the end of the transport, where the Lipschitz constant is largest.
Turning now to the transition time $t_0$, we use \eqref{eq:transition-time-in-theorem} to compute
\begin{align*}
    t_0 = \frac{ \ln \left[ \frac{1}{2} \left( 1 + \frac{M-1}{1-\epsilon} \right) \right] }{ \ln(\frac{1}{\epsilon}) + \ln\left[ \frac{1}{4} \left( \frac{M-1}{1-\epsilon} + \frac{1 - \epsilon}{M-1} + 2 \right) \right] } =\frac{\ln M + o(1)}{ \ln \epsilon^{-1} + \ln M + o(1) } \, .
\end{align*}
Notice that if we take $\epsilon^{-1} = \Theta(M^\gamma)$ for some $\gamma \in \R^+$, then we have:
\begin{equation*}
    t_0 = \frac{\ln M}{  \ln \left( M^\gamma \right) + \ln M } + o(1) = \frac{1}{\gamma + 1} + o(1) .
\end{equation*}
That is, if $\epsilon$ decays within a polynomial factor of $1/M$, then \textit{we will always have a transition time} $t_0 \in (0,1)$ for any positive exponent $\gamma \in \R^+$.
Moreover, notice that $t_0 > 1/2 \iff \gamma < 1$, $t_0 < 1/2 \iff \gamma > 1$ and $t_0 = 1/2 \iff \gamma = 1$. This is intuitive: if $\epsilon$ decays within the same order of magnitude as $1/M$, then the optimal schedule treats contraction and expansion equally.
It therefore allocates the same amount of time to suppressing either type of singularity. 
If $\epsilon$ decays more quickly than $1/M$ (i.e., $\gamma > 1)$, then the optimal schedule will allocate more time to suppressing the contractive behavior of $T$;
similarly, if  $\epsilon $ decays  more slowly ($\gamma < 1$), then it will spend more time suppressing the expansive behavior of $T$.
Finally, notice that if $\gamma = 0$---that is, if  $\epsilon$ remains bounded away from zero---we have $t_0 = 1$, meaning that the optimal schedule obeys the simpler form \eqref{eq:solution-of-L-inf-ODE-simple-form-main-text}, effectively tempering only the dilation of the transport.
To finish our asymptotic analysis, notice that if $\epsilon^{-1} = \mathcal{O}(\ln M)$ then we have $t_0 \to 1$, and if $\epsilon^{-1} = \Omega(e^M)$ then $t_0 \to 0$. 
In other words, if $\epsilon^{-1}$ grows sub-logarithmically then $\tau_\infty$ will only slow at the beginning, and if $\epsilon$  grows super-exponentially then $\tau_\infty$  will only slow at the end.

\subsection{Multi-modal distributions}
\label{subsec:multi-modal-distributions}
In this section, we discuss the implications of the optimal schedule on multi-modal distributions, which naturally lead to pathological flows.
Fix a compact and convex $\Omega \subset \Rd$ and let $\mu$ be localized around some $x_0 \in \Omega$ while also satisfying $\supp \mu = \Omega$.
By ``localized,'' we mean that there is some $r \ll \textup{diam} \, \Omega$ such that $\mu(B_r(x_0)) > 1 - \epsilon$ for some small $\epsilon > 0$.
Now consider weights $q_1, \dots, q_n > 0$ such that $\sum_i q_i = 1$ and let 
   $ \nu = \sum_{i=1}^n q_i \, \nu_i$,
where each $\nu_i$ is localized around some $x_i \in \Omega$ with all $x_i$ distinct, again meaning that for $r_i \ll \textup{diam} \, \Omega$ we have $\nu_i(B_{r_i}(x_i)) > 1 - \epsilon$. Also, $\supp \nu_i = \Omega$ for all $i$.
Moreover, let $D = \max_{i,j} \| x_i - x_j \|$ be the maximal modal separation and, without loss of generality, assume that $x_1$ and $x_2$ satisfy $\| x_1 - x_2 \| = D$.
Now suppose $T$ is a transport map coupling $\mu$ to $\nu$ such that for some $c>0$, depending only on the domain $\Omega$ and the weights $(q_i)_{i=1}^n$, we have that for all $x, y \in \Omega$:
\begin{equation}
    \label{eq:well-behaved-T}
    \left \|x - y \right \|  \frac{1}{c} \leq \left \| T(x) - T(y) \right  \| \leq \|x-y\| \, c \, D \, .
\end{equation}
Such inequalities are expected to hold for regular enough $\mu$ and $\nu$, e.g., if their densities are uniformly bounded away from zero.\footnote{Specifically, one does not expect to dilate $\mu$ more than the maximal separation $D$ of the modes.}
Furthermore, assume that $q_1, q_2 \geq 2 \epsilon$, i.e., the most well-separated modes contain a non-trivial amount of mass; and that $r_i < D /(2n)$ and $\epsilon < 1/2$, i.e., all modes are localized on a scale smaller than $D$.
Since $T_{\sharp} \mu = \nu$, there must exist $y_1, y_2 \in B_{r_0}(x_0)$ such that $T(y_1) \in B_{r_1}(x_1)$ and $T(y_2) \in B_{r_2}(x_2)$.
In particular, this implies that
  $  \| T(y_1) - T(y_2) \| \geq D - (r_1 + r_2) \geq D / 2 $.
Thus, the Lipschitz constant of $T$ is at least $D/2$ and the objective under the trivial schedule satisfies
\begin{equation*}
    \Lambda[t \mapsto t] \geq D/2.
\end{equation*}
Due to \eqref{eq:well-behaved-T}, however, we have that the minimal and maximal eigenvalues of the Jacobian of $T$ are bounded by $1/c$ and $c D$, respectively.
Hence via Theorem \ref{thm:non-asymptotic-result-main-text} we have
\begin{equation*}
    \Lambda[\tau_\infty] \leq \ln \left ( \frac{c^2}{4} \, D \right ) + o(1) \quad \textup{as} \quad D \to \infty \, .
\end{equation*}
This illustration is meant to convey that when the source is unimodal and the target is multi-modal, we expect that under the trivial schedule, the Lipschitz constant of our flow will be at least as bad as the maximal separation of the modes $D$. Applying the optimal schedule will, most often, bring this down to a logarithmic factor in $D$, within constants.

In Figures~\ref{fig:GMMflow} and \ref{fig:GMMderivatives}, we illustrate this scaling for a standard Gaussian source measure and a bimodal Gaussian mixture target, $\nu = 0.8\mathcal{N}(-2, 0.02^2) + 0.2 \mathcal{N}(2, 0.01^2)$. Figure~\ref{fig:GMMflow} shows that trajectories under the optimal schedule $\tau_\infty$ slow down near both $t=0$ and $t=1$, consistent with the discussion in Section~\ref{sec:illconddists}. The optimal schedule for this source/target pair is illustrated on the right of Figure~\ref{fig:schedules}. Figure~\ref{fig:GMMderivatives} shows the spatial Lipschitz constant of the velocity field $v$ as a function of time, for both the trivial and optimal schedules. The exponential improvement in the worst-case Lipschitz constant is manifest, and we see that the optimal schedule indeed makes the Lipschitz constant itself become constant over time, consistent with the observation made in Remark~\ref{remark:contatn-sup-and-ansatz}.

\begin{figure}
\centering
      \includegraphics[trim=0 195 0 210,clip,width=2.9in]{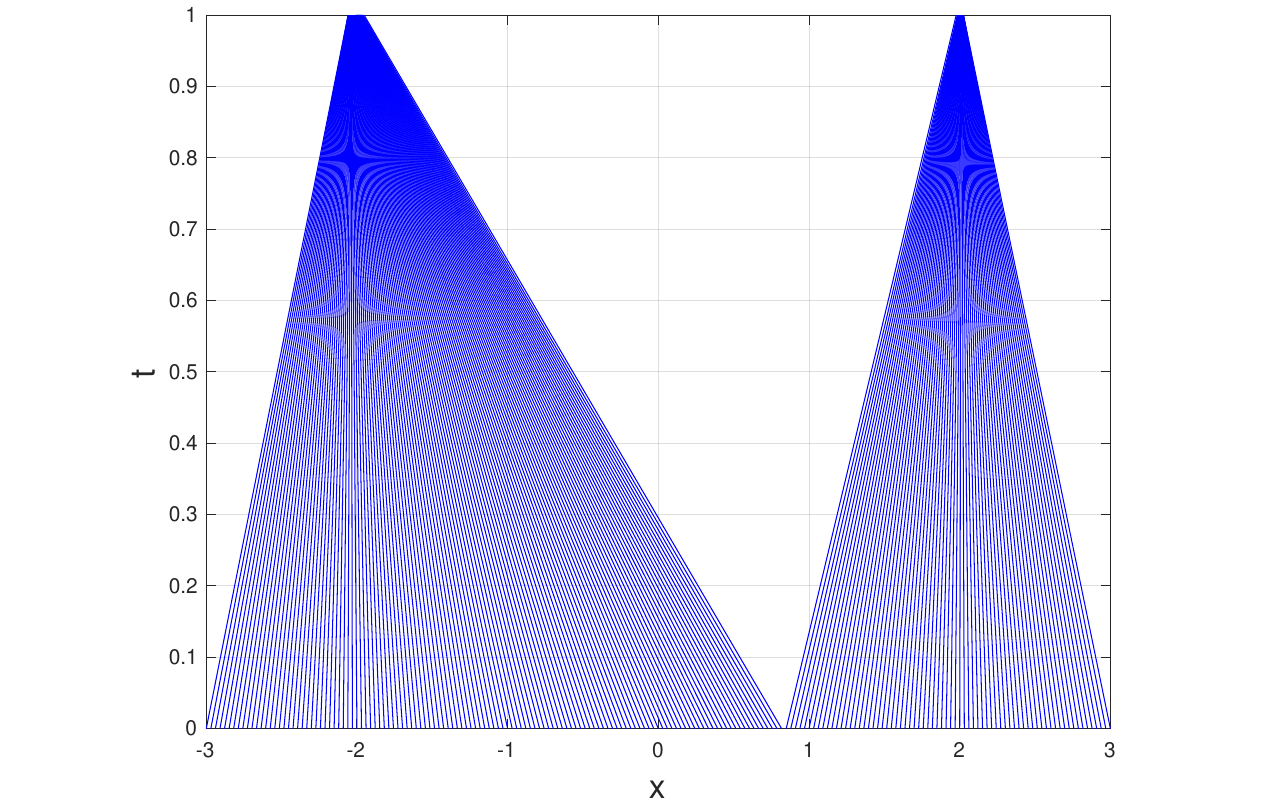}
      \includegraphics[trim=0 195 0 210,clip,width=2.9in]{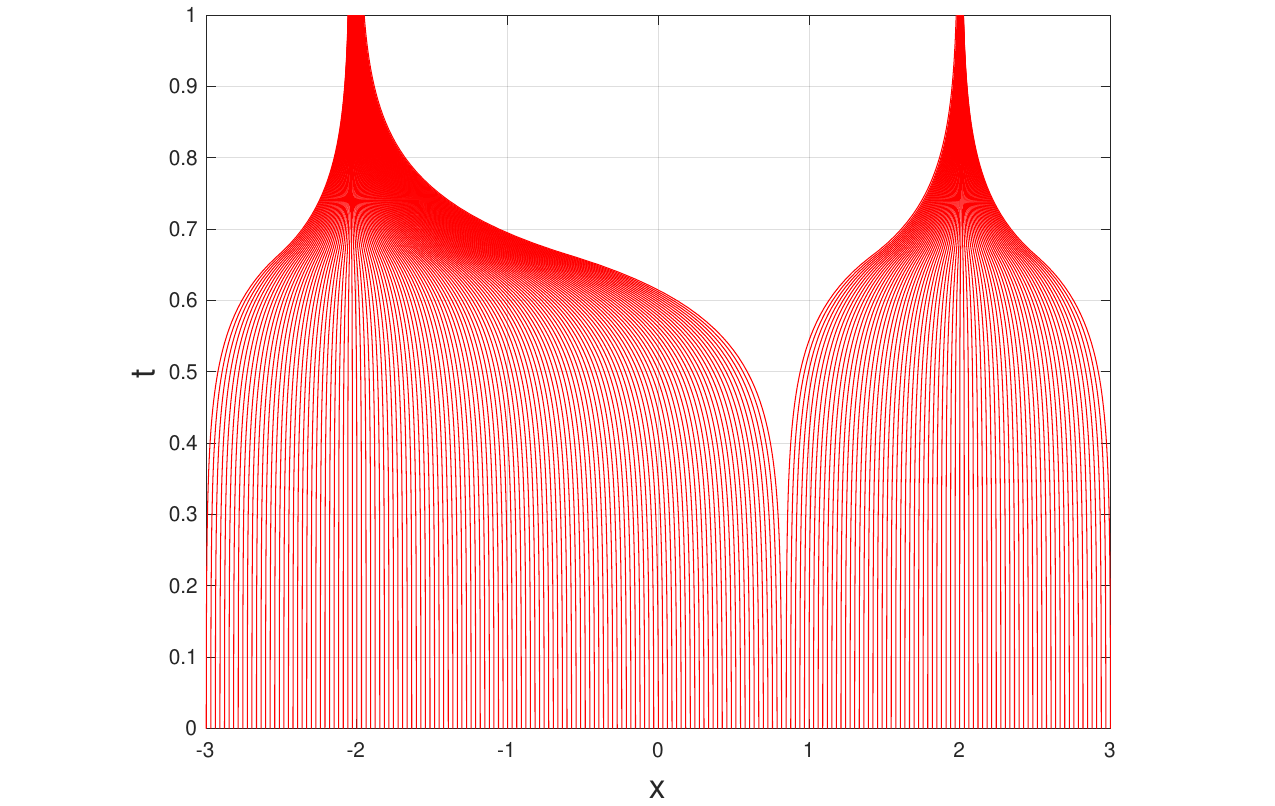}      
    \caption{Trajectories  $t \mapsto X(x, t)$ transporting a standard Gaussian $\mu$ at $t=0$ to a bimodal Gaussian mixture $\nu = 0.8\mathcal{N}(-2, 0.02^2) + 0.2 \mathcal{N}(2, 0.01^2)$ at $t=1$:
       \textit{(left)}  straight-line trajectories;  \textit{(right)} trajectories produced by the optimal schedule $\tau_\infty$.}
  \label{fig:GMMflow}
   \end{figure}

\begin{figure}
\centering
      \includegraphics[trim=0 195 0 210,clip,width=3.5in]{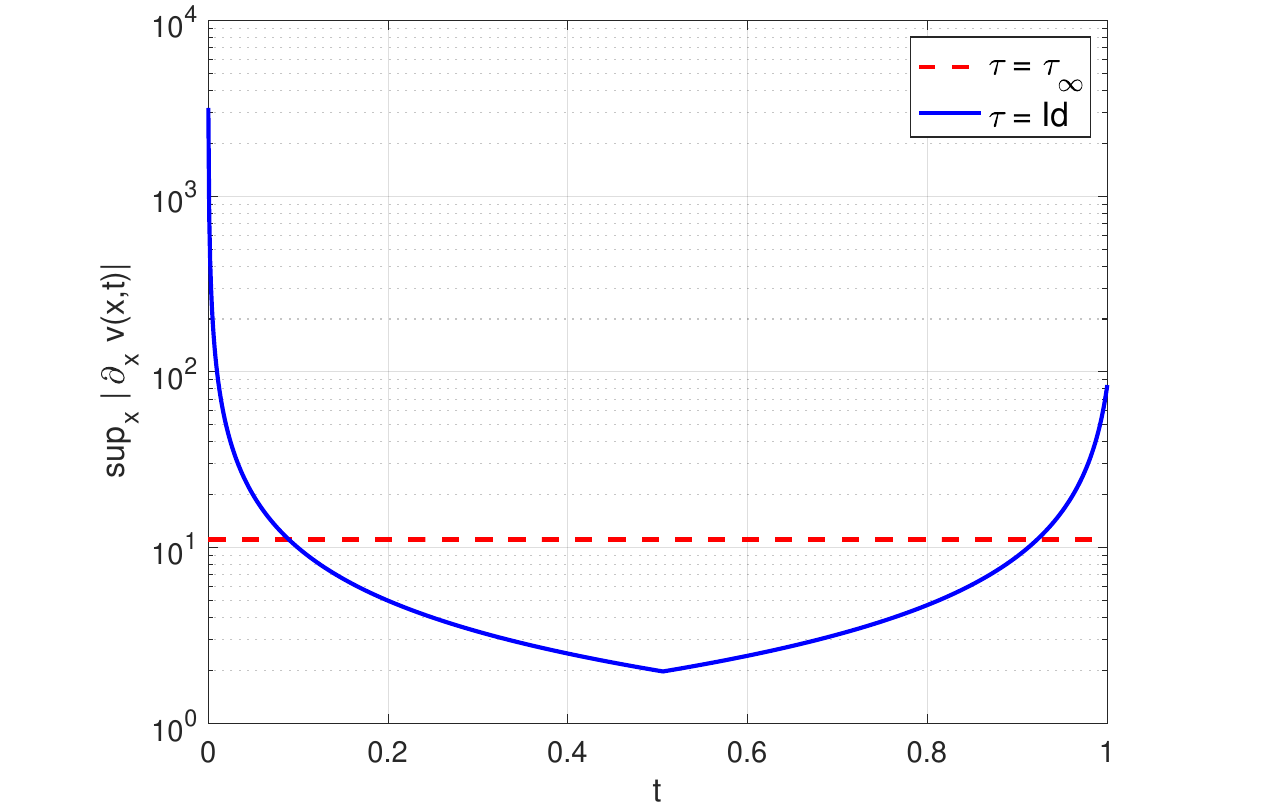}
     \caption{Spatial Lipschitz constant of the velocity field versus time $t$, for the Gaussian mixture example described in Figure~\ref{fig:GMMflow}: under the trivial schedule, $\Lip  (v(\cdot,t))$ (blue/solid line); and under the optimal schedule, $\Lip (v_{\tau_\infty}(\cdot,t)  )$ (red/dashed line).}
  \label{fig:GMMderivatives}
  \end{figure}

\section{Discussion}
\label{Discussion}
The present work suggests many avenues for future exploration. First, we believe that the proof techniques developed here can be extended to more general schedules and flows.
For instance, we can extend the notion of a schedule, looking for \textit{matrix-valued} maps $\chi: [0,1] \to \operatorname{GL}(d)$ that modify the linear dynamics via $\frac{d}{dt}X_\chi(x,t) = \dot{\chi}(t) \left( T(x) - x \right)$,
where $\chi(t) = \tau(t) I_d$ recovers the present problem. It is also of interest to solve the optimal scheduling problem for paths that depart from the displacement interpolation of a given transport; this of course touches on the broader problem of identifying optimal \textit{paths}, again from the perspective of velocity field regularity. 
We also note that the symmetry assumption on $\nabla T$ (cf.\ Assumption~\ref{assumption:regularity-of-transport}) can likely be relaxed if one is willing to work with \emph{singular values} instead of eigenvalues; in this case, however, one would loose the straightforward dilation / compression intuition used to understand the optimal schedule.

Second, we suggest that our method for finding optimal schedules, and the form of these schedules, can spark more efficient algorithms for learning dynamical representations of transport.  While the optimal $\tau_\infty$ in Theorem~\ref{thm:solution-of-L-inf-ODE-main-text} depends on the eigenvalues of $\nabla T$, its general structure (a sigmoid-like curve with transition time $t_0 \in [0,1]$) is independent of $T$. It could thus be well represented with a few-parameter function and simply made part of the learning problem. Alternatively, we note that the optimal schedule does not require full information about the underlying map $T$: only the maximal and minimal eigenvalues of its Jacobian are needed. Thus one could devise an iterative scheme starting, e.g., with the trivial schedule $\tau = \text{Id}$, estimating a velocity field $v$ and the maximal and minimal eigenvalues of the resulting $\nabla T^v$, computing the implied optimal schedule $\tau_\infty$, updating the velocity field $v_\tau$ accordingly, and so on. We leave the theoretical and empirical exploration of these ideas for future work.

\acks{PT, ZR, and YM acknowledge support from the US Air Force Office of Scientific Research (AFOSR) via award number FA9550-20-1-0397. ZR and YM also acknowledge support from the US Department of Energy, Office of Advanced Scientific Computing Research, under grants DE-SC0021226 and DE-SC0023187. PT acknowledges support from a George and Marie Vergottis Fellowship at MIT. ZR acknowledges support from a US National Science Foundation Graduate Research Fellowship. PT also thanks Youssef Chaabouni, Mehdi Makni,  and Baptiste Rabecq for helpful discussions.}

\bibliography{bibliography}

\begin{thebibliography}{47}
\providecommand{\natexlab}[1]{#1}
\providecommand{\url}[1]{\texttt{#1}}
\expandafter\ifx\csname urlstyle\endcsname\relax
  \providecommand{\doi}[1]{doi: #1}\else
  \providecommand{\doi}{doi: \begingroup \urlstyle{rm}\Url}\fi

\bibitem[Albergo and Vanden{-}Eijnden(2022)]{/corr/abs-2209-15571/EricStochasticInterpolant}
Michael~S. Albergo and Eric Vanden{-}Eijnden.
\newblock Building normalizing flows with stochastic interpolants.
\newblock \emph{CoRR}, abs/2209.15571, 2022.
\newblock \doi{10.48550/ARXIV.2209.15571}.
\newblock URL \url{https://doi.org/10.48550/arXiv.2209.15571}.

\bibitem[Aranguri et~al.(2025)Aranguri, Biroli, Mezard, and Vanden-Eijnden]{aranguri2025/OptScheduleInStochasticInterpolant}
Santiago Aranguri, Giulio Biroli, Marc Mezard, and Eric Vanden-Eijnden.
\newblock Optimizing noise schedules of generative models in high dimensions, 2025.
\newblock URL \url{https://arxiv.org/abs/2501.00988}.

\bibitem[Atkinson(1989)]{Book:NumericalAnalysis}
Kendall~E. Atkinson.
\newblock \emph{{An Introduction to Numerical Analysis}}.
\newblock John Wiley \& Sons, New York, second edition, 1989.
\newblock ISBN 0471500232.
\newblock URL \url{http://www.worldcat.org/isbn/0471500232}.

\bibitem[Baptista et~al.(2024)Baptista, Marzouk, and Zahm]{focm/BaptistaMZ24}
Ricardo Baptista, Youssef~M. Marzouk, and Olivier Zahm.
\newblock On the representation and learning of monotone triangular transport maps.
\newblock \emph{Found. Comput. Math.}, 24\penalty0 (6):\penalty0 2063--2108, 2024.
\newblock \doi{10.1007/S10208-023-09630-X}.
\newblock URL \url{https://doi.org/10.1007/s10208-023-09630-x}.

\bibitem[Benton et~al.(2024)Benton, Deligiannidis, and Doucet]{BentonErrorBoundsFlowMatching}
Joe Benton, George Deligiannidis, and Arnaud Doucet.
\newblock Error bounds for flow matching methods.
\newblock \emph{Trans. Mach. Learn. Res.}, 2024, 2024.
\newblock URL \url{https://openreview.net/forum?id=uqQPyWFDhY}.

\bibitem[Bogachev and Ruas(2007)]{bogachev2007measure}
Vladimir~Igorevich Bogachev and Maria Aparecida~Soares Ruas.
\newblock \emph{Measure theory}, volume~1.
\newblock Springer, 2007.

\bibitem[Braides(2006)]{braides2006handbook}
Andrea Braides.
\newblock A handbook of \textgamma-convergence.
\newblock In Michel Chipot and Pavel Quittner, editors, \emph{Handbook of Differential Equations: Stationary Partial Differential Equations}, volume~3, pages 101--213. Elsevier, 2006.

\bibitem[Brezis(2011)]{brezis2011functional}
H~Brezis.
\newblock Functional analysis, sobolev spaces and partial differential equations, 2011.

\bibitem[Chen et~al.(2018)Chen, Rubanova, Bettencourt, and Duvenaud]{chen2018neuralODE}
Ricky~TQ Chen, Yulia Rubanova, Jesse Bettencourt, and David~K Duvenaud.
\newblock Neural ordinary differential equations.
\newblock \emph{Advances in Neural Information Processing Systems}, 31, 2018.

\bibitem[Chewi et~al.(2024)Chewi, Niles-Weed, and Rigollet]{chewi2024statistical}
Sinho Chewi, Jonathan Niles-Weed, and Philippe Rigollet.
\newblock Statistical optimal transport.
\newblock \emph{arXiv preprint arXiv:2407.18163}, 2024.

\bibitem[Dacorogna(2007)]{dacorogna2007calculusofvariations}
Bernard Dacorogna.
\newblock \emph{Direct methods in the calculus of variations}, volume~78.
\newblock Springer Science \& Business Media, 2007.

\bibitem[Dal~Maso(1993)]{dalmaso_gammaconvergence}
Gianni Dal~Maso.
\newblock \emph{An Introduction to $\Gamma$-Convergence}.
\newblock Springer International Publishing, 1993.

\bibitem[De~Giorgi and Franzoni(1975)]{de1975tipo}
Ennio De~Giorgi and Tullio Franzoni.
\newblock Su un tipo di convergenza variazionale.
\newblock \emph{Atti della Accademia Nazionale dei Lincei. Classe di Scienze Fisiche, Matematiche e Naturali. Rendiconti}, 58\penalty0 (6):\penalty0 842--850, 1975.

\bibitem[Domingo{-}Enrich et~al.(2023)Domingo{-}Enrich, Han, Amos, Bruna, and Chen]{StochasticOptimalControlMatching1}
Carles Domingo{-}Enrich, Jiequn Han, Brandon Amos, Joan Bruna, and Ricky T.~Q. Chen.
\newblock Stochastic optimal control matching.
\newblock \emph{CoRR}, abs/2312.02027, 2023.
\newblock \doi{10.48550/ARXIV.2312.02027}.
\newblock URL \url{https://doi.org/10.48550/arXiv.2312.02027}.

\bibitem[Domingo{-}Enrich et~al.(2024)Domingo{-}Enrich, Drozdzal, Karrer, and Chen]{StochasticOptimalControlMatching2}
Carles Domingo{-}Enrich, Michal Drozdzal, Brian Karrer, and Ricky T.~Q. Chen.
\newblock Adjoint matching: Fine-tuning flow and diffusion generative models with memoryless stochastic optimal control.
\newblock \emph{CoRR}, abs/2409.08861, 2024.
\newblock \doi{10.48550/ARXIV.2409.08861}.
\newblock URL \url{https://doi.org/10.48550/arXiv.2409.08861}.

\bibitem[Durkan et~al.(2019)Durkan, Bekasov, Murray, and Papamakarios]{/nips/DurkanB0P19/NeuralSplineFlow}
Conor Durkan, Artur Bekasov, Iain Murray, and George Papamakarios.
\newblock Neural spline flows.
\newblock In Hanna~M. Wallach, Hugo Larochelle, Alina Beygelzimer, Florence d'Alch{\'{e}}{-}Buc, Emily~B. Fox, and Roman Garnett, editors, \emph{Advances in Neural Information Processing Systems 32: Annual Conference on Neural Information Processing Systems 2019, NeurIPS 2019, December 8-14, 2019, Vancouver, BC, Canada}, pages 7509--7520, 2019.
\newblock URL \url{https://proceedings.neurips.cc/paper/2019/hash/7ac71d433f282034e088473244df8c02-Abstract.html}.

\bibitem[Evans(2022)]{evans2022partial}
Lawrence~C Evans.
\newblock \emph{Partial differential equations}, volume~19.
\newblock American Mathematical Society, 2022.

\bibitem[Figalli and Glaudo(2023)]{figalli2023invitation}
Alessio Figalli and Federico Glaudo.
\newblock \emph{An invitation to optimal transport, Wasserstein distances, and gradient flows}.
\newblock 2023.

\bibitem[Finlay et~al.(2020)Finlay, Jacobsen, Nurbekyan, and Oberman]{finlay2020HowToTrain}
Chris Finlay, J{\"o}rn-Henrik Jacobsen, Levon Nurbekyan, and Adam Oberman.
\newblock How to train your neural {ODE}: the world of {J}acobian and kinetic regularization.
\newblock In \emph{International conference on machine learning}, pages 3154--3164. PMLR, 2020.

\bibitem[Garroni et~al.(2001)Garroni, Nesi, and Ponsiglione]{garroni2001dielectric}
Adriana Garroni, Vincenzo Nesi, and Marcello Ponsiglione.
\newblock Dielectric breakdown: optimal bounds.
\newblock \emph{Proceedings of the Royal Society of London. Series A: Mathematical, Physical and Engineering Sciences}, 457\penalty0 (2014):\penalty0 2317--2335, 2001.

\bibitem[Grathwohl et~al.(2019)Grathwohl, Chen, Bettencourt, Sutskever, and Duvenaud]{iclr/Grathwohl19/FFJORD}
Will Grathwohl, Ricky T.~Q. Chen, Jesse Bettencourt, Ilya Sutskever, and David Duvenaud.
\newblock {FFJORD:} free-form continuous dynamics for scalable reversible generative models.
\newblock In \emph{7th International Conference on Learning Representations, {ICLR} 2019, New Orleans, LA, USA, May 6-9, 2019}. OpenReview.net, 2019.
\newblock URL \url{https://openreview.net/forum?id=rJxgknCcK7}.

\bibitem[G{\"{u}}hring and Raslan(2021)]{/GuhringR21/NNapproximation1}
Ingo G{\"{u}}hring and Mones Raslan.
\newblock Approximation rates for neural networks with encodable weights in smoothness spaces.
\newblock \emph{Neural Networks}, 134:\penalty0 107--130, 2021.
\newblock \doi{10.1016/J.NEUNET.2020.11.010}.
\newblock URL \url{https://doi.org/10.1016/j.neunet.2020.11.010}.

\bibitem[Howard(1998)]{howard1998gronwall}
Ralph Howard.
\newblock The gronwall inequality.
\newblock \emph{lecture notes}, 1998.

\bibitem[Huang et~al.(2018)Huang, Krueger, Lacoste, and Courville]{icml/HuangKLC18/NeuralAutoFlow}
Chin{-}Wei Huang, David Krueger, Alexandre Lacoste, and Aaron~C. Courville.
\newblock Neural autoregressive flows.
\newblock In Jennifer~G. Dy and Andreas Krause, editors, \emph{Proceedings of the 35th International Conference on Machine Learning, {ICML} 2018, Stockholmsm{\"{a}}ssan, Stockholm, Sweden, July 10-15, 2018}, volume~80 of \emph{Proceedings of Machine Learning Research}, pages 2083--2092. {PMLR}, 2018.
\newblock URL \url{http://proceedings.mlr.press/v80/huang18d.html}.

\bibitem[Jaini et~al.(2019)Jaini, Selby, and Yu]{icml/JainiSY19/SOSFlow}
Priyank Jaini, Kira~A. Selby, and Yaoliang Yu.
\newblock Sum-of-squares polynomial flow.
\newblock In Kamalika Chaudhuri and Ruslan Salakhutdinov, editors, \emph{Proceedings of the 36th International Conference on Machine Learning, {ICML} 2019, 9-15 June 2019, Long Beach, California, {USA}}, volume~97 of \emph{Proceedings of Machine Learning Research}, pages 3009--3018. {PMLR}, 2019.
\newblock URL \url{http://proceedings.mlr.press/v97/jaini19a.html}.

\bibitem[James(1964)]{james1964weak}
Robert~C James.
\newblock Weak compactness and reflexivity.
\newblock \emph{Israel Journal of Mathematics}, 2:\penalty0 101--119, 1964.

\bibitem[Kingma and Dhariwal(2018)]{nips/KingmaD18/Glow}
Diederik~P. Kingma and Prafulla Dhariwal.
\newblock Glow: Generative flow with invertible 1x1 convolutions.
\newblock In Samy Bengio, Hanna~M. Wallach, Hugo Larochelle, Kristen Grauman, Nicol{\`{o}} Cesa{-}Bianchi, and Roman Garnett, editors, \emph{Advances in Neural Information Processing Systems 31: Annual Conference on Neural Information Processing Systems 2018, NeurIPS 2018, December 3-8, 2018, Montr{\'{e}}al, Canada}, pages 10236--10245, 2018.
\newblock URL \url{https://proceedings.neurips.cc/paper/2018/hash/d139db6a236200b21cc7f752979132d0-Abstract.html}.

\bibitem[Lipman et~al.(2023)Lipman, Chen, Ben{-}Hamu, Nickel, and Le]{/iclr/LipmanCBNL23/FlowMatching}
Yaron Lipman, Ricky T.~Q. Chen, Heli Ben{-}Hamu, Maximilian Nickel, and Matthew Le.
\newblock Flow matching for generative modeling.
\newblock In \emph{The Eleventh International Conference on Learning Representations, {ICLR} 2023, Kigali, Rwanda, May 1-5, 2023}. OpenReview.net, 2023.
\newblock URL \url{https://openreview.net/forum?id=PqvMRDCJT9t}.

\bibitem[Liu et~al.(2023)Liu, Gong, and Liu]{Liu2022FlowStraight}
Xingchao Liu, Chengyue Gong, and Qiang Liu.
\newblock Flow straight and fast: Learning to generate and transfer data with rectified flow.
\newblock In \emph{The Eleventh International Conference on Learning Representations, {ICLR} 2023, Kigali, Rwanda, May 1-5, 2023}. OpenReview.net, 2023.
\newblock URL \url{https://openreview.net/forum?id=XVjTT1nw5z}.

\bibitem[Marzouk et~al.(2016)Marzouk, Moselhy, Parno, and Spantini]{MMPS16}
Youssef Marzouk, Tarek Moselhy, Matthew Parno, and Alessio Spantini.
\newblock \emph{Sampling via Measure Transport: An Introduction}, pages 1--41.
\newblock Springer International Publishing, Cham, 2016.
\newblock ISBN 978-3-319-11259-6.
\newblock \doi{10.1007/978-3-319-11259-6_23-1}.
\newblock URL \url{https://doi.org/10.1007/978-3-319-11259-6_23-1}.

\bibitem[Marzouk et~al.(2025)Marzouk, Ren, and Zech]{approximation_paper}
Youssef Marzouk, Zhi Ren, and Jakob Zech.
\newblock Distribution learning via neural differential equations: minimal energy regularization and approximation theory, 2025.
\newblock URL \url{https://arxiv.org/abs/2502.03795}.

\bibitem[Marzouk et~al.(2024)Marzouk, Ren, Wang, and Zech]{StatisticalNeuralODE}
Youssef~M. Marzouk, Zhi Ren, Sven Wang, and Jakob Zech.
\newblock Distribution learning via neural differential equations: {A} nonparametric statistical perspective.
\newblock \emph{J. Mach. Learn. Res.}, 25:\penalty0 232:1--232:61, 2024.
\newblock URL \url{https://jmlr.org/papers/v25/23-1280.html}.

\bibitem[Moselhy and Marzouk(2012)]{MoselhyM12/BayesianInferenceOT}
Tarek~A. Moselhy and Youssef~M. Marzouk.
\newblock Bayesian inference with optimal maps.
\newblock \emph{J. Comput. Phys.}, 231\penalty0 (23):\penalty0 7815--7850, 2012.
\newblock \doi{10.1016/J.JCP.2012.07.022}.
\newblock URL \url{https://doi.org/10.1016/j.jcp.2012.07.022}.

\bibitem[Onken et~al.(2021)Onken, Fung, Li, and Ruthotto]{onken2021OTFlow}
Derek Onken, Samy~Wu Fung, Xingjian Li, and Lars Ruthotto.
\newblock {OT-F}low: Fast and accurate continuous normalizing flows via optimal transport.
\newblock In \emph{Proceedings of the AAAI Conference on Artificial Intelligence}, volume~35, pages 9223--9232, 2021.

\bibitem[Papamakarios et~al.(2021)Papamakarios, Nalisnick, Rezende, Mohamed, and Lakshminarayanan]{papamakarios2021normalizing}
George Papamakarios, Eric Nalisnick, Danilo~Jimenez Rezende, Shakir Mohamed, and Balaji Lakshminarayanan.
\newblock Normalizing flows for probabilistic modeling and inference.
\newblock \emph{Journal of Machine Learning Research}, 22\penalty0 (57):\penalty0 1--64, 2021.

\bibitem[Pham()]{pham2025notes}
Doanh Pham.
\newblock Notes on numerical methods.
\newblock URL \url{https://people.math.wisc.edu/~htran24/note-DoanhPham.pdf}.

\bibitem[Rezende and Mohamed(2015)]{RezendeM15/VariationalNormalizingFlow}
Danilo~Jimenez Rezende and Shakir Mohamed.
\newblock Variational inference with normalizing flows.
\newblock In Francis~R. Bach and David~M. Blei, editors, \emph{Proceedings of the 32nd International Conference on Machine Learning, {ICML} 2015, Lille, France, 6-11 July 2015}, volume~37 of \emph{{JMLR} Workshop and Conference Proceedings}, pages 1530--1538. JMLR.org, 2015.
\newblock URL \url{http://proceedings.mlr.press/v37/rezende15.html}.

\bibitem[Sabour et~al.(2024)Sabour, Fidler, and Kreis]{icml/SabourFK24/OptScheduleinDiffusionModels}
Amirmojtaba Sabour, Sanja Fidler, and Karsten Kreis.
\newblock Align your steps: Optimizing sampling schedules in diffusion models.
\newblock In \emph{Forty-first International Conference on Machine Learning, {ICML} 2024, Vienna, Austria, July 21-27, 2024}. OpenReview.net, 2024.
\newblock URL \url{https://openreview.net/forum?id=nBGBzV4It3}.

\bibitem[Sagiv(2019)]{Sagiv2019TheWDWasserstainErrorBoundsPushForward}
Amir Sagiv.
\newblock The wasserstein distances between pushed-forward measures with applications to uncertainty quantification.
\newblock \emph{Communications in Mathematical Sciences}, 2019.
\newblock URL \url{https://api.semanticscholar.org/CorpusID:119129452}.

\bibitem[Song et~al.(2021)Song, Sohl{-}Dickstein, Kingma, Kumar, Ermon, and Poole]{/iclr/0011SKKEP21/DiffusionModels}
Yang Song, Jascha Sohl{-}Dickstein, Diederik~P. Kingma, Abhishek Kumar, Stefano Ermon, and Ben Poole.
\newblock Score-based generative modeling through stochastic differential equations.
\newblock In \emph{9th International Conference on Learning Representations, {ICLR} 2021, Virtual Event, Austria, May 3-7, 2021}. OpenReview.net, 2021.
\newblock URL \url{https://openreview.net/forum?id=PxTIG12RRHS}.

\bibitem[Tabak and Turner(2013)]{EricDensityEstimation1}
{E. G.} Tabak and {Cristina V.} Turner.
\newblock A family of nonparametric density estimation algorithms.
\newblock \emph{Communications on Pure and Applied Mathematics}, 66\penalty0 (2):\penalty0 145--164, February 2013.
\newblock ISSN 0010-3640.
\newblock \doi{10.1002/cpa.21423}.

\bibitem[Tabak and Vanden-Eijnden(2010)]{EricDensityEstimation2}
{Esteban G.} Tabak and Eric Vanden-Eijnden.
\newblock Density estimation by dual ascent of the log-likelihood.
\newblock \emph{Communications in Mathematical Sciences}, 8\penalty0 (1):\penalty0 217--233, 2010.
\newblock ISSN 1539-6746.
\newblock \doi{10.4310/CMS.2010.v8.n1.a11}.

\bibitem[Wang and Marzouk(2022)]{wang2022MiniMaxDensityEstimation}
Sven Wang and Youssef Marzouk.
\newblock On minimax density estimation via measure transport, 2022.
\newblock URL \url{https://arxiv.org/abs/2207.10231}.

\bibitem[Wehenkel and Louppe(2019)]{wehenkel2019unconstrained}
Antoine Wehenkel and Gilles Louppe.
\newblock Unconstrained monotonic neural networks.
\newblock \emph{Advances in neural information processing systems}, 32, 2019.

\bibitem[Williams et~al.(2024)Williams, Campbell, Doucet, and Syed]{ScoreOptimalDiffusionSchedules}
Christopher Williams, Andrew Campbell, Arnaud Doucet, and Saifuddin Syed.
\newblock Score-optimal diffusion schedules.
\newblock \emph{CoRR}, abs/2412.07877, 2024.
\newblock \doi{10.48550/ARXIV.2412.07877}.
\newblock URL \url{https://doi.org/10.48550/arXiv.2412.07877}.

\bibitem[Yarotsky(2017)]{/nn/Yarotsky17/NNapproximation2}
Dmitry Yarotsky.
\newblock Error bounds for approximations with deep relu networks.
\newblock \emph{Neural Networks}, 94:\penalty0 103--114, 2017.
\newblock \doi{10.1016/J.NEUNET.2017.07.002}.
\newblock URL \url{https://doi.org/10.1016/j.neunet.2017.07.002}.

\bibitem[Ziemer(2012)]{ziemer2012weakly}
William~P Ziemer.
\newblock \emph{Weakly differentiable functions: Sobolev spaces and functions of bounded variation}, volume 120.
\newblock Springer Science \& Business Media, 2012.

\end{thebibliography}
\newpage
\appendix
\section{Proofs of main results}
\label{sec:Appendix}
\subsection{Preliminary results}
\label{subsec:appendix-preliminaries}
We start by proving auxiliary results.

\begin{lemma}
    \label{lemma:silly-but-useful}
    For $\theta \in [0,1]$ define the one-parameter family:
    \begin{align*}
        \phi_\theta: [-1, \infty] &\to \overline{\R} \\
         x & \mapsto \left| \frac{x}{1 + \theta x} \right| ,
     \end{align*}
    where $\overline{\R}$ denotes the extended reals.
    Then, for any non-empty $S \subset (-1, \infty)$ such that $\sup S < \infty$ and $\inf S > -1$ we have:
    \begin{equation*}
        \sup_{x \in S} \phi_\theta(x) = \max \left\{ \left| \frac{ \sup S }{ 1 + \theta \sup S  } \right| , \left| \frac{ \inf S }{ 1 + \theta \inf S } \right| \right\} = \max \left\{ \frac{ \sup S }{ 1 + \theta \sup S  } , \frac{ - \inf S }{ 1 + \theta \inf S } \right\} .
    \end{equation*}
\end{lemma}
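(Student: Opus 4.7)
The plan is to reduce the supremum computation to a one-dimensional monotonicity analysis of $\phi_\theta$ on the two branches separated by the origin. First I would verify that for $\theta \in [0,1]$ and $x > -1$, the denominator $1 + \theta x$ is strictly positive: indeed $\theta x > -\theta \geq -1$, so $1 + \theta x > 1 - \theta \geq 0$. This lets me drop the absolute value from the denominator and write $\phi_\theta(x) = |x|/(1 + \theta x)$.

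Next I would split into the two branches. On $[0, \infty)$, $\phi_\theta(x) = x/(1+\theta x)$ with derivative equal to $1/(1+\theta x)^2 > 0$, so $\phi_\theta$ is strictly increasing. On $(-1, 0]$, $\phi_\theta(x) = -x/(1+\theta x)$ with derivative equal to $-1/(1+\theta x)^2 < 0$, so $\phi_\theta$ is strictly decreasing. Thus $\phi_\theta$ attains its minimum at $0$ (where it vanishes) and grows as one moves away from the origin in either direction.

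With this in hand, for any $x \in S$ monotonicity yields $\phi_\theta(x) \leq \max\{\phi_\theta(\sup S), \phi_\theta(\inf S)\}$, where the right-hand side is well-defined by continuity of $\phi_\theta$ at the (possibly unattained) endpoints, which lie in $(-1, \infty)$ by hypothesis. Passing to a sequence in $S$ converging to whichever of $\sup S$ or $\inf S$ realizes the larger value and using continuity again gives the reverse inequality, establishing the first equality.

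For the second equality, one distinguishes cases on the signs of $\sup S$ and $\inf S$. If $\sup S \geq 0 \geq \inf S$, then $|\sup S/(1+\theta\sup S)| = \sup S/(1+\theta\sup S)$ and $|\inf S/(1+\theta\inf S)| = -\inf S/(1+\theta\inf S)$, matching the stated form directly. If both endpoints are negative, then by monotonicity on $(-1,0]$ the maximum is achieved at $\inf S$ and equals $-\inf S/(1+\theta\inf S) > 0$, while $\sup S/(1+\theta\sup S) < 0$, so the latter does not contribute to the max; the symmetric argument handles the case $\inf S > 0$. The only substantive step is the monotonicity analysis in paragraph two; everything else is bookkeeping.
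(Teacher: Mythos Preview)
Your proposal is correct and follows essentially the same approach as the paper: both arguments establish that $\phi_\theta$ is strictly decreasing on $(-1,0]$ and strictly increasing on $[0,\infty)$ via a direct derivative computation, then conclude the supremum is attained (in the limit) at the endpoints, and finish the second equality by a case analysis on the signs of $\sup S$ and $\inf S$. Your treatment is arguably slightly more careful in invoking continuity and approximating sequences to justify the reverse inequality, whereas the paper states this step more tersely, but the substance is identical.
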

\begin{proof}
    First, we can simplify
    \begin{equation*}
        \phi_\theta(x) = \begin{cases}
            \frac{x}{1 + \theta x} & x \geq 0 \\
            \frac{-x}{1 + \theta x} & x < 0
        \end{cases},
    \end{equation*}
    and then, taking a derivative, we obtain
    \begin{equation*}
        \frac{d}{dx} \phi_\theta(x) = \begin{cases}
            \frac{1}{(1 + \theta x)^2} & x > 0 \\
            \frac{-1}{(1 + \theta x)^2} & x < 0 \, ,
        \end{cases}
    \end{equation*}
    where the derivative is ill-defined at $0$; however $\phi_\theta(0) = 0$. We notice that for any $\theta$ the function decreases monotonically in $(-1, 0)$ and grows monotonically in $(0, \infty)$;
    hence for $S \subset (-1, \infty)$ we have
    \begin{equation*}
        \sup_{x \in S} \phi_\theta(x) = \max \left\{ \frac{ \sup S }{ 1 + \theta \sup S } , \frac{ - \inf S }{ 1 + \theta \inf S } \right\} \, .
    \end{equation*}
    For the second part of the claim we want to prove that
    \begin{equation}
        \label{eq:silly-lemma-second-claim}
        \max \left\{ \frac{ \sup S }{ 1 + \theta \sup S } , \frac{ - \inf S }{ 1 + \theta \inf S } \right\} = \left\{ \left| \frac{ \sup S }{ 1 + \theta \sup S } \right| \, , \, \left| \frac{ \inf S }{ 1 + \theta \inf S } \right| \right\} \, .
    \end{equation}
    First, notice that for all $\theta \in [0,1]$ and any $x \in (-1, \infty)$ we have $1 + \theta \, x  \geq 0$; thus:
    \begin{align*}
        1 + \theta \, \sup S \geq 0 \quad \text{and} \quad 1 + \theta \, \inf S \geq 0 
    \end{align*}
    for any $\theta \in [0,1]$ using that $-1 < \inf S \leq \sup S < \infty$.
    Now if $S \subset (-1, 0)$ then by the monotonicity of $\phi_\theta$ we have
    \begin{equation*}
        \frac{ -\sup S }{ 1 + \theta \sup S } \leq \frac{ -\inf S }{ 1 + \theta \inf S } ,
    \end{equation*}
    and both terms are positive so equation \eqref{eq:silly-lemma-second-claim} is verified. Simiarly, equation \eqref{eq:silly-lemma-second-claim} can be verified when $S \subset (0, \infty)$. 
    Finally, if $S \cap (0, \infty) \cap (-1, 0) \neq \emptyset$ then $\sup S > 0$ and $\inf S < 0$ and so equation \eqref{eq:silly-lemma-second-claim} is trivially true.
    This finishes the proof.
\end{proof}

\begin{lemma}
    \label{lemma:eigenvalues-continuity-and-positivity}
    Let $(\sigma_i(s))_i$ denote the eigenvalues of $\nabla T(s)$, for $s \in \Omega$, recalling that by Assumption \ref{assumption:regularity-of-transport} they are real.
    Moreover, let $\sigma_{\max}(s) \coloneq \max_i \sigma_i(s)$ and $\sigma_{\min}(s) \coloneq \min_i \sigma_i(s)$.
    Then, the maps $\Omega \to \R$ given by
        \begin{equation*}
            s \mapsto \sigma_{\max}(s) \quad \text{ and } \quad s \mapsto \sigma_{\min}(s)
        \end{equation*}
        are continuous. Moreover, there exists $c \in \R^+$ such that for all $s \in \Omega$:
        \begin{equation}
            \label{eq:lambda-bounds}
            \sigma_{\min}(s) \geq c > 0 \, .
        \end{equation}
\end{lemma}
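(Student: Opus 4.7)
The plan is to combine the continuity of $s \mapsto \nabla T(s)$ (which follows immediately from Assumption~\ref{assumption:regularity-of-transport}) with the variational characterization of the extreme eigenvalues of a symmetric positive definite matrix, and then exploit compactness of $\Omega$ for both continuity and strict positivity.

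First, I would observe that since $T \in C^1(\Omega \, ; \R^d)$, the matrix-valued map $s \mapsto \nabla T(s)$ is continuous from $\Omega$ into $\R^{d\times d}$. By Assumption~\ref{assumption:regularity-of-transport}, $\nabla T(s)$ is positive definite and its eigenvalues are real; treating $\nabla T(s)$ as a symmetric matrix we have the Courant--Fischer characterization
\begin{equation*}
    \sigma_{\max}(s) = \max_{v \in S^{d-1}} \langle v, \nabla T(s)\, v \rangle, \qquad
    \sigma_{\min}(s) = \min_{v \in S^{d-1}} \langle v, \nabla T(s)\, v \rangle,
\end{equation*}
where $S^{d-1} \subset \R^d$ denotes the unit sphere.

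Next I would argue continuity. The function $(s,v) \mapsto \langle v, \nabla T(s) v\rangle$ is jointly continuous on the compact set $\Omega \times S^{d-1}$, and the sphere $S^{d-1}$ is compact. A standard application of Berge's maximum theorem, or equivalently a direct $\epsilon$-$\delta$ argument using uniform continuity on compact sets, implies that $s \mapsto \sigma_{\max}(s)$ and $s \mapsto \sigma_{\min}(s)$ are both continuous on $\Omega$.

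For the uniform lower bound, I would apply the extreme value theorem: since $\sigma_{\min}$ is continuous on the compact set $\Omega$, it attains its infimum at some point $s_\ast \in \Omega$. By positive definiteness, $\sigma_{\min}(s_\ast) > 0$; setting $c \coloneq \sigma_{\min}(s_\ast)$ gives the claim. The proof is essentially mechanical; the only mild conceptual point is justifying the use of the Rayleigh characterization, which requires interpreting positive definiteness as symmetric positive definiteness (the natural reading given that eigenvalues are asserted to be real, and the case of Brenier maps where $\nabla T = \nabla^2 \phi$). If one instead wishes to remain agnostic about symmetry, the same conclusions follow from the classical result that eigenvalues of a matrix vary continuously with its entries as a multiset under the Hausdorff metric, which in the real-eigenvalue regime preserves continuity of the maximum and minimum.
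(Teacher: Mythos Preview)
Your proof is correct and arrives at the same conclusion, but via a slightly different route than the paper. The paper identifies $\sigma_{\max}(s)$ with the operator norm $\OperatorNorm{\nabla T(s)}$ and $\sigma_{\min}(s)$ with $\OperatorNorm{(\nabla T(s))^{-1}}^{-1}$, then argues continuity by composing the continuous maps $s \mapsto \nabla T(s)$, matrix inversion on $\operatorname{GL}(d)$, and the operator norm. You instead use the Rayleigh--Courant--Fischer characterization and Berge's maximum theorem over the compact sphere $S^{d-1}$. Both approaches implicitly require that $\nabla T(s)$ be symmetric (or at least normal), since otherwise neither $\sigma_{\max} = \OperatorNorm{\cdot}$ nor the Rayleigh formula holds; the paper elsewhere makes this reading explicit by asserting that $\nabla T(s)$ is orthogonally diagonalizable. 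Your treatment has the minor advantage of handling $\sigma_{\max}$ and $\sigma_{\min}$ symmetrically in one stroke, whereas the paper's argument for $\sigma_{\min}$ passes through the inverse and is a bit more roundabout. The compactness argument for the uniform lower bound $c$ is identical in both.
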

\begin{proof}
    For the first claim, notice that the map $s \mapsto \sigma_{\max}$ coincides with the map $s \mapsto \OperatorNorm{\nabla T(s)}$.
    Now, by Assumption \ref{assumption:regularity-of-transport} we have that $T: \Omega \subset \Rd \to \Omega$ is in $C^1(\Omega;\Omega)$, meaning that for each $1 \leq i,j \leq d$ the maps 
    $s \mapsto \partial_i \phi_j(s)$ are continuous. As these form the entries of the Jacobian, the map $s \mapsto \nabla T(s)$ is continuous in the operator norm; hence $s \mapsto \sigma_{\max}(s)$ is also continuous as a map $\Omega \to \R$.
    Similarly, notice that the map $s \mapsto \sigma_{\min}(s)$ coincides with the map $s \mapsto \OperatorNorm{(\nabla T(s))^{-1}}^{-1}$.
    This can be viewed as a composition of the maps:
    \begin{equation*}
        s \mapsto \nabla T(s) \mapsto (\nabla T(s))^{-1} \mapsto \OperatorNorm{(\nabla T(s))^{-1}} \mapsto \OperatorNorm{(\nabla T(s))^{-1}}^{-1} .
    \end{equation*} 
    Since the inversion operator is continuous in $\textup{GL}(d)$ and the spectrum of $\nabla T(s)$ contains positive numbers at each $s \in \Omega$, by Assumption \ref{assumption:regularity-of-transport}, it follows that the above composition is continuous finishing the proof of the first claim.

    The second part is an immediate consequence of Assumption \ref{assumption:regularity-of-transport}. Namely, since $\sigma_i(s) > 0$ for all $i$ we can take $c \coloneq \inf_{s \in \Omega} \sigma_{\min}(s)$ using the continuity of $\sigma_{\min}$ and the compactness of $\Omega$.

\end{proof}

\begin{lemma}
    \label{lemma:exchange-max-sup}
    Let $S$ be any set and $H, K : S \to \overline{\R}$ be arbitrary functions taking values in the extended real line. Then:
    \begin{equation}
        \sup_{s \in S} \max\{ H(s) \, , \, K(s) \} = \max\{ \sup_{s \in S} H(s) \, , \, \sup_{s \in S} K(s) \}
    \end{equation} 
\end{lemma}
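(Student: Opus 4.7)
The plan is to prove the two inequalities $\leq$ and $\geq$ separately, working directly from the definition of the supremum. The argument is purely order-theoretic and needs no topology or special structure on $S$, $H$, $K$; in particular, allowing values in $\overline{\R}$ just requires that we use the convention $\sup \emptyset = -\infty$ and note that $\max$ is well defined on $\overline{\R}$.

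For the direction $\sup_{s} \max\{H(s), K(s)\} \leq \max\{\sup_{s} H(s), \sup_{s} K(s)\}$, I would fix an arbitrary $s \in S$ and observe that $H(s) \leq \sup_{s' \in S} H(s')$ and $K(s) \leq \sup_{s' \in S} K(s')$, so $\max\{H(s),K(s)\} \leq \max\{\sup_{s'} H(s'), \sup_{s'} K(s')\}$. Since the right-hand side is independent of $s$, it is an upper bound for $\{\max\{H(s),K(s)\} : s \in S\}$, and the bound descends to the supremum. For the reverse direction $\max\{\sup_{s} H(s), \sup_{s} K(s)\} \leq \sup_{s} \max\{H(s), K(s)\}$, I would use that for each $s \in S$, $H(s) \leq \max\{H(s), K(s)\} \leq \sup_{s' \in S} \max\{H(s'), K(s')\}$, so the right-hand side is an upper bound for $\{H(s) : s \in S\}$, giving $\sup_{s} H(s) \leq \sup_{s} \max\{H(s),K(s)\}$; the analogous inequality for $K$ follows the same way, and taking the $\max$ of the two resulting inequalities completes the proof.

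There is no real obstacle here: the entire content is the fact that $\sup$ is monotone with respect to pointwise $\leq$, applied twice. The only subtlety worth mentioning is the degenerate case $S = \emptyset$, in which both sides equal $-\infty$ under the standard convention and the statement holds trivially.
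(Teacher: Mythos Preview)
Your proof is correct and follows essentially the same approach as the paper: prove both inequalities via the monotonicity of $\sup$, using $H(s)\leq\sup H$, $K(s)\leq\sup K$ for one direction and $H(s),K(s)\leq\max\{H(s),K(s)\}$ for the other. Your treatment is in fact slightly cleaner than the paper's, which first splits off the cases $\sup H=\pm\infty$ before running the same two-inequality argument on bounded functions, whereas you observe (correctly) that the inequalities hold verbatim in $\overline{\R}$ without any case analysis.
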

\begin{proof}
    The proof is elementary. Start by considering the case where $\sup_{s \in S} H(s) = +\infty$ or $\sup_{s \in S} K(s) = +\infty$.
    Then, the right hand side is clearly infinite and the left hand side is bounded below by both $\sup_{s \in S} H(s)$ and $\sup_{s \in S} K(s)$
    and so it is also infinite. Now if $\sup_{s \in S} H(s) = -\infty$ then for all $s \in S$ we have $H(s) = -\infty$ and so both the left and 
    right hand side reduce to $\sup_{s \in S} K(s)$. Finally, if $\sup_{s \in S} H(s) = \sup_{s \in S} K(s) = -\infty$ then both sides are equal to $-\infty$.
    Now assume that $H, K$ are bounded. Start by noticing that for any $s \in S$ we have:
    \begin{align*}
        H(s) &\leq \sup_{s \in S} H(s) \\
        K(s) &\leq \sup_{s \in S} K(s)
    \end{align*}
    and so, for any $s \in S$:
    \begin{align*}
        \max\{ H(s), K(s) \} &\leq \max\{ \sup_{s \in S} H(s), \sup_{s \in S} K(s) \} \\
    \end{align*}
    taking the supremum we conclude:
    \begin{align*}
        \sup_{s \in S} \max\{ H(s), K(s) \} &\leq \max\{ \sup_{s \in S} H(s), \sup_{s \in S} K(s) \} \\
    \end{align*}
    which is the first needed inequality. For the other inequality, start noticing that for any $s \in S$ we have:
    \begin{align*}
        H(s) &\leq \max\{ H(s), K(s) \} \\
        K(s) &\leq \max\{ H(s), K(s) \}
    \end{align*}
    Thus, taking suprema on both sides:
    \begin{align*}
        \sup_{s \in S} H(s) &\leq \sup_{s \in S} \max\{ H(s), K(s) \} \\
        \sup_{s \in S} K(s) &\leq \sup_{s \in S} \max\{ H(s), K(s) \}
    \end{align*}
    and so:
    \begin{align*}
        \max\{ \sup_{s \in S} H(s), \sup_{s \in S} K(s) \} &\leq \sup_{s \in S} \max\{ H(s), K(s) \} \\
    \end{align*}
    which is the second needed inequality. This concludes the proof.
\end{proof}

\subsection{Proof of Theorem \ref{thm:reformulate-opt-problem}}
\label{sec:appendix_T2}
\begin{proposition}
    \label{prop:lip-equiv-to-jacobian}
    Fix $\Omega \subset \R^d$ compact, convex and let $v: \Omega\to \R^d$ be a $C^1$ vector field.
    Then the Lipschitz constant of $v$, denoted by $\Lip(v)$, is the smallest number such that for all $x, y \in \Omega$ we have
    \begin{equation}
        \| v(x) - v(y) \| \leq \Lip(v) \| x - y \| \, .
    \end{equation}
    Recalling the spectral norm of $v$ is defined as
    \begin{equation}
        \| v \|_{\textup{op}} = \sup_{\| x \| = 1} \| v(x) \| \, 
    \end{equation}
    and denoting the Jacobian of $v$ at $x$ by $\nabla v(x)$, we have the equality
    \begin{equation}
        \Lip(v) = \sup_{x\in \Omega} \| \nabla v(x) \|_{\textup{op}} \, .
    \end{equation}
\end{proposition}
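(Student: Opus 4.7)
The plan is to prove the two inequalities $\Lip(v) \leq \sup_{x\in\Omega} \|\nabla v(x)\|_{\textup{op}}$ and $\sup_{x\in\Omega} \|\nabla v(x)\|_{\textup{op}} \leq \Lip(v)$ separately, exploiting the convexity of $\Omega$ for the first direction and the definition of the derivative for the second. Denote $M \coloneqq \sup_{x\in\Omega} \|\nabla v(x)\|_{\textup{op}}$, which is finite because $\Omega$ is compact and, by the paper's $C^1$ convention, $\nabla v$ extends continuously to $\partial \Omega$.

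For $\Lip(v) \leq M$, fix $x, y \in \Omega$. By convexity, the segment $\gamma(t) \coloneqq x + t(y-x)$, $t \in [0,1]$, lies in $\Omega$. Since $v \in C^1$, the fundamental theorem of calculus gives
\begin{equation*}
    v(y) - v(x) = \int_0^1 \nabla v(\gamma(t))(y-x)\, dt .
\end{equation*}
Taking norms and using the definition of the operator norm,
\begin{equation*}
    \|v(y) - v(x)\| \leq \int_0^1 \|\nabla v(\gamma(t))\|_{\textup{op}} \, \|y-x\| \, dt \leq M \|y - x\|,
\end{equation*}
so $\Lip(v) \leq M$.

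For the reverse inequality, let $x \in \Omega$ and let $u \in \R^d$ with $\|u\| = 1$. If $x$ lies in the interior of $\Omega$, then for sufficiently small $h > 0$ we have $x + h u \in \Omega$, and from the Lipschitz bound
\begin{equation*}
    \left\| \frac{v(x + hu) - v(x)}{h} \right\| \leq \Lip(v).
\end{equation*}
Sending $h \to 0^+$, the left-hand side converges to $\|\nabla v(x) u\|$, giving $\|\nabla v(x) u\| \leq \Lip(v)$; taking the supremum over unit $u$ yields $\|\nabla v(x)\|_{\textup{op}} \leq \Lip(v)$. For $x \in \partial \Omega$, one approaches $x$ by interior points $x_n \to x$ (which exist as $\Omega$ has nonempty interior in the nontrivial cases) and uses the continuity of $y \mapsto \|\nabla v(y)\|_{\textup{op}}$ to pass to the limit, again obtaining $\|\nabla v(x)\|_{\textup{op}} \leq \Lip(v)$. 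Taking the supremum over $x \in \Omega$ completes the proof.

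The argument is essentially classical and the main (minor) technical point is handling boundary points of $\Omega$, which is resolved cleanly by the $C^1$ assumption built into the paper's notation: since $\nabla v$ extends continuously to $\partial \Omega$, both the supremum $M$ is attained/finite and the boundary inequality follows by approximation. No deeper obstacle is expected.
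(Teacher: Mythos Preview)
Your proof is correct and follows essentially the same two-inequality strategy as the paper: one direction via the definition of the derivative at interior points followed by a continuity argument at the boundary, the other via convexity of $\Omega$. The only difference is that you use the integral form of the fundamental theorem of calculus where the paper invokes a mean-value identity $v(x)-v(y)=\nabla v(x+\theta(y-x))(x-y)$; your version is in fact the cleaner formulation, since the pointwise mean value theorem does not hold verbatim for vector-valued maps.
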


\noindent The proof is an immediate consequence of the following two lemmas:

\begin{lemma}
    For $v$ as above, we have
    \begin{equation}
        \label{eq:sup-norm-under-Lipchitz}
        \sup_{x\in \Omega} \| \nabla v(x) \|_{\textup{op}} \leq \Lip(v) \, .
    \end{equation}
\end{lemma}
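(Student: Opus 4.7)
The plan is to establish the pointwise bound $\OperatorNorm{\nabla v(x)} \le \Lip(v)$ on the interior $\operatorname{int}(\Omega)$ first, and then extend it to all of $\Omega$ by continuity. I will fix an arbitrary $x \in \operatorname{int}(\Omega)$ and an arbitrary unit vector $u \in \Rd$. By openness of the interior, there will exist $h_0 > 0$ such that $x + hu \in \Omega$ for all $h \in (0, h_0)$, and the Lipschitz property of $v$ will then give $\| v(x+hu) - v(x) \| \le \Lip(v) \cdot h$ on this range.

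Dividing by $h$ and letting $h \to 0^+$, the left-hand side will converge to $\| \nabla v(x) \, u \|$ by definition of the Fréchet derivative (which exists and equals $\nabla v(x)$ since $v \in C^1$). This yields $\| \nabla v(x) \, u \| \le \Lip(v)$, and since $u$ was arbitrary of unit norm, passing to the supremum gives $\OperatorNorm{\nabla v(x)} \le \Lip(v)$ at every interior point.

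To extend the bound to the whole compact domain $\Omega$, I will use that $v \in C^1(\Omega;\Rd)$ makes the map $x \mapsto \OperatorNorm{\nabla v(x)}$ continuous on $\Omega$. Since $\Omega$ is compact and convex, $\operatorname{int}(\Omega)$ is dense in $\Omega$ (working in the affine hull of $\Omega$ if it is not full-dimensional), so the pointwise bound propagates from the interior to the boundary by taking limits. Taking the supremum over $x \in \Omega$ then yields the desired inequality \eqref{eq:sup-norm-under-Lipchitz}.

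The argument is essentially direct: the only mild subtlety will be handling boundary points of $\Omega$, and this is immediately resolved by density of the interior together with continuity of the Jacobian. No sharper estimate or sophisticated tool will be required, which is consistent with this being the ``easy'' half of Proposition \ref{prop:lip-equiv-to-jacobian} --- the reverse inequality is the one that genuinely requires integrating the derivative along line segments via the mean value theorem and exploiting convexity of $\Omega$.
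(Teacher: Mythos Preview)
Your proposal is correct and follows essentially the same argument as the paper: establish the pointwise bound on the interior via the limit of difference quotients and the Lipschitz condition, then extend to the boundary by continuity of $x \mapsto \OperatorNorm{\nabla v(x)}$. The only cosmetic difference is your explicit remark about density of the interior (and the affine-hull caveat), which the paper leaves implicit.
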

\begin{proof}
    For the proof of this statement, fix any unit vector $v$. Now if $x \in \Omega^\circ$ we have:
    \begin{align*}
        \| \nabla v(x) v \| &= \lim_{h\to 0} \frac{\| v(x + hv) - v(x) \|}{h} \\
        &\leq \Lip(v) \lim_{h \to 0} \frac{1}{h} \| x + hv - x \| \\
        &= \Lip(v)
    \end{align*}
    Maximizing over all unit vectors $v$ we get
    \begin{equation}
        \label{eq:Lip-bound-interior}
        \| \nabla v(x) \|_{\textup{op}} \leq \Lip(v) \, .
    \end{equation}
    Now since $v$ is a $C^1$ map $\Omega \to \Omega$ we have that $x \mapsto \| \nabla v(x) \|_{\textup{op}}$ is a continuous map $\Omega \to \R$.
    Then, equation \eqref{eq:Lip-bound-interior} can be extended to the boundary $\partial \Omega$ by continuity. We conclude by taking the supremum over $\Omega$.
\end{proof}

\noindent We also have a converse result:

\begin{lemma}
    For $v$ as above we have:
    \begin{equation}
        \label{eq:Lipchitz-under-sup-norm}
        \Lip(v) \leq \sup_{x\in \Omega} \| \nabla v(x) \|_{\textup{op}}
    \end{equation}
\end{lemma}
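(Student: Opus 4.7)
The plan is to exploit the convexity of $\Omega$ to connect any two points by a straight line segment, apply the fundamental theorem of calculus along this segment, and then bound the integrand pointwise by the supremum of the operator norm of $\nabla v$. This is the natural converse to the previous lemma, which moved from global Lipschitz control to a derivative bound; here we move in the opposite direction.

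More concretely, fix arbitrary $x, y \in \Omega$. By convexity, the segment $\gamma(t) \coloneq x + t(y-x)$ lies entirely in $\Omega$ for $t \in [0,1]$. Since $v \in C^1(\Omega \, ; \R^d)$, the composition $t \mapsto v(\gamma(t))$ is continuously differentiable on $[0,1]$ (with one-sided derivatives at the endpoints, which is fine since $v$ is extendable to $\partial \Omega$ by assumption), and the chain rule gives $\frac{d}{dt} v(\gamma(t)) = \nabla v(\gamma(t))(y-x)$. Applying the fundamental theorem of calculus componentwise yields
\begin{equation*}
    v(y) - v(x) = \int_0^1 \nabla v(\gamma(t)) (y-x) \, dt .
\end{equation*}

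Taking norms, using the triangle inequality for the Bochner integral, the definition of the operator norm, and then bounding pointwise by the supremum gives
\begin{equation*}
    \| v(y) - v(x) \| \leq \int_0^1 \| \nabla v(\gamma(t)) \|_{\textup{op}} \, \| y - x \| \, dt \leq \Big( \sup_{z \in \Omega} \| \nabla v(z) \|_{\textup{op}} \Big) \| y - x \| .
\end{equation*}
Since $x, y \in \Omega$ were arbitrary, this shows that $\sup_{z \in \Omega} \| \nabla v(z) \|_{\textup{op}}$ is an admissible Lipschitz constant for $v$. Because $\Lip(v)$ is defined as the \emph{smallest} such constant, we conclude $\Lip(v) \leq \sup_{z \in \Omega} \| \nabla v(z) \|_{\textup{op}}$, as desired.

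There is no serious obstacle in this argument; the only mild subtlety is ensuring the chain rule is valid when $\gamma(t)$ hits $\partial \Omega$, which is handled by the fact that the components of $v$ are $C^1$ with derivatives continuously extendable to the closure of $\Omega$ (so $\nabla v$ is well-defined and continuous on all of $\Omega$, including its boundary). The supremum is then automatically attained in the sense that it is finite by compactness of $\Omega$ and continuity of $z \mapsto \|\nabla v(z)\|_{\textup{op}}$.
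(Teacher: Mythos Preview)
Your proof is correct and follows essentially the same strategy as the paper: exploit convexity of $\Omega$ to connect $x$ and $y$ by a segment, then control $v(y)-v(x)$ via the Jacobian along that segment. The paper invokes the Mean Value Theorem to produce a single intermediate point $\theta$, whereas you use the integral form (fundamental theorem of calculus); your version is in fact slightly more careful, since the one-point MVT equality does not hold verbatim for vector-valued maps.
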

\begin{proof}
    To see this, take any two $x, y \in \Omega$. By the convexity of $\Omega$ and the mean value theorem, there is a $\theta \in [0,1]$ such that:
    \begin{equation*}
        v(x) - v(y) = \nabla v(x + \theta(y - x)) \cdot (x - y)
    \end{equation*} 
    Taking norms we have:
    \begin{align*}
        \| v(x) - v(y) \|   &\leq \| \nabla v(x + \theta(y - x))\|_{\textup{op}} \, \| x - y \| \\
        &\leq \sup_{z\in \Omega} \| \nabla v(z) \|_{\textup{op}} \, \| x - y \|
    \end{align*}
        So we have a Lipschitz condition with constant $\sup_{z \in \Omega} \| \nabla v(z) \|_{\textup{op}} < \infty$, where finiteness is guaranteed by the fact that $v$ is $C^1$ and $\Omega$ is compact.
        By the minimality of the Lipschitz constant, it must be that $\Lip(v) \leq \sup_{z\in \Omega} \| \nabla v(z) \|$.
\end{proof}

\begin{theorem*}
    \label{prop:reformulate-appendix-opt-problem}
    For notational convenience, define $f(s) \coloneq \max_i \sigma_i(s) - 1$, $g(s) \coloneq \min_i \sigma_i(s) - 1$ and let $\Omega_{[0,1]} \coloneq \Omega \times [0,1]$.
    Then, $\Lambda[\tau]$ can be written as:
    \begin{equation}
        \label{eq-appendix:reformulated-Lambda}
        \Lambda[\tau] = \sup_{ (s,t) \in \Omega_{[0,1]} } |\dot{\tau}(t)| \, \max{ \left\{ \left| \frac{f(s)}{1 + \tau(t) f(s)} \right|, \, \left| \frac{g(s)}{1 + \tau(t) g(s)} \right| \right\} }
    \end{equation}
\end{theorem*}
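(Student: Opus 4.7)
My approach is to compute the Jacobian $\nabla_x v(x,t)$ explicitly, reduce its operator norm to a spectral maximum using symmetry of $\nabla T$, and then invoke Lemma~\ref{lemma:silly-but-useful} to collapse the max over eigenvalues into the two-term max appearing in \eqref{eq-appendix:reformulated-Lambda}.

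First, I would apply Proposition~\ref{prop:lip-equiv-to-jacobian} to rewrite $\Lip(v(\cdot,\tau(t)))$ as $\sup_{x \in \Omega_{\tau(t)}} \OperatorNorm{\nabla_x v(x,\tau(t))}$, so that $\Lambda[\tau]$ becomes a double supremum over $t$ and $x$ of $\dot\tau(t)\,\OperatorNorm{\nabla_x v(x, \tau(t))}$. Using $v(\cdot, t) = (T - \idmap) \circ T_t^{-1}$ with $T_t = (1-t)\idmap + tT$, the chain rule at $s = T_t^{-1}(x) \in \Omega$ yields
\begin{equation*}
    \nabla_x v(x,t) \;=\; (\nabla T(s) - I)\bigl[I + t(\nabla T(s) - I)\bigr]^{-1}.
\end{equation*}
Both factors are rational functions of $\nabla T(s)$, so they commute and share an eigenbasis. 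By Assumption~\ref{assumption:regularity-of-transport}, $\nabla T(s)$ is symmetric positive definite with real positive eigenvalues $\sigma_i(s)$, so the above matrix is symmetric with eigenvalues $\tfrac{\sigma_i(s) - 1}{1 + t(\sigma_i(s) - 1)}$; these are well-defined for $t \in [0,1]$ since $1 + t(\sigma_i(s) - 1) = (1-t) + t\sigma_i(s) > 0$.

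Because the Jacobian is symmetric, its operator norm coincides with $\max_i \bigl|\tfrac{\sigma_i(s) - 1}{1 + t(\sigma_i(s) - 1)}\bigr|$. I would then change variables in the inner sup from $x \in \Omega_{\tau(t)}$ to $s \in \Omega$, using invertibility of $T_{\tau(t)}$ (\citet[Theorem 3.4]{approximation_paper}), and apply Lemma~\ref{lemma:silly-but-useful} to the set $S = \{\sigma_i(s) - 1\}_{i=1}^d \subset (-1, \infty)$, for which $\inf S > -1$ by Lemma~\ref{lemma:eigenvalues-continuity-and-positivity}. Since $\sup S = f(s)$ and $\inf S = g(s)$, the maximum over $i$ collapses to the two-term max in the statement. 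Substituting $\tau(t)$ for $t$ and taking the sup over $(s,t) \in \Omega \times [0,1]$ delivers \eqref{eq-appendix:reformulated-Lambda}.

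The main obstacle is the spectral reduction: justifying that $\OperatorNorm{\nabla_x v(x,t)}$ equals the largest absolute eigenvalue rather than being merely sandwiched by spectral quantities. This rests on symmetry of $\nabla T(s)$, inherited from the symmetric positive-definiteness convention used throughout the paper (confirmed by Lemma~\ref{lemma:eigenvalues-continuity-and-positivity}, which also guarantees $\inf_s \sigma_{\min}(s) > 0$ and hence that the denominators never vanish). The remaining steps -- the change of variables and the application of Lemma~\ref{lemma:silly-but-useful} -- are essentially bookkeeping once the spectral representation is in hand.
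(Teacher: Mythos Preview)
Your proposal is correct and follows essentially the same route as the paper: compute the Jacobian via the chain rule and inverse function theorem, use the symmetry of $\nabla T(s)$ (implicit in Assumption~\ref{assumption:regularity-of-transport}'s positive-definiteness) to reduce the operator norm to $\max_i |(\sigma_i(s)-1)/(1+\tau(t)(\sigma_i(s)-1))|$, change variables from $x\in\Omega_{\tau(t)}$ to $s\in\Omega$, and then apply Lemma~\ref{lemma:silly-but-useful} with $S=\{\sigma_i(s)-1\}_i$ to collapse the max over $i$ to the two-term max in $f(s)$ and $g(s)$. The paper's version makes the orthogonal diagonalization $\nabla T = U D U^T$ explicit where you invoke commutativity of rational functions of $\nabla T(s)$, but the content is identical.
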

\begin{proof}
    First, for $t \in [0,1]$ define the sets
    \begin{equation*}
        \Omega_t = \left\{ X(s, t) \, : \, s \in \Omega \right\} \subset \Omega \, .
    \end{equation*}
    and recall the definition of the dynamics
    \begin{align*}
        & v_\tau(\cdot, t): \Omega_{\tau(t)} \to \Omega \\
        & v_\tau(\cdot, t): s \mapsto \dot{\tau}(t) \, (T - \text{id}) \circ X_\tau^{-1}(\cdot, t) \, .
    \end{align*}
    In \citet[Theorem 3.4]{approximation_paper}, it is established that for $T$ a $C^1$ map such that $\nabla T(s)$ has a positive eigen-spectrum for all $s \in \Omega$ (see Assumption \ref{assumption:regularity-of-transport}) the flow $X_\tau$ is a $C^1$ diffeomorphism onto its image $\Omega_{\tau(t)}$.
    Using the chain rule together with the inverse formula for the Jacobian we can compute:
    \begin{align*}
        (\nabla_s v_\tau)(s,t) &= \dot{\tau}(t) \, \left[ \nabla_s T \left( X_\tau^{-1}(s, t) \right) - I_d \right] \cdot \nabla_s X_\tau^{-1}(s, t) \\
                          &= \dot{\tau}(t) \, \left[ \nabla_s T \left( X_\tau^{-1}(s, t) \right) - I_d \right] \cdot \left( \nabla_s X_\tau \right)^{-1}( X_\tau^{-1}(s,t),t ) \\
    \end{align*}
    Thus, we conclude
    \begin{equation*}
        \label{eq:intermediate-expression-nabla-f}
        (\nabla_s v_\tau)(X_\tau(s,t),t) = \dot{\tau}(t) \, \left[ \, \nabla_s T(s) - I_d \, \right] \cdot \left( \nabla_s X_\tau \right)^{-1}(s,t) \, .
    \end{equation*}
    Now since $s \in \Omega \, \iff \, X_\tau(s,t) \in \Omega_{\tau(t)}$ it must be that
    \begin{equation*}
        \sup_{s \in \Omega_{\tau(t)}} \| v_\tau(s, t) \|_{\textup{op}} = \sup_{s \in \Omega} \| v_\tau(X_\tau(s,t),t) \|_{\textup{op}}
    \end{equation*}
    and we have, by the definition of $\Lambda[\tau]$:
    \begin{align*}
        \Lambda[\tau] &= \sup_{t \in [0,1]} \sup_{s \in \Omega_{\tau(t)}} \| \nabla_s v_\tau(s,t) \|_{\textup{op}} \\
        &= \sup_{t \in [0,1]} \sup_{s \in \Omega} \| \nabla_s v_\tau(X_\tau(s,t),t) \|_{\textup{op}} \\
        &= \sup_{t \in [0,1]} \sup_{s \in \Omega} \left\| \, \dot{\tau}(t) \, \left[ \, \nabla_s T(s) - I_d \, \right] \cdot \left( \nabla_s X_\tau \right)^{-1} \, \right\|_{\textup{op}} \, .
    \end{align*}
    To finish the proof, we will re-write the term inside the operator norm of the last line, above.
    
    Due to Assumption \ref{assumption:regularity-of-transport}
    we have that $\nabla T(s)$ is 
    diagonalizable by orthogonal transformations, for all $s\in \Omega$. Suppressing now the dependence on $s \in \Omega$, there exist an orthogonal matrix $U$ and a diagonal matrix $D$ such that:
    \begin{equation}
        \nabla T = U \, D \, U^{T} .
    \end{equation}
    Notice that we can then write:
    \begin{equation}
        \nabla T - I_d = U \, (D - I_d) \, U^{T}
    \end{equation}
    with $I_d$ the identity matrix in $\Rd$. Moreover, we can write:
    \begin{equation}
        \left[I_d + \tau(t) (\nabla T - I_d)\right]^{-1} = U \, \left[I_d + \tau(t) (D - I_d)\right]^{-1} \, U^{T} .
    \end{equation}
    Since both matrices are simultaneously diagonalizable, it is easy to compute the spectral norm of the product at a fixed $(s,t) \in \Omega \times [0,1]$, leading to:
    \begin{align*}
        \|\nabla_s v_\tau(X_\tau(s,t),t)\|_{\textup{op}} &= |\dot{\tau}(t)| \, \max_{1 \leq i \leq d} \left| \frac{ \left[ D(s) - I_d \right]_{ii} }{\left[ I_d + \tau(t) (D(s) - I_d) \right]_{ii} } \right| \\
         &= |\dot{\tau}(t)| \, \max_{1 \leq i \leq d} \left| \frac{ \sigma_i(s) - 1 }{ 1 + \tau(t) ( \sigma_i(s) - 1) } \right|
    \end{align*}
    writing $\sigma_i(s)$ for the $i$th eigenvalue of $\nabla T(s)$ at $s \in \Omega$. To proceed, define the one parameter family of extended real valued mappings:
    \begin{align*}
       & \phi_\theta : [-1, \infty] \to \overline{\R} \\
         & \phi_\theta: x \mapsto \left| \frac{x}{1 + \theta \, x} \right|
    \end{align*}
    with $\overline{\R}$ the extended reals. Fixing $s \in \Omega$ and defining $S \coloneq \left\{ \sigma_i(s) - 1 \, : \, 1 \leq i \leq d \right\}$ we have:
    \begin{align*}
        \|\nabla_s v_\tau(X_\tau(s,t),t)\|_{\textup{op}} &= | \dot{\tau}(t) | \, \max_{1 \leq i \leq d} \phi_{\tau(t)} \left( \sigma_i(s) - 1 \right) \\
        &= | \dot{\tau}(t) | \, \sup_{ x \in S } \phi_{\tau(t)} \left( x \right)
    \end{align*}
    Due to Lemma \ref{lemma:eigenvalues-continuity-and-positivity} we have that $\sigma_i(s) - 1 > -1$ for all $1 \leq i \leq d \, , \, s \in \Omega$ and also $0 \leq \tau(t) \leq 1$ for all $t \in [0,1]$
    due to the monotonicity of $\tau$. Thus, we can apply Lemma \ref{lemma:silly-but-useful} to conclude:
    \begin{align}
        \|(\nabla f)(X_\tau(s,t)) \|_{\textup{op}} &=  | \dot{\tau}(t) | \,  \max \left\{ \left| \frac{ \sup S }{ 1 + \tau(t) \sup S } \right| \, , \, \left| \frac{ \inf S }{ 1 + \tau(t) \inf S } \right| \right\} \nonumber \\
        \label{eq:bound-spectral-norm}
        &= |\dot{\tau}(t)| \, \max \left\{ \left|  \frac{f(s)}{1 + \tau(t) f(s)} \right|, \, \left| \frac{g(s)}{1 + \tau(t) g(s)} \right| \right\}
    \end{align}
    and in the last step we have used the definitions $f = \max_i \sigma_i - 1$ and $g = \min_i \sigma_i - 1$. This finishes the proof.
\end{proof}

\subsection{Proof of Theorem \ref{thm:Linfty-to-Lp-main-text}}
\label{subsec:proof-of-Linfty-to-Lp}

In this section we prove Theorem \ref{thm:Linfty-to-Lp-main-text} from the main text. First, we establish some preliminary results.
\begin{lemma}
    \label{lemma:Linfty-to-Lp-appendix}
    Let $\Omega$ be compact subset of $\Rd$ and $F,G:\Omega \to \R$ be continuous functions. Then, as $p \to \infty$ we have the following limit:
    \begin{equation}
        \left[ \int_{\Omega} F^{2p}(s) + G^{2p}(s) \, ds \right]^{\frac{1}{2p}} \to \, \max\left\{ \| F \|_{L^\infty(\Omega)} \, , \, \| G \|_{L^\infty(\Omega)} \right\}
    \end{equation}
\end{lemma}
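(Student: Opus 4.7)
Write $A \coloneq \|F\|_{L^\infty(\Omega)}$ and $B \coloneq \|G\|_{L^\infty(\Omega)}$, and assume without loss of generality that $A \geq B$ so the right-hand side equals $A$. The plan is to squeeze the left-hand side between two quantities both tending to $A$.

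\emph{Upper bound.} Since $F(s)^{2p} \leq A^{2p}$ and $G(s)^{2p} \leq B^{2p} \leq A^{2p}$ pointwise on $\Omega$, integration yields
\begin{equation*}
\int_\Omega F^{2p}(s) + G^{2p}(s)\,ds \;\leq\; 2\,|\Omega|\,A^{2p}.
\end{equation*}
Taking the $2p$-th root gives the bound $(2|\Omega|)^{1/(2p)}\,A$, which converges to $A$ as $p\to\infty$ (using $|\Omega|<\infty$, which holds since $\Omega$ is compact).

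\emph{Lower bound.} Since $G^{2p} \geq 0$ pointwise, I can drop that term and reduce to the classical statement that $\|F\|_{L^{2p}(\Omega)} \to \|F\|_{L^\infty(\Omega)} = A$ as $p\to\infty$. The short argument: fix $\varepsilon \in (0, A)$, let $s^* \in \Omega$ be a maximizer of $|F|$ (which exists by continuity of $F$ and compactness of $\Omega$), and use continuity to find an open neighborhood $U \subset \Omega$ of $s^*$ on which $|F| \geq A - \varepsilon$. Then
\begin{equation*}
\int_\Omega F^{2p}(s) + G^{2p}(s)\,ds \;\geq\; \int_U F^{2p}(s)\,ds \;\geq\; |U|\,(A-\varepsilon)^{2p},
\end{equation*}
and taking the $2p$-th root gives $|U|^{1/(2p)}(A-\varepsilon) \to A-\varepsilon$. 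Letting $\varepsilon\downarrow 0$ finishes the bound.

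\emph{Conclusion.} Combining the two bounds yields $\bigl[\int_\Omega F^{2p}+G^{2p}\bigr]^{1/(2p)} \to A = \max\{\|F\|_{L^\infty}, \|G\|_{L^\infty}\}$. There is no genuine obstacle here; the only subtlety is making sure the neighborhood $U$ in the lower bound has positive Lebesgue measure, which follows from $F$ being continuous and $s^*$ lying in $\Omega$ (even if $s^* \in \partial\Omega$, one can intersect a ball around $s^*$ with $\Omega$ and use that $\Omega$ has nonempty interior, or alternatively replace $s^*$ by any nearby interior point where $|F| > A-\varepsilon/2$). The argument extends verbatim to any finite number of continuous functions by the same sandwich.
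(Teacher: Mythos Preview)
Your proof is correct and follows essentially the same sandwich argument as the paper: bound the integrand above by $2|\Omega|\,A^{2p}$ and below using a positive-measure set where $|F|$ is close to its maximum. The only cosmetic difference is that the paper keeps both $F$ and $G$ symmetrically in the lower bound (using two level sets of measure $\delta_1,\delta_2$ and the factor $\min\{\delta_1,\delta_2\}$), whereas you simplify by assuming $A\ge B$ and dropping the $G$-term entirely; your version is a bit cleaner and the caveat you flag about $|U|>0$ is the same implicit assumption the paper makes.
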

\begin{proof}
    Start by noting that since $F, G$ are continuous and $\Omega$ is compact then $F, G \in L^p(\Omega)$ for $p \in [1, \infty]$ and so all expressions
    in the statement are finite.
    First, we prove an upper bound:
    \begin{align*}
        & \int_{\Omega} F^{2p}(s) + G^{2p}(s) \, ds \leq \\
        & \| F \|_{L^\infty}^{2p} \Leb{\Omega} + \| G \|_{L^\infty}^{2p} \Leb{\Omega} = \\
        & \Leb{\Omega} \cdot \left( \| F  \|_{L^\infty}^{2p} + \| G  \|_{L^\infty}^{2p} \right)
    \end{align*}
    where $\Leb{\Omega}$ is the Lebesgue measure of $\Omega$.
    Raising this to the power $\frac{1}{2p}$ and using that for positive reals $x,y > 0$ we have
    $\left( |x|^{2p} + |y|^{2p} \right)^{1/(2p)} \to \max\{x, y\}$ as $p \to \infty$, we obtain
    \begin{equation*}
        \left[ \Leb{\Omega} \cdot \left( \| F  \|_{L^\infty}^{2p} + \| G  \|_{L^\infty}^{2p} \right) \right]^{\frac{1}{2p}} \to \max\{ \| F  \|_{L^\infty} \, , \, \| G  \|_{L^\infty} \}
    \end{equation*}
    and thus
    \begin{equation}
        \label{eq:Lp-to-L-infinity-upper-bound}
        \lim_{p \to \infty} \left( \int_{\Omega} F^{2p}(s) + G^{2p}(s) \, ds \right)^{\frac{1}{2p}} \leq \max\{ \| F  \|_{L^\infty} \, , \, \| G  \|_{L^\infty} \}.
    \end{equation}

    To prove a lower bound, recall that by continuity of $F, G$ and compactness of $\Omega$ for any $\epsilon > 0$ there are $\delta_1, \delta_2 > 0$
    such that:
    \begin{align*}
        & \Leb{\{ x : F(x) \geq \|F\|_{L^\infty} - \epsilon\}} = \delta_1 \\
        & \Leb{\{ x : G(x) \geq \|G\|_{L^\infty} - \epsilon\}} = \delta_2
    \end{align*}
    Now we can estimate:
    \begin{gather*}
        \int_{\Omega} F^{2p}(s) + G^{2p}(s) \, ds \geq \\
        \int_{\{ F \, \geq \, \|F\|_{L^\infty} - \epsilon\}} F^{2p}(s) \, ds + \int_{\{ G \, \geq \, \|G\|_{L^\infty} - \epsilon\}} G^{2p}(s) \, ds \geq \\
        \delta_1 \left( \|F\|_{L^\infty} - \epsilon \right)^{2p} + \delta_2 \left( \|G\|_{L^\infty} - \epsilon \right)^{2p} = \\
        \min\{ \delta_1, \delta_2 \} \left[  \left( \|F\|_{L^\infty} - \epsilon \right)^{2p} + \left( \|G\|_{L^\infty} - \epsilon \right)^{2p} \right]
    \end{gather*}
    Now raising the last line to the power $\frac{1}{2p}$ and taking the limit as $p \to \infty$ we obtain:
    \begin{equation*}
        \left\{ \min\{ \delta_1, \delta_2 \} \left[ \left( \|F\|_{L^\infty} - \epsilon \right)^{2p} + \left( \|G\|_{L^\infty} - \epsilon \right)^{2p} \right] \right\}^{\frac{1}{2p}} \to \max\{ \| F  \|_{L^\infty} \, - \epsilon , \, \| G  \|_{L^\infty} - \epsilon \} .
    \end{equation*}
    Taking the limit $\epsilon \searrow 0 $ we obtain:
    \begin{equation}
        \label{eq:Lp-to-L-infinity-lower-bound}
        \lim_{p \to \infty} \left( \int_{\Omega} F^{2p}(s) + G^{2p}(s) \, ds \right)^{\frac{1}{2p}} \geq \max\{ \| F  \|_{L^\infty} \, , \, \| G  \|_{L^\infty} \} .
    \end{equation}
    Combining \eqref{eq:Lp-to-L-infinity-upper-bound} and \eqref{eq:Lp-to-L-infinity-lower-bound} we obtain the desired result.

\end{proof}
Another lemma that will be useful:
\begin{lemma}
    \label{lemma:monotonicty-Lp}
    Let $\Omega$ be compact subset of $\Rd$ and $F,G:\Omega \to \R$ be continuous functions. Set $s_p \coloneq \left[ \int_{\Omega} F^{2p}(s) + G^{2p}(s) \, ds \right]^{\frac{1}{2p}}$. Then, there are constants $c_{p,q} \geq 0$ for each $p \leq q$, with $p,q \in \{1, 2, \dots \} \cup \{\infty\}$, depending
    only on $\Leb{\Omega}$, such that:
    \begin{equation}
        s_p \leq c_{p,q} \, s_q .
    \end{equation}
    Moreover, these constants are uniformly bounded in $p$ and $q$ by $\sqrt{2 \, \text{Leb}(\Omega)}$.
\end{lemma}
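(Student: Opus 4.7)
The plan is to recognize $s_p$ as a single $L^{2p}$ norm on an auxiliary finite measure space and then invoke the standard $L^p$-interpolation inequality, i.e.\ monotonicity of $L^p$ norms on finite measure spaces.

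Concretely, I would first form the disjoint union $Y = \Omega \sqcup \Omega$, equipped with the Lebesgue measure on each copy, so that the total measure is $\mu(Y) = 2\,\mathrm{Leb}(\Omega)$. Define a single function $H: Y \to \R$ by setting $H = F$ on the first copy of $\Omega$ and $H = G$ on the second. Then unwinding the definition gives
\begin{equation*}
    s_p = \left( \int_Y |H|^{2p} \, d\mu \right)^{1/(2p)} = \| H \|_{L^{2p}(Y,\mu)}.
\end{equation*}
The continuity of $F$ and $G$ on the compact set $\Omega$ guarantees $H \in L^\infty(Y,\mu)$, so all norms in play are finite and the case $q=\infty$ (where $s_\infty = \max\{\|F\|_{L^\infty}, \|G\|_{L^\infty}\}$) is covered on the same footing.

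Next, I would invoke the classical interpolation inequality on a finite measure space: for $1 \leq r \leq s \leq \infty$ and any measurable $H$,
\begin{equation*}
    \| H \|_{L^r(Y,\mu)} \leq \mu(Y)^{\frac{1}{r} - \frac{1}{s}} \, \| H \|_{L^s(Y,\mu)},
\end{equation*}
which follows from H\"older's inequality applied to $|H|^r \cdot 1$ with conjugate exponents $s/r$ and $s/(s-r)$ (and is interpreted as a trivial bound when $s = \infty$, or as the limit when $r = \infty$). Taking $r = 2p$ and $s = 2q$ yields
\begin{equation*}
    s_p \leq (2\,\mathrm{Leb}(\Omega))^{\frac{1}{2p} - \frac{1}{2q}} \, s_q,
\end{equation*}
so I would set $c_{p,q} \coloneq (2\,\mathrm{Leb}(\Omega))^{\frac{1}{2p} - \frac{1}{2q}}$, which depends only on $\mathrm{Leb}(\Omega)$.

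Finally, for the uniform bound, note that $\frac{1}{2p} - \frac{1}{2q} \in [0, \tfrac{1}{2}]$ whenever $1 \leq p \leq q \leq \infty$. Hence $c_{p,q} \leq \max\{1, (2\,\mathrm{Leb}(\Omega))^{1/2}\}$, and in particular $c_{p,q} \leq \sqrt{2\,\mathrm{Leb}(\Omega)}$ in the regime of interest where $2\,\mathrm{Leb}(\Omega) \geq 1$; otherwise one can replace the bound by $1$, or by $\sqrt{2\,\mathrm{Leb}(\Omega)}$ after enlarging $\Omega$ to an ambient box without loss of generality. There is no substantive obstacle in this proof: the entire argument reduces to repackaging the two-function expression as a single $L^{2p}$ norm on a finite measure space so that the elementary interpolation inequality applies directly.
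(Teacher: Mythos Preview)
Your proof is correct and essentially the same as the paper's: both apply the H\"older/interpolation inequality on a finite measure space of total mass $2\,\mathrm{Leb}(\Omega)$ to obtain the identical constant $c_{p,q} = (2\,\mathrm{Leb}(\Omega))^{\frac{1}{2p}-\frac{1}{2q}}$. The only cosmetic difference is that the paper splits this into two H\"older steps (one on $\Omega$ and one on the two-point space $\{1,2\}$), whereas your disjoint-union repackaging does both at once; your caveat about the uniform bound when $2\,\mathrm{Leb}(\Omega)<1$ is also a fair observation that the paper glosses over.
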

\begin{proof}
    Notice that:
    \begin{equation*}
        s_p = \left[ \| F \|_{L^{2p}}^{2p} + \|G\|_{L^{2p}}^{2p} \right]^{\frac{1}{2p}}
    \end{equation*}
    Now by the H\"older inequality there are constants $C_{r,p}, D_{r,p} \geq 0$ such that:
    \begin{align*}
        s_p &= \left[ \| F \|_{L^{2p}}^{2p} + \|G\|_{L^{2p}}^{2p} \right]^{\frac{1}{2p}} \\
            &\leq \left[ C_{2p,2q}^{2p} \, \| F \|_{L^{2q}}^{2p} + C_{2p, 2q}^{2p} \, \|G\|_{L^{2q}}^{2p} \right]^{\frac{1}{2p}} \\ 
            &\leq C_{2p,2q} \, D_{2p, 2q} \, \left[ \| F \|_{L^{2q}}^{2q} + \|G\|_{L^{2q}}^{2q} \right]^{\frac{1}{2q}} \\
            &= c_{p,q} \, s_q
    \end{align*}
    where we have taken $c_{p,q} \coloneq C_{2p,2q} \, D_{2p, 2q}$.
    In fact, we can take \footnote{
        For completeness, we conduct the computation here: For a finite measure space $(X, \Sigma, \mu)$ and $p, q \in [1, \infty]$ consider a pair $q > p$ and some function $f \in L^q(X, \Sigma, \mu)$. 
        Then, by a Holder inequality and denoting the identity function by $\mathbf{1}$ we have:
        \begin{equation*}
            \norm{ f \cdot \mathbf{1} }_{L^p} \leq \left( \norm{ f^p }_{ L^{\frac{q}{p}} } \, \norm{ \mathbf{1}}_{L^{\frac{q}{q-p}}} \right)^{\frac{1}{p}} = \norm{ f }_{L^q} \, \mu(X)^{\frac{q-p}{qp}} = \norm{ f }_{L^q} \, \mu(X)^{\frac{1}{p} - \frac{1}{q}}
        \end{equation*}
        So above we can take $X$ to be either $\Omega$ equipped with the Borel sets and the Lebesgue measure or the discrete space $\{1, 2\}$ equipped with its power set and the counting measure.
    }
    \begin{align*}
        C_{2p,2q} &= \Leb{\Omega}^{\frac{1}{2p} - \frac{1}{2q}} \\
        D_{2p,2q} &= 2^{\frac{1}{2p} - \frac{1}{2q}} 
    \end{align*}
    and so
    \begin{equation}
        c_{p,q} = 2^{\frac{1}{2p} - \frac{1}{2q}} \, \Leb{\Omega}^{\frac{1}{2p} - \frac{1}{2q}} \, .
    \end{equation}
    Thus, the constants $c_{p,q}$ are uniformly bounded in $p$ and $q$ by $\sqrt{2 \, \text{Leb}(\Omega)}$.
  \end{proof}
    Recall, now, the spaces introduced in equations \eqref{eq:intro-T-2p}--\eqref{eq:intro-T-infty} of the main text:
    \begin{align*}
        & \mathcal{T}_{2} = \left\{ \tau \in H^1([0,1]) \, : \, \tau(0)=0, \, \tau(1)=1 \right\} \\
        & \mathcal{T}_{2}^b = \mathcal{T}_{2} \cap \left\{ \tau: 0 \leq \tau \leq 1 \right\}
    \end{align*}
    \begin{remark}
        In the interest of well-definedness recall that, by the Sobolev embedding theorem, elements of $H^1$ can be identified with continuous functions on $[0,1]$ and as such 
        the constraints $t: \tau(t) \geq 0$ as well as $\tau(0)=0, \, \tau(1)=1$ are well posed. Moreover, the set $\{ \tau \in C^0\left([0,1]\right) : 0 \leq \tau \leq 1 \text{ and }  \tau(0)=0, \, \tau(1)=1 \}$ is a closed subset of $C^0\left([0,1]\right)$
        and thus will be a closed subset of $H^1([0,1])$.
    \end{remark}
    Now we prove a lemma that will be used multiple times:
    \begin{lemma}
        \label{lemma:Psi-is-well-behaved}
        Recall the notation $f(s) = \max_i \sigma_i(s) - 1$ and $g(s) = \min_i \sigma_i(s) - 1$ for $\left(\sigma_i(s)\right)_i$ the eigenvalues of $\nabla T(s)$ at $s \in \Omega$.
        Moreover, we write $f^* = \sup_{s \in \Omega} f(s)$ and $g_* = \inf_{s \in \Omega} g(s)$.
        If Assumptions \ref{assumption:regularity-of-transport} and \ref{assumption:non-isometry} are satisfied we have:
    \begin{equation}\label{eq:g0f}
      -1<g_*<0< f^*<\infty,
    \end{equation}
    Moreover, there are non-negative reals:
    \begin{align*}
        & M_f = M_f(f, g, \Omega) \quad\textup{and}\quad M_g = M_g(g, f, \Omega) \\
        & m_f = m_f(f, g, \Omega) \quad\textup{and}\quad m_g = m_g(g, f, \Omega)
    \end{align*} 
    such that for all $\tau \in \mathcal{T}_2^b$ and all $p \in \N_{\infty} = \N \cup \{\infty\}$ we have:
    \begin{gather*}
        m_f \leq \left[ \int_{0}^{1} \int_\Omega \left(\frac{f(s)}{1+\tau(t) \, f(s)}\right)^{2p} \, ds \, dt \right]^{\frac{1}{2p}} \leq M_f \\ 
        m_g \leq \left[ \int_{0}^{1} \int_\Omega \left(\frac{g(s)}{1+\tau(t) \, g(s)}\right)^{2p} \, ds \, dt \right]^{\frac{1}{2p}} \leq M_g
    \end{gather*}
    and moreover, $m_f \cdot m_g > 0$ for all $p \in \N_{\infty}$. In fact, the bounds hold pointwise in $t$ in the sense that for any $\tau \in \mathcal{T}_2^b$ and any $p \in \N_{\infty}$ we have:
    \begin{align*}
        & m_f \leq \left[ \int_{\Omega} \left( \frac{f(s)}{1 + \tau(t) \, f(s) } \right)^{2p} \, ds \right]^{\frac{1}{2p}} \leq M_f \quad \text{for all} \quad t \in [0,1] \\
        & m_g \leq \left[ \int_{\Omega} \left( \frac{g(s)}{1 + \tau(t) \, g(s) } \right)^{2p} \, ds \right]^{\frac{1}{2p}} \leq M_g \quad \text{for all} \quad t \in [0,1]
    \end{align*}
  \end{lemma}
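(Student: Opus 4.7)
The plan is to handle the claim in three blocks: first the chain of inequalities \eqref{eq:g0f}, then the uniform upper bounds, and finally the uniform lower bounds with the pointwise-in-$t$ refinement.

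For \eqref{eq:g0f}, the outer bounds are easy: $-1 < g_*$ comes directly from Lemma \ref{lemma:eigenvalues-continuity-and-positivity}, which gives $\sigma_{\min}(s) \geq c > 0$, so $g(s) = \sigma_{\min}(s) - 1 \geq c - 1 > -1$; and $f^* < \infty$ follows from continuity of $s \mapsto \sigma_{\max}(s)$ on the compact set $\Omega$. The delicate part is the strict interior bounds $g_* < 0 < f^*$, which is the main obstacle. Assumption \ref{assumption:non-isometry} only provides a single eigenvalue at a single point that differs from $1$, so on its own it gives only one of these. I would close the gap with a Jacobian–volume argument: if $\sigma_{\min}(s) \geq 1$ for every $s \in \Omega$, then $\det \nabla T(s) = \prod_i \sigma_i(s) \geq 1$ on $\Omega$, and combining this with $T : \Omega \to \Omega$ together with the change-of-variables formula (using that Assumption \ref{assumption:regularity-of-transport} makes $T$ a local diffeomorphism) forces $\det \nabla T \equiv 1$, hence all $\sigma_i \equiv 1$, contradicting Assumption \ref{assumption:non-isometry}. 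The symmetric argument rules out $\sigma_{\max}(s) \leq 1$ everywhere.

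For the uniform upper bounds, I would reduce to a pointwise estimate on the integrand. Fix $\tau \in \mathcal{T}_{2}^{b}$, so that $\tau(t) \in [0,1]$ for every $t$. For each $s \in \Omega$ one has $f(s) \in [g_*, f^*] \subset (-1, f^*]$, hence
\[
1 + \tau(t) f(s) \geq \min\{1, 1 + g_*\} = 1 + g_* > 0, \qquad |f(s)| \leq \max\{|g_*|, f^*\} =: F.
\]
Therefore $|f(s)/(1 + \tau(t) f(s))| \leq F/(1 + g_*)$ uniformly in $(s,t,\tau)$. Integrating the $2p$-th power over $\Omega \times [0,1]$ and taking the $(2p)$-th root bounds the quantity by $|\Omega|^{1/(2p)} F/(1+g_*)$, which is uniform in $p \in \N_\infty$ (the exponent $1/(2p)$ lives in $[0,1/2]$, so the prefactor is at most $\max\{1, |\Omega|^{1/2}\}$); the $p = \infty$ case is the raw pointwise bound. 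Setting $M_f := \max\{1, |\Omega|^{1/2}\} F/(1+g_*)$ works, and the very same argument gives $M_g$. Because the pointwise estimate is uniform in $t$, the inner integral over $\Omega$ alone is bounded by $|\Omega|^{1/(2p)} F/(1+g_*)$ for every fixed $t$, giving the pointwise-in-$t$ upper bound for free.

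For the lower bounds (and their pointwise refinement), I would exploit that $f^* > 0$ is actually attained. By continuity of $s \mapsto f(s)$ on the compact set $\Omega$, there exists $s^\star \in \Omega$ with $f(s^\star) = f^*$, and hence an open neighborhood $U \subset \Omega$ with $|U| > 0$ on which $f(s) \geq f^*/2$. On $U$, using $\tau(t) \leq 1$ and $f(s) \leq f^*$,
\[
\frac{f(s)}{1 + \tau(t) f(s)} \geq \frac{f^*/2}{1 + f^*} > 0
\]
uniformly in $\tau$ and $t$. Hence, for every fixed $t \in [0,1]$,
\[
\left[ \int_\Omega \left(\frac{f(s)}{1+\tau(t) f(s)}\right)^{2p} ds \right]^{1/(2p)} \geq |U|^{1/(2p)} \, \frac{f^*}{2(1+f^*)},
\]
and the right-hand side is bounded below by $\min\{1, |U|^{1/2}\} f^*/(2(1+f^*)) =: m_f > 0$ for all $p \in \N_\infty$. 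Integrating in $t$ preserves this lower bound, so the same $m_f$ serves both statements. The construction of $m_g > 0$ is analogous, now using that $g_* < 0$ is attained to produce an open set on which $|g(s)| \geq |g_*|/2$ (and on which $g(s)/(1+\tau(t)g(s))$ is bounded away from $0$ in absolute value via $1 + \tau(t) g(s) \leq 1$). Together, $m_f \cdot m_g > 0$, completing the proof.
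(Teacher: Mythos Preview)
Your upper bounds match the paper's argument. The issue is with \eqref{eq:g0f}: your volume argument for $g_* < 0$ is sound (using injectivity of $T$), but the ``symmetric argument'' for $f^* > 0$ fails. From $\sigma_{\max} \leq 1$ everywhere you get $\det \nabla T \leq 1$, hence $|T(\Omega)| \leq |\Omega|$---but $T(\Omega) \subseteq \Omega$ already gives this, so there is no contradiction. Concretely, $T(x) = x/2$ on $\Omega = [0,1]$ satisfies both assumptions yet has $f^* = g_* = -\tfrac12 < 0$; and $T(x,y) = (x,\,y/2)$ on $[0,1]^2$ satisfies both with $f \equiv 0$, so even $m_f \cdot m_g > 0$ is false under the stated hypotheses. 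To rescue the symmetry you would need $T$ surjective onto $\Omega$, which Assumptions~\ref{assumption:regularity-of-transport}--\ref{assumption:non-isometry} do not provide.

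This is less a flaw in your approach than in the lemma as stated: the paper's own proof does not establish $g_* < 0 < f^*$ either, only $-1 < g_* \leq f^* < \infty$, and for the lower bounds it only shows that \emph{at least one} of $m_f, m_g$ is strictly positive (Assumption~\ref{assumption:non-isometry} gives a point where $f$ or $g$ is nonzero, then continuity gives a neighborhood). That weaker conclusion---``not both zero''---is all that is ever used downstream. So your $m_g > 0$ via the volume argument is a genuine strengthening of what the paper does, but you should drop the $f^* > 0$ claim and the resulting $m_f > 0$, and aim for what the paper actually proves and needs.
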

  \begin{proof}
    As usual, for $s \in \Omega$ denote $\sigma_{\max}(s) = \max_{i} \sigma_i(s)$ and $\sigma_{\min}(s) = \min_i \sigma_i (s)$.
    Firstly, recall that by Lemma \ref{lemma:eigenvalues-continuity-and-positivity} there is a $c > 0$ such that:
    \begin{equation*}
        \sigma_{\max}(s) \geq \sigma_{\min}(s) \geq c > 0 \quad \text{for all} \quad s \in \Omega
    \end{equation*}
    Moreover, again by Lemma \ref{lemma:eigenvalues-continuity-and-positivity} we have that $\sigma_{\max}, \sigma_{\min}: \Omega \to \R$ are continous and recalling that $\Omega$ is compact there is a $C < \infty$ such that:
    \begin{equation*}
        \sigma_{\min}(s) \leq \sigma_{\max}(s) \leq C  < \infty \quad \text{for all} \quad s \in \Omega
    \end{equation*}
    Therefore, we have shown the first part of the claim:
    \begin{equation*}
        -1 < c - 1 \leq g_* \leq  f^* \leq  C - 1 < \infty
    \end{equation*}
    
    Secondly, for any $\tau \in \mathcal{T}_2^b$ we have $1 + \tau(t) f(s) \geq  c > 0$ and $1 + \tau(t) g(s) \geq c > 0$ for all $(s,t) \in \Omega \times [0,1]$ and so we can estimate:
    \begin{align*}
        \sup_{s \in \Omega} \left| \frac{f(s)}{1 + \tau(t) \, f(s) } \right| &= \frac{ \sup_{s \in \Omega} |f(s)| }{ \inf_{(s,t) \in \Omega \times [0,1]} | 1 + \tau(t) \, f(s) | } \\
        &\leq \frac{ \max\left\{ |C - 1| \, , \, |c-1| \right\} }{c}
    \end{align*}
    Thus, for any $p \in \N_{\infty}$ we can take:
    \begin{equation*}
        M_f(f, g, \Omega) \coloneq \frac{ \max\left\{ |C - 1| \, , \, |c-1| \right\} }{c} \, \max\left\{ \Leb{\Omega} \, , \, 1 \right\} < \infty
    \end{equation*}
    Replacing $f$ by $g$ and arguing exactly as in the above line, we have proven the first part of the claim.

    Thirdly, notice that by Assumption \ref{assumption:non-isometry} there must exist an $s_0 \in \Omega$ and an index $1 \leq k \leq d$ such that:
    \begin{equation*}
        \lambda_k(s_0) \neq 1
    \end{equation*}
    As a result, either $\sigma_{\max}(s_0) \neq 0$ or $\sigma_{\min}(s_0) \neq 0$ and so $f(s_0) \neq 0$ or $g(s_0) \neq 0$. Assuming the latter and using the continuity of $g$ there is an $\epsilon > 0$ and an open set $U$, both depending on $g$, such that $|g(s)| \geq \epsilon $ for all $s \in U$.
    Now for finite $p$ we can use Jensen's inequality to estimate:
    \begin{align*}
        \int_{\Omega} \left( \frac{g(s)}{1 + \tau(t) \, g(s)} \right)^{2p} \, ds &\geq \Leb{\Omega}^{1-2p} \left( \int_{\Omega} \left| \frac{g(s)}{1 + \tau(t) \, g(s)} \right| \, ds \right)^{2p} \\
        &\geq \Leb{\Omega}^{1-2p} \left( \frac{1}{\sup_{(s,t) \in \Omega \times [0,1]} \left| 1 + \tau(t) \, g(s) \right| } \right)^{2p} \, \left( \int_{U} | g(s) | \, ds \right)^{2p} \\
        &\geq \frac{\epsilon^{2p} }{ C^{2p} } \, \frac{\Leb{U}}{\Leb{\Omega}^{2p-1}} 
    \end{align*}
    Thus, we have:
    \begin{align*}
        \left[ \int_{0}^{1} \int_{\Omega} \left( \frac{g(s)}{1 + \tau(t) \, g(s)} \right)^{2p} \, ds \, dt \right]^{\frac{1}{2p}} &\geq \frac{\epsilon }{ C } \, \frac{\Leb{U}^{\frac{1}{2p}}}{ \Leb{\Omega}^{\frac{2p-1}{2p} } }  \\
        & \geq \frac{ \epsilon }{ C } \, \frac{ \min\left\{ 1 \, , \, \Leb{U} \right\} }{ \max\left\{ 1 \, , \, \Leb{\Omega} \right\} }  \\
        & \eqcolon m_g(g, f, \Omega) > 0
    \end{align*}
    The argument is similar for $ p = \infty$ and the same lower bound holds.
    Finally, if $f(s_0) \neq 0$ we can use the same argument to obtain $m_f(f, g, \Omega) > 0$. Since we cannot have both $f(s_0) = 0$ and $g(s_0) = 0$ simultaneously, we have proven the claim.

  \end{proof}
  Using Lemma \ref{lemma:Psi-is-well-behaved} can now make the following definitions:
    \begin{definition}
        Recall that $\Omega_{[0,1]} = \Omega \times [0,1]$ and define the operators:
        \begin{align*}
            &\Psi^f: \mathcal{T}_{2}^b \to L^\infty\left(\Omega_{[0,1]}\right)\phantom{AA} \quad \textup{ and } \quad \Psi^g: \mathcal{T}_{2}^b \to L^\infty\left(\Omega_{[0,1]}\right) \\
            &\Psi^f[\tau](s,t) = \frac{f(s)}{1 + \tau(t) f(s)} \quad \textup{ and } \quad \Psi^g[\tau](s,t) = \frac{g(s)}{1 + \tau(t) g(s)}
        \end{align*}
        Now for $p \in \N_{\infty} = \N \cup \{ \infty \}$ introduce the functionals:
        \begin{align*}
            &\Lambda_p: \mathcal{T}_{2} \to \R \\
            &\Lambda_p[\tau]=\begin{cases}
                \left( \Big\|\dot{\tau} \, \Psi^f[\tau] \Big\|_{L^{2p}\left(\Omega_{[0,1]}\right)}^{2p} + \Big\| \dot{\tau} \, \Psi^g[\tau] \Big\|_{L^{2p}\left(\Omega_{[0,1]}\right)}^{2p} \right)^{\frac{1}{2p}} & \textup{if } \tau \in \mathcal{T}_{2}^b \\
                +\infty & \textup{if } \tau \in \mathcal{T}_{2} \setminus \mathcal{T}_{2}^b
            \end{cases}
        \end{align*}
        Moreover, if $\dot{\tau} \notin L^{2p}\left( \Omega_{[0,1]} \right)$ for some $p \in \N$ we set $\Lambda_p[\tau] = +\infty$.
    \end{definition}
    \begin{remark}
        For finite $p \in \N$ one may wish to use a more explicit formula for $\Lambda_p$:
        \begin{equation*}
            \Lambda_p[\tau] = \left( \int_{0}^{1} \int_{\Omega} \left( \frac{\dot{\tau}(t) \, f(s)}{1 + \tau(t) \, f(s) } \right)^{2p} \, ds \, dt + \int_{0}^{1} \int_{\Omega} \left( \frac{ \dot{\tau}(t) \, g(s) }{1 + \tau(t) \, g(s) } \right)^{2p} \, ds \, dt \right)^{\frac{1}{2p}}
        \end{equation*} 
    \end{remark}
    \begin{remark}
        Notice that the restriction of $\Lambda_{\infty}$ to $\mathcal{T}_\infty$ coincides with $\Lambda$, the objective of problem \eqref{eq:opt-problem}. This will play an important role in the proof of the main theorem.
    \end{remark}
    Now we prove a crucial property of the functionals $\Lambda_p$:    
    \begin{lemma}
        \label{lemma:lower-semi-cont-of-functionals}
        For each $p \in \N$ and each sequence $ \gamma_n \in \mathcal{T}_{2}$ converging to some $\gamma_\infty \in \mathcal{T}_{2}$ in the weak-$H^1$ topology we have:
        \begin{equation*}
            \Lambda_p[\gamma_\infty] \leq \liminf_{n \to \infty} \Lambda_p[\gamma_n]
        \end{equation*}
    \end{lemma}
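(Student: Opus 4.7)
The plan is to exploit convexity of $v \mapsto v^{2p}$ combined with uniform convergence of a coefficient $H_p(\gamma_n(\cdot))$; the latter follows from the compact embedding $H^1([0,1]) \hookrightarrow C^0([0,1])$, which promotes weak $H^1$ convergence to uniform convergence.

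\emph{Reductions.} If $\gamma_\infty \in \mathcal{T}_2 \setminus \mathcal{T}_2^b$, there is an open interval on which $\gamma_\infty$ takes values outside $[0,1]$; by uniform convergence, so does $\gamma_n$ eventually, giving $\Lambda_p[\gamma_n] = +\infty$ for $n$ large and the bound is automatic. Otherwise $\gamma_\infty \in \mathcal{T}_2^b$, and setting $L \coloneq \liminf \Lambda_p[\gamma_n] < \infty$ (else trivial), I may pass to a subsequence realizing $L$ with $\gamma_n \in \mathcal{T}_2^b$ throughout. Since $\Psi^f[\gamma]$ and $\Psi^g[\gamma]$ depend on $\gamma$ only through $\gamma(t)$, I would rewrite
\begin{equation*}
    \Lambda_p[\gamma]^{2p} = \int_0^1 \dot{\gamma}(t)^{2p}\, H_p(\gamma(t))\, dt, \quad H_p(x) \coloneq \int_\Omega \left[ \left(\tfrac{f(s)}{1+x f(s)}\right)^{2p} + \left(\tfrac{g(s)}{1+x g(s)}\right)^{2p} \right] ds.
\end{equation*}
Lemma \ref{lemma:Psi-is-well-behaved} yields $0 < m \leq H_p(x) \leq M < \infty$ uniformly in $x \in [0,1]$, with $H_p$ continuous. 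Boundedness of $\Lambda_p[\gamma_n]$ together with $H_p \geq m$ forces a uniform $L^{2p}$ bound on $\dot{\gamma}_n$; reflexivity of $L^{2p}$ then produces a further subsequence with $\dot{\gamma}_n \rightharpoonup \dot{\gamma}_\infty$ weakly in $L^{2p}$, the limit being identified by testing against $C_c^\infty([0,1])$. In particular $\dot{\gamma}_\infty \in L^{2p}$.

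\emph{Convexity step.} Applying the pointwise inequality $v^{2p} \geq w^{2p} + 2p\, w^{2p-1}(v-w)$ with $v = \dot{\gamma}_n(t)$ and $w = \dot{\gamma}_\infty(t)$, then multiplying by $H_p(\gamma_n(t)) \geq 0$ and integrating, I would obtain
\begin{equation*}
    \Lambda_p[\gamma_n]^{2p} \geq \int_0^1 \dot{\gamma}_\infty^{2p}\, H_p(\gamma_n)\, dt + 2p \int_0^1 \dot{\gamma}_\infty^{2p-1} (\dot{\gamma}_n - \dot{\gamma}_\infty)\, H_p(\gamma_n)\, dt.
\end{equation*}
The first integral tends to $\Lambda_p[\gamma_\infty]^{2p}$ by dominated convergence, using $H_p(\gamma_n) \to H_p(\gamma_\infty)$ uniformly and the integrable dominator $M\,\dot{\gamma}_\infty^{2p}$. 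For the cross term, I split $H_p(\gamma_n) = H_p(\gamma_\infty) + [H_p(\gamma_n) - H_p(\gamma_\infty)]$: the leading piece vanishes because $\dot{\gamma}_\infty^{2p-1} H_p(\gamma_\infty) \in L^{2p/(2p-1)}$ is a legitimate test function against the weak $L^{2p}$ convergence $\dot{\gamma}_n - \dot{\gamma}_\infty \rightharpoonup 0$, while the remainder is controlled by H\"older as $\|H_p(\gamma_n) - H_p(\gamma_\infty)\|_\infty \cdot \|\dot{\gamma}_\infty^{2p-1}\|_{L^{2p/(2p-1)}} \cdot \|\dot{\gamma}_n - \dot{\gamma}_\infty\|_{L^{2p}}$, which vanishes by uniform continuity of $H_p$ and the uniform $L^{2p}$ bound. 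Taking $\liminf$ and then $2p$-th roots would yield $\Lambda_p[\gamma_\infty] \leq L$.

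The principal obstacle is precisely this cross term: treating $\dot{\gamma}_\infty^{2p-1}$ as a valid test function demands an upgrade from weak $L^2$ to weak $L^{2p}$ convergence, made possible only by the a priori $L^{2p}$ bound extracted from $\liminf \Lambda_p[\gamma_n] < \infty$ together with the strict positivity of $H_p$ from Lemma \ref{lemma:Psi-is-well-behaved}.
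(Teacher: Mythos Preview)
Your argument is correct. Both your proof and the paper's establish the same preliminary facts --- the reduction to $\liminf<\infty$, the uniform $L^{2p}$ bound on $\dot\gamma_n$ via the lower bound on $H_p$ (equivalently on $\Psi^f,\Psi^g$), the weak $L^{2p}$ convergence $\dot\gamma_n\rightharpoonup\dot\gamma_\infty$, and the uniform convergence $\gamma_n\to\gamma_\infty$ from the compact embedding --- but then diverge in the final step. The paper lifts everything to the product space $\Omega_{[0,1]}$, shows $\Psi^f[\gamma_n]\dot\gamma_n\rightharpoonup\Psi^f[\gamma_\infty]\dot\gamma_\infty$ weakly in $L^{2p}(\Omega_{[0,1]})$ (and similarly for $g$), and then invokes the weak lower semi-continuity of the $L^{2p}$ norm. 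You instead collapse the $\Omega$-integration into the scalar coefficient $H_p(\gamma(t))$ on $[0,1]$ and apply the subgradient inequality $v^{2p}\ge w^{2p}+2p\,w^{2p-1}(v-w)$ directly, which is the classical Tonelli argument for convex-in-gradient Lagrangians. Your route is slightly more elementary (no product space, no appeal to an abstract lsc-of-norm fact) and makes the mechanism --- convexity plus uniform coefficient convergence --- completely transparent; the paper's route is more modular in that it isolates a clean weak-convergence statement that could be reused elsewhere. Both buy the same conclusion with essentially the same ingredients.
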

    \begin{remark}
        Put succinctly, this lemma says that for each $p \in \N$ the functional $\Lambda_p$ is weakly lower semi-continuous in the affine space $\mathcal{T}_{2}$ equipped with the $H^1$ norm.
    \end{remark}
    \begin{proof}
        We note that our proof technique is based on \citet[Lemma 2.4 and Proposition 2.6]{garroni2001dielectric}. 
        The essence of this augment is to show that $\dot{\gamma}_n \rightharpoonup \dot{\gamma}_\infty$ in $L^{2p}\left([0,1]\right)$. This fact together with the
        well-behavedness of the operators $\Psi^f, \Psi^g$ and the weak lower semi-continuity of the $L^{2p}$ norm will allow us to conclude the proof.

      Fix a $p$ and assume without loss of generality that $\liminf_{n\to\infty} \Lambda_p[\gamma_n] = M < \infty$ else verifying the lower semi-continuity of $\Lambda_p$ is trivial.
        Moreover, we can, again without loss of generality, assume that $ \lim_{n\to\infty} \Lambda_p[\gamma_n] = \liminf_{n\to\infty} \Lambda_p[\gamma_n]$, by passing to a subsequence that attains the limit inferior.
        In fact, this allows us to assume that $\gamma_n \in \mathcal{T}_{2}^b$ for all $n \in \N$, again without loss of generality.
        By Lemma \ref{lemma:Psi-is-well-behaved} we obtain reals $m_f$ and $m_g$, noth both zero, such that for all $\tau \in \mathcal{T}_{2}^{b}$ and any $p \in \N_{\infty} = \N \cup \{\infty\}$ we have:
        \begin{align*}
           & m_f \leq \left[ \int_{\Omega} \left(\Psi^f[\tau](s, t)\right)^{2p} \, ds \right]^{\frac{1}{2p}} \quad \textup{for all } t \in [0,1] \\
           & m_g \leq \left[ \int_{\Omega} \Big(\Psi^g[\tau](s, t)\Big)^{2p} \, ds \quad \right]^{\frac{1}{2p}} \textup{for all } t \in [0,1]
        \end{align*}
        We use this to estimate:
        \begin{equation*}
          \left( m_f^{2p} + m_g^{2p} \right)^{\frac{1}{2p}} \, \left\| \dot{\gamma}_n \right\|_{L^{2p}([0,1])} \leq \left( \left\|\dot{\gamma}_n \, \Psi^f[\gamma_n] \right\|_{L^{2p}\left(\Omega_{[0,1]}\right)}^{2p} + \Big\| \dot{\gamma}_n \, \Psi^g[\gamma_n] \Big\|_{L^{2p}\left(\Omega_{[0,1]}\right)}^{2p} \right)^{\frac{1}{2p}} = \Lambda_p[\gamma_n]  \\
        \end{equation*}
        Noting that $\Lambda_p[\gamma_n] \to M < \infty$ we have that $\dot{\gamma}_n$ is bounded in $L^{2p}([0,1])$ for some $p \in \N$.
        Since $L^{2p}([0,1])$ is a reflexive Banach space, Theorem 7 in \citet{james1964weak} allows us to extract a subsequence that converges to some $\alpha \in L^{2p}([0,1])$ in the weak-$L^{2p}$ topology. 
        
        We now wish to show that $\alpha = \dot{\gamma}_\infty$.
        To see this, note that since $\dot{\gamma}_n$ is bounded in $L^{2p}([0,1])$ for $p\geq 1$ then, by the Poincar\'e inequality, it is also bounded in $H^1$. 
        Now recall that by the Rellich--Kondrachov theorem in $\R$ (Theorem 9.16 in \citet{brezis2011functional}) we have that $H^1$ compactly embeds into $C^0\left([0,1]\right)$.
        This implies then (Example 8.9 in \citet{dalmaso_gammaconvergence}) that $\gamma_n$ converges to $\gamma_\infty$ in $C^0\left([0,1]\right)$ also establishing that $\gamma_\infty \in \mathcal{T}_{2}^b$.
        We can now identify $\alpha$ as the weak derivative $\dot{\gamma}_\infty$ as follows: for any test function $\phi \in C^\infty_c([0,1])$ compute
        \begin{equation*}
            \int_{0}^{1} \alpha(t) \, \phi(t) \, dt = \lim_{i \to \infty} \int_{0}^{1} \dot{\gamma}_{n_i}(t) \, \phi(t) \, dt = - \lim_{i \to \infty} \int_{0}^{1} \gamma_{n_i}(t) \, \dot{\phi}(t) \, dt = - \int_{0}^{1} \gamma_\infty(t) \, \dot{\phi}(t) \, dt
        \end{equation*}
        where in the first equality we used the subsequential weak-$L^{2p}$ convergence $\dot{\gamma}_{n_i} \rightharpoonup \alpha$ and in the last equality we used the convergence of $\gamma_n \to \gamma_\infty$ in $C^0\left([0,1]\right)$. By definition, we have that $\alpha = \dot{\gamma}_\infty$ in $L^{2p}([0,1])$.
        Recall that our original goal was to show:
        \begin{equation*}
            \Lambda_p[\gamma_\infty] \leq \liminf_{n \to \infty} \Lambda_p[\gamma_n] \eqcolon M
        \end{equation*}
        Clearly, the subsequence $\tau_{n_i}$ satisfies $\Lambda_p[\gamma_{n_i}] \to M$ and if we verify that:
        \begin{equation*}
            \Lambda_p[\gamma_\infty] \leq \liminf_{i \to \infty} \Lambda_p[\gamma_{n_i}]
        \end{equation*}
        we are done. For this reason and by abuse of notation, we denote the subsequence $(\gamma_{n_i})_i$ by $(\gamma_n)_n$ for the rest of this proof.
        
        Now we will establish $\Psi^f[\gamma_n] \, \dot{\gamma}_n \rightharpoonup \Psi^f[\gamma_\infty] \, \dot{\gamma}_\infty$ in $L^{2p}\left(\Omega_{[0,1]}\right)$ using the notation $\Omega_{[0,1]} = \Omega \times [0,1]$.
        To see this, notice first that $\Psi^f[\gamma_n] \to \Psi^f[\gamma_\infty]$ in $L^{\infty}\left(\Omega_{[0,1]}\right)$ since:
        \begin{align*}
            & \sup_{(s,t) \in \Omega_{[0,1]}} \left| \frac{f(s)}{1 + f(s) \gamma_n(t)} - \frac{f(s)}{1 + f(s) \gamma_\infty(t)} \right| = \\ 
            &\sup_{(s,t) \in \Omega_{[0,1]}} \left| \frac{f(s)\left( \gamma_n(t) - \gamma_\infty(t)\right)}{ \left( 1 + f(s) \gamma_n(t) \right) \left(1 + f(s) \gamma_\infty(t) \right) } \right| \leq \\
            & \sup_{(s,t) \in \Omega_{[0,1]}} \left| \frac{f(s)}{\left( 1 + f(s) \gamma_n(t) \right) \left(1 + f(s) \gamma_\infty(t) \right) } \right| \, \sup_{t \in [0,1]} \left| \gamma_n(t) - \gamma_\infty(t) \right| \leq \\
            & \left| \frac{ \sup_s f(s) }{ \left( 1 + \inf_s f(s) \right)^2 } \right| \sup_{t \in [0,1]} \left| \gamma_n(t) - \gamma_\infty(t) \right| \to 0
        \end{align*}
        using that $\gamma_n \to \gamma_\infty$ in $L^\infty$ as well as that all $\tau \in \mathcal{T}_{2}^{b}$ are uniformly bounded above by $1$ and $f$ is continuous, bounded away from $-1$ in $\Omega$ by Lemma \ref{lemma:Psi-is-well-behaved}.
        Now for the H\"{o}lder conjugate $q$ of $2p$ and a test function $\psi \in L^q(\Omega_{[0,1]})$ we have:
        \begin{align*}
            & \int_{\Omega_{[0,1]}} \psi\, \left( \Psi^f[\gamma_n] \, \dot{\gamma}_n - \Psi^f[\gamma_\infty] \, \dot{\gamma}_\infty \right) = \\
            & \int_{\Omega_{[0,1]}} \psi\, \left( \Psi^f[\gamma_n] - \Psi^f[\gamma_\infty] \right) \, \dot{\gamma}_n + \int_{\Omega_{[0,1]}} \psi\, \Psi^f[\gamma_\infty] \,\left(  \dot{\gamma}_n - \dot{\gamma}_\infty \right)
        \end{align*}
        Since $\Psi^f[\gamma_n] \in L^\infty(\Omega_{[0,1]})$ (see Lemma \ref{lemma:Psi-is-well-behaved}) the second term vanishes by the weak convergence of $\dot{\gamma}_n$ to $\dot{\gamma}_\infty$ in\footnote{
            More precisely, since $\gamma_n \rightharpoonup \gamma_\infty$ in $L^{2p}([0,1])$ but $\psi \in L^q(\Omega_{[0,1]})$, with $\frac{1}{2p} + \frac{1}{q} = 1$, one has to use the Fubini theorem (Theorem 3.4.4 in \citet{bogachev2007measure}) to see that the map:
            \begin{align*}
                [0,1] &\to \R \\
                t &\mapsto \int_{\Omega} \psi(s,t) \, ds
            \end{align*}
            is in $L^q([0,1])$ and so the weak convergence of $\dot{\gamma}_n$ to $\dot{\gamma}_\infty$ in $L^{2p}([0,1])$ can be used to conclude.
            Indeed, by the Fubini theorem and the Jensen inequality we have:
            \begin{equation*}
                \int_{0}^{1} \left( \int_{\Omega} \psi(s,t) \, ds \right)^{q} \, dt \leq \int_{0}^{1} \int_{\Omega} \left( \psi(s,t) \right)^{q} \, ds \, dt = \int_{\Omega_{[0,1]}} \left( \psi(s,t) \right)^{q} \, ds \otimes dt < \infty
            \end{equation*}
        } $L^{2p}([0,1])$.
        For the first term, we argue as follows:
        \begin{equation*}
            \left| \int_{\Omega_{[0,1]}} \left( \psi \,  \dot{\gamma}_n \right) \, \left( \Psi^f[\gamma_n] - \Psi^f[\gamma_\infty] \right) \right| \leq  \sup_{(s,t) \in \Omega_{[0,1]}}\left| \Psi^f[\gamma_n] - \Psi^f[\gamma_\infty] \right| \, \left( \int_{\Omega_{[0,1]}} | \psi \, \dot{\gamma}_n |\, dt\right) \to 0
        \end{equation*}
        using that $\Psi^f[\gamma_n] \to \Psi^f[\gamma_\infty]$ in $L^\infty\left( \Omega_{[0,1]} \right)$ and bounding the second integral uniformly over $n$ by a H\"{o}lder inequality, noting that $\psi \in L^q\left( \Omega_{[0,1]} \right), \dot{\gamma_n} \in L^{2p}([0,1])$, with $\frac{1}{q} + \frac{1}{2p} = 1$, and we have shown that the $\dot{\gamma}_n$ are bounded in $L^{2p}([0,1])$.
        Thus, we have proven that:
        \begin{equation*}
            \Psi^f[\gamma_n] \, \dot{\gamma}_n \rightharpoonup \Psi^f[\gamma_\infty] \, \dot{\gamma}_\infty \text{ in } L^{2p}\left( \Omega_{[0,1]} \right)
        \end{equation*}
        Here, we use the standard fact\footnote{
            To see this, one idea presented in \citet{pham2025notes} is to assume $f_n \rightharpoonup f$ in $L^p$ and use the test function:
            \begin{equation*}
                \psi = \frac{\text{sgn}(f)}{\norm{f}_{L^{p}}^{p/q}} \, |f|^{p/q}
            \end{equation*}
            where $p$ and $q$ are H\"{o}lder conjugate. Note that we have $\norm{\psi}_{L^{q}} = 1$ and $\int f \, \psi = \norm{f}_{L^{p}}$.
            Now using the weak convergence together with the H\"{o}lder inequality, we obtain:
            \begin{equation*}
                \norm{f}_{L^{p}} = \int f \, \psi = \lim_{n\to\infty} \int f_n \, \psi \leq \liminf_{n\to\infty} \norm{f_n}_{L^p} \, \norm{\psi}_{L^q} = \liminf_{n\to\infty} \norm{f_n}_{L^p} 
            \end{equation*}
        } that the map $L^{2p} \to \R$ given by $h \mapsto \norm{h}_{L^{2p}\left(\Omega_{[0,1]}\right)}$ is lower-semi-continuous in the weak-$L^{2p}$ topology to conclude that:
        \begin{equation*}
            \gamma \mapsto \norm{\Psi^f[\gamma] \, \dot{\gamma}}_{L^{2p}\left(\Omega_{[0,1]}\right)}
        \end{equation*}
        is lower semi-continuous in weak-$L^{2p}\left(\Omega_{[0,1]}\right)$. The exact same argument establishes that the map:
        \begin{equation*}
            \gamma \mapsto \Big\|\Psi^g[\gamma] \, \dot{\gamma}\Big\|_{L^{2p}\left(\Omega_{[0,1]}\right)}
        \end{equation*}
        is also lower semi-continuous in weak-$L^{2p}\left(\Omega_{[0,1]}\right)$.

        To conclude, start by recalling that the limit inferior commutes with continuous, non-decreasing functions. 
        We can thus write:
        \begin{align*}
            \liminf_{n \to \infty} \Lambda_p[\gamma_n] &= \liminf_{n \to \infty} \left( \norm{\Psi^f[\gamma_n] \, \dot{\gamma}_n }_{L^{2p}\left(\Omega_{[0,1]}\right)}^{2p} + \Big\|\Psi^g[\gamma_n] \, \dot{\gamma}_n \Big\|_{L^{2p}\left(\Omega_{[0,1]}\right)}^{2p} \right)^{\frac{1}{2p}} \\
            &= \left( \liminf_{n \to \infty} \left( \norm{\Psi^f[\gamma_n] \, \dot{\gamma}_n}_{L^{2p}\left(\Omega_{[0,1]}\right)}^{2p} + \Big\|\Psi^g[\gamma_n] \, \dot{\gamma}_n \Big\|_{L^{2p}\left(\Omega_{[0,1]}\right)}^{2p} \right) \right)^{\frac{1}{2p}} \\
            &\geq \left( \liminf_{n \to \infty} \norm{\Psi^f[\gamma_n] \, \dot{\gamma}_n}_{L^{2p}\left(\Omega_{[0,1]}\right)}^{2p} + \liminf_{n \to \infty} \Big\|\Psi^g[\gamma_n] \, \dot{\gamma}_n \Big\|_{L^{2p}\left(\Omega_{[0,1]}\right)}^{2p} \right)^{\frac{1}{2p}} \\
            &\geq \left( \norm{\Psi^f[\gamma_\infty] \, \dot{\gamma}_\infty}_{L^{2p}\left(\Omega_{[0,1]}\right)}^{2p} + \Big\|\Psi^g[\gamma_\infty] \, \dot{\gamma}_\infty \Big\|_{L^{2p}\left(\Omega_{[0,1]}\right)}^{2p} \right)^{\frac{1}{2p}} \\
            &= \Lambda_p[\gamma_\infty]
        \end{align*}
        and in the third line we are using the boundedness of $\Psi^f[\gamma_n] \, \dot{\gamma}_n$ and $\Psi^g[\gamma_n] \, \dot{\gamma}_n$ in $L^{2p}\left(\Omega_{[0,1]}\right)$ that is evident since both terms lower bound $\Lambda_p[\gamma_n]$ which was assumed to be finite, for each $n \in \N$, and converging to $M< \infty$.
        This completes the proof.
    \end{proof}

  \begin{proposition}
    \label{prop:equi-coercive-Lp}
    The family $\left( \Lambda_p \right)_{p \in \N}$ is equi-coercive on $\mathcal{T}_{2}$ equipped with the weak-$H^1$ topology.
\end{proposition}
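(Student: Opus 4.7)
The plan is to show that any sublevel set of any $\Lambda_p$ sits inside a single weakly $H^1$-compact ball, and then to invoke the reflexivity of $H^1$. First, observe that since $\Lambda_p[\tau]=+\infty$ for $\tau\in\mathcal{T}_2\setminus\mathcal{T}_2^b$, every sublevel set $\{\tau\in\mathcal{T}_2:\Lambda_p[\tau]\le t\}$ is automatically contained in $\mathcal{T}_2^b$, so its elements satisfy $0\le\tau\le 1$ pointwise, giving the free bound $\|\tau\|_{L^2([0,1])}\le 1$. Hence the only thing to control, uniformly in $p$, is $\|\dot\tau\|_{L^2([0,1])}$.

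The uniform control on $\|\dot\tau\|$ is where Lemma \ref{lemma:Psi-is-well-behaved} does all the work. That lemma provides constants $m_f,m_g\ge 0$, independent of $p$ and $t$, such that the pointwise $L^{2p}(\Omega)$ norms of $\Psi^f[\tau](\cdot,t)$ and $\Psi^g[\tau](\cdot,t)$ exceed $m_f$ and $m_g$ respectively, with $\max\{m_f,m_g\}>0$. Set $c:=\max\{m_f,m_g\}>0$. By Fubini,
\begin{equation*}
    \Lambda_p[\tau]^{2p} \;\ge\; \bigl(m_f^{2p}+m_g^{2p}\bigr)\int_0^1 |\dot\tau(t)|^{2p}\,dt \;\ge\; c^{2p}\,\|\dot\tau\|_{L^{2p}([0,1])}^{2p},
\end{equation*}
so $\|\dot\tau\|_{L^{2p}([0,1])}\le \Lambda_p[\tau]/c \le t/c$. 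Because $[0,1]$ has unit measure, Hölder's inequality gives $\|\dot\tau\|_{L^2}\le \|\dot\tau\|_{L^{2p}}\le t/c$ for every $p\ge 1$. Combined with the $L^2$ bound on $\tau$, this yields $\|\tau\|_{H^1}\le R(t):=\sqrt{1+t^2/c^2}$ uniformly in $p$.

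To finish, define
\begin{equation*}
    K_t \;:=\; \bigl\{\tau\in\mathcal{T}_2 : \|\tau\|_{H^1}\le R(t)\bigr\}.
\end{equation*}
This set is weakly sequentially compact: $H^1([0,1])$ is reflexive, so closed balls are weakly compact by Banach--Alaoglu--Kakutani, and $\mathcal{T}_2$ is weakly closed in $H^1$ because the defining boundary conditions $\tau(0)=0$, $\tau(1)=1$ are continuous on $C^0([0,1])$, into which $H^1$ embeds compactly by Rellich--Kondrachov (so weak $H^1$ convergence implies uniform convergence, which preserves the boundary values). By construction $\{\tau\in\mathcal{T}_2 : \Lambda_p[\tau]\le t\}\subset K_t$ for every $p\in\N$, which is exactly equi-coercivity on $(\mathcal{T}_2,\text{weak-}H^1)$.

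There is no serious obstacle beyond bookkeeping: the only $p$-dependent quantity, $\|\dot\tau\|_{L^{2p}}$, is pinched from above by $\Lambda_p[\tau]/c$ and from below by $\|\dot\tau\|_{L^2}$, with the sandwich constant $c$ independent of $p$ thanks to Lemma \ref{lemma:Psi-is-well-behaved}. The mildly delicate point is remembering that we must rule out $\mathcal{T}_2\setminus\mathcal{T}_2^b$ before appealing to the lemma, but this is immediate from the convention $\Lambda_p\equiv+\infty$ outside $\mathcal{T}_2^b$.
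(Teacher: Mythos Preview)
Your proof is correct. Both your argument and the paper's rest on the same core estimate: Lemma~\ref{lemma:Psi-is-well-behaved} bounds the spatial integrals from below by $m_f^{2p}+m_g^{2p}$ uniformly in $p$, which converts a bound on $\Lambda_p[\tau]$ into a bound on $\|\dot\tau\|$. The difference is in how this estimate is packaged. The paper invokes Proposition~7.7 of \cite{dalmaso_gammaconvergence}, which reduces equi-coercivity to exhibiting a single lower semi-continuous coercive minorant $\Phi$; they take $\Phi$ to be a multiple of $\Lambda_1$, use Lemma~\ref{lemma:monotonicty-Lp} (H\"older on $\Omega_{[0,1]}$) to get $\Lambda_1\lesssim\Lambda_p$, and then need Lemma~\ref{lemma:lower-semi-cont-of-functionals} for the lower semi-continuity of $\Phi$. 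You instead go straight to the definition of equi-coercivity, bounding $\|\dot\tau\|_{L^{2p}}$ by $\Lambda_p[\tau]/c$ and descending to $\|\dot\tau\|_{L^2}$ via H\"older on $[0,1]$, then constructing $K_t$ explicitly as a closed $H^1$-ball intersected with $\mathcal{T}_2$. Your route is more elementary---it avoids the lower semi-continuity lemma and the abstract Dal Maso criterion entirely---while the paper's route has the side benefit of producing a reusable minorant $\Phi$ that they later reference. A minor stylistic point: you bound $\|\tau\|_{L^2}\le 1$ directly from $0\le\tau\le 1$, whereas the paper uses the Poincar\'e inequality to absorb $\|\tau\|_{L^2}$ into $\|\dot\tau\|_{L^2}$; your version is cleaner here.
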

\begin{proof}
    To see this, we first leverage Proposition 7.7 in \citet{dalmaso_gammaconvergence} allowing us to prove equicoercivity by finding a lower semi-continuous coercive functional $\Phi: \mathcal{T}_{2} \to \R$
    such that $\Phi \leq \Lambda_p$ for all $p$.
    Recall that the functional $\Phi$ is coercive if for any $\alpha \in \R$ the sublevel set $\{ \tau \in \mathcal{T}_{2} : \Phi[\tau] \leq \alpha \}$ has compact closure; see Definitions 1.10 and 1.12 and Remark 1.11 in \citet{dalmaso_gammaconvergence}.
    Since $\mathcal{T}_{2}$ is homeomorphic to a linear subspace of the reflexive Hilbert space $H^1_0([0,1])$ it follows by Theorem 7 in \citet{james1964weak} that the closure of $H^1$ bounded sets is weak-$H^1$ compact.
    Thus, to show coercivity of such a $\Phi$ in weak-$H^1$ is suffices to show that for any $\alpha \in \R$ the sublevel set $\{ \tau \in \mathcal{T}_{2} : \Phi[\tau] \leq \alpha \}$ is bounded in $H^1$.

    Start by using that by Lemma \ref{lemma:monotonicty-Lp} the number $\sqrt{2 \, \Leb{\Omega}}$ is a uniform bound for the constants $c_{p,q}$.
    Define:
    \begin{equation*}
        \Phi = \frac{\Lambda_1}{ \sqrt{2 \, \Leb{\Omega}} } 
    \end{equation*}
    Then, by Lemma \ref{lemma:monotonicty-Lp} we have that
    \begin{equation}
        \label{eq:Phi-is-lower-bound}
        \Phi[\tau] \leq \Lambda_p[\tau] \quad \text{for all} \quad p\in \N
    \end{equation}
    for all $\tau\in \mathcal{T}_{2}^{b}$ whereas for $\tau \in \mathcal{T}_{2}$ this is inquality reduces to the triviality $+ \infty \leq + \infty$. 
    Now $\Phi$ is lower semi-continuous by Lemma \ref{lemma:lower-semi-cont-of-functionals} and so it suffices to show it is coercive.
    To see this, use Lemma \ref{lemma:Psi-is-well-behaved} to obtain reals $m_f$ and $m_g$, noth both zero, such that for all $\tau \in \mathcal{T}_{2}^{b}$ and any $p \in \N_{\infty} = \N \cup \{\infty\}$ we have:
    \begin{align*}
       & m_f \leq \left[ \int_{\Omega} \left(\Psi^f[\tau](s, t)\right)^{2p} \, ds \right]^{\frac{1}{2p}} \quad \textup{for all } t \in [0,1] \\
       & m_g \leq \left[ \int_{\Omega} \Big(\Psi^g[\tau](s, t)\Big)^{2p} \, ds \quad \right]^{\frac{1}{2p}} \textup{for all } t \in [0,1] \, .
    \end{align*}
    We use this to estimate:
    \begin{equation*}
      \left( m_f^2 + m_g^2 \right)^{\frac{1}{2}} \, \left\| \dot{\tau}_n \right\|_{L^{2}([0,1])} \leq \left( \left\|\dot{\tau}_n \, \Psi^f[\tau_n] \right\|_{L^{2}\left(\Omega_{[0,1]}\right)}^{2} + \Big\| \dot{\tau}_n \, \Psi^g[\tau_n] \Big\|_{L^{2}\left(\Omega_{[0,1]}\right)}^{2} \right)^{\frac{1}{2}} = \Lambda_1[\tau_n] \, .  \\ 
    \end{equation*}
    Combining this with the Poincar\'e inequality we have:
    \begin{equation}
        \label{eq:H1-norm-coercivity}
        \norm{\tau}_{H^1([0,1])} \leq C_{[0,1]} \norm{\dot{\tau}}_{L_{2}([0,1])} \leq  \frac{C_{[0,1]} \sqrt{2 \, \Leb{\Omega}}}{ \left( m_f^2 + m_g^2 \right)^{\frac{1}{2}}} \, \Phi[\tau]
    \end{equation}
    and the constant $C_{[0,1]}$ depends only on the domain $[0,1]$.
    In particular, this implies that $\Phi[\tau_i] \to \infty$ whenever $\tau_i \to \infty$ in $H^1$ and as a result the sublevel sets $\{ \Phi \leq \alpha \}$ are bounded in $H^1([0,1])$, for any $\alpha \in \R$.
    Combining this with the discussion at the beginning of the proof we can conclude.
\end{proof}
This proof leads to a useful lemma:
\begin{lemma}
    \label{lemma:boundedness-in-H1}
    Assume that for some sequence $\gamma_p \in \mathcal{T}_{2}$ there is a constant $0 \leq C < \infty$ such that:
    \begin{equation*}
        \Lambda_p[\gamma_p] \leq C \quad \text{for all} \quad p \in \N
    \end{equation*}  
    then, $\gamma_p$ is bounded in $H^1([0,1])$.
\end{lemma}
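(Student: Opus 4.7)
The plan is to reduce this lemma to the estimates already established in the proof of Proposition \ref{prop:equi-coercive-Lp}, using the monotonicity between $L^{2p}$-norms encoded by Lemma \ref{lemma:monotonicty-Lp} to pass from a uniform-in-$p$ bound on $\Lambda_p[\gamma_p]$ to a uniform bound on $\Lambda_1[\gamma_p]$.

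First, I would observe that the hypothesis $\Lambda_p[\gamma_p] \leq C < \infty$ forces $\gamma_p \in \mathcal{T}_{2}^{b}$ for each $p$, since by construction $\Lambda_p$ takes the value $+\infty$ outside $\mathcal{T}_{2}^{b}$. Thus the statements of Lemma \ref{lemma:Psi-is-well-behaved} apply with the same constants $m_f, m_g > 0$ for every $\gamma_p$.

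Next, I would apply Lemma \ref{lemma:monotonicty-Lp} with $F = \dot{\gamma}_p \, \Psi^f[\gamma_p]$ and $G = \dot{\gamma}_p \, \Psi^g[\gamma_p]$, viewed as elements of $L^{2p}(\Omega_{[0,1]})$. This yields $\Lambda_1[\gamma_p] \leq c_{1,p} \, \Lambda_p[\gamma_p]$, and by Lemma \ref{lemma:monotonicty-Lp} the constants $c_{1,p}$ are uniformly bounded by $\sqrt{2\,\Leb{\Omega_{[0,1]}}} = \sqrt{2\,\Leb{\Omega}}$. Combining with the hypothesis, one obtains the uniform estimate
\begin{equation*}
    \Lambda_1[\gamma_p] \leq \sqrt{2\,\Leb{\Omega}}\, C \quad \text{for all } p \in \N.
\end{equation*}

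Finally, I would invoke the chain of inequalities derived inside the proof of Proposition \ref{prop:equi-coercive-Lp}: using the pointwise-in-$t$ lower bounds on $\|\Psi^f[\gamma_p](\cdot,t)\|_{L^2(\Omega)}$ and $\|\Psi^g[\gamma_p](\cdot,t)\|_{L^2(\Omega)}$ from Lemma \ref{lemma:Psi-is-well-behaved}, Fubini yields $(m_f^2 + m_g^2)^{1/2} \|\dot\gamma_p\|_{L^2([0,1])} \leq \Lambda_1[\gamma_p]$. The Poincar\'e inequality (with constant $C_{[0,1]}$) then gives
\begin{equation*}
    \|\gamma_p\|_{H^1([0,1])} \leq \frac{C_{[0,1]}}{(m_f^2 + m_g^2)^{1/2}} \Lambda_1[\gamma_p] \leq \frac{C_{[0,1]}\,\sqrt{2\,\Leb{\Omega}}}{(m_f^2 + m_g^2)^{1/2}} \, C,
\end{equation*}
which is exactly the required uniform-in-$p$ bound. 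There is no real obstacle here; the content is already packaged inside the equi-coercivity argument, and the only point to verify carefully is that the H\"older constant relating $L^2$ and $L^{2p}$ norms on $\Omega_{[0,1]}$ does not blow up as $p \to \infty$, which is precisely the uniform bound by $\sqrt{2\,\Leb{\Omega}}$ recorded in Lemma \ref{lemma:monotonicty-Lp}.
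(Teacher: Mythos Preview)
Your proposal is correct and follows essentially the same route as the paper: observe that $\Lambda_p[\gamma_p]<\infty$ forces $\gamma_p\in\mathcal{T}_2^b$, use the uniform comparison $\Lambda_1 \leq \sqrt{2\,\Leb{\Omega}}\,\Lambda_p$ from Lemma~\ref{lemma:monotonicty-Lp}, then combine the pointwise lower bounds of Lemma~\ref{lemma:Psi-is-well-behaved} with the Poincar\'e inequality, exactly as packaged in equations \eqref{eq:Phi-is-lower-bound} and \eqref{eq:H1-norm-coercivity} of the paper. The final bound you obtain matches the paper's verbatim.
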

\begin{proof}
    First, notice that $\gamma_p \in \mathcal{T}_{2}^b$ for all $p \in \N$ since $\Lambda_p[\gamma_p] < \infty$.
    Then, apply equations \eqref{eq:Phi-is-lower-bound} and \eqref{eq:H1-norm-coercivity} with $C_{[0,1]}$ the Poincar\'e constant for the interval $[0,1]$:
    \begin{equation*}
        \norm{\gamma_p}_{H^1([0,1])} \leq C_{[0,1]} \, \norm{\dot{\gamma}_p}_{L^2([0,1])} \leq \frac{C_{[0,1]} \, \sqrt{2 \, \Leb{\Omega}}}{ \left( m_f^2 + m_g^2 \right)^{\frac{1}{2}} } \Lambda_p[\gamma_p] \leq \frac{C_{[0,1]} \, \sqrt{2 \, \Leb{\Omega}}}{ \left( m_f^2 + m_g^2 \right)^{\frac{1}{2}} } C < \infty
    \end{equation*}
    for all $p \in \N$.
\end{proof}

\begin{proposition}
    \label{prop:gamma-limit-Lp}
    There are constants $b_p \in \R$ such that $\frac{1}{b_p} \Lambda_p \to \Lambda_\infty$ in a $\Gamma$-sense, where the $\Gamma$-convergence is taken in the space $\mathcal{T}_2$ equipped with the weak $H^1$ topology.
\end{proposition}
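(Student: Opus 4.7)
My plan is to verify the liminf and limsup conditions of $\Gamma$-convergence with respect to the weak $H^1$ topology with the choice $b_p := (2\,\text{Leb}(\Omega))^{1/(2p)}$, which coincides with the constant $c_{p,\infty}$ from Lemma \ref{lemma:monotonicty-Lp} (recall that $\text{Leb}(\Omega_{[0,1]})=\text{Leb}(\Omega)$ because $[0,1]$ has unit Lebesgue measure). With this choice the uniform bound
\[
\Lambda_p[\tau] \, \leq \, b_p \, \Lambda_\infty[\tau] \qquad \forall \, \tau \in \mathcal{T}_2^b
\]
is built into the definition, and $b_p \searrow 1$, so the rescaling will only play a cosmetic role in the limit.

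For the \emph{limsup} inequality I would take the constant recovery sequence $\tau_p \equiv \tau$, which trivially converges in weak $H^1$. If $\tau \notin \mathcal{T}_2^b$ both sides are $+\infty$ and there is nothing to check; otherwise the displayed monotonicity bound gives $\Lambda_p[\tau]/b_p \leq \Lambda_\infty[\tau]$ for every $p$. Taking $\limsup$ yields the required inequality.

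The \emph{liminf} inequality is the substantive part. Given $\tau_p \rightharpoonup \tau$ in weak $H^1$, I would first dispose of the trivial case $\liminf_p \Lambda_p[\tau_p]/b_p = +\infty$, and otherwise pass to a subsequence attaining the $\liminf$. Lemma \ref{lemma:boundedness-in-H1} then forces the $\tau_p$ to lie in $\mathcal{T}_2^b$ and to be $H^1$-bounded; in particular $\tau \in \mathcal{T}_2^b$, since this set is convex and strongly closed, hence weakly closed. Fixing $q \in \N$ and taking $p \geq q$, Lemma \ref{lemma:monotonicty-Lp} gives
\[
\frac{\Lambda_p[\tau_p]}{b_p} \, \geq \, \frac{1}{c_{q,p} \, b_p}\, \Lambda_q[\tau_p].
\]
Letting $p \to \infty$ with $q$ fixed, Lemma \ref{lemma:lower-semi-cont-of-functionals} (weak lower semi-continuity of $\Lambda_q$) together with the limit $c_{q,p} \, b_p \to (2\,\text{Leb}(\Omega))^{1/(2q)}$ yields
\[
\liminf_{p \to \infty} \frac{\Lambda_p[\tau_p]}{b_p} \, \geq \, (2\,\text{Leb}(\Omega))^{-1/(2q)}\, \Lambda_q[\tau] .
\]
Finally I would let $q \to \infty$: the prefactor tends to $1$, and a pointwise $L^{2q}\!\to\!L^\infty$ argument in the spirit of Lemma \ref{lemma:Linfty-to-Lp-appendix} applied to the measurable integrands $\dot{\tau}\,\Psi^f[\tau]$ and $\dot{\tau}\,\Psi^g[\tau]$ shows $\Lambda_q[\tau] \to \Lambda_\infty[\tau]$.

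The main technical obstacle I anticipate is precisely this last step: $\dot\tau$ need not be in $L^\infty$ when $\tau$ is only in $H^1$, so $\Lambda_\infty[\tau]$ may be $+\infty$ and Lemma \ref{lemma:Linfty-to-Lp-appendix} does not immediately apply as stated. However, the upper-bound half of that lemma's proof uses only the finiteness of $\text{Leb}(\Omega_{[0,1]})$ and extends verbatim, while the lower-bound half, applied with $\|F\|_{L^\infty}$ replaced by an arbitrary finite threshold $M \nearrow \|F\|_{L^\infty}$ on $\{F \geq M\}$, forces $\Lambda_q[\tau] \to +\infty$ whenever $\Lambda_\infty[\tau] = +\infty$. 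With this essentially standard extension in hand, the $\Gamma$-convergence $\tfrac{1}{b_p}\Lambda_p \to \Lambda_\infty$ follows from the lemmas already assembled in this appendix.
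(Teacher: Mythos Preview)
Your proposal is correct and follows essentially the same argument as the paper: the limsup via the constant recovery sequence together with pointwise $L^{2p}\to L^\infty$ convergence, and the liminf by freezing an auxiliary level $q$, combining the monotonicity of Lemma \ref{lemma:monotonicty-Lp} with the weak lower semicontinuity of $\Lambda_q$ from Lemma \ref{lemma:lower-semi-cont-of-functionals}, and then sending $q\to\infty$. The only cosmetic differences are your choice $b_p=c_{p,\infty}=(2\,\mathrm{Leb}(\Omega))^{1/(2p)}$ in place of the paper's telescoping product $b_p=\prod_{k\ge p}a_k$, and your (welcome) extra care in handling the case $\dot\tau\notin L^\infty$ in the final pointwise-limit step, which the paper glosses over.
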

\begin{proof}
    Start by recalling that by conditions (e)-(f) of \cite[Proposition 8.1]{dalmaso_gammaconvergence} it suffices to show that for any sequence $\gamma_p \to \gamma_\infty$ in $\mathcal{T}_{2}$ we have:
    \begin{equation}
        \label{eq:seq-gamma-limit-1}
        \liminf_{p \to \infty} \Lambda_p[\gamma_p] \geq \Lambda_\infty[\gamma_\infty]
    \end{equation}
    as well as that for any $\gamma_\infty \in \mathcal{T}_{2}$ there exists a sequence $\gamma_p \to \gamma_\infty$ such that:
    \begin{equation}
        \label{eq:seq-gamma-limit-2}
        \limsup_{p \to \infty} \Lambda_p[\gamma_p] \leq \Lambda_\infty[\gamma_\infty]
    \end{equation}
    We start by proving \eqref{eq:seq-gamma-limit-1}. Fix a sequence $\gamma_p \in \mathcal{T}_{2}$ converging to some $\gamma_\infty \in \mathcal{T}_{2}$, i.e.\ $\gamma_p \rightharpoonup \gamma_\infty$ in $H^1$. 
    Without loss of generality we can assume that $\liminf_p \Lambda_p[\gamma_p] < \infty$ and moreover that the sequence $\gamma_p$ attains the infimum in the sense that:
    \begin{equation}
        \label{eq:wlog-assumption-of-finite-sequence}
        \lim_{p \to \infty} \Lambda_p[\gamma_p] = \liminf_p \Lambda_p[\gamma_p] < \infty
    \end{equation}
    Moreover, we can $\{ \gamma_p \}_p \subset \mathcal{T}_{2}^b$ by removing the finitely many $\gamma \in \mathcal{T}_{2}^b \setminus \mathcal{T}_{2}$ that evaluate to $+\infty$ under $\Lambda_p$. 
    
    Our first task is to show that $\gamma_\infty \in \mathcal{T}_{2}^b$. 
    To do this, we appeal to Lemma \ref{lemma:boundedness-in-H1} yielding that $\{\gamma_p\}_p$ is a bounded set in $H^1([0,1])$.
    By \citet[Example 8.9]{dalmaso_gammaconvergence} and the Rellich–Kondrachov theorem \citep[Theorem 9.16]{brezis2011functional} in $\R$ it follows that $\gamma_n$ converges to $\gamma_\infty$ strongly in $C^0\left([0,1]\right)$.
    It then follows that $\gamma_\infty \in \mathcal{T}_{2}^b$ since $\gamma_p \in \mathcal{T}_{2}^b$ for all $p$.

    Now we can apply Lemma \ref{lemma:monotonicty-Lp} to obtain constants $ a_p \coloneq 2^{\frac{1}{p(p+1)}} $ such that for any $\gamma \in \mathcal{T}_{2}^b$ we have:
    \begin{equation}
        \label{eq:aux-Lp-bound-in-rescaling}
        \frac{1}{a_p} \Lambda_p[\gamma] \leq\Lambda_{p+1}[\gamma]
    \end{equation}
    moreover, notice that $b_p = \prod_{k=p}^{\infty} a_k$ is bounded by $2^{\sum_{i=1}^\infty \frac{1}{k^2} }$ uniformly in $p$. This, together with \eqref{eq:aux-Lp-bound-in-rescaling} yields:
    \begin{equation}
        \label{eq:aux-Lp-bound-in-rescaling-final}
        \frac{1}{b_p} \Lambda_p[\gamma] \leq \frac{1}{b_q} \Lambda_{q}[\gamma] \leq \Lambda_\infty[\gamma]
    \end{equation}
    for any $\gamma \in \mathcal{T}_{2}^b$ and all $p \leq q$.
    Using equation \eqref{eq:aux-Lp-bound-in-rescaling-final} together with Lemma \ref{lemma:lower-semi-cont-of-functionals} we have, for each $p$:
    \begin{equation}
        \frac{1}{b_p} \Lambda_p[\gamma_\infty] \leq \liminf_{n\to\infty} \frac{1}{b_p} \Lambda_p[\gamma_n] \leq \liminf_{n\to\infty} \frac{1}{b_n} \Lambda_n[\gamma_n]
    \end{equation}
    Finally, using the point-wise convergence established\footnote{This is where the fact that $\gamma_\infty \in \mathcal{T}_{2}^b$ is needed.} in Lemma \ref{lemma:Linfty-to-Lp-appendix} together with $b_p \to 1$ we obtain:
   
    \begin{equation*}
        \Lambda_\infty[\gamma_\infty] \leq \liminf_{n\to\infty} \frac{1}{b_n} \Lambda_n[\gamma_n]
    \end{equation*}
    The second condition \eqref{eq:seq-gamma-limit-2} is trivially satisfied by taking $\gamma_p = \gamma_\infty$ for all $p$ and using the point-wise convergence of Lemma \ref{lemma:Linfty-to-Lp-appendix} together with $\frac{1}{b_p} \to 1$.
\end{proof}
Now we can proceed with the proof of Theorem \ref{thm:Linfty-to-Lp-main-text}.
We recall the statement first:
\begin{theorem*}
    \label{thm:Linfty-to-Lp-appendix}
    Define $\mathcal{T}_2 = \left\{ \tau \in W^{1,2}([0,1]) : \tau(0) = 0, \tau(1) = 1 \right\}$ where $W^{1,2}([0,1])$ is the Sobolev space of functions with square integrable first derivative on the unit interval.
    For each positive integer $p$, consider the optimization problem:
    \begin{equation}
        \label{eq:opt-problem-Lp-approx}
        \inf_{\tau \in \mathcal{T}_2} \, \int_{0}^{1} \, \lambda_p(\tau(t), \dot{\tau}(t)) \, dt
    \end{equation}
    with:
    \begin{align*}
        \lambda_p(\tau(t), \dot{\tau}(t)) \coloneq \dot{\tau}(t)^{2p} \, \int_{\Omega} \frac{f(s)^{2p}}{(1 + \tau(t) f(s))^{2p}} \, ds +\dot{\tau}(t)^{2p} \, \int_{\Omega} \frac{g(s)^{2p}}{(1 + \tau(t) g(s))^{2p}} \, ds
    \end{align*}
    If $\tau_p$ is optimal for \eqref{eq:opt-problem-Lp-approx} then any subsequential $L^2$ limit $\tau_{p_j} \overset{L^2}{\to} \tau_\infty$ as $j\to\infty$ is optimal for \eqref{eq:opt-problem}.
\end{theorem*}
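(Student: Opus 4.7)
The plan is to deduce the theorem from the fundamental theorem of $\Gamma$-convergence, combining the $\Gamma$-convergence result of Proposition~\ref{prop:gamma-limit-Lp} with the equi-coercivity result of Proposition~\ref{prop:equi-coercive-Lp}, both stated in the weak-$H^1$ topology on $\mathcal{T}_2$. Since $b_p$ is bounded away from $0$ and $\infty$, a minimizer of $\Lambda_p$ is also a minimizer of $b_p^{-1}\Lambda_p$, so the two hypotheses apply directly to the sequence $(\tau_p)$.

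First, I would establish that $(\tau_p)$ is uniformly bounded in $H^1([0,1])$. Testing against a fixed reference schedule $\sigma \in \mathcal{T}_\infty$, for instance $\sigma(t)=t$, and using the monotonicity estimate \eqref{eq:aux-Lp-bound-in-rescaling-final}, we have $\Lambda_p[\tau_p] \leq \Lambda_p[\sigma] \leq b_p\, \Lambda_\infty[\sigma] \leq C$ for some $C<\infty$ independent of $p$. Lemma~\ref{lemma:boundedness-in-H1} then yields $\sup_p \|\tau_p\|_{H^1} < \infty$. By reflexivity of $H^1$ and Banach--Alaoglu, every subsequence has a further weakly convergent subsequence; by Rellich--Kondrachov in one dimension, weak-$H^1$ convergence on bounded sets coincides with strong $L^2$ convergence (and in fact with uniform convergence). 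Consequently, any $L^2$ subsequential limit $\tau_{p_j} \to \tau_\infty$ in the hypothesis of the theorem is automatically a weak-$H^1$ limit, and moreover $\tau_\infty$ inherits the endpoint conditions and monotonicity from the uniform limit of $(\tau_{p_j})$, placing it in $\mathcal{T}_2^b$.

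Second, I would apply the fundamental theorem of $\Gamma$-convergence (e.g.\ \citet[Corollary 7.20]{dalmaso_gammaconvergence}): given equi-coercivity plus $\Gamma$-convergence, any cluster point of minimizers is a minimizer of the $\Gamma$-limit. Hence $\tau_\infty$ minimizes $\Lambda_\infty$ over $\mathcal{T}_2$. Since $\Lambda_\infty \equiv +\infty$ on $\mathcal{T}_2 \setminus \mathcal{T}_2^b$, the minimum is attained in $\mathcal{T}_2^b$, and on this set $\Lambda_\infty$ agrees with the functional $\Lambda$ of problem \eqref{eq:opt-problem}.

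The main obstacle, and the step requiring the most care, is to identify $\inf_{\mathcal{T}_2^b} \Lambda_\infty$ with $\inf_{\mathcal{T}_\infty} \Lambda$, since the original problem \eqref{eq:opt-problem} is posed on the strictly smaller space $\mathcal{T}_\infty$. The inequality $\inf_{\mathcal{T}_2^b} \Lambda_\infty \leq \inf_{\mathcal{T}_\infty} \Lambda$ is trivial from $\mathcal{T}_\infty \subset \mathcal{T}_2^b$. For the reverse, I would argue by a smoothing-and-rescaling approximation: given $\tau \in \mathcal{T}_2^b$ with $\Lambda_\infty[\tau] < \infty$, form $\tau_\delta = (1-\delta)(\rho_\delta * \tilde{\tau}) + \delta\, \mathrm{id}$ where $\tilde{\tau}$ is a suitable extension of $\tau$ and $\rho_\delta$ a mollifier, then rescale affinely to restore $\tau_\delta(0)=0$ and $\tau_\delta(1)=1$; the convex combination with $\mathrm{id}$ ensures $\dot\tau_\delta \geq 0$ and $C^1$ regularity, placing $\tau_\delta \in \mathcal{T}_\infty$. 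Using the uniform bounds on $\Psi^f[\tau], \Psi^g[\tau]$ from Lemma~\ref{lemma:Psi-is-well-behaved} together with dominated convergence, one obtains $\Lambda_\infty[\tau_\delta] \to \Lambda_\infty[\tau]$ as $\delta\to 0$, yielding the desired equality of infima. Combined with the previous step, this shows $\Lambda[\tau_\infty] = \inf_{\mathcal{T}_\infty} \Lambda$, and the continuity and monotonicity of $\tau_\infty$ inherited from the uniform limit are enough to treat it as a minimizer of \eqref{eq:opt-problem} (with full $C^1$ regularity following \emph{a posteriori} from Theorem~\ref{thm:solution-of-L-inf-ODE-main-text}).
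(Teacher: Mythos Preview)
Your core argument is essentially the paper's own: invoke Proposition~\ref{prop:gamma-limit-Lp} and Proposition~\ref{prop:equi-coercive-Lp}, apply the fundamental theorem of $\Gamma$-convergence, and use Rellich--Kondrachov to pass between weak-$H^1$ and $L^2$ convergence on norm-bounded sets. The way you obtain the uniform $H^1$ bound (comparison against the identity schedule together with \eqref{eq:aux-Lp-bound-in-rescaling-final}) is slightly different from the paper, which instead uses the convergence-of-minima result \citep[Theorem~7.8]{dalmaso_gammaconvergence}; both are valid.

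Where your proposal departs from the paper is the mollification argument meant to close the gap $\inf_{\mathcal{T}_2^b}\Lambda_\infty=\inf_{\mathcal{T}_\infty}\Lambda$. That step, as written, does not work: the convex combination $(1-\delta)(\rho_\delta*\tilde\tau)+\delta\,\mathrm{id}$ has derivative $(1-\delta)(\rho_\delta*\tilde\tau)'+\delta$, and since a generic $\tau\in\mathcal{T}_2^b$ with $\Lambda_\infty[\tau]<\infty$ may have $\dot\tau$ arbitrarily negative on sets of positive measure (nothing in the definition of $\mathcal{T}_2^b$ forces monotonicity), this derivative need not be nonnegative for small $\delta$. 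Similarly, your claim that $\tau_\infty$ inherits monotonicity from the uniform limit is unjustified at this stage, since the $\tau_p$ themselves are not yet known to be monotone.

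This is not fatal to the overall program, because the paper does not attempt such a density argument either: its proof stops at showing that $\tau_\infty$ minimizes $\Lambda_\infty$ over $\mathcal{T}_2$, and the identification with a minimizer of \eqref{eq:opt-problem} on $\mathcal{T}_\infty$ is deferred to the explicit ODE solution in Theorems~\ref{thm:solution-of-optimal-problem-is-solution-of-L-inf-ODE-main-text}--\ref{thm:solution-of-L-inf-ODE-main-text}, exactly as you acknowledge in your final sentence. So you may simply drop the density paragraph and rely on the a posteriori regularity, matching the paper's route.
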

\begin{proof}
    Start by noticing that:
    \begin{align*}
        \arg \min_{\tau \in \mathcal{T}_{\infty}} \int_{0}^{1} \lambda_p(\tau(t), \dot{\tau}(t)) \, dt &= \arg \min_{\tau \in \mathcal{T}_{\infty}} \left( \int_{0}^{1} \lambda_p(\tau(t), \dot{\tau}(t)) \, dt \right)^{\frac{1}{2p}} \\ 
        &= \arg \min_{\tau \in \mathcal{T}_{\infty}} \Lambda_{p}[\tau] 
    \end{align*}
    and the equality holds in a sense of sets. This follows from the definition of $\Lambda_p$ as well as that $\lambda_p \geq 0$ and $x \mapsto x^{1 / 2p}$ is increasing in $\R^+$.
    Notice, moreover, that $\Lambda_{\infty}$ restricted to $\mathcal{T}_\infty \subset \mathcal{T}_2$ coincides with the objective $\Lambda$ of problem \eqref{eq:opt-problem-Lp-approx}.

    Now we apply the \textit{fundamental theorem of $\Gamma$-convergence}, Theorem 2.10 in \citet{braides2006handbook} appearing also as Corollary 7.20 in \citet{dalmaso_gammaconvergence}:
    by Proposition \ref{prop:gamma-limit-Lp} there are constants $b_p \to 1$ such that $(1 / b_p)\, \Lambda_p \to \Lambda_\infty$ in a $\Gamma$-sense and by Proposition \ref{prop:equi-coercive-Lp} it follows that the $(1/b_p)\Lambda_p$ is equi-coercive in the weak $H^1$ topology,
    thus it follows that if minimizers $ \tau_p \in \arg\min (1 / b_p) \, \Lambda_p$ form a pre-compact set in $\mathcal{T}_{2}^{b}$ then any weak $H^1$ subsequential limit of $\tau_p$ will converge to a minimizer of $\inf \Lambda_{\infty}$.
    Note that since $b_p > 0$ the $\tau_p$ will also be minimizers of $\Lambda_p$. 

    To establish pre-compactness in weak $H^1$ we show $H^1$ boundedness (Theorem 7 in \citet{james1964weak}).
    Due to the $\Gamma$-convergence and equicoercivity of the $(1/b_p) \, \Lambda_p$ we can apply Theorem 7.8 in \citet{dalmaso_gammaconvergence} to obtain:
    \begin{equation*}
        \lim_{p \to \infty} \inf_{\tau \in \mathcal{T}_{2}^{b}} \frac{1}{b_p} \Lambda_p[\tau] = \inf_{\tau \in \mathcal{T}_{2}^{b}} \Lambda_\infty[\tau] < \infty
    \end{equation*}
    so there is a constant $C \geq 0$ such that the sequence of minimizers $\tau_p$ satisfies $\Lambda_p[\tau_p] \leq C < \infty$ for all $p$. 
    Applying Lemma \ref{lemma:boundedness-in-H1} we have that the set $\{ \tau_p \}_p$ is bounded in $H^1([0,1])$ and so by \citet[Theorem 7]{james1964weak} it is pre-compact in the weak $H^1$ topology.
    
    Thus, we have established that subsequential weak-$H^1$ limits of the $\tau_p$ will be minimizers of $\Lambda_\infty$. As a final step, note that by Corollary 8.8 in \citet{dalmaso_gammaconvergence} (see also Example 8.9) and the Rellich--Kondrachov theorem \citep[Theorem 9.16]{brezis2011functional}, inside any $H^1$ norm bounded set weak $H^1$ convergence is equivalent to strong convergence in $L^2$.
    We just showed that the set $\{\tau_p\}_p$ is norm bounded, so we can conclude.
\end{proof}

\subsection{Proof of Theorem \ref{thm:characterizarion-of-Lp-solutions-main-text}}
\label{subsec:solving-the-Lp-ODE-variational-caculus}
In this section, we solve the Lagrangian problem \eqref{eq:opt-problem-Lp-approx-in-theorem} by employing the direct method of the Calculus of Variations.
First, some definitions:
\begin{definition}
    The function
    \begin{align*}
        & \lambda_p: \R \times \R \to \R \\
        & \lambda_p:  (x,v) \mapsto\int_{\Omega} \left( \frac{v f(s)}{1 + x f(s)} \right)^{2p} \, ds + \int_{\Omega} \left( \frac{v g(s)}{1 + x g(s)} \right)^{2p} \, ds
    \end{align*}
    will be called the \textit{Lagrangian} of problem \eqref{eq:opt-problem-Lp-approx-in-theorem}.
\end{definition}
Now recall the spaces introduced in equations \eqref{eq:intro-T-2p}--\eqref{eq:intro-T-infty} of the main text. For $p \in \N$ we have:
\begin{align*}
    \mathcal{T}_{2p} &\coloneq \left\{ \tau \in W^{1,2p}([0,1]) \, : \, \tau(0) = 0 \, , \, \tau(1) = 1 \right\} \\
    \mathcal{T}_{2p}^{b} &\coloneq  \left\{ \tau \in \mathcal{T}_{2p} \, : \, 0 \leq \tau \leq 1 \right\} \\
    \mathcal{T}_{\infty} &\coloneq \left\{ \tau \in C^1([0,1]) \, : \, \tau(0) = 0 \, , \, \tau(1) = 1 \, , \, \dot{\tau} \geq 0 \right\}
\end{align*}
where $W^{1,2p}([0,1])$ is the Sobolev space of functions with square integrable first derivative on the unit interval.
\begin{remark}
    Notice that
    \begin{equation*}
       \mathcal{T}_\infty \subset \mathcal{T}^b_{2p} \subset \mathcal{T}_{2p} 
    \end{equation*}
    for any $p \in \N$.
\end{remark}
First, we establish that to solve problem \eqref{eq:opt-problem-Lp-approx-in-theorem} it is sufficient to optimize over the space $\mathcal{T}_{2p}^{b}$.
For brevity, we denote problem \eqref{eq:opt-problem-Lp-approx-in-theorem} by:
\begin{equation}
    \label{eq:main-lagrangian-problem-abbreviated-in-appendix}
    \inf_{\tau \in \mathcal{T}_2} \int \lambda_p (\tau, \dot{\tau} ) \coloneq \inf_{\tau \in \mathcal{T}_2} \int_{0}^{1} \lambda_p(\tau(t), \dot{\tau}(t)) \, dt
\end{equation}
We now have:
\begin{lemma}
    \label{lemma:problem-can-be-restricted}
    For each $p \in \N$ we have:
    \begin{equation*}
        \inf_{\tau \in \mathcal{T}_{2p}^b} \int \lambda_p (\tau, \dot{\tau} ) = \inf_{\tau \in \mathcal{T}_{2}} \int \lambda_p (\tau, \dot{\tau} )
    \end{equation*}
\end{lemma}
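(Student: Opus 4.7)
The inclusion $\mathcal{T}_{2p}^b \subset \mathcal{T}_{2p} \subset \mathcal{T}_{2}$ (which follows from $W^{1,2p}([0,1]) \hookrightarrow W^{1,2}([0,1])$ on the bounded interval $[0,1]$) immediately gives the trivial direction $\inf_{\tau \in \mathcal{T}_{2}} \int \lambda_p \leq \inf_{\tau \in \mathcal{T}_{2p}^b} \int \lambda_p$. The substance of the lemma is the reverse inequality, which I will prove by a truncation argument: given any $\tau \in \mathcal{T}_{2}$, define
\[
\tilde{\tau}(t) \coloneqq \max\bigl(0, \min(1, \tau(t))\bigr) = \Phi(\tau(t)),
\]
where $\Phi:\R \to [0,1]$ is the $1$-Lipschitz projection. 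I will show that $\tilde{\tau} \in \mathcal{T}_{2p}^b$ and $\int \lambda_p(\tilde{\tau},\dot{\tilde{\tau}})\, dt \leq \int \lambda_p(\tau,\dot{\tau})\, dt$, which finishes the argument.

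First, the standard chain rule for compositions of Sobolev functions with Lipschitz maps (Stampacchia's truncation lemma, see e.g.\ \citet[Prop.~9.5]{brezis2011functional}) gives $\tilde{\tau} \in W^{1,2}([0,1])$ with
\[
\dot{\tilde{\tau}}(t) = \dot{\tau}(t)\, \mathbf{1}_{\{0 < \tau(t) < 1\}} \quad \textup{for a.e. } t \in [0,1],
\]
and the boundary conditions $\tau(0)=0,\ \tau(1)=1$ are preserved by $\Phi$. Splitting $[0,1]$ into $E \coloneqq \{t: 0 \leq \tau(t) \leq 1\}$ and its complement $E^c$, on $E$ the pair $(\tilde{\tau},\dot{\tilde{\tau}})$ agrees a.e.\ with $(\tau,\dot{\tau})$; on $E^c$ one has $\dot{\tilde{\tau}} = 0$ a.e., so $\lambda_p(\tilde{\tau}(t),\dot{\tilde{\tau}}(t)) = 0$ (since $\lambda_p(x,0)=0$), while $\lambda_p(\tau(t),\dot{\tau}(t)) \geq 0$. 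Integrating yields $\int \lambda_p(\tilde{\tau},\dot{\tilde{\tau}}) \leq \int \lambda_p(\tau,\dot{\tau})$.

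The remaining—and most delicate—step is to verify that $\tilde{\tau}$ genuinely lies in $\mathcal{T}_{2p}^b$ and not merely in $\mathcal{T}_{2}^b$. If $\int \lambda_p(\tau,\dot{\tau}) = +\infty$ the inequality is trivial, so assume the objective is finite. Then $\int \lambda_p(\tilde{\tau},\dot{\tilde{\tau}})<\infty$, and because $\tilde{\tau}$ takes values in $[0,1]$ I can apply Lemma \ref{lemma:Psi-is-well-behaved} to obtain the pointwise-in-$t$ lower bounds
\[
\int_{\Omega} \Bigl(\tfrac{f(s)}{1+\tilde{\tau}(t) f(s)}\Bigr)^{2p} ds \geq m_f^{2p}, \qquad \int_{\Omega} \Bigl(\tfrac{g(s)}{1+\tilde{\tau}(t) g(s)}\Bigr)^{2p} ds \geq m_g^{2p},
\]
where $\max\{m_f,m_g\} > 0$. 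Factoring out $\dot{\tilde{\tau}}(t)^{2p}$ and integrating in $t$ gives $(m_f^{2p}+m_g^{2p})\,\|\dot{\tilde{\tau}}\|_{L^{2p}}^{2p} \leq \int \lambda_p(\tilde{\tau},\dot{\tilde{\tau}}) <\infty$, so $\dot{\tilde{\tau}} \in L^{2p}([0,1])$ and thus $\tilde{\tau} \in \mathcal{T}_{2p}^b$. Taking the infimum over $\tau \in \mathcal{T}_2$ yields $\inf_{\mathcal{T}_{2p}^b} \int \lambda_p \leq \inf_{\mathcal{T}_2} \int \lambda_p$, completing the proof. The only real obstacle is the upgrade from $W^{1,2}$ to $W^{1,2p}$ regularity of the truncation, which is precisely where Lemma \ref{lemma:Psi-is-well-behaved} is used; everything else is routine manipulation of Sobolev truncations.
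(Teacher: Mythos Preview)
Your proof is correct and follows the same core idea as the paper: truncate $\tau$ to $[0,1]$ via $\tilde\tau=\min(1,\max(0,\tau))$ and observe that the Lagrangian drops to zero wherever the truncation is active, so the objective can only decrease. The one place where you diverge is in establishing $\tilde\tau\in W^{1,2p}$. The paper first asserts that any $\tau\in\mathcal{T}_2\setminus\mathcal{T}_{2p}$ yields infinite objective (thus reducing to $\tau\in\mathcal{T}_{2p}$) and then cites the Lipschitz chain rule \citep[Thm.~2.1.11]{ziemer2012weakly} to keep the truncation in $W^{1,2p}$. You instead truncate first, use only the $W^{1,2}$ chain rule, and then bootstrap the regularity from finiteness of the objective via the pointwise lower bound $m_f^{2p}+m_g^{2p}>0$ of Lemma~\ref{lemma:Psi-is-well-behaved}, which applies precisely because $\tilde\tau$ already takes values in $[0,1]$. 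This is arguably cleaner: it sidesteps the paper's unproven (though true) claim that $\tau\notin\mathcal{T}_{2p}$ forces infinite cost, which is not entirely obvious since Lemma~\ref{lemma:Psi-is-well-behaved} does not bound $K_p(\tau(t))$ from below when $\tau(t)\notin[0,1]$.
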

\begin{proof}
    Since for each $p \in \N$ we have $\mathcal{T}_{2p}^{b} \subset \mathcal{T}_2$ it suffices to prove that:
    \begin{equation*}
        \inf_{\tau \in \mathcal{T}_{2p}^b} \int \lambda_p (\tau, \dot{\tau} ) \leq \inf_{\tau \in \mathcal{T}_{2}} \int \lambda_p (\tau, \dot{\tau} )
    \end{equation*}
    Now if $p > 1$ recall that for any $\tau \in \mathcal{T}_2 \setminus \mathcal{T}_{2p}$ we have
    \begin{equation*}
        \int \lambda_p \left( \tau, \dot{\tau} \right) = \infty \, .
    \end{equation*}
    thus, one has
    \begin{equation*}
        \inf_{\tau \in \mathcal{T}_{2p}} \int \lambda_p (\tau, \dot{\tau} ) \leq \inf_{\tau \in \mathcal{T}_{2}} \int \lambda_p (\tau, \dot{\tau} ) \, .
    \end{equation*}
    It suffices to show that for any $\tau \in \mathcal{T}_{2p}$ there exists a $\tau_b \in \mathcal{T}_{2p}^{b}$ such that:
    \begin{equation*}
        \int \lambda_p (\tau_b, \dot{\tau}_b ) \leq \int \lambda_p (\tau, \dot{\tau} )
    \end{equation*}
    To that end, take any path $\tau \in \mathcal{T}_{\infty}$ and consider the modified path $\tau_b \in \mathcal{T}_{2p}^{b}$ defined by:
    \begin{equation}
        \tau_b(t) \coloneq \min\left\{ \max \left\{0, \tau(t) \right\} , 1 \right\}
    \end{equation} 
    Due to Theorem 2.1.11 in \citet{ziemer2012weakly} we have that $\tau_b \in W^{1,q}([0,1])$. Indeed, the maps $x \mapsto \max\{0,x\}$ and $x \mapsto \{x, 1\}$ are Lipschitz and $\tau_b$ is in $L^p([0,1])$, for any positive integer $p$, since the Sobolev embedding guarantees $\tau \in C^0\left([0,1]\right)$.
    We now show that $\tau_b$ attains a lower value for our objective. 
    Notice that for any $t$ such that $\tau(t) \neq \tau_b(t)$ we have $\tau(t) < 0$ or $\tau(t) > 1$. By continuity, there exists $\epsilon > 0$ such that for any $s \in (t-\epsilon, t +\epsilon)$ we have $\tau(s) < 0$ or $\tau(t) > 1$.
    This implies that for all $s \in (t-\epsilon, t +\epsilon)$ we have $\tau_b(s) = 0$ or $\tau_b(s) = 1$ and hence $\dot{\tau}_b(s) = 0$ for a.e. $s \in (t-\epsilon, t +\epsilon)$.
    This last part follows from the fact that functions in $W^{1,2p}([0,1])$, for $p\geq1$, satisfy the fundamental theorem of calculus and hence for any $0 < \delta \leq \epsilon$ we have:
    \begin{equation*}
        \int_{t-\delta}^{t+\delta} \dot{\tau}_b(s) \, ds = \tau_b(t+\delta) - \tau_b(t-\delta) = 0
    \end{equation*}
    Taking $\delta \searrow 0$ and using the Lebesgue differentiation theorem we obtain $\dot{\tau}_b(s) = 0$ for a.e. $s \in (t-\epsilon, t +\epsilon)$.
    Finally, notice that $\lambda_p(\tau(t), \dot{\tau}(t)) = 0$ whenever $\dot{\tau}(t) = 0$ so for a set $\mathcal{N}$ of measure zero we have $\left\{t: \tau_b(t) \neq \tau(t) \right\} \subset \left\{ t : \lambda_p(\tau(t), \dot{\tau}(t)) = 0 \right\} \cup \mathcal{N}$ and so we can estimate:
    \begin{align*}
        \int_{0}^{1} \lambda_p(\tau_b(t), \dot{\tau}_b(t)) \, dt &= \int_{ \left\{t: \tau_b(t) \neq \tau(t) \right\} } \lambda_p(\tau_b(t), \dot{\tau}_b(t)) \, dt + \int_{ \left\{t: \tau_b(t) = \tau(t) \right\} } \lambda_p(\tau_b(t), \dot{\tau}_b(t)) \, dt \\
                                                                 &= \int_{ \left\{t: \tau_b(t) = \tau(t) \right\} } \lambda_p(\tau(t), \dot{\tau}(t)) \, dt \\
                                                                 &\leq \int_{0}^{1} \lambda_p(\tau(t), \dot{\tau}(t)) \, dt
    \end{align*}
    and the last step follows since $\lambda_p \geq 0$.
    This completes the proof.
\end{proof}
The above proof establishes that to solve problem \eqref{eq:opt-problem-Lp-approx} we can instead solve:
\begin{equation}
    \label{eq:opt-problem-in-Tau-2p-b}
    \inf_{\tau \in \mathcal{T}_{2p}^{b}} \int \lambda_p(\tau, \dot{\tau}) \coloneq \inf_{\tau \in \mathcal{T}_{2p}^{b}} \int_{0}^{1} \lambda_p(\tau(t), \dot{\tau}(t)) \, dt
\end{equation}
Our goal is prove that problem \eqref{eq:opt-problem-in-Tau-2p-b} has a minimizer in $C^2([0,1])$ that satisfies the strong Euler--Lagrange equations.
To achieve that goal, we will need to inroduce an auxiliary Lagrangian:
\begin{definition}
    \label{def:auxiliary-Lagrangian}
    For $p \in \N$ and some $\epsilon > 0$ we define the auxiliary Lagrangian:
    \begin{align*}
        & \lambda_p^\epsilon: [0,1] \times \R \to \R \\
        & \lambda_p^\epsilon:  (x,v) \mapsto\int_{\Omega} \left( \frac{v f(s)}{1 + x f(s)} \right)^{2p} \, ds + \int_{\Omega} \left( \frac{v g(s)}{1 + x g(s)} \right)^{2p} \, ds + \epsilon \, v^2
    \end{align*}
\end{definition}
\begin{remark}
    The reason we cannot work directly with the original Lagrangian is the following: our goal, here, is to identitfy a minimizer of problem \eqref{eq:opt-problem-Lp-approx}.
    One standard approach would be to write the stong Euler--Lagrange equations for the Lagrangian $\lambda_p$, solve it, and claim that this solution is optimal for \eqref{eq:opt-problem-Lp-approx}.
    This, however, crucially relies on the convexity of the map $(x,v) \mapsto \lambda_p(x,v)$ which does not hold here.\footnote{
    In fact, this non-convexity is crucial to the structure of the problem: physically, it is exactly this interaction between the position and velocity variables that allow us
    to reduce the \say{cost} at position $x$ by traveling at low speeds $v$.}

    A second approach would be to take a minimizer $\tau_p$, which can be shown to exist, and characterize it by showing it satisfies the strong Euler--Lagrange equations.
    Although insightful, this approach cannot be rigorously justified since it assumes the minimizer it at least of class $C^2([0,1])$.
    To show this, one would employ standard regularity theory, requiring:
    \begin{equation*}
        \frac{\partial^2 \lambda_p}{\partial v^2} (x,v) > 0 \textup{ for all } (x,v) \in [0,1] \times \R
    \end{equation*}
    Unfortunately, this condition is violated here but only mildly. Indeed, we have:
    \begin{align*}
        \frac{\partial^2 \lambda_p}{\partial v^2} \geq 0 \text{ and } \frac{\partial^2 \lambda_p}{\partial v^2} = 0 \textup{ if and only if } v = 0
    \end{align*} 
    Thus, it will be possible to show that the minimizer $\tau_p$ of \eqref{eq:opt-problem-Lp-approx} satisfies the strong Euler--Lagrange (sEL) equations by constructing a perturbed $\epsilon$-Lagrangian with smooth minimizers $\tau^\epsilon_p$, satisfying the sEL equations,
    and recovering $\tau_p$ as an $L^2$ subsequential limit of $\tau^\epsilon_p$ as $\epsilon \to 0$. This will, again, involve the theory of $\Gamma$-convergence.
\end{remark}
We now study the family of problems:
\begin{equation}
    \label{eq:expanded-opt-problem-epsilon-p}
    \inf_{\tau \in \mathcal{T}_{2}} \int \lambda^\epsilon_p(\tau, \dot{\tau}) \coloneq \inf_{\tau \in \mathcal{T}_{2}} \int_{0}^{1} \lambda_p^\epsilon(\tau(t), \dot{\tau}(t)) \, dt
\end{equation}
First, we establish existence of a minimizer:
\begin{lemma}
    \label{lemma:existence-of-minimizer-in-B}
    The problem \eqref{eq:expanded-opt-problem-epsilon-p} has a minimizer in $\mathcal{T}_{2p}^{b}$.
\end{lemma}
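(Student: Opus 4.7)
The plan is to apply the direct method of the calculus of variations, with a truncation step that mirrors the proof of Lemma \ref{lemma:problem-can-be-restricted}. I would begin with a minimizing sequence $\tau_n \in \mathcal{T}_{2}$ satisfying $\int_0^1 \lambda_p^\epsilon(\tau_n, \dot\tau_n)\, dt \leq M < \infty$; such a sequence exists because, e.g., the linear schedule $t \mapsto t$ lies in $\mathcal{T}_2$ and has finite cost (by Lemma \ref{lemma:Psi-is-well-behaved}). Replacing each $\tau_n$ by its truncation $\tilde\tau_n \coloneq \min(\max(\tau_n, 0), 1)$ preserves the boundary values, places $\tilde\tau_n$ in $\mathcal{T}_{2}^b$, and, exactly as in the proof of Lemma \ref{lemma:problem-can-be-restricted}, yields $\dot{\tilde\tau}_n = 0$ almost everywhere on $\{\tau_n \notin [0,1]\}$. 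Since $\lambda_p^\epsilon(x, 0) = 0$, the functional does not increase under truncation, so $(\tilde\tau_n)_n \subset \mathcal{T}_2^b$ remains minimizing.

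Second, I would upgrade the natural $H^1$-bound supplied by the $\epsilon v^2$ term to a bound in $W^{1,2p}$ by invoking Lemma \ref{lemma:Psi-is-well-behaved}. Since $\tilde\tau_n \in \mathcal{T}_{2}^b$, that lemma provides pointwise-in-$t$ lower bounds $\int_\Omega f(s)^{2p}/(1+\tilde\tau_n(t)f(s))^{2p}\, ds \geq m_f^{2p}$ and the analogous estimate for $g$, from which
\begin{equation*}
    (m_f^{2p} + m_g^{2p})\,\|\dot{\tilde\tau}_n\|_{L^{2p}([0,1])}^{2p} \leq \int_0^1 \lambda_p^\epsilon(\tilde\tau_n, \dot{\tilde\tau}_n)\, dt \leq M.
\end{equation*}
Together with $\|\tilde\tau_n\|_{L^\infty} \leq 1$, this places $(\tilde\tau_n)_n$ in a bounded subset of the reflexive Banach space $W^{1,2p}([0,1])$. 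Extracting a weakly convergent subsequence $\tilde\tau_n \rightharpoonup \tau_\star$ and using the compact embedding $W^{1,2p} \hookrightarrow C^0([0,1])$ (Rellich--Kondrachov) yields uniform convergence on $[0,1]$; this preserves the boundary conditions $\tau_\star(0) = 0$, $\tau_\star(1) = 1$ and the range $0 \leq \tau_\star \leq 1$, so $\tau_\star \in \mathcal{T}_{2p}^b$.

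Finally, I would close the argument by sequential weak lower semicontinuity of $\tau \mapsto \int_0^1 \lambda_p^\epsilon(\tau, \dot\tau)\, dt$ on $W^{1,2p}$, which follows from the classical Tonelli--Serrin theorem: the integrand $(x,v) \mapsto \lambda_p^\epsilon(x,v)$ is continuous, nonnegative, and convex in $v$ for every fixed $x$ (each summand reduces to a positive coefficient times $v^{2p}$ or $v^2$). Combined with the uniform convergence $\tilde\tau_n \to \tau_\star$ and the weak convergence $\dot{\tilde\tau}_n \rightharpoonup \dot\tau_\star$ in $L^{2p}$, this gives
\begin{equation*}
    \int_0^1 \lambda_p^\epsilon(\tau_\star, \dot\tau_\star)\, dt \leq \liminf_{n\to\infty} \int_0^1 \lambda_p^\epsilon(\tilde\tau_n, \dot{\tilde\tau}_n)\, dt = \inf_{\tau \in \mathcal{T}_{2}} \int_0^1 \lambda_p^\epsilon(\tau, \dot\tau)\, dt,
\end{equation*}
so $\tau_\star$ is a minimizer in $\mathcal{T}_{2p}^b$. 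The main subtlety is the interplay between the truncation step and the lower-bound step: the $\epsilon v^2$ regularizer by itself yields only an $H^1$-bound, so truncation into $\mathcal{T}_2^b$ is essential for activating Lemma \ref{lemma:Psi-is-well-behaved} and obtaining the finer $W^{1,2p}$-bound that places the weak limit inside $\mathcal{T}_{2p}^b$ rather than merely $\mathcal{T}_2$.
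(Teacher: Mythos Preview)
Your proof is correct and follows essentially the same approach as the paper: both apply the direct method of the calculus of variations, relying on convexity of $v \mapsto \lambda_p^\epsilon(x,v)$ for lower semicontinuity and on the pointwise lower bounds from Lemma~\ref{lemma:Psi-is-well-behaved} for coercivity in $W^{1,2p}$. The only cosmetic difference is that the paper invokes a packaged existence theorem (Theorem~4.1 in Dacorogna), noting that the constraint set $\{0 \leq \tau \leq 1\}$ is closed in $W^{1,2p}$, whereas you carry out the compactness/lower-semicontinuity argument by hand and handle the constraint via an explicit truncation step borrowed from Lemma~\ref{lemma:problem-can-be-restricted}.
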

\begin{proof}
    We now apply Theorem 4.1 in \citet{dacorogna2007calculusofvariations} with a slight modification to account for the additional constraint $\{ \tau  \, : \, 0 \leq \tau \leq 1 \}$. Indeed, the theory developed in 
    \citet{dacorogna2007calculusofvariations} applies directly to the problem:
    \begin{equation*}
        \inf_{\tau \in \mathcal{T}_{2p}} \int_{0}^{1} \lambda_p^\epsilon \left( \tau(t), \dot{\tau}(t) \right) \, dt \, .
    \end{equation*}
    However, the proof of Theorem 4.1 in \citet{dacorogna2007calculusofvariations} carries through for $\mathcal{T}_{2p}^{b}$ identically, since the set $\{ \tau  \, : \, 0 \leq \tau \leq 1 \}$ is closed inside $W^{1,2p}([0,1])$, due to the Sobolev embedding theorem.
    To apply Theorem 4.1 in \citet{dacorogna2007calculusofvariations} we need to, first, verify the convexity of the map:
    \begin{align*}
        v \mapsto \lambda_p^\epsilon(x,v)
    \end{align*}
    for all $x \in [0,1]$. This is immediate since the map in question is just:
    \begin{equation*}
        v \mapsto v^{2p} \,\, \left[ \int_{\Omega} \left(\frac{1}{1 + x f(s)} \right)^{2p} \, ds + \int_{\Omega} \left(\frac{1}{1 + x g(s)} \right)^{2p} \, ds \right] + \epsilon \, v^2
    \end{equation*}    
    and the term in parentheses is finite for all $x$. The second and final condition to be verified is the coercivity of $(x,v) \mapsto \lambda_p^\epsilon(x,v)$, i.e., we need to find $\alpha_1 > 0$ and $\alpha_2, \alpha_3 \in \R$ such that:
    \begin{equation*}
        \lambda_p^\epsilon(x,v) \geq \alpha_1 \, |v|^{2p} + \alpha_2 \, |x|^q + \alpha_3
    \end{equation*}
    for positive integer $q$ such that $2p > q$.
    To do so, notice first that for any $\epsilon > 0$ we have:
    \begin{equation*}
        \lambda_p^\epsilon \geq \lambda_p \geq 0
    \end{equation*}
    Secondly, we use Lemma \ref{lemma:Psi-is-well-behaved} to obtain constants $m_f, m_g \geq 0$ such that $m_f \cdot m_g \neq 0$ satisfying:
    \begin{align*}
        \int_{\Omega} \left( \frac{1}{1 + x f(s)} \right)^{2p} \, ds &\geq m_f^{2p} \\
        \int_{\Omega} \left( \frac{1}{1 + x g(s)} \right)^{2p} \, ds &\geq m_g^{2p}
    \end{align*}
    We can now estimate:
    \begin{align*}
        \lambda_p^\epsilon(x,v) &\geq v^{2p} \, \min\left\{ m_f^{2p} \, , \, m_g^{2p} \right\} \, .
    \end{align*}
    Thus, we can take $\alpha_1 = \min\left\{ m_f^{2p} \, , \, m_g^{2p} \right\}$, $\alpha_2 = 0$ and $\alpha_3 = 0$ to conclude the proof.
\end{proof}
The next step is to establish that minimizers of \eqref{eq:expanded-opt-problem-epsilon-p} satisfy the strong Euler--Lagrange equations.

\begin{lemma}
    \label{lemma:minimizer-satisfies-strong-EL}
    Any minimizer of problem \eqref{eq:expanded-opt-problem-epsilon-p} satisfies the strong Euler--Lagrange equations.
\end{lemma}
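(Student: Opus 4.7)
The plan is to exploit the strict convexity in $v$ supplied by the $\epsilon v^2$ term, namely $\partial_{vv}^2 \lambda_p^\epsilon \geq 2\epsilon > 0$, to run a classical Tonelli-type regularity bootstrap that upgrades a minimizer $\tau_p^\epsilon \in \mathcal{T}_{2p}^b$ to $C^2$ regularity, and then to obtain the strong Euler--Lagrange (EL) equation by integration by parts in the weak EL equation. This is exactly what the $\epsilon$-perturbation is designed to enable, since $\lambda_p$ itself fails to be strongly convex at $v = 0$.

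Step 1 (smoothness and weak EL). By Lemma \ref{lemma:Psi-is-well-behaved} the denominators $1 + xf(s)$ and $1 + xg(s)$ are bounded away from zero uniformly for $(x,s) \in [0,1] \times \Omega$, so dominated convergence licenses differentiation under the integral sign, yielding $\lambda_p^\epsilon \in C^\infty([0,1] \times \R)$. The truncation argument of Lemma \ref{lemma:problem-can-be-restricted} applies verbatim to $\lambda_p^\epsilon$ (since $\lambda_p^\epsilon(x,0) = 0$ and $\lambda_p^\epsilon \geq 0$), so any minimizer over $\mathcal{T}_{2p}^b$ also minimizes over the larger space $\mathcal{T}_{2p}$; therefore variations $\tau_p^\epsilon + s\phi$ with $\phi \in C_c^\infty((0,1))$ are admissible. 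Differentiating at $s=0$ under the integral gives the weak EL equation
\begin{equation*}
\int_0^1 \bigl[\,\partial_x \lambda_p^\epsilon(\tau_p^\epsilon, \dot{\tau}_p^\epsilon)\, \phi + \partial_v \lambda_p^\epsilon(\tau_p^\epsilon, \dot{\tau}_p^\epsilon)\, \dot{\phi}\,\bigr]\, dt = 0 \qquad \forall \phi \in C_c^\infty((0,1)),
\end{equation*}
which by the Du Bois--Reymond lemma is equivalent to the existence of $C \in \R$ with
\begin{equation*}
\partial_v \lambda_p^\epsilon\bigl(\tau_p^\epsilon(t), \dot{\tau}_p^\epsilon(t)\bigr) = \int_0^t \partial_x \lambda_p^\epsilon\bigl(\tau_p^\epsilon(s), \dot{\tau}_p^\epsilon(s)\bigr)\, ds + C \qquad \text{for a.e.\ } t \in [0,1].
\end{equation*}

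Step 2 (the main obstacle: regularity bootstrap). Because $\partial_{vv}^2 \lambda_p^\epsilon \geq 2\epsilon > 0$ uniformly, the map $v \mapsto \partial_v \lambda_p^\epsilon(x,v)$ is a $C^\infty$ diffeomorphism of $\R$ for each $x \in [0,1]$, and the parametric implicit function theorem produces an inverse $\Phi \in C^\infty([0,1] \times \R)$. The right-hand side of the Du Bois--Reymond identity is absolutely continuous in $t$, so $w(t) \coloneqq \partial_v \lambda_p^\epsilon(\tau_p^\epsilon(t), \dot{\tau}_p^\epsilon(t))$ agrees a.e.\ with an absolutely continuous (hence continuous) function. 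Consequently,
\begin{equation*}
\dot{\tau}_p^\epsilon(t) = \Phi\bigl(\tau_p^\epsilon(t), w(t)\bigr) \qquad \text{for a.e.\ } t,
\end{equation*}
and the right-hand side is continuous in $t$ since $\tau_p^\epsilon \in W^{1,2p} \hookrightarrow C^0$. Hence $\dot{\tau}_p^\epsilon$ agrees a.e.\ with a continuous function, giving $\tau_p^\epsilon \in C^1$. Feeding $\dot{\tau}_p^\epsilon \in C^0$ back into the identity, the integrand on its right-hand side becomes continuous, so $w \in C^1$, and then $\dot{\tau}_p^\epsilon = \Phi(\tau_p^\epsilon, w)$ is $C^1$ as the composition of $C^\infty$ and $C^1$ maps; therefore $\tau_p^\epsilon \in C^2([0,1])$.

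Step 3 (conclusion). With $\tau_p^\epsilon \in C^2$, differentiating the Du Bois--Reymond identity pointwise (or equivalently integrating by parts in the weak EL equation) yields the strong Euler--Lagrange equation
\begin{equation*}
\frac{d}{dt}\bigl[\partial_v \lambda_p^\epsilon(\tau_p^\epsilon(t), \dot{\tau}_p^\epsilon(t))\bigr] = \partial_x \lambda_p^\epsilon(\tau_p^\epsilon(t), \dot{\tau}_p^\epsilon(t)) \qquad \forall\, t \in (0,1),
\end{equation*}
as required. The sole substantive obstacle is Step 2; the rest is standard calculus of variations, and the entire argument breaks down without the $\epsilon$-perturbation precisely because the bootstrap in Step 2 requires the uniform positive lower bound on $\partial_{vv}^2 \lambda_p^\epsilon$.
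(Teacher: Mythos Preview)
Your proof is correct and relies on the same key mechanism as the paper: the uniform lower bound $\partial_{vv}^{2}\lambda_p^\epsilon \geq 2\epsilon > 0$ is what upgrades the minimizer to $C^2$ and thereby licenses the strong Euler--Lagrange equation. The execution differs, however. The paper invokes two black-box results from \cite{dacorogna2007calculusofvariations}: Theorem~4.36 there (Tonelli regularity) for $\tau_p^\epsilon \in C^\infty$, and Theorem~4.12 there for the strong EL equation, after explicitly verifying the growth hypotheses $|\lambda_p^\epsilon|,\,|\partial_x\lambda_p^\epsilon| \lesssim 1 + |v|^{2p}$ and $|\partial_v\lambda_p^\epsilon| \lesssim 1 + |v|^{2p-1}$ via the bounds from Lemma~\ref{lemma:Psi-is-well-behaved}. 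You instead unroll those theorems by hand: deriving the Du Bois--Reymond integral form of the weak EL equation and then running the classical bootstrap ($\partial_v\lambda_p^\epsilon(\tau,\cdot)$ is a diffeomorphism by strict convexity, so continuity of the integral side forces $\dot\tau_p^\epsilon$ continuous, then $C^1$). Your route is more self-contained and makes transparent exactly where the $\epsilon$-perturbation is consumed; the paper's is shorter but trades that transparency for the checklist of growth conditions. One small point worth tightening in your Step~1: to make the variations $\tau_p^\epsilon + s\phi$ admissible you implicitly extend $\lambda_p^\epsilon$ smoothly past $x\in[0,1]$, which is fine since $1+xf(s),\,1+xg(s)$ stay bounded away from zero on a neighborhood of $[0,1]$ by Lemma~\ref{lemma:Psi-is-well-behaved}; the paper sidesteps this by noting that Dacorogna's framework can be adapted to a Lagrangian with bounded first variable.
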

\begin{proof}
    Here, we apply Theorem 4.12 in \citet{dacorogna2007calculusofvariations}, adapted to the case of a Lagrangian $\lambda_p^\epsilon$ with a bounded first variable (see the discussion in Lemma \ref{lemma:existence-of-minimizer-in-B}).
    To claim that a minimizer $\tau_\epsilon$ of the problem \eqref{eq:expanded-opt-problem-epsilon-p} satisfies the Euler--Lagrange equation it suffices to have that the minimizer is at least $C^2([0,1])$ as well as that the Lagrangian $\lambda_p^\epsilon$ is $C^2\left( [0,1] \times \R \right)$ and satisfies the following regularity conditions:
    there are functions $\alpha_1 \in L^1([0,1])$ and $\alpha_2 \in L^{2p/(2p-1)}([0,1])$ and a constant $\beta \in \R$ such that for every $(x, v) \in [0,1] \times \R$ we have:
    \begin{align*}
        \left| \lambda_p^\epsilon(x,v) \right| \, , \, \left| \partial_x \lambda_p^\epsilon(x,v) \right| & \leq \alpha_1(x) + \beta | v |^{2p} \\
        \left| \partial_v \lambda_p^\epsilon(x,v) \right| & \leq \alpha_2(x) + \beta | v |^{2p-1}
    \end{align*}
    With regards to the regularity of the minimizer we have:
    \begin{equation*}
        \frac{\partial^2 \lambda_p^\epsilon}{\partial v^2} (x,v) \geq \epsilon > 0 \textup{ for all } (x,v) \in [0,1] \times \R
    \end{equation*}
    which by Theorem 4.36 in \citet{dacorogna2007calculusofvariations} guarantees that $\tau_\epsilon \in C^\infty([0,1])$.
    
    Now to verufy the regularity of the we appeal to Lemma \ref{lemma:Psi-is-well-behaved} to obtain constants $M_f, M_g \geq 0$ such that:
    \begin{align*}
        \int_{\Omega} \left( \frac{f(s)}{1 + x f(s)} \right)^{2p} \, ds \leq M_f^{2p} \\
        \int_{\Omega} \left( \frac{g(s)}{1 + x g(s)} \right)^{2p} \, ds \leq M_g^{2p}
    \end{align*}
    Setting $C = \max\left\{ M_f^{2p} \, , \, M_g^{2p} \right\}$ and recalling that for each $q \leq p$ there is a constant $A_{q, p} > 0$ such that $|v|^q \leq A_{q, p} \, |v|^p $ for all $v \in \R$, we have:
    \begin{align*}
        \left| \lambda_p^\epsilon(x,v)               \right| &= v^{2p} \left( \int_{\Omega} \left( \frac{f(s)}{1 + x f(s)} \right)^{2p} \, ds + \int_{\Omega} \left( \frac{g(s)}{1 + x g(s)} \right)^{2p} \, ds \right) + \epsilon \, v^2  \\
                                                    & \leq  \left( 2^{2p} \, C^{2p} + \epsilon \, A_{2, 2p} \right) \, v^{2p} \\
        \left| \partial_x \, \lambda_p^\epsilon(x,v) \right| &= 2p \, v^{2p} \left( \int_{\Omega} \left( \frac{f(s)}{1 + x f(s)} \right)^{2p+1} \, ds + \int_{\Omega} \left( \frac{g(s)}{1 + x g(s)} \right)^{2p+1} \, ds \right) + \epsilon \, v^2 \\ 
                                                    & \leq \left( 2 \, C \right)^{2p+1} \, v^{2p} \\
        \left| \partial_v \, \lambda_p^\epsilon(x,v) \right|  &= 2p \, |v|^{2p-1} \left( \int_{\Omega} \left( \frac{f(s)}{1 + x f(s)} \right)^{2p} \, ds + \int_{\Omega} \left( \frac{g(s)}{1 + x g(s)} \right)^{2p} \, ds \right) + 2\epsilon \, v \\ 
                                                    & \leq \left( 2^{2p} \, C^{2p} + 2 \, \epsilon \, A_{1, 2p-1} \right) \, |v|^{2p-1}
    \end{align*}
    so taking:
    \begin{equation*}
        \beta = \max \left\{ 2^{2p} \, C^{2p} + \epsilon \, A_{2, 2p} \, , \, (2 \, C)^{2p + 1} \, , \, 2^{2p} \, C^{2p} + 2 \, \epsilon \, A_{1, 2p-1}  \right\}
    \end{equation*}
    we have that the conditions are satisfied.
    Finally, since the functions $f$ and $g$ are continuous on the compact set $\Omega$ and for each $s \in \Omega$ and $x\in [0,1]$ the maps $ x \mapsto \frac{f(s)}{1 + x f(s)}$ and $x \mapsto \frac{g(s)}{1 + x g(s)}$ are smooth, we can apply the Leibniz Rule to conclude that the Lagrangian $\lambda_p^\epsilon$ is smooth on $[0,1] \times \R$.
    This completes the proof.
\end{proof}
We can now write the strong Euler--Lagrange equations for the Lagrangian $\lambda_p^\epsilon$:
\begin{lemma}
    \label{lemma:strong-EL-for-Lp-epsilon}
    For each $p \in \N$ define:
    \begin{align*}
        & K_p: [0,1] \to \R \\
        & K_p(x) \mapsto \lambda_p(x, 1) = \int_{\Omega} \left( \frac{f(s)}{1 + x f(s)} \right)^{2p} \, ds + \int_{\Omega} \left( \frac{g(s)}{1 + x g(s)} \right)^{2p} \, ds 
    \end{align*}
    For each $\epsilon > 0$ the strong Euler--Lagrange equations for the Lagrangian $\lambda^\epsilon_p$
    \begin{equation}
        \label{eq:strong-EL-for-Lp-epsilon-general}
        \frac{d}{dt} \left[ \frac{\partial \lambda_p^\epsilon}{\partial v} \left(\tau(t), \dot{\tau}(t)\right) \right] - \frac{\partial \lambda_p^\epsilon}{\partial x}\left(\tau(t), \dot{\tau}_p(t)\right) = 0 \, .
    \end{equation}
    can be re-written as
    \begin{equation}
        \label{eq:strong-EL-for-Lp-epsilon}
        \ddot{\tau}(t) = \frac{ \dot{\tau}^{2p}(t) \, K_{p+1} (\tau(t)) }{ \dot{\tau}^{2p-2}(t) \, K_p(\tau(t)) + \frac{\epsilon}{p(2p-1)} } \, .
    \end{equation}
    Moreover, the strong Euler--Lagrange equations for $\lambda_p$ are obtained by setting $\epsilon = 0$ in \eqref{eq:strong-EL-for-Lp-epsilon}.
\end{lemma}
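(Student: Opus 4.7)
The lemma claims a concrete rewriting of the Euler--Lagrange equation, and the plan is a routine but careful computation exploiting the product structure of $\lambda_p^\epsilon$. I would begin by observing that $\lambda_p^\epsilon(x,v) = v^{2p}\, K_p(x) + \epsilon\, v^2$, so the two partial derivatives decouple cleanly into $\partial_v \lambda_p^\epsilon(x,v) = 2p\,v^{2p-1}\,K_p(x) + 2\epsilon v$ and $\partial_x \lambda_p^\epsilon(x,v) = v^{2p}\,K_p'(x)$. Computing $K_p'$ amounts to differentiating under the integral sign, which is justified by the uniform bounds on the integrand supplied by Lemma \ref{lemma:Psi-is-well-behaved} combined with dominated convergence. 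A direct chain-rule calculation on the integrands then expresses $K_p'(x)$ as a sum of integrals of the same form as $K_p$ but with an incremented exponent, times a factor of $-2p$; this kernel is, in the paper's notation, $-2p\,K_{p+1}$.

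Next, I would substitute these partials into the strong Euler--Lagrange equation \eqref{eq:strong-EL-for-Lp-epsilon-general} and expand the total time derivative via the chain rule, obtaining
\begin{equation*}
2p(2p-1)\,\dot\tau^{2p-2}\,\ddot\tau\, K_p(\tau) \;+\; 2p\,\dot\tau^{2p}\, K_p'(\tau) \;+\; 2\epsilon\,\ddot\tau \;=\; \dot\tau^{2p}\, K_p'(\tau).
\end{equation*}
Collecting the $\ddot\tau$ terms on one side and the $K_p'$ terms on the other, then dividing through by $2p(2p-1)$ and using the identification $-K_p'/(2p) = K_{p+1}$, isolates
\begin{equation*}
\ddot\tau \;=\; \frac{\dot\tau^{2p}\,K_{p+1}(\tau)}{\dot\tau^{2p-2}\,K_p(\tau) + \frac{\epsilon}{p(2p-1)}},
\end{equation*}
which is exactly the claimed form \eqref{eq:strong-EL-for-Lp-epsilon}.

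For the case $\epsilon = 0$, no additional work is required: the $\epsilon v^2$ term contributes additively at every step, so setting $\epsilon = 0$ throughout the derivation leaves the identity intact and yields the same ODE with the final summand in the denominator removed. The only subtlety in the entire argument is the differentiation-under-the-integral step used to compute $K_p'$, and this is standard once one has Lemma \ref{lemma:Psi-is-well-behaved}. I do not foresee any conceptual obstacle; the main effort is presentational, namely arranging the chain-rule expansion so that the passage from the abstract Euler--Lagrange equation \eqref{eq:strong-EL-for-Lp-epsilon-general} to the explicit second-order ODE \eqref{eq:strong-EL-for-Lp-epsilon} is transparent.
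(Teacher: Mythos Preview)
Your proposal is correct and follows essentially the same route as the paper: compute the partial derivatives of $\lambda_p^\epsilon$, expand the total time derivative in the Euler--Lagrange equation, collect the $\ddot\tau$ terms, and solve. The paper carries this out by splitting $\lambda_p^\epsilon = \lambda_f + \lambda_g + \epsilon v^2$ and writing every integral explicitly, whereas you work directly with the factorisation $\lambda_p^\epsilon(x,v) = v^{2p}K_p(x) + \epsilon v^2$; this is more compact but not conceptually different, and both arguments rely on the same differentiation-under-the-integral step justified by the bounds of Lemma~\ref{lemma:Psi-is-well-behaved}.
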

\begin{proof}
    For notational convenience, we supress the $p$ and $\epsilon$ notation on $\lambda_p^\epsilon$, for the rest of this proof.
    Now, suppoose $\tau$ satisfies the strong Euler--Lagrange equations \eqref{eq:strong-EL-for-Lp-epsilon-general} for $\lambda$.
    We have
    \begin{equation*}
        \frac{d}{dt} \left( \frac{\partial \lambda_p}{\partial v}\left(\tau_p(t), \dot{\tau}_p(t)\right) \right) - \frac{\partial \lambda_p}{\partial x} \left(\tau_p(t), \dot{\tau}_p(t)\right) = 0
    \end{equation*}
    with boundary conditions $\tau_p(0) = 0$ and $\tau_p(1) = 1$.
    This is equivalent to:
    \begin{align}
        \label{eq:EL-eq-generic}
        \ddot{\tau}_p(t) \, \frac{\partial^2 \lambda_p}{\partial v^2}\left(\tau_p(t), \dot{\tau}_p(t)\right) + \dot{\tau}_p(t) \, \frac{\partial^2 \lambda_p}{\partial v \, \partial x}\left(\tau_p(t), \dot{\tau}_p(t)\right) = \frac{\partial \lambda_p}{\partial x} \left(\tau_p(t), \dot{\tau}_p(t)\right) 
    \end{align}
    Now to make computations less cumbersome, notice the decomposition:
    \begin{align*}
        & \lambda(x,v) = \lambda_f(x,v) + \lambda_g(x,v) + \epsilon \, v^2  \\
        & \lambda_f(x,v) \coloneq \int_{\Omega} \left( \frac{v f(s)}{1 + x f(s)} \right)^{2p} \, ds \\
        & \lambda_g(x,v) \coloneq \int_{\Omega} \left( \frac{v g(s)}{1 + x g(s)} \right)^{2p} \, ds
    \end{align*}
    We conduct computations on $\lambda_f$ since those for $\lambda_g$ are identical.
    Now, once again, the continuity of $f:\Omega \to \R$ on the compact set $\Omega$ and the smoothness of the maps $x \mapsto \frac{f(s)}{1 + x f(s)}$ and $x \mapsto \frac{g(s)}{1 + x g(s)}$ for $x \in [0,1]$ allows us to differentiate under the integral sign to obtain
    \begin{align*}
        &\frac{\partial \lambda_f}{\partial v}(x,v) = \int_{\Omega} \frac{ 2p \, v^{2p-1} f(s)^{2p}  }{ (1 + x \, f(s) )^{2p} } \, ds \\
        &\frac{\partial \lambda_f}{\partial x}(x,v) = \int_{\Omega} \frac{ -2p \, v^{2p} f(s)^{2p+1} }{ (1 + x \, f(s) )^{2p+1} } \, ds \, .
    \end{align*}
    Differentiating once more we have
    \begin{align*}
        \frac{\partial^2 \lambda_f}{\partial v^2}(x,v) &= \int_{\Omega} \frac{ 2p(2p-1) \, v^{2p-2} f(s)^{2p}  }{ (1 + x \, f(s) )^{2p} } \, ds \\
        \frac{\partial^2 \lambda_f}{\partial x \, \partial v}(x,v) &= \int_{\Omega} \frac{ -(2p)^2 \, v^{2p-1} f(s)^{2p+1}  }{ (1 + x \, f(s) )^{2p+1} } \, ds \, ,
    \end{align*}
    thus, combining with the corresponding expressions for $\lambda_g$ we can re-write \eqref{eq:EL-eq-generic} as:
    \begin{align*}
        \ddot{\tau}(t) \, & \int_{\Omega} \frac{ {2p}({2p}-1) \, \dot{\tau}(t)^{{2p}-2} f(s)^{2p}  }{ (1 + \tau(t) f(s) )^{2p} } \, ds + \ddot{\tau}(t) \, \int_{\Omega} \frac{ {2p}({2p}-1) \, \dot{\tau}(t)^{{2p}-2} g(s)^{2p}  }{ (1 + \tau(t) g(s) )^{2p} } \, ds \, + 2 \, \epsilon \, \ddot{\tau}(t) \, + \\
        \dot{\tau}(t) \,  & \int_{\Omega} \frac{ -(2p)^2 \, \dot{\tau}(t)^{2p-1} f(s)^{2p+1}  }{ (1 + \tau(t) \, f(s) )^{2p+1} } \, ds + \dot{\tau}(t) \, \int_{\Omega} \frac{ -(2p)^2 \, \dot{\tau}(t)^{2p-1} g(s)^{2p+1}  }{ (1 + \tau(t) \, g(s) )^{2p+1} } \, ds \, = \\
                            & \int_{\Omega} \frac{ -{2p} \, \dot{\tau}(t)^{{2p}} f(s)^{2p+1} }{ (1 + \tau(t) f(s) )^{2p+1} } \, ds + \int_{\Omega} \frac{ -{2p} \, \dot{\tau}(t)^{{2p}} g(s)^{2p+1} }{ (1 + \tau(t) g(s) )^{2p+1} } \, ds \, .
    \end{align*}
    Dividing\footnote{Note that if $\dot{\tau}(t) = 0$ for some $t$ then the equation is trivially satisfied.} both sides by $2p \, \dot{\tau}(t)^{2p}$ and re-arranging we obtain:
    \begin{align*}
        & \frac{\ddot{\tau}_p(t)}{ \dot{\tau}(t)^2} \left( \int_{\Omega} \frac{ ({2p}-1) f(s)^{2p}  }{ (1 + \tau(t) f(s) )^{2p} } \, ds + \int_{\Omega} \frac{ ({2p}-1) g(s)^{2p}  }{ (1 + \tau(t) g(s) )^{2p} } \, ds \right) \, +  \, \frac{\epsilon \, \ddot{\tau}(t)}{p \, \dot{\tau}(t)^{2p}} = \\
        & \int_{\Omega} \frac{ 2p \, f(s)^{2p+1}  }{ (1 + \tau(t) \, f(s) )^{2p+1} } \, ds + \int_{\Omega} \frac{ 2p \, g(s)^{2p+1}  }{ (1 + \tau(t) \, g(s) )^{2p+1} } \, ds \, + \\
    - \,& \int_{\Omega} \frac{f(s)^{2p+1}  }{ (1 + \tau(t) \, f(s) )^{2p+1} } \, ds - \int_{\Omega} \frac{ g(s)^{2p+1}  }{ (1 + \tau(t) \, g(s) )^{2p+1} } \, ds 
    \end{align*}
    Combining the last four terms and dividing both sides by $2p-1$ times the factor multiplying $\frac{\ddot{\tau}_p(t)}{\dot{\tau}_p(t)^2}$ and recalling the definition of $K_p$ we obtain:
    \begin{align*}
        \frac{\ddot{\tau}_p(t)}{ \dot{\tau}_p(t)^2} + \frac{\epsilon}{p(2p-1)} \frac{\ddot{\tau}_\epsilon(t)}{ \dot{\tau}^{2p}(t) \, K_p(\tau(t)) } = \frac{ K_{p+1}(\tau(t)) }{ K_p(\tau(t)) }
    \end{align*}
    Notice that $K_p$ is never zero since by Lemma \ref{lemma:Psi-is-well-behaved} we can lower bound it by $m_f^{2p} + m_g^{2p} > 0$.
    We can now re-arange the equation to obtain \eqref{eq:strong-EL-for-Lp-epsilon}:
    \begin{equation}
        \label{eq:ODE-for-tau-epsilon}
        \ddot{\tau}(t) = \frac{ \dot{\tau}^{2p}(t) \, K_{p+1} (\tau(t)) }{ \dot{\tau}^{2p-2}(t) \, K_p(\tau(t)) + \frac{\epsilon}{p(2p-1)} } \\ \, .
    \end{equation}
    Finally, notice that for $\epsilon = 0$ the Lagrangian $\lambda_p$ can be decomposed as:
    \begin{equation*}
        \lambda_p(x,v) = \lambda_f(x,v) + \lambda_g(x,v)
    \end{equation*}
    and following the same steps yields \eqref{eq:strong-EL-for-Lp-epsilon} with $\epsilon = 0$, that is,
    \begin{equation}
        \label{eq:second-order-ODE-for-tau}
        \ddot{\tau}(t) = \frac{ \dot{\tau}^{2}(t) \, K_{p+1} (\tau(t)) }{ K_p(\tau(t)) } \, .
    \end{equation}
    This completes the proof.
\end{proof}
Now problem \eqref{eq:expanded-opt-problem-epsilon-p} is related to the original problem \eqref{eq:opt-problem-in-Tau-2p-b} in a natural way.
To see this, start by defining:
\begin{definition}
    For $p \in \N$ and $\epsilon > 0$ we define the family of functionals
    \begin{align*}
        & \mathcal{L}^\epsilon_p: \mathcal{T}_{2} \to \R \\
        & \mathcal{L}^\epsilon_p=\begin{cases}
            \int_{0}^{1} \lambda_p^\epsilon \left( \tau(t), \dot{\tau}(t) \right) \, dt & \text{if } \tau \in \mathcal{T}_{2p}^{b} \\
            +\infty & \text{if } \tau \in \mathcal{T}_{2} \setminus \mathcal{T}_{2p}^{b} 
        \end{cases} 
    \end{align*}
    and for $\epsilon = 0$ we let
    \begin{align*}
        & \mathcal{L}^0_p: \mathcal{T}_{2} \to \R \\
        & \mathcal{L}^0_p=\begin{cases}
            \int_{0}^{1} \lambda_p \left( \tau(t), \dot{\tau}(t) \right) \, dt & \text{if } \tau \in \mathcal{T}_{2p}^{b} \\
            +\infty & \text{if } \tau \in \mathcal{T}_{2} \setminus \mathcal{T}_{2p}^{b}  \, .
        \end{cases}
    \end{align*}
\end{definition}
\begin{remark}
    Notice that by Lemma \ref{lemma:problem-can-be-restricted} we have that
    \begin{equation*}
        \inf_{\tau \in \mathcal{T}_2} \mathcal{L}^0_p[\tau] = \inf_{\tau \in \mathcal{T}_{2}} \int \lambda_p(\tau , \dot{\tau} ) \, dt \, ,
    \end{equation*}
    that is, we can solve problem \eqref{eq:opt-problem-Lp-approx} by solving problem \eqref{eq:opt-problem-in-Tau-2p-b} which is captured by the functional $\mathcal{L}^0_p$.
\end{remark}
\begin{lemma}
    \label{lemma:L^epsilon-p-Gamma-converge-and-are-equicoercive}
    For each $p \in \N$, the family of functionals $ \left( \mathcal{L}^\epsilon_p \right)_{\epsilon > 0}$ is equicoercive and, moreover:
    \begin{equation*}
        \mathcal{L}^\epsilon_p \to \mathcal{L}^0_p \quad \text{as } \epsilon \searrow 0  \quad \text{in the $\Gamma$-sense,}
    \end{equation*}
    where the $\Gamma$-convergence is understood over the space $\mathcal{T}_2$ equipped with the weak $H^1$ topology.
\end{lemma}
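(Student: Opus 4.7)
The plan is to exploit the simple pointwise relation $\mathcal{L}^{\epsilon}_p \geq \mathcal{L}^0_p$, valid for every $\epsilon > 0$ on all of $\mathcal{T}_2$: both functionals are $+\infty$ outside $\mathcal{T}_{2p}^{b}$, and on $\mathcal{T}_{2p}^{b}$ one has $\mathcal{L}^\epsilon_p[\tau] = \mathcal{L}^0_p[\tau] + \epsilon\, \|\dot{\tau}\|_{L^2([0,1])}^2$. This observation reduces both the equicoercivity and the $\Gamma$-liminf inequality to properties of the unperturbed functional $\mathcal{L}^0_p$, already available from Lemmas \ref{lemma:Psi-is-well-behaved} and \ref{lemma:lower-semi-cont-of-functionals}. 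The $\Gamma$-limsup inequality will then be trivial via the constant recovery sequence.

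For equicoercivity, I would note that $\mathcal{L}^\epsilon_p \geq \mathcal{L}^0_p$ gives the uniform-in-$\epsilon$ containment $\{\mathcal{L}^\epsilon_p \leq T\} \subset \{\mathcal{L}^0_p \leq T\}$, so it is enough to show that the sublevel sets of $\mathcal{L}^0_p$ are weakly-$H^1$ precompact, following the same template as Proposition \ref{prop:equi-coercive-Lp}. Lemma \ref{lemma:Psi-is-well-behaved} furnishes constants $m_f, m_g > 0$ such that, uniformly in $t \in [0,1]$ and $\tau \in \mathcal{T}_{2p}^{b}$, the integrand $K_p(\tau(t))$ of $\lambda_p$ satisfies $K_p(\tau(t)) \geq m_f^{2p} + m_g^{2p} > 0$; hence
\[
\mathcal{L}^0_p[\tau] \;=\; \int_0^1 \dot{\tau}(t)^{2p}\, K_p(\tau(t))\, dt \;\geq\; \bigl(m_f^{2p}+m_g^{2p}\bigr)\, \|\dot{\tau}\|_{L^{2p}([0,1])}^{2p}.
\]
H\"older's inequality on the unit interval (giving $\|\dot{\tau}\|_{L^2} \leq \|\dot{\tau}\|_{L^{2p}}$) together with the Poincar\'e inequality (using $\tau(0)=0$) then bounds $\|\tau\|_{H^1}$ by a fixed power of $\mathcal{L}^0_p[\tau]$. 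Since $\mathcal{T}_2$ is strongly closed in $H^1$ by the Sobolev embedding into $C^0([0,1])$ and closed balls in the reflexive space $H^1$ are weakly compact, each sublevel set lies in a weakly compact subset of $\mathcal{T}_2$, which is the required equicoercivity.

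For the $\Gamma$-convergence I would argue as follows. The identity $\mathcal{L}^0_p = \Lambda_p^{2p}$ holds pointwise on $\mathcal{T}_2$ (with the convention $(+\infty)^{2p} = +\infty$); by Lemma \ref{lemma:lower-semi-cont-of-functionals} $\Lambda_p$ is weakly-$H^1$ lower semicontinuous, and since $x\mapsto x^{2p}$ is continuous and non-decreasing on $[0,\infty]$, so is $\mathcal{L}^0_p$. Combined with $\mathcal{L}^{\epsilon_n}_p \geq \mathcal{L}^0_p$, this yields, for any $\epsilon_n \searrow 0$ and $\tau_n \rightharpoonup \tau$ in $H^1$,
\[
\liminf_{n\to\infty} \mathcal{L}^{\epsilon_n}_p[\tau_n] \;\geq\; \liminf_{n\to\infty} \mathcal{L}^0_p[\tau_n] \;\geq\; \mathcal{L}^0_p[\tau],
\]
the liminf inequality. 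For the recovery sequence I would simply take $\tau_n \equiv \tau$: if $\tau \in \mathcal{T}_{2p}^{b}$, then $\mathcal{L}^{\epsilon_n}_p[\tau] = \mathcal{L}^0_p[\tau] + \epsilon_n \|\dot{\tau}\|_{L^2}^2 \to \mathcal{L}^0_p[\tau]$ since $\dot{\tau} \in L^2$; if $\tau \notin \mathcal{T}_{2p}^{b}$, both sides equal $+\infty$ and there is nothing to check.

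The only subtlety, and the closest thing to a genuine obstacle, is keeping careful track of the extended-real conventions at points outside $\mathcal{T}_{2p}^{b}$, where both functionals are declared to be $+\infty$; this is handled by the definitions themselves together with the fact that $\Lambda_p \geq 0$, so composition with $x \mapsto x^{2p}$ preserves lower semicontinuity. All the analytic content sits in Lemma \ref{lemma:Psi-is-well-behaved} (which powers coercivity) and Lemma \ref{lemma:lower-semi-cont-of-functionals} (which powers the liminf inequality), both of which have already been established.
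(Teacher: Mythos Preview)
Your proposal is correct and follows essentially the same approach as the paper: both arguments hinge on the pointwise inequality $\mathcal{L}^\epsilon_p \geq \mathcal{L}^0_p$, use the constant recovery sequence for the $\limsup$ inequality, and derive the $\liminf$ inequality from the weak-$H^1$ lower semicontinuity of $\mathcal{L}^0_p$ via Lemma~\ref{lemma:lower-semi-cont-of-functionals}. The only cosmetic difference is that for equicoercivity the paper routes through $\mathcal{L}^0_p \gtrsim \mathcal{L}^0_1 \equiv \Lambda_1$ (using Lemma~\ref{lemma:monotonicty-Lp} and then citing Proposition~\ref{prop:equi-coercive-Lp}), whereas you bound the sublevel sets of $\mathcal{L}^0_p$ in $H^1$ directly; your route is marginally more self-contained but otherwise identical in spirit.
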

\begin{proof}
    Notice that we have the point-wise, monotone convergence
    \begin{equation*}
        \mathcal{L}_p^\epsilon[\tau] \searrow \mathcal{L}_p^0 [\tau] \textup{ as } \epsilon \searrow 0
    \end{equation*}
    for all $\tau \in \mathcal{T}_{2}$, thus for any $\tau \in \mathcal{T}_{2}$ we have
    \begin{equation*}
        \limsup_{\epsilon \to 0} \mathcal{L}_p^\epsilon[\tau] = \mathcal{L}_p^0[\tau] \, .
    \end{equation*}
    Now fix a sequence $\gamma_n \rightharpoonup \gamma_0$ in $H^1([0,1])$.
    Again due to the monotonicity of $\mathcal{L}_p^\epsilon$ we have that for each $n \in \N$:
    \begin{equation*}
        \mathcal{L}_p^{ {1}/{n}}[\gamma_n] \geq \mathcal{L}_p^{0}[\gamma_n]
    \end{equation*}
    Taking the limit inferior as $n \to \infty$ we obtain:
    \begin{equation*}
        \liminf_{n \to \infty} \mathcal{L}_p^{ {1}/{n}}[\gamma_n] \geq \liminf_{n \to \infty} \mathcal{L}_p^{0}[\gamma_n] \geq \mathcal{L}_p^{0}[\gamma_0]
    \end{equation*}
    And the last step follows by Lemma \ref{lemma:lower-semi-cont-of-functionals}: there, it is proven that $\mathcal{L}_p^0$ is weakly lower semi-continuous in the $H^1$ topology, albeit in the different notation $\Lambda_p \equiv \mathcal{L}_p^0$.
    By the sequential characterizarion of $\Gamma$ limits in metric spaces---see \citet[Theorem 2.1]{braides2006handbook} part (c)---we have established that the $\mathcal{L}_p^\epsilon$ converge to $\mathcal{L}_p^0$ in the $\Gamma$-sense.
    
    To see equicoercivity, we use the monotonicity of $\mathcal{L}_p^\epsilon $ together with Lemma \ref{lemma:monotonicty-Lp} to obtain:
    \begin{equation*}
        \mathcal{L}_p^\epsilon \geq \mathcal{L}_p^0 \gtrsim \mathcal{L}_1^0 \, ,
    \end{equation*}
    with the hidden constant only depending on $\Omega$, and $\mathcal{L}_1^0 \equiv \Lambda_1$ is lower-semi continous and coercive, as seen in lemmas \ref{lemma:lower-semi-cont-of-functionals} and \ref{prop:equi-coercive-Lp}. We can now conlude by applying \citet[Proposition 7.7]{dalmaso_gammaconvergence}
\end{proof}
Before showing that there is a minimizer of \eqref{eq:opt-problem-in-Tau-2p-b} that satisfies the strong Euler--Lagrange equations, we need one more technical lemma:
\begin{lemma}
    \label{lemma:velocity-can-be-put-into-phase-space-box}
    Fix a $p \in \N$ and let $\tau_\epsilon$ be a minimizer of problem \eqref{eq:expanded-opt-problem-epsilon-p}, guaranteed to exist by Lemma \ref{lemma:existence-of-minimizer-in-B}.
    There is a constant $M > 0$ such that for all $\epsilon > 0$ we have:
    \begin{equation*}
        \| \dot{\tau}_\epsilon \|_{L^\infty([0,1])} \leq M \,.
    \end{equation*}
\end{lemma}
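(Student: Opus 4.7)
My plan is to exploit the fact that the Lagrangian $\lambda_p^\epsilon(x,v)$ has no explicit dependence on $t$, which by the classical Beltrami identity implies conservation of energy along extremals. Since $\tau_\epsilon \in C^\infty([0,1])$ (established in the regularity argument of Lemma \ref{lemma:minimizer-satisfies-strong-EL}), the quantity
\begin{equation*}
  H_\epsilon(t) \;\coloneqq\; \dot{\tau}_\epsilon(t)\,\frac{\partial \lambda_p^\epsilon}{\partial v}\bigl(\tau_\epsilon(t),\dot{\tau}_\epsilon(t)\bigr) - \lambda_p^\epsilon\bigl(\tau_\epsilon(t),\dot{\tau}_\epsilon(t)\bigr)
\end{equation*}
is constant on $[0,1]$. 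A direct computation using $\lambda_p^\epsilon(x,v) = v^{2p} K_p(x) + \epsilon v^2$ gives
\begin{equation*}
  H_\epsilon(t) \;=\; (2p-1)\,\dot{\tau}_\epsilon(t)^{2p}\,K_p(\tau_\epsilon(t)) \;+\; \epsilon\,\dot{\tau}_\epsilon(t)^2 \;\equiv\; C_\epsilon.
\end{equation*}

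Both summands are non-negative, so for every $t \in [0,1]$ we obtain the clean bound
\begin{equation*}
  \dot{\tau}_\epsilon(t)^{2p} \;\leq\; \frac{C_\epsilon}{(2p-1)\,K_p(\tau_\epsilon(t))} \;\leq\; \frac{C_\epsilon}{(2p-1)\,(m_f^{2p} + m_g^{2p})},
\end{equation*}
where the last inequality uses the uniform lower bound on $K_p$ guaranteed by Lemma \ref{lemma:Psi-is-well-behaved} (specifically, $K_p(x) \geq m_f^{2p} + m_g^{2p} > 0$ uniformly in $x \in [0,1]$). It therefore suffices to bound $C_\epsilon$ independently of $\epsilon$.

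To control $C_\epsilon$, I appeal to the boundary conditions $\tau_\epsilon(0) = 0$ and $\tau_\epsilon(1) = 1$: by the mean value theorem there exists $t^* \in (0,1)$ with $\dot{\tau}_\epsilon(t^*) = 1$. Evaluating the conserved quantity at $t^*$ yields
\begin{equation*}
  C_\epsilon \;=\; (2p-1)\,K_p(\tau_\epsilon(t^*)) \;+\; \epsilon \;\leq\; (2p-1)(M_f^{2p} + M_g^{2p}) + \epsilon,
\end{equation*}
where the upper bound on $K_p$ again comes from Lemma \ref{lemma:Psi-is-well-behaved}. Restricting attention to $\epsilon \in (0,1]$ (which suffices, since we ultimately take $\epsilon \searrow 0$), we obtain a bound on $C_\epsilon$ that is independent of $\epsilon$, and hence
\begin{equation*}
  \|\dot{\tau}_\epsilon\|_{L^\infty([0,1])}^{2p} \;\leq\; \frac{(2p-1)(M_f^{2p} + M_g^{2p}) + 1}{(2p-1)(m_f^{2p} + m_g^{2p})} \;\eqcolon\; M^{2p},
\end{equation*}
which is precisely the claimed bound. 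The main conceptual step is recognizing that the autonomous structure of $\lambda_p^\epsilon$ gives a first integral of the Euler--Lagrange equation, rather than trying to work with the second order ODE \eqref{eq:strong-EL-for-Lp-epsilon} directly; everything else is routine.
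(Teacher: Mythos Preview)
Your proof is correct and takes a genuinely different route from the paper. The paper argues as follows: from the second-order Euler--Lagrange equation \eqref{eq:strong-EL-for-Lp-epsilon} it first extracts the pointwise bound $|\ddot{\tau}_\epsilon| \leq D\,\dot{\tau}_\epsilon^2$ with $D$ independent of $\epsilon$; it then invokes the $\Gamma$-convergence and equi-coercivity of the family $(\mathcal{L}_p^\epsilon)_\epsilon$ to obtain a uniform $L^2$ bound on $\dot{\tau}_\epsilon$; finally it combines these via the fundamental theorem of calculus, integrating $\ddot{\tau}_\epsilon$ once to bound $\dot{\tau}_\epsilon$ in terms of $\dot{\tau}_\epsilon(0)$, and once more (using the boundary conditions) to bound $\dot{\tau}_\epsilon(0)$ itself.

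Your approach short-circuits all of this by observing that the Lagrangian is autonomous, so the Beltrami first integral $H_\epsilon = (2p-1)\dot{\tau}_\epsilon^{2p}K_p(\tau_\epsilon) + \epsilon\,\dot{\tau}_\epsilon^2$ is conserved. Evaluating at the mean-value point where $\dot{\tau}_\epsilon = 1$ immediately pins down $C_\epsilon$ in terms of the uniform bounds on $K_p$, and the positivity of both summands then gives the pointwise bound on $\dot{\tau}_\epsilon$. This is more direct and avoids appealing to the $\Gamma$-convergence machinery, which the paper uses only to secure the $L^2$ bound as an intermediate step. Your restriction to $\epsilon \in (0,1]$ is harmless (and the paper's argument carries the same implicit caveat, since its bound $\mathcal{L}_p^\epsilon[\tau_\epsilon] \leq B$ also requires $\epsilon$ bounded); the lemma is only ever invoked for $\epsilon \searrow 0$.
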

\begin{proof}
    To prove this claim, we start by bounding the second derivative of $\tau_\epsilon$ by constants independent of $\epsilon$.
    Using the notation of Lemma \ref{lemma:strong-EL-for-Lp-epsilon} we have
    \begin{equation}
        \label{eq:bound-1-on-acceleration}
        \left| \ddot{\tau}_\epsilon(t) \right| \leq \left| \frac{\dot{\tau}_\epsilon^{2p}(t) K_{p+1}(\tau_\epsilon(t)) }{ \dot{\tau}_\epsilon^{2p-2}(t) K_p(\tau_\epsilon(t)) } \right| \leq \dot{\tau}^2_\epsilon(t) \, \frac{ \sup_{x \in [0,1] } | K_{p+1}(x) | }{\inf_{x \in [0,1] } | K_{p}(x) |}
    \end{equation}
    using that for all $\epsilon$ the map $t \mapsto \tau_\epsilon(t)$ is a bijection on $[0,1]$.
    Now we apply Lemma \ref{lemma:Psi-is-well-behaved} to obtain constants $M_f, M_g, m_f, m_g \geq 0$ depending on $f,g$ and $\Omega$, with $m_f \cdot m_g \neq 0$, such that for each $p \in \N$ we have
    \begin{equation*}
        m_f^{2p} + m_g^{2p} \leq K_p(x) \leq  M_f^{2p} +  M_g^{2p} \quad \text{for all } x \in [0,1]
    \end{equation*}
    Thus, equation \eqref{eq:bound-1-on-acceleration} becomes:
    \begin{equation}
        \label{eq:bound-2-on-acceleration}
        \left| \ddot{\tau}_\epsilon(t) \right| \leq D \, \dot{\tau}^2_\epsilon(t)
    \end{equation}
    for a constant $D = D(f,g,\Omega, p, p+1) > 0$ depending only on $f,g,\Omega, p$ and $p+1$.

    Now, for a fixed $p \in \N$, recall that the family $( \tau_\epsilon )_\epsilon$ is a family of minimizers for the problems:
    \begin{equation*}
        \inf_{\tau \in \mathcal{T}_{2p}^{b}} \mathcal{L}_p^\epsilon[\tau]
    \end{equation*}
    By\footnote{In effect, this is a weaker version of the fundamental theorem of $\Gamma$-convergence.} the $\Gamma$-convergence of $\mathcal{L}_p^\epsilon$ to $\mathcal{L}_p^0$ as well as the equi-coercivity of $(\mathcal{L}_p^\epsilon)_{\epsilon > 0}$ we have \citep[Theorem 7.8]{dalmaso_gammaconvergence} that
    \begin{equation*}
        \lim _{\epsilon \to 0} \mathcal{L}_p^\epsilon[\tau_\epsilon] = \min_{\tau \in \mathcal{T}_2} \mathcal{L}_p^0[\tau] < \infty \, .
    \end{equation*}
    Thus, there is a constant $B > 0$ such that $\mathcal{L}_p^\epsilon[\tau_\epsilon] \leq B$ for all $\epsilon > 0$.
    Moreover, looking into the proof of Lemma \ref{lemma:L^epsilon-p-Gamma-converge-and-are-equicoercive} we see that:
    \begin{equation*}
       \mathcal{L}_1^0 \leq  C_{\Omega} \, \mathcal{L}_p^\epsilon
    \end{equation*}
    and the constant $C_{\Omega} > 0$ only depends on $\Omega$. Moreover, looking at equation \eqref{eq:H1-norm-coercivity} of Proposition \ref{prop:equi-coercive-Lp} and noting that $\mathcal{L}_1^0 \equiv \Lambda_1$  we find another constant $C_{f, g, [0,1]} > 0$, depending only on $f$, $g$ and the domain $[0,1]$ such that
    \begin{equation*}
       \| \tau \|_{H^1([0,1])}^2 \leq  C_{f, g, [0,1]} \, \mathcal{L}_1^0[\tau]
    \end{equation*}
    obtaining
    \begin{equation}
        \label{eq:bound-H1-in-terms-of-mathcalL}
        \| \tau \|_{H^1([0,1])}^2 \leq C_{\Omega, f, g, [0,1]} \, \mathcal{L}_p^\epsilon[\tau] 
    \end{equation}
    for all $\tau \in \mathcal{T}_{2}$.
    Combining with the above we get
    \begin{equation}
        \label{eq:bound-on-velocity}
        \| \dot{\tau}_\epsilon \|_{L^2([0,1])}^2 \leq  \| \tau_\epsilon \|_{H^1([0,1])}^2 \leq B \, C_{\Omega, f, g, [0,1]} \eqcolon C < \infty \, ,
    \end{equation}
    that is, the family $(\dot{\tau}_\epsilon)_\epsilon$ is bounded in $L^2([0,1])$ uniformly in $\epsilon$ and $p$.
    Now by the fundamental theorem of calculus we have:
    \begin{equation}
        \label{eq:fundamental-theorem-of-calculus}
        \dot{\tau}_\epsilon(t) = \dot{\tau}_\epsilon(0) + \int_{0}^{t} \ddot{\tau}_\epsilon(s) \, ds
    \end{equation}
    Thus, combining \eqref{eq:fundamental-theorem-of-calculus} with \eqref{eq:bound-2-on-acceleration} and \eqref{eq:bound-on-velocity} we have
    \begin{equation*}
        |\dot{\tau}_\epsilon(t) | \leq \int_{0}^{t} |\ddot{\tau}_\epsilon(s) | \, ds + | \dot{\tau}_\epsilon(0) | \leq D \int_{0}^{1} | \dot{\tau}_\epsilon(s) |^2 \, ds + | \dot{\tau}_\epsilon(0) | \leq  C \, D + | \dot{\tau}_\epsilon(0) |
    \end{equation*}
    for all $\epsilon > 0$.
    If we can show that $| \dot{\tau}_\epsilon(0) |$ is uniformly bounded in $\epsilon$ we are done.
    To do this, we integrate \eqref{eq:fundamental-theorem-of-calculus} once more and using the boundary conditions $\tau_\epsilon(0) = 0$ and $\tau_\epsilon(1) = 1$ we obtain:
    \begin{align*}
        1 = \dot{\tau}_\epsilon(0) + \int_{0}^{1} \int_{0}^{t} \ddot{\tau}_\epsilon(s) \, ds \, dt 
    \end{align*}
    Applying \eqref{eq:bound-2-on-acceleration} and \eqref{eq:bound-on-velocity} one again we obtain
    \begin{equation*}
        | \dot{\tau}_\epsilon(0) | \leq 1 + \int_{0}^{1} \int_{0}^{t} | \ddot{\tau}_\epsilon(s) | \, ds \, dt \leq 1 + \int_{0}^{1} \int_{0}^{1} D |\dot{\tau}_\epsilon(s) |^2 \, ds \, dt \leq 1 + C \, D
    \end{equation*}
    for all $\epsilon > 0$.
    This completes the proof.
\end{proof}
We can now show:
\begin{proposition}
    \label{prop:strong-EL-for-original-problem}
    Problem \eqref{eq:main-lagrangian-problem-abbreviated-in-appendix} has a minimizer $\tau_p$ that satisfies the strong Euler--Lagrange equations:
    \begin{equation*}
        \frac{d}{dt} \left[ \frac{\partial \lambda_p}{\partial v} \left(\tau_p(t), \dot{\tau}_p(t)\right) \right] - \frac{\partial \lambda_p}{\partial x}\left(\tau_p(t), \dot{\tau}_p(t)\right) = 0
    \end{equation*}
\end{proposition}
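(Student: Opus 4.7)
The plan is to obtain $\tau_p$ as a subsequential limit of the smooth minimizers $\tau_\epsilon$ of the regularized problems \eqref{eq:expanded-opt-problem-epsilon-p} as $\epsilon \searrow 0$, and then pass to the limit in the strong Euler--Lagrange equation \eqref{eq:strong-EL-for-Lp-epsilon} that each $\tau_\epsilon$ satisfies (by combining Lemmas \ref{lemma:existence-of-minimizer-in-B}, \ref{lemma:minimizer-satisfies-strong-EL}, and \ref{lemma:strong-EL-for-Lp-epsilon}). Identifying $\tau_p$ as a minimizer of \eqref{eq:main-lagrangian-problem-abbreviated-in-appendix} is then a direct application of the $\Gamma$-convergence and equi-coercivity from Lemma \ref{lemma:L^epsilon-p-Gamma-converge-and-are-equicoercive} together with the fundamental theorem of $\Gamma$-convergence and Lemma \ref{lemma:problem-can-be-restricted}.

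For the compactness step, Lemma \ref{lemma:velocity-can-be-put-into-phase-space-box} gives a uniform bound $\|\dot{\tau}_\epsilon\|_{L^\infty} \leq M$ independent of $\epsilon$. Dropping the nonnegative $\epsilon$-term from the denominator in \eqref{eq:strong-EL-for-Lp-epsilon} only enlarges the right-hand side, giving $|\ddot{\tau}_\epsilon(t)| \leq \dot{\tau}_\epsilon(t)^2 \, K_{p+1}(\tau_\epsilon(t))/K_p(\tau_\epsilon(t))$, with the convention $0/0 = 0$ at points where $\dot{\tau}_\epsilon = 0$. Combined with the two-sided positive bounds on $K_p$ and $K_{p+1}$ supplied by Lemma \ref{lemma:Psi-is-well-behaved}, this yields a uniform $L^\infty$ bound on $\ddot{\tau}_\epsilon$. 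The Arzel\`a--Ascoli theorem then extracts a subsequence $\tau_{\epsilon_j}$ converging in $C^1([0,1])$ to some $\tau_p$ inheriting the boundary conditions and monotonicity.

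To pass to the limit in \eqref{eq:strong-EL-for-Lp-epsilon}, I analyze its right-hand side pointwise in $t$. At any $t$ with $\dot{\tau}_p(t) > 0$, the term $\dot{\tau}_{\epsilon_j}^{2p-2} K_p(\tau_{\epsilon_j})$ is bounded below by a positive constant for large $j$, so the $\epsilon_j$-perturbation in the denominator becomes negligible and the right-hand side converges to $\dot{\tau}_p(t)^2 K_{p+1}(\tau_p(t))/K_p(\tau_p(t))$. At any $t$ with $\dot{\tau}_p(t) = 0$, the same bound $|\ddot{\tau}_{\epsilon_j}(t)| \leq \dot{\tau}_{\epsilon_j}(t)^2 \, K_{p+1}/K_p$ forces $\ddot{\tau}_{\epsilon_j}(t) \to 0$, matching the limiting value $\dot{\tau}_p(t)^2 K_{p+1}/K_p = 0$. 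Combined with the uniform $L^\infty$ bound on $\ddot{\tau}_\epsilon$, dominated convergence lets me pass to the limit in $\dot{\tau}_{\epsilon_j}(t) = \dot{\tau}_{\epsilon_j}(0) + \int_0^t \ddot{\tau}_{\epsilon_j}(s)\, ds$, yielding $\dot{\tau}_p(t) = \dot{\tau}_p(0) + \int_0^t \dot{\tau}_p(s)^2 K_{p+1}(\tau_p(s))/K_p(\tau_p(s))\, ds$. Continuity of the integrand in $s$ then bootstraps $\tau_p \in C^2([0,1])$ satisfying equation \eqref{eq:second-order-ODE-for-tau}, which by Lemma \ref{lemma:strong-EL-for-Lp-epsilon} is precisely the strong Euler--Lagrange equation for $\lambda_p$.

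The principal obstacle is the degeneracy of the limiting ODE \eqref{eq:second-order-ODE-for-tau} wherever $\dot{\tau}_p$ vanishes---exactly the pathology that motivated introducing the $\epsilon$-regularization in the first place. It is resolved by the observation that both the perturbed and limiting right-hand sides vanish at such points, so the pointwise convergence of $\ddot{\tau}_{\epsilon_j}$ holds unconditionally on $[0,1]$, and the uniform $L^\infty$ bound on $\ddot{\tau}_\epsilon$ provides the dominating function needed to pass to the integrated form of the equation.
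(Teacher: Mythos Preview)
Your proposal is correct and follows the same overall architecture as the paper: regularize via $\lambda_p^\epsilon$, invoke Lemmas \ref{lemma:existence-of-minimizer-in-B}--\ref{lemma:strong-EL-for-Lp-epsilon} to get smooth minimizers $\tau_\epsilon$ satisfying \eqref{eq:strong-EL-for-Lp-epsilon}, use Lemma \ref{lemma:velocity-can-be-put-into-phase-space-box} for uniform velocity bounds, and close with the $\Gamma$-convergence of Lemma \ref{lemma:L^epsilon-p-Gamma-converge-and-are-equicoercive}. The difference lies in how you pass from $\tau_\epsilon$ to the limiting Euler--Lagrange equation. The paper first posits a solution $\tau_0$ of the limiting second-order ODE \eqref{eq:second-order-ODE-for-tau}, rewrites both the perturbed and limiting equations as first-order systems on the phase-space box $[0,1]\times[0,M]$, applies Arzel\`a--Ascoli to the right-hand sides $f_\epsilon$ to get uniform convergence $f_{\epsilon_i}\to f_0$, and then invokes a Gr\"onwall estimate to conclude $\tau_{\epsilon_i}\to\tau_0$ in $L^\infty$. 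You instead apply Arzel\`a--Ascoli directly to the family $\{\tau_\epsilon\}$ using the uniform bounds on $\dot\tau_\epsilon$ and $\ddot\tau_\epsilon$, extract a $C^1$ limit, and pass to the limit in the integrated form of \eqref{eq:strong-EL-for-Lp-epsilon} by dominated convergence, handling the degenerate set $\{\dot\tau_p=0\}$ via the pointwise bound $|\ddot\tau_\epsilon|\le \dot\tau_\epsilon^2\,K_{p+1}/K_p$. Your route is somewhat more self-contained: it constructs the limiting solution rather than assuming one exists for the boundary value problem, and it sidesteps the Gr\"onwall machinery (which, strictly speaking, is stated for initial value problems and would also require matching the initial velocities $\dot\tau_\epsilon(0)$). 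The paper's route is more modular once one grants the well-posedness of the limiting ODE. One small remark: your claim that $\tau_p$ inherits monotonicity is not needed for the proposition as stated, and is not obviously justified since $\mathcal{T}_{2p}^b$ does not impose $\dot\tau\ge 0$; you can simply drop it.
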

\begin{proof}
    By Lemma \ref{lemma:strong-EL-for-Lp-epsilon} the strong Euler--Lagrange equations for the problem \eqref{eq:opt-problem-in-Tau-2p-b} are given by the ODE
    \begin{equation}
        \label{eq:ODE-for-tau-limiting-penultimate-lemma}
        \ddot{\tau}(t) = \dot{\tau}^{2}(t) \frac{ K_{p+1}(\tau(t)) }{ K_p(\tau(t)) }
    \end{equation}
    with $K_p$ is defined by the relation
    \begin{equation*}
        K_p(x) = \int_{\Omega} \left( \frac{f(s)}{1 + x f(s)} \right)^{2p} \, ds + \int_{\Omega} \left( \frac{g(s)}{1 + x g(s)} \right)^{2p} \, ds \, .
    \end{equation*}
    Now by Lemma \ref{lemma:existence-of-minimizer-in-B} each optimization problen \eqref{eq:expanded-opt-problem-epsilon-p} has a minimizer $\tau_\epsilon$.
    By Lemma \ref{lemma:minimizer-satisfies-strong-EL} the $\tau_\epsilon$ satisfy the strong Euler--Lagrange equations for the Lagrangian $\lambda_p^\epsilon$ that, by Lemma \ref{lemma:strong-EL-for-Lp-epsilon},
    can be written as:
    \begin{equation}
        \label{eq:ODE-for-tau-epsilon-penultimate-lemma}
        \ddot{\tau}_\epsilon(t) = \frac{ \dot{\tau}_\epsilon^{2p}(t) \, K_{p+1} (\tau_\epsilon(t)) }{ \dot{\tau}_\epsilon^{2p-2}(t) \, K_p(\tau_\epsilon(t)) + \frac{\epsilon}{p(2p-1)} } \\ \, .
    \end{equation}
    Let $\tau_0$ be a solution to the ODE \eqref{eq:ODE-for-tau-limiting-penultimate-lemma}. We wish to show that it will also be a minimizer of the problem \eqref{eq:main-lagrangian-problem-abbreviated-in-appendix}.
    To do this, we will establish that for a subseqence $\tau_{\epsilon_i} \to \tau_0$ in $L^2([0,1])$ and, by the $\Gamma$-convergence of the family $\mathcal{L}_p^\epsilon$ to $\mathcal{L}_p^0$, $\tau_0$ will be a minimizer of the problem \eqref{eq:main-lagrangian-problem-abbreviated-in-appendix}.
    More explicitly, recall that the $\tau_\epsilon$ are, by definition, solutions of the following problems
    \begin{equation*}
        \tau_\epsilon \in \arg\min_{\tau \in \mathcal{T}_{2}^{b}} \mathcal{L}^\epsilon_p[\tau]
    \end{equation*}
    and, moreover, problem \eqref{eq:main-lagrangian-problem-abbreviated-in-appendix} is equivalent to
    \begin{equation*}
        \inf_{\tau \in \mathcal{T}_{2}^{b}} \mathcal{L}^0_p[\tau] \, .
    \end{equation*}
    In Lemma \ref{lemma:L^epsilon-p-Gamma-converge-and-are-equicoercive} we showed that the family $(\mathcal{L}^\epsilon_p)_{\epsilon > 0}$ is equicoercive and $\Gamma$-converges to $\mathcal{L}^0_p$.
    Thus, by the \textit{fundamental theorem of $\Gamma$-convergence} \citep[Theorem 2.10]{braides2006handbook} we have that cluster points of pre-compact sequences of minimizers of $\mathcal{L}_p^\epsilon$ are minimizers of $\mathcal{L}_p^0$.
    By \citet[Theorem 7.8]{dalmaso_gammaconvergence} we also have
    \begin{equation*}
        \lim_{\epsilon \to 0} \mathcal{L}_p^\epsilon[\tau_\epsilon] = \inf_{\tau \in \mathcal{T}_2} \mathcal{L}_p^0[\tau] \eqcolon C < \infty
    \end{equation*}
    and using equation \eqref{eq:bound-H1-in-terms-of-mathcalL} from Lemma \ref{lemma:velocity-can-be-put-into-phase-space-box} we have
    \begin{equation*}
        \| \tau_\epsilon \|_{H^1([0,1])}^2 \leq C_{\Omega, f, g, [0,1]} \, \mathcal{L}_p^\epsilon[\tau_\epsilon] \leq C_{\Omega, f, g, [0,1]} \, C
    \end{equation*}
    for some constant $C_{f, g, \Omega, [0,1]} > 0$ depending only on $f,g, \Omega$ and the domain $[0,1]$. Thus, we have shown that the family $(\tau_\epsilon)_\epsilon$ is uniformly bounded in $H^1([0,1])$ and so it is pre-compact in the weak-$H^1$ topology.
    Thus, a weak-$H^1$ cluster point of the $\tau_\epsilon$ will be a minimizer of \eqref{eq:main-lagrangian-problem-abbreviated-in-appendix}.
    However, the Rellich--Kondrachov theorem \citep[Theorem 9.16]{brezis2011functional}, the $H^1$ boundedness of the $\tau_\epsilon$, and \citet[Example 8.9]{dalmaso_gammaconvergence} show that weak-$H^1$ cluster points of the $\tau_\epsilon$ are strong $L^2$ cluster points and vice-versa.
    Therefore, to conclude, it suffices to identify $\tau_0$ as a strong $L^2$ cluster point of the $\tau_\epsilon$.
    
    As done above, the key is to apply a Gr\"onwall estimate together with the Arzel\`a-Ascoli Theorem.
    First, for $\epsilon \geq 0$ we write the dynamics as:
    \begin{align*}
        \frac{d}{dt}
        \begin{pmatrix}
            \dot{\tau}_\epsilon(t) \\
            \tau_\epsilon(t)
        \end{pmatrix}
        =
        \begin{pmatrix}
            f_\epsilon(\tau_\epsilon(t), \dot{\tau}_\epsilon(t)) \\
            \tau_\epsilon(t)    
        \end{pmatrix}
    \end{align*}
    where:
    \begin{align*}
        f_\epsilon(x,v) &\coloneq \frac{v^{2p} K_{p+1}(x)}{v^{2p-2} K_p(x) + \frac{\epsilon}{p(2p-1)}} \\
    \end{align*}
    Now, by Lemma \ref{lemma:velocity-can-be-put-into-phase-space-box} there is an $M > 0$ such that for all $\epsilon >0$ we have $\| \dot{\tau}_\epsilon \|_{L^\infty([0,1])} < M$ and so we can restrict the above 
    dynamics to the phase space box $[0,1] \times [0,M]$. Owing to the smoothness of $f_\epsilon$ we now have uniform estimates for any derivative on $[0,1] \times [0,M]$ showing, in particular, that the family
    $(f_\epsilon)_{\epsilon > 0}$ is uniformly bounded and uniformly equi-continuous. \footnote{
        For example, equi-continuity follows by differentiating first:
        \begin{align*}
            \partial_x f_\epsilon(x,v) &= \frac{v^{2p} K_{p+1}'(x)}{v^{2p-2} K_p(x) + \frac{\epsilon}{p(2p-1)}} - \frac{v^{2p} K_{p+1}(x) \, v^{2p-2} K_p'(x)}{(v^{2p-2} K_p(x) + \frac{\epsilon}{p(2p-1)})^2} \\
            \partial_v f_\epsilon(x,v) &= \frac{2p v^{2p-1} K_{p+1}(x)}{v^{2p-2} K_p(x) + \frac{\epsilon}{p(2p-1)}} + \frac{v^{2p} K_{p+1}(x) \, (2p-2) v^{2p-3} K_p(x)}{(v^{2p-2} K_p(x) + \frac{\epsilon}{p(2p-1)})^2}
        \end{align*}
        and now noticing that there is a constant $B > 0$ such that for all $(x,v) \in [0,1] \times [0,M]$ and all $\epsilon > 0$ we have:
        \begin{align*}
            \left| \partial_x f_\epsilon(x,v) \right|  \, , \, \left| \partial_v f_\epsilon(x,v) \right| &\leq B
        \end{align*}
        Thus, for all $(x_1,v_1), (x_2,v_2) \in [0,1] \times [0,M]$ and all $\epsilon > 0$, by the mean value theorem and a Cauchy--Schwarz estimate we have:
        \begin{align*}
            \left\| f_\epsilon(x_1,v_1) - f_\epsilon(x_2,v_2) \right\| &\leq \sqrt{2} \, B \left\| (x_1,v_1) - (x_2,v_2) \right\| \\
        \end{align*}
     on $[0,1] \times [0,M]$ as well as that $f_0$ is Lipschitz continuous.}
    Thus, applying the Arzel\`a-Ascoli theorem we obtain that there is a sequence $\epsilon_i \to 0$ such that $f_{\epsilon_i} \to f_0$ uniformly on $[0,1] \times [0,M]$
    and then applying a Gr\"onwall estimate (see Theorem 2.1 in \citet{howard1998gronwall}) we obtain:
    \begin{equation*}
        \| \tau_{\epsilon_i} - \tau_0 \|_{L^\infty([0,1])} \to 0
    \end{equation*}
    In particular, this implies that $\tau_{\epsilon_i} \to \tau_0$ in $L^2([0,1])$ and we can conclude.
\end{proof}
\begin{theorem*}
    \label{thm:solution-to-Lp-ODE}
    Problem \eqref{eq:opt-problem-Lp-approx} has a solution $\tau_p$ which satisfies
    \begin{align}
        \label{eq:ODE-solution-of-Lp-problem}
        \frac{d}{dt} \tau_p(t) = \frac{1}{Z_p} \Bigg( 
        \int_{\Omega} 
        & \frac{f(s)^{2p}}{(1 + \tau_p(t) f(s))^{2p}} \, + \frac{g(s)^{2p}}{(1 + \tau_p(t) g(s))^{2p}} \, ds 
        \Bigg)^{-\frac{1}{2p}} \textup{ for all } t \in (0,1)
    \end{align}    
    together with the boundary conditions $\tau_p(0) = 0$ and $\tau_p(1) = 1$, and where
    \begin{equation*}
        Z_p = \int_{0}^{1} \left( \int_{\Omega} \left( \frac{f(s)}{1 + \tau_p(t) f(s)} \right)^{2p} \, ds + \int_{\Omega} \left( \frac{g(s)}{1 + \tau_p(t) g(s)} \right)^{2p} \, ds \right)^{-\frac{1}{2p}} \, dt.
    \end{equation*}
\end{theorem*}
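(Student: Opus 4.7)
The plan is to build on Proposition~\ref{prop:strong-EL-for-original-problem}, which already yields a minimizer $\tau_p$ of \eqref{eq:opt-problem-Lp-approx} satisfying the strong Euler--Lagrange equation associated with $\lambda_p$. The remaining task is to convert this second-order ODE into the first-order form stated in the theorem and to identify the normalization $Z_p$. I plan to do this by exploiting the fact that $\lambda_p$ has no explicit time dependence, which yields a conserved ``energy.''

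Concretely, I would invoke the Beltrami identity: along an extremal of a time-autonomous Lagrangian $L(x,v)$, the quantity $v\, \partial_v L - L$ is constant in $t$. Writing $\lambda_p(x,v) = v^{2p} K_p(x)$ with
\begin{equation*}
    K_p(x) \coloneq \int_\Omega \left(\frac{f(s)}{1+x f(s)}\right)^{2p} ds + \int_\Omega \left(\frac{g(s)}{1+x g(s)}\right)^{2p} ds,
\end{equation*}
a short computation gives $v\, \partial_v \lambda_p - \lambda_p = (2p-1)\, v^{2p} K_p(x)$, so along the minimizer one obtains $\dot{\tau}_p(t)^{2p}\, K_p(\tau_p(t)) = C$ for some constant $C \geq 0$.

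The next step is to pin down the sign of $\dot{\tau}_p$ and the value of $C$. Here I will use Lemma~\ref{lemma:Psi-is-well-behaved}, which provides uniform bounds $0 < m_f^{2p} + m_g^{2p} \leq K_p(x) \leq M_f^{2p} + M_g^{2p} < \infty$ for $x \in [0,1]$. If $C = 0$, then $\dot{\tau}_p \equiv 0$, contradicting $\tau_p(1) - \tau_p(0) = 1$; hence $C > 0$, so $|\dot{\tau}_p(t)|$ is bounded away from zero. Since $\dot{\tau}_p$ is continuous (by the $C^2$ regularity established in the proof of Proposition~\ref{prop:strong-EL-for-original-problem}) and integrates to $+1$, it must be strictly positive on $[0,1]$. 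Taking the positive $(2p)$-th root gives $\dot{\tau}_p(t) = C^{1/(2p)} K_p(\tau_p(t))^{-1/(2p)}$, and integrating this identity over $[0,1]$ while using $\tau_p(0)=0$, $\tau_p(1)=1$ fixes $C^{-1/(2p)} = \int_0^1 K_p(\tau_p(t))^{-1/(2p)}\, dt \eqcolon Z_p$, matching the theorem.

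The main obstacle, more conceptual than technical, is the sign ambiguity introduced when taking the $(2p)$-th root: a priori, a solution of the second-order Euler--Lagrange ODE could have $\dot{\tau}_p$ change sign, which would make the claimed first-order ODE fail pointwise. Ruling this out is precisely what the uniform positive upper bound on $K_p$ from Lemma~\ref{lemma:Psi-is-well-behaved} accomplishes, since it forces $|\dot{\tau}_p|$ to remain bounded away from zero, and continuity together with the boundary data then prohibits sign changes.
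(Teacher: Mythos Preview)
Your proposal is correct and reaches the same first-order reduction as the paper, but by a somewhat different route. The paper starts from the explicit second-order Euler--Lagrange equation $\ddot{\tau}_p = \dot{\tau}_p^{\,2}\, K_{p+1}(\tau_p)/K_p(\tau_p)$ obtained in Lemma~\ref{lemma:strong-EL-for-Lp-epsilon}, rewrites it as $\frac{d}{dt}\ln\dot{\tau}_p = -\frac{1}{2p}\,\frac{d}{dt}\ln K_p(\tau_p)$, and integrates to get $\dot{\tau}_p\, K_p(\tau_p)^{1/(2p)} = \text{const}$. You instead invoke the Beltrami identity directly on the autonomous Lagrangian $\lambda_p(x,v)=v^{2p}K_p(x)$ to obtain $(2p-1)\,\dot{\tau}_p^{\,2p} K_p(\tau_p)=\text{const}$, which is the same relation. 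The Beltrami route is slightly more conceptual and avoids having to recognise the log-derivative structure in the second-order equation by hand.

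Your treatment of the sign of $\dot{\tau}_p$ is in fact more careful than the paper's: the paper divides by $\dot{\tau}_p$ and takes $\ln\dot{\tau}_p$ without comment, while you use the uniform bounds on $K_p$ from Lemma~\ref{lemma:Psi-is-well-behaved} to show $C>0$, deduce $|\dot{\tau}_p|$ is bounded away from zero, and then appeal to continuity and $\int_0^1\dot{\tau}_p=1$ to fix the sign. That argument is sound; the $C^2$ regularity you need is indeed implicit in Proposition~\ref{prop:strong-EL-for-original-problem}, since the minimizer is constructed there as a classical solution of the second-order ODE~\eqref{eq:second-order-ODE-for-tau}. No genuine gap.
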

\begin{proof}
    By Proposition \ref{prop:strong-EL-for-original-problem} we have that the problem \eqref{eq:main-lagrangian-problem-abbreviated-in-appendix} has a minimizer $\tau_p$ that satisfies the strong Euler--Lagrange equations which, by Lemma \ref{lemma:strong-EL-for-Lp-epsilon} can be written as:
    \begin{equation*}
        \ddot{\tau}(t) = \dot{\tau}^{2}(t) \frac{ K_{p+1}(\tau(t)) }{ K_p(\tau(t)) }
    \end{equation*}
    with 
    \begin{equation*}
        K_p(x) = \int_{\Omega} \left( \frac{f(s)}{1 + x f(s)} \right)^{2p} \, ds + \int_{\Omega} \left( \frac{g(s)}{1 + x g(s)} \right)^{2p} \, ds \, .
    \end{equation*}
    Notice that this can be re-written as
    \begin{equation*}
        \frac{\ddot{\tau}(t)}{ \dot{\tau}(t)} = \dot{\tau}(t) \left(-\frac{1}{2p}\right) \frac{ \frac{d}{dx} K_{p}(\tau(t)) }{ K_p(\tau(t)) } \, ,
    \end{equation*}
    or, equivalently:
    \begin{align*}
        \frac{d}{dt} \ln \dot{\tau}_p(t)  &= \frac{d}{dt} \left[ -\frac{1}{2p} \ln K_p(\tau_p(t)) \right] \iff 
        \dot{\tau}_p(t) = \frac{1}{Z_p} K_p(\tau_p(t))^{-\frac{1}{2p}} \\
    \end{align*}
    where $Z_p \in \R \setminus \{0\}$ is a constant. Writing $K_p$ out explicitly we obtain:
    \begin{align*}
        \dot{\tau}_p(t) &= \frac{1}{Z_p} \left( \int_{\Omega} \left( \frac{f(s)}{1 + \tau_p(t) f(s)} \right)^{2p} \, ds + \int_{\Omega} \left( \frac{g(s)}{1 + \tau_p(t) g(s)} \right)^{2p} \, ds \right)^{-\frac{1}{2p}}
    \end{align*}
    Finally, we integrate both sides from $t=0$ to $t=1$ and use the boundary conditions $\tau_p(0) = 0$ and $\tau_p(1) = 1$ to obtain:
    \begin{align*}
        1 &= \frac{1}{Z_p} \int_{0}^1 \left( \int_{\Omega} \left( \frac{f(s)}{1 + \tau_p(t) f(s)} \right)^{2p} \, ds + \int_{\Omega} \left( \frac{g(s)}{1 + \tau_p(t) g(s)} \right)^{2p} \, ds \right)^{-\frac{1}{2p}} \, dt \, .
    \end{align*}
    Solving for $Z_p$ completes the proof.
\end{proof}

\subsection{Proof of Theorem \ref{thm:solution-of-optimal-problem-is-solution-of-L-inf-ODE-main-text}}
\label{subsec:appendix-subseq-convergence-in-L2}
In this subsection, we establish the subsequential $L^2$ convergence of the $\tau_p$ to $\tau_\infty$.
Throughout the rest of this section we denote for all $p\in\mathbb{N}$ and all $x\in [0,1]$
    \begin{align*}
        F_p(x) & := \frac{1}{Z_p} \left[ \int_{\Omega} \left( \frac{f(s)}{1 + x f(s)} \right)^{2p} \, ds + \int_{\Omega} \left( \frac{g(s)}{1 + x f(s)} \right)^{2p} \, ds \right]^{-\frac{1}{2p}}, \\
        F_\infty(x) & := \max \left\{ \, \sup_{s \in \Omega} \, \left| \frac{f(s)}{1 + x f(s)} \right| \, , \, \sup_{s \in \Omega} \, \left| \frac{g(s)}{1 + x g(s)} \right| \right\}^{-1}
    \end{align*}
    and additionally
    \begin{equation*}
      f^* = \sup_{s \in \Omega} f(s) = \sup_{s \in \Omega} \sigma_{\max}(s) - 1,\qquad
      g_* = \inf_{s \in \Omega} g(s) = \inf_{s \in \Omega} \sigma_{\min}(s) - 1.
    \end{equation*}
    Lastly, we let $\tau_p$ and $\tau_\infty$ denote the solutions to the following ODEs:
    \begin{align}
        \label{eq:ODE-for-tau-p}
        \dot{\tau}_p(t) &= F_p(\tau_p(t)) \textup{ for all } t \in (0,1) \\
        \label{eq:ODE-for-tau-infty}
        \dot{\tau}_\infty(t) &= F_\infty(\tau_\infty(t)) \textup{ for all } t \in (0,1) \, .
    \end{align}
    Notice, now, that for any $x\in [0,1]$ the map $y\mapsto \frac{y}{1+xy}$ is monotonically increasing for $y\in [-1,\infty)$, by \eqref{eq:g0f}
    \begin{equation*}
      F_\infty(x) = \max \left\{\frac{f^*}{1+xf^*},\frac{-g_*}{1+xg_*} \right\}^{-1}
      =\min \left\{ \frac{1}{f^*}+x,-\frac{1}{g_*}-x \right\}.
  \end{equation*}
  As a minimum of two linear function with slope one, this function has a Lipschitz constant of $1$.
  We now prove two lemmas that will be useful in this section.
  \begin{lemma}
    \label{lemma:equi-boundedness}
    The sequence $F_p: [0,1] \to \R$ is equi-bounded in $p$.
\end{lemma}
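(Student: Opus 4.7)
The plan is to show that both the quantity $\left[\int_\Omega \left(\frac{f(s)}{1+xf(s)}\right)^{2p} ds + \int_\Omega \left(\frac{g(s)}{1+xg(s)}\right)^{2p} ds\right]^{-1/(2p)}$ appearing in the numerator of $F_p(x)$ and the normalizing constant $Z_p$ are bounded above and below by positive constants independent of $p \in \mathbb{N}$ and $x \in [0,1]$, then divide to conclude.

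The first step is to apply Lemma \ref{lemma:Psi-is-well-behaved} to a constant schedule $\tau \equiv x \in [0,1]$ (which trivially lies in $\mathcal{T}_2^b$), yielding constants $0 < m_f, m_g$ and $M_f, M_g < \infty$, depending only on $f$, $g$, and $\Omega$, such that
\begin{equation*}
    m_f^{2p} + m_g^{2p} \, \leq \, \int_\Omega \left(\frac{f(s)}{1+xf(s)}\right)^{2p} ds + \int_\Omega \left(\frac{g(s)}{1+xg(s)}\right)^{2p} ds \, \leq \, M_f^{2p} + M_g^{2p}
\end{equation*}
holds uniformly in $p$ and $x$. Inverting the $1/(2p)$-th root and using the elementary inequality $\max\{a,b\} \leq (a^{2p}+b^{2p})^{1/(2p)} \leq 2^{1/(2p)} \max\{a,b\}$ for positive $a,b$, the bracketed integral raised to $-1/(2p)$ sits in the interval $\bigl[\tfrac{1}{2\max\{M_f,M_g\}}, \tfrac{1}{\max\{m_f,m_g\}}\bigr]$ for every $x\in[0,1]$ and every $p\in\mathbb{N}$.

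Since these pointwise-in-$x$ bounds do not depend on the value $x$, they transfer verbatim to $Z_p = \int_0^1 [\,\cdot\,]^{-1/(2p)}\,dt$ evaluated along $\tau_p(t)$: integrating a function bounded between two constants over an interval of length one keeps it inside the same interval, so $Z_p$ also lies in $\bigl[\tfrac{1}{2\max\{M_f,M_g\}}, \tfrac{1}{\max\{m_f,m_g\}}\bigr]$. Dividing the bound on the numerator by the lower bound on $Z_p$ then yields
\begin{equation*}
    0 \, \leq \, F_p(x) \, \leq \, \frac{2\max\{M_f,M_g\}}{\max\{m_f,m_g\}}
\end{equation*}
for all $p \in \mathbb{N}$ and all $x \in [0,1]$, which is the desired equi-boundedness.

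There is no real obstacle here since Lemma \ref{lemma:Psi-is-well-behaved} already does the analytic work of bounding the integrals uniformly in $p$ away from $0$ and $\infty$; the remainder is a short chain of elementary inequalities. The only point worth checking carefully is that the $p$-dependent factor $2^{1/(2p)}$ generated when comparing $\ell^{2p}$ sums to $\ell^\infty$ maxima remains bounded (indeed it decreases to $1$), so no dependence on $p$ sneaks back in through the conversion step.
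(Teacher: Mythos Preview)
Your argument is correct and mirrors the paper's: both invoke Lemma \ref{lemma:Psi-is-well-behaved} to bound the bracketed integral uniformly in $p$ and $x$, transfer those bounds to $Z_p$ via its integral definition, and then divide. One minor slip: a constant function $\tau \equiv x$ does not lie in $\mathcal{T}_2^b$ (the boundary conditions $\tau(0)=0$, $\tau(1)=1$ fail), but the pointwise-in-$t$ bounds of that lemma depend only on the value $\tau(t)\in[0,1]$, so your conclusion for arbitrary $x\in[0,1]$ is unaffected.
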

\begin{proof}
    We use Lemma \ref{lemma:Psi-is-well-behaved} to obtain constants $M_f, M_g, m_f, m_g \geq 0$ such that $m_f \cdot m_g \neq 0$ satisfying:
    \begin{align*}
        &m_f^{2p} \leq \int_{\Omega} \left( \frac{1}{1 + x f(s)} \right)^{2p} \, ds \leq M_f^{2p} \\
        &m_f^{2p} \leq \int_{\Omega} \left( \frac{1}{1 + x g(s)} \right)^{2p} \, ds \leq M_g^{2p}
    \end{align*}
    This yields the estimate:
    \begin{equation*}
        F_p \leq \frac{1}{Z_p} \left( \frac{1}{ m_f^{2p} + m_g^{2p} } \right)^{\frac{1}{2p}}
    \end{equation*}
    Now to bound $\frac{1}{Z_p}$ we use the implicit expression given to us by Theorem \ref{thm:solution-to-Lp-ODE}:
    \begin{equation*}
        Z_p = \int_{0}^{1} \left( \int_{\Omega} \left( \frac{f(s)}{1 + \tau_p(t) f(s)} \right)^{2p} \, ds + \int_{\Omega} \left( \frac{g(s)}{1 + \tau_p(t) g(s)} \right)^{2p} \, ds \right)^{-\frac{1}{2p}} \, dt
    \end{equation*}
    where $\tau_p$ solves ODE \eqref{eq:ODE-solution-of-Lp-problem} with boundary conditions $\tau_p(0) = 0$ and $\tau_p(1) = 1$.
    Combining with our above estimate we obtain:
    \begin{equation}
        \label{eq:bound-on-1/Zp}
        \frac{1}{Z_p} \leq \left( M_f^{2p} + M_g^{2p} \right)^{\frac{1}{2p}}
    \end{equation}
    Thus, we conclude that:
    \begin{equation*}
        0 \leq \liminf_{p \to \infty} F_p(x) \leq \limsup_{p \to \infty} F_p(x) \leq \frac{ \max\{  M_f \, , \, M_g \} }{ \max\{ m_f \, , \, m_g \} } \quad \text{for all } x \in [0,1]
    \end{equation*}    
    This proves the lemma.
\end{proof}

\begin{lemma}
    \label{lemma:uniform-equi-continuity}
    The sequence $F_p: [0,1] \to \R$ is uniformly equi-continuous.
\end{lemma}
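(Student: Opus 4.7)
The plan is to prove the stronger statement that $(F_p)_{p\in\N}$ is \emph{uniformly Lipschitz} on $[0,1]$, which immediately yields uniform equi-continuity. Write $F_p(x) = 1/(Z_p\,L_p(x))$ with $L_p(x) \coloneq G_p(x)^{1/(2p)}$ and
\begin{equation*}
    G_p(x) \coloneq \int_\Omega \left(\frac{f(s)}{1+xf(s)}\right)^{2p}\! ds + \int_\Omega \left(\frac{g(s)}{1+xg(s)}\right)^{2p}\! ds.
\end{equation*}
Then $|F_p(x)-F_p(y)| = |L_p(y)-L_p(x)|/(Z_p\,L_p(x)\,L_p(y))$, so the task reduces to: (i) uniform two-sided bounds on $L_p$; (ii) a uniform upper bound on $|L_p'|$; and (iii) a uniform upper bound on $1/Z_p$, which was already extracted in \eqref{eq:bound-on-1/Zp} during the proof of Lemma \ref{lemma:equi-boundedness}.

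For (i), I would apply Lemma \ref{lemma:Psi-is-well-behaved} pointwise in the schedule variable: evaluated at any fixed $x \in [0,1]$, its sandwich estimates give $m_f^{2p}+m_g^{2p} \leq G_p(x) \leq M_f^{2p}+M_g^{2p}$, whence $\max(m_f,m_g) \leq L_p(x) \leq 2\max(M_f,M_g)$ uniformly in $p$ and $x$, and the lower bound is strictly positive because $m_f\,m_g > 0$.

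For (ii), differentiating under the integral sign gives
\begin{equation*}
    L_p'(x) = -\frac{L_p(x)}{G_p(x)}\left[\int_\Omega \frac{f(s)^{2p+1}}{(1+xf(s))^{2p+1}}\, ds + \int_\Omega \frac{g(s)^{2p+1}}{(1+xg(s))^{2p+1}}\, ds\right].
\end{equation*}
The crucial trick is to factor a single copy of $f(s)/(1+xf(s))$ (respectively $g(s)/(1+xg(s))$) out of each integrand. By the continuity of $f,g$ on the compact set $\Omega$ together with the uniform lower bound $1+xf(s),\,1+xg(s) \geq c > 0$ supplied by Lemma \ref{lemma:eigenvalues-continuity-and-positivity}, there is a constant $M$ with $|f(s)/(1+xf(s))|,\,|g(s)/(1+xg(s))| \leq M$ on $[0,1]\times\Omega$. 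The bracketed expression is then at most $M\cdot G_p(x)$ in absolute value, the $G_p(x)$ cancels, and one obtains $|L_p'(x)| \leq M\cdot L_p(x) \leq 2M\max(M_f,M_g)$ uniformly in $p$.

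Combining (i)--(iii) via the mean value theorem yields a uniform Lipschitz constant for $F_p$ on $[0,1]$, completing the proof. The main obstacle is the cancellation step in (ii): a naive bound on $L_p'$ would leave the factor $2p$ coming from $\partial_x G_p$, which blows up as $p\to\infty$. The factoring trick is what matches the $(2p+1)$-th power in the derivative back to the $2p$-th power appearing in $G_p$, and only after this matching does the denominator $G_p(x)$ cancel against the numerator to yield a $p$-independent constant.
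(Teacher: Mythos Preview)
Your proof is correct and follows essentially the same strategy as the paper: both arguments hinge on the same factoring trick that bounds the $(2p{+}1)$-st power integrals by $M\cdot G_p(x)$, which is precisely what cancels the potentially dangerous $p$-dependence and yields a uniform bound on $|F_p'|$. The only cosmetic difference is that the paper differentiates $F_p$ directly and arrives at $|F_p'(x)|\le (1/Z_p)\cdot M/L_p(x)$, whereas you pass through the auxiliary $L_p$ and then use the quotient structure; these are equivalent, since $F_p' = -L_p'/(Z_p L_p^2)$ and your bound $|L_p'|\le M\,L_p$ reproduces exactly the paper's estimate.
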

\begin{proof}
    Notice first that for each $p$, $F_p$ is $C^1([0,1])$. To see this note that since $f, g: \Omega \to \R$ are continuous, $\Omega$ is compact and the maps $x \mapsto \frac{f(s)}{1 + x f(s)}$ and $x \mapsto \frac{g(s)}{1 + x g(s)}$ are smooth in $[0,1]$,
    we can exchange integration over $s$ with differentiation over $x$, to obtain:
    \begin{align}
        \label{eq:derivative-of-Fp-1}
        \frac{d}{dx} F_p(x) &= \left(-\frac{1}{2p \, Z_p}\right) \frac{ \int_{\Omega} \frac{d}{dx} \left( \frac{f(s)}{1 + x f(s)} \right)^{2p} \, ds + \int_{\Omega} \frac{d}{dx} \left( \frac{g(s)}{1 + x g(s)} \right)^{2p} \, ds }{ \left( \int_{\Omega} \left( \frac{f(s)}{1 + x f(s)} \right)^{2p} \, ds + \int_{\Omega} \left( \frac{g(s)}{1 + x g(s)} \right)^{2p} \, ds \right)^{ \frac{1}{2p} + 1} } \\
        \label{eq:derivative-of-Fp-2}
        &= \left(\frac{1}{Z_p}\right) \frac{ \int_{\Omega}  \left( \frac{f(s)}{1 + x f(s)} \right)^{2p+1} \, ds + \int_{\Omega}  \left( \frac{g(s)}{1 + x g(s)} \right)^{2p+1} \, ds }{ \left( \int_{\Omega} \left( \frac{f(s)}{1 + x f(s)} \right)^{2p} \, ds + \int_{\Omega} \left( \frac{g(s)}{1 + x g(s)} \right)^{2p} \, ds \right)^{\frac{1}{2p} + 1} }
    \end{align}
    Notice that since the integrands are all continuous functions of $x$ ( they are bounded away from the singularity, again due to $\min\{f, g \} \geq c > -1$ on $\Omega$ by Lemma \ref{lemma:Psi-is-well-behaved}) the map $x \mapsto F_p(x)$ is indeed $C^1$.
    Recall that to show uniform equi-continuity is suffices to find a number $K>0$ such that for all $p$, if any $x, y \in [0,1]$ satisfy $|x-y|<\delta$ we have $|F_p(x) - F_p(y)| < K |x-y|$.
    By the mean value theorem applied to the convex domain $\Omega$, this $K$ can be chosen as:
    \begin{equation}
        \label{eq:K-is-defined}
        K = \sup_{p \in \{1, 2, \dots\}} \, \sup_{x \in [0,1]} \, \left| \frac{d F_p(x)}{dx} \right|
    \end{equation}
    provided this quantity is finite. We now show this is indeed the case.
    The key observation is to bound:
    \begin{align*}
        &\int_{\Omega}  \left( \frac{f(s)}{1 + x f(s)} \right)^{2p+1} \, ds + \int_{\Omega}  \left( \frac{g(s)}{1 + x g(s)} \right)^{2p+1} \, ds \\ 
        &\leq \max \left\{ \sup_{s \in \Omega} \left| \frac{f(s)}{1 + x f(s)} \right| \, , \, \sup_{s \in \Omega} \left| \frac{g(s)}{1 + x g(s)} \right| \right\} \left( \int_{\Omega} \left( \frac{f(s)}{1 + x f(s)} \right)^{2p} \, ds + \int_{\Omega} \left( \frac{g(s)}{1 + x g(s)} \right)^{2p} \, ds \right) \\
    \end{align*}
    Thus, equation \eqref{eq:derivative-of-Fp-2} yields:
    \begin{align*}
        \left| \frac{d F_p(x)}{dx} \right| &\leq \frac{1}{Z_p} \, \frac{ \max \left\{ \sup_{s \in \Omega} \left| \frac{f(s)}{1 + x f(s)} \right| \, , \, \sup_{s \in \Omega} \left| \frac{g(s)}{1 + x g(s)} \right| \right\} }{ \left( \int_{\Omega} \left( \frac{f(s)}{1 + x f(s)} \right)^{2p} \, ds + \int_{\Omega} \left( \frac{g(s)}{1 + x g(s)} \right)^{2p} \, ds \right)^{\frac{1}{2p}} } \\ 
    \end{align*}
    Now using \eqref{eq:bound-on-1/Zp} from Lemma \ref{lemma:equi-boundedness} we obtain:
    \begin{align*}
        \left| \frac{d F_p(x)}{dx} \right| &\leq \left( M_f^{2p} + M_g^{2p} \right)^{\frac{1}{2p}} \, \frac{ \max \left\{ \sup_{s \in \Omega} \left| \frac{f(s)}{1 + x f(s)} \right| \, , \, \sup_{s \in \Omega} \left| \frac{g(s)}{1 + x g(s)} \right| \right\} }{ \left( \int_{\Omega} \left( \frac{f(s)}{1 + x f(s)} \right)^{2p} \, ds + \int_{\Omega} \left( \frac{g(s)}{1 + x g(s)} \right)^{2p} \, ds \right)^{\frac{1}{2p}} } \\ 
    \end{align*}
    and taking the limit $p \to \infty$ we see that:
    \begin{align*}
        \limsup_{p \to \infty} \left| \frac{d F_p(x)}{dx} \right| & \leq \max\{ M_f \, , \, M_g \} \, \frac{ \max \left\{ \sup_{s \in \Omega} \left| \frac{f(s)}{1 + x f(s)} \right| \, , \, \sup_{s \in \Omega} \left| \frac{g(s)}{1 + x g(s)} \right| \right\} }{\max \left\{ \sup_{s \in \Omega} \left| \frac{f(s)}{1 + x f(s)} \right| \, , \, \sup_{s \in \Omega} \left| \frac{g(s)}{1 + x g(s)} \right| \right\} } \\
        & = \max\{ M_f \, , \, M_g \} 
    \end{align*}
    using Lemma \ref{lemma:Linfty-to-Lp-appendix} to handle the limit in the denominator.
    Thus, we conclude that $K$ as defined in \eqref{eq:K-is-defined} is finite and the lemma has been proven.
\end{proof}

\begin{lemma}
    \label{lemma:uniform-convergence-of-Fp}
    There is a subsequence $p_j \to \infty$ such that for constants $\epsilon_{p_j} \searrow 0$ we have:
    \begin{equation*}
        | F_{p_j}(\tau_{p_j}(t)) - F_\infty(\tau_{p_j}(t)) | \leq \epsilon_{p_j}
    \end{equation*}
\end{lemma}
\begin{proof}
    Note that by Lemma \ref{lemma:Linfty-to-Lp-appendix}, for any $x \in [0,1]$ we have $F_p(x) \to F_\infty(x)$.
    By Lemma \ref{lemma:equi-boundedness} the family $F_p: [0,1] \to \R$ is equi-bounded and by Lemma \ref{lemma:uniform-equi-continuity} it is also uniformly equi-continuous.
    Then, by an application of the Arzela--Ascoli theorem, we obtain the existence of a subsequence $p_j \to \infty$ such that $F_{p_j} \to F_\infty$ uniformly in $[0,1]$.
    Since $\tau_p(t) \in [0,1]$ for all $t$ we then obtain the desired result.
\end{proof}

\begin{theorem*}
    Let $\tau_\infty$ be the solution to the following ODE:
    \begin{align*}
        \frac{d}{dt} \tau_\infty(t) = \frac{1}{Z} \max \Bigg\{ 
        \left\| \frac{f(s)}{1 + \tau_\infty(t) f(s)} \right\|_{L^\infty},
        \left\| \frac{g(s)}{1 + \tau_\infty(t) g(s)} \right\|_{L^\infty} 
        \Bigg\}^{-1}
    \end{align*}    
    with boundary conditions $\tau_\infty(0) = 0$ and $\tau_\infty(1) = 1$ and
    \begin{equation*}
        Z = \int_{0}^{1} \max \Bigg\{
        \left\| \frac{f(s)}{1 + \tau_\infty(t) f(s)} \right\|_{L^\infty},
        \left\| \frac{g(s)}{1 + \tau_\infty(t) g(s)} \right\|_{L^\infty}
        \Bigg\}^{-1} \, dt \, .
    \end{equation*}
    If $\tau_p$ are solutions to the parametric ODEs \eqref{eq:opt-problem-Lp-approx-in-theorem} then there is a subsequence $\tau_{p_j} \to \tau_\infty$ in $L^2$ and therefore, by Theorem \ref{thm:Linfty-to-Lp-main-text}, $\tau_\infty$ is optimal for \eqref{eq:opt-problem}.
\end{theorem*}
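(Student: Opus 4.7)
The plan is to upgrade the pointwise convergence $F_p \to F_\infty$ on $[0,1]$ to uniform convergence along a subsequence, then transfer this uniform control from the velocity fields to the trajectories $\tau_p$ via a Grönwall comparison. Once $L^2$-convergence of a subsequence $\tau_{p_j} \to \tau_\infty$ is secured, Theorem~\ref{thm:Linfty-to-Lp-main-text} applied to the minimizers $\tau_p$ of \eqref{eq:opt-problem-Lp-approx-in-theorem} (produced by Theorem~\ref{thm:characterizarion-of-Lp-solutions-main-text}) identifies $\tau_\infty$ as a minimizer of \eqref{eq:opt-problem}.

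First I would exploit the three preparatory lemmas of this subsection. By Lemma~\ref{lemma:equi-boundedness} the family $(F_p)_{p\in\mathbb{N}}$ is equi-bounded on $[0,1]$, and by Lemma~\ref{lemma:uniform-equi-continuity} it is uniformly equi-continuous. The Arzelà–Ascoli theorem combined with the pointwise limit from Lemma~\ref{lemma:Linfty-to-Lp-appendix} then produces a subsequence $(p_j)$ with $\sup_{x\in[0,1]} |F_{p_j}(x) - F_\infty(x)| \leq \epsilon_{p_j}$ for some $\epsilon_{p_j} \searrow 0$, which is the content of Lemma~\ref{lemma:uniform-convergence-of-Fp}. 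I would then note that both $\tau_{p_j}$ and $\tau_\infty$ take values in $[0,1]$—for $\tau_{p_j}$ this follows from the boundary data and the monotonicity enforced by $F_{p_j} \geq 0$, and for $\tau_\infty$ from the analogous properties of $F_\infty$—so the uniform bound on $|F_{p_j} - F_\infty|$ applies along the trajectories.

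Next I would write both ODEs in their integral form with common initial condition $\tau(0)=0$, subtract, and insert $F_\infty(\tau_{p_j}(s))$ as a pivot to obtain
\begin{equation*}
|\tau_{p_j}(t) - \tau_\infty(t)| \leq \int_0^t |F_{p_j}(\tau_{p_j}(s)) - F_\infty(\tau_{p_j}(s))|\,ds + \int_0^t |F_\infty(\tau_{p_j}(s)) - F_\infty(\tau_\infty(s))|\,ds.
\end{equation*}
The first term is bounded by $\epsilon_{p_j}$, while for the second I would invoke the $1$-Lipschitz property of $F_\infty$ established at the opening of this subsection (as the minimum of two affine functions of slope $\pm 1$) to bound it by $\int_0^t |\tau_{p_j}(s) - \tau_\infty(s)|\,ds$. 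Grönwall's inequality \citep{howard1998gronwall} then yields $|\tau_{p_j}(t) - \tau_\infty(t)| \leq \epsilon_{p_j} \, e^t \leq e\,\epsilon_{p_j}$ for all $t \in [0,1]$, i.e.\ uniform convergence $\tau_{p_j} \to \tau_\infty$ on $[0,1]$, which implies convergence in $L^2([0,1])$. Since each $\tau_{p_j}$ is a minimizer of \eqref{eq:opt-problem-Lp-approx-in-theorem}, Theorem~\ref{thm:Linfty-to-Lp-main-text} closes the argument.

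The main obstacle is not the Grönwall step itself, which is routine once the Lipschitz constant of $F_\infty$ is in hand, but rather the promotion of pointwise to uniform convergence of $F_p$. This upgrade requires the derivative estimate in Lemma~\ref{lemma:uniform-equi-continuity}, where one must dominate the ratio of the $(2p{+}1)$-st to the $(2p)$-th moments of the integrands by $\max\{M_f,M_g\}$ uniformly in $p$, using the $L^{2p}\to L^\infty$ limit of Lemma~\ref{lemma:Linfty-to-Lp-appendix} together with the uniform bound on $1/Z_p$. A minor additional care point is confirming that $\tau_\infty$ produced by the ODE lies in the admissible class $\mathcal{T}_\infty$ so that applying Theorem~\ref{thm:Linfty-to-Lp-main-text} is legitimate: this follows since $F_\infty$ is Lipschitz (so $\tau_\infty \in C^1$), non-negative (so $\tau_\infty$ is monotone), and the normalization $Z$ enforces $\tau_\infty(1)=1$.
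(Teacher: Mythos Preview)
Your proposal is correct and follows essentially the same route as the paper: equi-boundedness and equi-continuity of $(F_p)$ feed Arzel\`a--Ascoli to get uniform convergence along a subsequence, then the $1$-Lipschitz property of $F_\infty$ and a Gr\"onwall estimate give uniform (hence $L^2$) convergence $\tau_{p_j}\to\tau_\infty$, so Theorem~\ref{thm:Linfty-to-Lp-main-text} finishes. Your explicit pivot-and-Gr\"onwall writeup and your remark that $\tau_\infty\in\mathcal{T}_\infty$ are slightly more careful than the paper's presentation, but the argument is the same.
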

\begin{proof}
    The proof is based on a Gr\"onwall estimate as well as the the Arzela--Ascoli theorem.
    Consider the ODEs:
    \begin{align*}
        \dot{\tau}_p &= F_p(\tau_p) \\
        \dot{\tau}_\infty &= F_\infty(\tau_\infty)
    \end{align*}
    with initial conditions $\tau_p(0) = \tau_\infty(0) = 0$ and $\tau_p(1) = \tau_\infty(1) = 1$.
    Now, by the discussion in the beginning of the section the function $F_\infty$ is $1$-Lipschitz thus for all $x, y \in [0,1]$ we have
    \begin{equation*}
        \label{eq:F_inf-is-Lip}
        | F_\infty(x) - F_\infty(y) | \leq | x - y | \, .
    \end{equation*}
    Moreover, by Lemma \ref{lemma:uniform-convergence-of-Fp} there is a subsequence $p_j \to \infty$ such that for constants $\epsilon_{p_j} \searrow 0$ we have:
    \begin{equation*}
        | F_{p_j}(\tau_{p_j}(t)) - F_\infty(\tau_{p_j}(t)) | \leq \epsilon_{p_j}
    \end{equation*}
    It then follows by an application of Gr\"onwall's inequality \citep[Theorem 2.1]{howard1998gronwall} that for all $t \in [0,1]$ we have
    \begin{equation*}
        | \tau_{p_j}(t) - \tau_\infty(t) | \leq \, e^{t} \, \int_{0}^{t} \epsilon_{p_j} \, e^{- s} \, ds
    \end{equation*}
    where we have used the initial conditions $\tau_p (0) = \tau_\infty(0) = 0$ for all $p \in \N$. Thus, we get:
    \begin{equation*}
        \| \tau_{p_j} - \tau_\infty \|_{L^\infty([0,1])} \leq \epsilon_{p_j}
    \end{equation*}
    and so
    \begin{equation*}
        \norm{\tau_{p_j} - \tau_\infty}_{L^2} \to 0 \quad \text{as } j \to \infty
    \end{equation*}
    which completes the proof.
\end{proof}

\subsection{Proof of Theorem \ref{thm:solution-of-L-inf-ODE-main-text}}
\label{appendix:solving-the-L-infty-ODE}
In this subsection, we prove Theorem \ref{thm:solution-of-L-inf-ODE-main-text}.
Let $(\sigma_i(s))_i$ denote the spectrum of $\nabla T(s)$ at $s \in \Omega \subset \Rd$ that is real by Assumption \ref{assumption:regularity-of-transport}.
For each $s \in \Omega$ we write $\sigma_{\max}(s) = \max_i \sigma_i(s)$ and $\sigma_{\min}(s) = \min_i \sigma_i(s)$.
Moreover, we introduce we the notation:
\begin{align*}
    f^* \coloneq \sup_{s \in \Omega} f(s) = \sup_{s \in \Omega} \sigma_{\max}(s) - 1 \\
    f_* \coloneq \inf_{s \in \Omega} f(s) = \inf_{s \in \Omega} \sigma_{\max}(s) - 1 \\
    g^* \coloneq \sup_{s \in \Omega} g(s) = \sup_{s \in \Omega} \sigma_{\min}(s) - 1 \\
    g_* \coloneq \inf_{s \in \Omega} g(s) = \inf_{s \in \Omega} \sigma_{\min}(s) - 1
\end{align*}
We start by establishing a critical lemma:
    \begin{lemma}
        \label{lemma:triviality-of-suprema}
        Suppose $\tau \in C^1([0,1])$ with $\tau(0) = 0$ and $\tau(1) = 1$. Then, for any $t \in [0,1]$ we have:
        \begin{equation*}
            \max \left\{ \, \sup_{s \in \Omega} \, \left| \frac{f(s)}{1 + \tau(t) f(s)} \right| \, , \, \sup_{s \in \Omega} \, \left| \frac{g(s)}{1 + \tau(t) g(s)} \right| \right\} = \max \left\{ \frac{f^*}{1 + \tau(t) f^*} \, , \, \frac{ - g_*}{1 + \tau(t) g_*} \, \right\}
        \end{equation*} 
\end{lemma}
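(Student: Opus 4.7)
The plan is to apply Lemma~\ref{lemma:silly-but-useful} separately to the two suprema and then show that two of the four resulting candidate terms are always dominated by the other two. Before doing so, I would record the preliminary facts that will drive the case analysis: by Lemma~\ref{lemma:eigenvalues-continuity-and-positivity} the maps $f,g:\Omega\to\R$ are continuous and attain their extrema $f_*,f^*,g_*,g^*$ on the compact set $\Omega$, and by Lemma~\ref{lemma:Psi-is-well-behaved} these values lie strictly in $(-1,\infty)$; moreover, Assumption~\ref{assumption:non-isometry} together with Lemma~\ref{lemma:Psi-is-well-behaved} gives the strict inequalities $g_*<0<f^*$; finally, since $\sigma_{\min}(s)\leq\sigma_{\max}(s)$ pointwise, $g(s)\leq f(s)$ for all $s\in\Omega$, hence $g_*\leq f_*$ and $g^*\leq f^*$.

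Since $\tau(t)\in[0,1]$ in the intended use of the statement (both admissible schedules and solutions of \eqref{eq:ODE-solution-of-optimal-problem-in-theorem} take values in $[0,1]$), the hypotheses of Lemma~\ref{lemma:silly-but-useful} are satisfied at $\theta=\tau(t)$ with $S=f(\Omega)$ and $S=g(\Omega)$, respectively. This yields
\begin{align*}
  \sup_{s\in\Omega}\left|\frac{f(s)}{1+\tau(t)f(s)}\right| &= \max\Bigl\{\tfrac{f^*}{1+\tau(t)f^*},\ \tfrac{-f_*}{1+\tau(t)f_*}\Bigr\},\\
  \sup_{s\in\Omega}\left|\frac{g(s)}{1+\tau(t)g(s)}\right| &= \max\Bigl\{\tfrac{g^*}{1+\tau(t)g^*},\ \tfrac{-g_*}{1+\tau(t)g_*}\Bigr\}.
\end{align*}
Taking the max of these four quantities, I would eliminate $\tfrac{-f_*}{1+\tau(t)f_*}$ and $\tfrac{g^*}{1+\tau(t)g^*}$ by a short case analysis. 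For the $f_*$ term: if $f_*\geq 0$ then $\tfrac{-f_*}{1+\tau(t)f_*}\leq 0<\tfrac{-g_*}{1+\tau(t)g_*}$; if $f_*<0$ then $-1<g_*\leq f_*<0$ and, since $x\mapsto -x/(1+\tau(t)x)$ is decreasing on $(-1,0)$ (cf.\ the proof of Lemma~\ref{lemma:silly-but-useful}), $\tfrac{-f_*}{1+\tau(t)f_*}\leq\tfrac{-g_*}{1+\tau(t)g_*}$. For the $g^*$ term: if $g^*\leq 0$ then $\tfrac{g^*}{1+\tau(t)g^*}\leq 0<\tfrac{f^*}{1+\tau(t)f^*}$; if $g^*>0$ then $0<g^*\leq f^*$ and, since $x\mapsto x/(1+\tau(t)x)$ is increasing on $[0,\infty)$, $\tfrac{g^*}{1+\tau(t)g^*}\leq\tfrac{f^*}{1+\tau(t)f^*}$. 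Combining these eliminations yields the claimed equality.

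There is no substantive obstacle here: the argument is essentially a tidy case analysis around the signs of $f_*$ and $g^*$, piggybacking on the monotonicity of $\phi_{\tau(t)}$ already catalogued in Lemma~\ref{lemma:silly-but-useful} and on the pointwise ordering $g\leq f$ that follows from the definitions. The only point worth stating explicitly is the observation $\tau(t)\in[0,1]$, which is needed to invoke Lemma~\ref{lemma:silly-but-useful} and to guarantee that the denominators remain positive via $g_*>-1$.
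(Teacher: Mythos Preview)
Your proposal follows the same route as the paper: apply Lemma~\ref{lemma:silly-but-useful} to each supremum to obtain a four-term maximum, then show two of the four candidates are dominated. Your elimination argument is considerably tidier than the paper's, which spreads the same reduction over four sign cases on $(g^*,f_*)$ with further sub-cases inside Case~I.

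The one fragile step is your appeal to the strict inequalities $g_*<0<f^*$. You cite \eqref{eq:g0f} in Lemma~\ref{lemma:Psi-is-well-behaved}, and indeed the paper states this, but the proof there only establishes $-1<g_*\leq f^*<\infty$; the stronger chain is false in general (e.g.\ $T=2\,\mathrm{id}$ gives $g_*=f^*=1$). Your subcase ``$f_*\geq 0$'' then breaks, since the conclusion $\tfrac{-f_*}{1+\tau(t)f_*}\leq 0<\tfrac{-g_*}{1+\tau(t)g_*}$ needs $g_*<0$, and symmetrically the subcase ``$g^*\leq 0$'' needs $f^*>0$. The paper's longer four-case analysis avoids \eqref{eq:g0f} altogether. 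The repair on your side is immediate: Assumption~\ref{assumption:non-isometry} guarantees that \emph{at least one} of $f^*>0$ or $g_*<0$ holds (some eigenvalue differs from $1$, hence either $\sigma_{\max}(s_0)>1$ or $\sigma_{\min}(s_0)<1$), so $\max\bigl\{\tfrac{f^*}{1+\tau(t)f^*},\tfrac{-g_*}{1+\tau(t)g_*}\bigr\}>0$; in the subcases $f_*\geq 0$ and $g^*\leq 0$ compare the unwanted term with this positive maximum rather than with a single branch. With that one-line adjustment your argument is complete and cleaner than the paper's.
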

\begin{proof}
    Recall that for all $t$ we have $ 0 \leq \tau(t) \leq 1$.
    This can be seen by noting that $\tau(0) = 0$, $\tau(1) = 1$ and $\dot{\tau}(t) \geq 0$ for all $t$, as seen manifestly by the ODE.
    We start by computing the left term inside the $\max$.
    Define the parametric real valued map:
    \begin{align*}
        & \phi_\theta : [-1, \infty] \to \overline{\R} \\
        & \phi_\theta: x \mapsto \left| \frac{x}{1 + \theta x} \right|
    \end{align*}
    And notice that:
    \begin{align*}
        \sup_{s \in \Omega} \left| \frac{f(s)}{1 + \tau(t) f(s)} \right| &= \sup_{s \in \Omega} \phi_{\tau(t)}(f(s)) \, . \\
    \end{align*}
    Letting $S = \{ f(s) : s \in \Omega\}$ we can apply Lemma \ref{lemma:silly-but-useful} to obtain:
    \begin{equation*}
        \sup_{s \in \Omega} \phi_{\tau(t)}(f(s)) = \max \left\{ \frac{ \sup S }{ 1 + \tau(t) \sup S } , \frac{ - \inf S }{ 1 + \tau(t) \inf S } \right\}
    \end{equation*}
    and since $\sup S = f^*$ and $\inf S = f_*$ we conclude that
    \begin{equation*}
        \sup_{s \in \Omega} \left| \frac{f(s)}{1 + \tau(t) f(s)} \right| = \max \left\{ \frac{f^*}{1 + \tau(t) f^*} \, , \, \frac{ - f_*}{1 + \tau(t) f_*} \right\} \, .
    \end{equation*}
    Arguing similarly for $g$ we obtain:
    \begin{equation*}
        \sup_{s \in \Omega} \left| \frac{g(s)}{1 + \tau(t) g(s)} \right| = \max \left\{ \frac{g^*}{1 + \tau(t) g^*} \, , \, \frac{ - g_*}{1 + \tau(t) g_*} \right\}
    \end{equation*}
    and so we have:
    \begin{align*}
        &\max \left\{ \, \sup_{s \in \Omega} \, \left| \frac{f(s)}{1 + \tau(t) f(s)} \right| \, , \, \sup_{s \in \Omega} \, \left| \frac{g(s)}{1 + \tau(t) g(s)} \right| \right\} = \\
        &\max \left\{ \max \left\{ \frac{f^*}{1 + \tau(t) f^*} \, , \, \frac{ - f_*}{1 + \tau(t) f_*} \, \right\} \, , \, \max \left\{ \frac{g^*}{1 + \tau(t) g^*} \, , \, \frac{ - g_*}{1 + \tau(t) g_*} \, \right\} \right\} \eqqcolon M \\
    \end{align*}
    Notice the definition of $M$ above, for brevity.

    \subsection*{Case I: $g^* \geq 0$ and $f_* \leq 0$.}
    Now, since for all $s\in \Omega$ we have $ \sigma_{\max}(s) \geq \sigma_{\min}(s) \implies f(s) \geq g(s)$ and so:
    \begin{align}
        \label{eq:core-ineq-for-f-star-g-star-1}
        & f^* \geq g^* \geq 0 \\
        \label{eq:core-ineq-for-f-star-g-star-2}
        & 0 \geq f_* \geq g_*
    \end{align}
    Consider, now, the following sub-case:
    \begin{equation}
        \label{eq:sub-case-1-core-1}
        \frac{f^*}{1 + \tau(t) f^*}  \leq \frac{- f_*}{1 + \tau(t) f_*}
    \end{equation}
    which is equivalent to $\frac{1}{2} \left( - \frac{1}{f_*} - \frac{1}{f^*} \right) \leq \tau(t)$. Now due to inequalities \eqref{eq:core-ineq-for-f-star-g-star-1} - \eqref{eq:core-ineq-for-f-star-g-star-2} we have $- \frac{1}{f^*} \geq - \frac{1}{g^*}$ and $- \frac{1}{f_*} \geq - \frac{1}{g_*}$ which imply that $ \frac{1}{2} \left(- \frac{1}{g^*} - \frac{1}{g_*} \right) \leq \frac{1}{2} \left(- \frac{1}{f^*} - \frac{1}{f_*} \right) \leq \tau(t)$.
    Therefore, again by direct calculation it must be that $\frac{g^*}{1 + \tau(t) g^*}  \leq \frac{- g_*}{1 + \tau(t) g_*}$ 
    Thus, we have computed:
    \begin{equation*}
        M = \max\left\{ - \frac{f_*}{1 + \tau(t) f_*} \, , \, \frac{ - g_*}{1 + \tau(t) g_*} \right\}
    \end{equation*}
    However, by \eqref{eq:core-ineq-for-f-star-g-star-1} and without loss of generality assuming $f_* \neq 0$ we have $-\frac{1}{f_*} \geq -\frac{1}{g_*}$ so for any $t \in [0,1]$ we have
    $0 \geq - \frac{1}{f_*} - \tau(t) \geq - \frac{1}{g_*} - \tau(t)$, as $f_* \geq -1$, leading to $ - \frac{1}{ \frac{1}{f_*} + \tau(t) } \leq  - \frac{1}{ \frac{1}{g_*} + \tau(t) }$ that is:
    \begin{equation}
        \label{eq:sub-case-1-core-2}
        \frac{-f_*}{1 + \tau(t) f_*} \leq \frac{-g_*}{1 + \tau(t) g_*}
    \end{equation}
    In other words:
    \begin{equation*}
        M = \frac{-g_*}{1 + \tau(t) g_*} = \max\left\{ \frac{f^*}{1 + \tau(t) f^*} \, , \, \frac{ - g_*}{1 + \tau(t) g_*} \right\}
    \end{equation*}
    and the second equality holds by combining \eqref{eq:sub-case-1-core-1} and \eqref{eq:sub-case-1-core-2}.
    Entering in the next sub-case, suppose that:
    \begin{equation}
        \label{eq:sub-case-2-core-1}
        \frac{f^*}{1 + \tau(t) f^*}  \geq \frac{- f_*}{1 + \tau(t) f_*}
    \end{equation}
    which is equivalent to $ \tau(t) \leq \frac{1}{2} \left( - \frac{1}{f_*} - \frac{1}{f^*} \right)$. Now if, in addition, $\frac{1}{2} \left(- \frac{1}{g^*} - \frac{1}{g_*} \right) \leq \tau(t) \leq \frac{1}{2} \left(- \frac{1}{f^*} - \frac{1}{f_*} \right)$ then $\frac{g^*}{1 + \tau(t) g^*}  \leq \frac{- g_*}{1 + \tau(t) g_*}$
    and so:
    \begin{equation*}
        M = \max \left\{ \frac{f^*}{1 + \tau(t) f^*} \, , \, \frac{ - g_*}{1 + \tau(t) g_*} \right\}
    \end{equation*}
    On the other hand, if $\tau(t) \leq \frac{1}{2} \left(- \frac{1}{g^*} - \frac{1}{g_*} \right) \leq \frac{1}{2} \left(- \frac{1}{f^*} - \frac{1}{f_*} \right)$ then $\frac{-g_*}{1 + \tau(t) g_*} \leq \frac{g^*}{1 + \tau(t) g^*}$ and so:
    \begin{equation*}
        M = \max \left\{ \frac{f^*}{1 + \tau(t) f^*} \, , \, \frac{ g^* }{1 + \tau(t) g^*} \right\}
    \end{equation*}
    In this we can simplify further: due to \eqref{eq:core-ineq-for-f-star-g-star-1} we have that $\frac{1}{f^*} \leq \frac{1}{g^*}$ and so for any $t \in [0,1]$ we have $ 0 \leq \frac{1}{f^*} + \tau(t) \leq \frac{1}{g^*} + \tau(t)$, leading to:
    \begin{equation}
        \label{eq:sub-case-2-core-2}
        \frac{f^*}{1 + \tau(t) f^*} \geq \frac{g^*}{1 + \tau(t) g^*}
    \end{equation}
    Thus:
    \begin{equation*}
        M = \frac{f^*}{1 + \tau(t) f^*} = \max \left\{ \frac{f^*}{1 + \tau(t) f^*} \, , \, \frac{ -g_* }{1 + \tau(t) g_*} \right\}
    \end{equation*}
    and the second equality holds by using $\frac{-g_*}{1 + \tau(t) g_*} \leq \frac{g^*}{1 + \tau(t) g^*}$.
    
    We have proved the desired equality in an exhaustive collection of sub-cases, given $g^* \geq 0$ and $f_* \leq 0$. We can now examine the other cases.

    \subsection*{Case II: $g^* < 0$ and $f_* > 0$.}
    Notice that of $\sup_{s\in\Omega} g \eqqcolon g^* < 0$ then for all $s \in \Omega$ we have $g(s) < 0$. Similarly, $f_* > 0$ implies that $f(s) > 0$ for all $s \in \Omega$.
    Thus, recalling the function $\phi_\theta$ we have that for all $t \in [0,1]$: 
    \begin{align*}
        & \sup_{s \in \Omega} \phi_{\tau(t)}\left( f(s) \right) = \left| \frac{\sup_{s \in \Omega} f(s)}{1 + \tau(t) \sup_{s \in \Omega} f(s)} \right| = \frac{f^*}{1 + \tau(t) f^*} \\
        & \sup_{s \in \Omega} \phi_{\tau(t)}\left( g(s) \right) = \left| \frac{\inf_{s \in \Omega} g(s)}{1 + \tau(t) \inf_{s \in \Omega} g(s)} \right| = \frac{-g_*}{1 + \tau(t) g_*}
    \end{align*}
    since in the first case we will always be in the first (non-negative argument) branch of $\phi_{\tau(t)}$ and in the second case we will always be in the second (non-positive argument) branch.

    \subsection*{Case III: $g^* \geq 0$ and $f_* > 0$.}
    As above, we leverage $f_* > 0$ to compute:
    \begin{equation*}
        \sup_{s \in \Omega} \phi_{\tau(t)}\left( f(s) \right) = \frac{f^*}{1 + \tau(t) f^*}
    \end{equation*}
    On the other hand, have established that:
    \begin{equation*}
        \sup_{s \in \Omega} \phi_{\tau(t)}\left( g(s) \right) = \max \left\{ \frac{g^*}{1 + \tau(t) g^*} \, , \, \frac{-g_*}{1 + \tau(t) g_*} \right\}
    \end{equation*}
    However, in Case I we computed that $\frac{g^*}{1 + \tau(t) g^*} \leq \frac{f^*}{1 + \tau(t) f^*}$ and thus we have:
    \begin{equation*}
        M = \max\left\{ \sup_{s \in \Omega} \phi_{\tau(t)}\left( f(s) \right) \, , \, \sup_{s \in \Omega} \phi_{\tau(t)}\left( g(s) \right) \right\} = \max \left\{ \frac{f^*}{1 + \tau(t) f^*} \, , \, \frac{-g_*}{1 + \tau(t) g_*} \right\}
    \end{equation*}

    \subsection*{Case IV: $g^* < 0$ and $f_* \leq 0$.}
    As in Case III, one notes that:
    \begin{equation*}
        \sup_{s \in \Omega} \phi_{\tau(t)}\left( f(s) \right) = \max \left\{ \frac{f^*}{1 + \tau(t) f^*} \, , \, \frac{-f_*}{1 + \tau(t) f_*} \right\}
    \end{equation*}
    and:
    \begin{equation*}
        \sup_{s \in \Omega} \phi_{\tau(t)}\left( g(s) \right) = \frac{-g_*}{1 + \tau(t) g_*}
    \end{equation*}
    which leads, in the same way, to:
    \begin{equation*}
        M = \max \left\{ \frac{f^*}{1 + \tau(t) f^*} \, , \, \frac{-g_*}{1 + \tau(t) g_*} \right\}
    \end{equation*}
    This concludes the proof of the lemma.
\end{proof}
Now using Lemma \ref{lemma:triviality-of-suprema} we can recast our ODE as:
\begin{equation}
    \label{eq:ODE-path-simplified}
    \dot{\tau}(t) =  \frac{1}{Z} \, \max \left\{ \, \frac{f^*}{1 + \tau(t) f^*} \, , \, - \frac{g_*}{1 + \tau(t) g_*} \, \right\}^{-1} \, .
\end{equation}
Before proceeding with the proof of the main theorem of this section, we need one last lemma:
\begin{lemma}
    \label{lemma:transition}
    If there is some $t_0 \in [0,1]$ such that $\tau(t_0) = \tau_0 \coloneq \frac{1}{2} \left( - \frac{1}{f*} - \frac{1}{g_*} \right)$ then the ODE \eqref{eq:ODE-path-simplified} is equivalent to the system:
    \begin{equation}
        \label{eq:ODE-path-system-reformulation}
        \begin{cases}
            \dot{\tau}(t) = \frac{1}{Z} \left( \frac{1}{f^*} + \tau(t)   \right) & \text{if } \tau(t) \leq \tau_0 \\
            \dot{\tau}(t) = \frac{1}{Z} \left( - \frac{1}{g_*} - \tau(t) \right) & \text{if } \tau(t) \geq \tau_0 
        \end{cases}
    \end{equation} 
    If such $t_0$ does not exist then we have two cases: if $f^* \geq -g_*$ the solution to \eqref{eq:ODE-path-simplified} is
    \begin{equation}
        \tau(t) = \frac{ (f^* + 1)^t - 1 }{f^*} \quad \text{for all } t \in [0,1]\, ,
    \end{equation}
    else if $f^* < -g_*$ then the solution to \eqref{eq:ODE-path-simplified} is
    \begin{equation}
        \tau(t) = \frac{ (g_* + 1)^{t} - 1 }{g_*} \quad \text{for all } t \in [0,1] \, .
    \end{equation}
\end{lemma}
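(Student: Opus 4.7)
The plan is to reduce the ODE \eqref{eq:ODE-path-simplified} to a piecewise linear ODE by performing a careful case analysis of which of the two terms achieves the maximum, and then to integrate each linear piece explicitly.

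First I would identify the crossing point. A direct algebraic manipulation of $\tfrac{f^*}{1+\tau f^*}=-\tfrac{g_*}{1+\tau g_*}$ (cross-multiplying, using $1+\tau f^* > 0$ and $1+\tau g_* > 0$ on $\tau\in[0,1]$ from \eqref{eq:g0f}) gives the unique root $\tau = \tau_0 := \tfrac12\bigl(-\tfrac{1}{f^*}-\tfrac{1}{g_*}\bigr)$. Next I would record two monotonicity facts: since $f^* > 0$, the map $\tau\mapsto \tfrac{f^*}{1+\tau f^*}$ is strictly decreasing, and since $g_*\in(-1,0)$, the map $\tau\mapsto -\tfrac{g_*}{1+\tau g_*}$ is strictly increasing. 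Together with the values at $\tau=0$, namely $f^*$ and $-g_*$ respectively, these facts fully determine which branch dominates on each side of $\tau_0$.

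In the case where some $t_0\in[0,1]$ achieves $\tau(t_0)=\tau_0$, the monotonicity of $\tau$ (which follows from $\dot\tau\ge 0$, a consequence of the sign of the right-hand side of \eqref{eq:ODE-path-simplified}) lets me split $[0,1]$ into $\{\tau\le \tau_0\}$ and $\{\tau\ge\tau_0\}$. On the first region the $f^*$-term is the maximum, so its reciprocal gives exactly $\dot\tau=\tfrac{1}{Z}\bigl(\tfrac{1}{f^*}+\tau\bigr)$; on the second region the $-g_*$-term is the maximum and gives $\dot\tau=\tfrac{1}{Z}\bigl(-\tfrac{1}{g_*}-\tau\bigr)$. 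This produces the system \eqref{eq:ODE-path-system-reformulation}.

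In the case where no such $t_0$ exists, by continuity of $\tau$ together with $\tau(0)=0$ and $\tau(1)=1$ (intermediate value theorem), we must have $\tau_0\notin[0,1]$. A short sign check shows $\tau_0\ge 0\iff f^*\ge -g_*$ and $\tau_0\le 1\iff -\tfrac{1}{f^*}-\tfrac{1}{g_*}\le 2$, so the hypothesis $f^*\ge -g_*$ forces $\tau_0>1$ (and the $f^*$-term dominates on all of $[0,1]$), while $f^*<-g_*$ forces $\tau_0<0$ (and the $-g_*$-term dominates on all of $[0,1]$). In each subcase the ODE is a single first-order linear equation with constant coefficients. Solving $\dot\tau=\tfrac{1}{Z}(\tfrac{1}{f^*}+\tau)$ by the integrating factor $e^{-t/Z}$ gives $\tau(t)=\tfrac{1}{f^*}(e^{t/Z}-1)$; the boundary condition $\tau(1)=1$ fixes $e^{1/Z}=1+f^*$, yielding $\tau(t)=\tfrac{(f^*+1)^t-1}{f^*}$. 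The analogous integration for the $-g_*$ ODE yields $\tau(t)=\tfrac{(g_*+1)^t-1}{g_*}$.

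The main obstacle is bookkeeping in the dominance analysis: one must correctly chain the two monotonicity statements with the sign conditions on $f^*+g_*$ to rule out spurious sub-cases and to see that ``no $t_0$'' together with $f^*\ge -g_*$ indeed forces $\tau_0>1$ rather than $\tau_0<0$. Once this is nailed down, the remaining integrations are routine linear ODE solutions.
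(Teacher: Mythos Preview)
Your proposal is correct and follows essentially the same approach as the paper's proof: both identify the unique crossing $\tau_0$ via the opposite monotonicities of the two terms, split the ODE into two linear pieces accordingly, and in the no-crossing case solve a single linear ODE with boundary data determining $Z$. The only cosmetic difference is that the paper phrases the no-$t_0$ case as ``the branch is determined by which term is larger at $t=0$'' (i.e.\ whether $f^*\ge -g_*$), whereas you reach the same dichotomy via the intermediate value theorem and a sign check on $\tau_0$.
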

\begin{remark}
    \label{remark:integral-definition-of-Z}
    Note that the constant $Z$ is fully determined by the equation
    \begin{equation*}
        Z = \int_{0}^{1} \min\left\{ -\frac{1}{g_*} - \tau(t) \, , \, \frac{1}{f^*} + \tau(t) \right\} dt \, ,
    \end{equation*}
    which follows by integrating both sides of the ODE and using the boundary conditions $\tau(0) = 0$ and $\tau(1) = 1$.
\end{remark}
\begin{proof}
    For any $c \in \R$, note that the the function $x \mapsto c / (x + c)$ is monotonic on any (possibly unbounded) interval that does not not contain the solution to $x + c = 0$.
    By Lemma \ref{lemma:Psi-is-well-behaved} we have that $f^* \geq g_* > -1$ and so applying the above obervation, togetther with $\dot{\tau} \geq 0$ and $\tau(0) = 0$ and $\tau(1) = 1$, 
    we have that the $\max$ in the right hand side of the ODE will transition between its two arguments at most once, at some $t_0 \in [0,1]$ satisfying:
    \begin{equation}
        \label{eq:transition-time}
        \frac{f^*}{1 + \tau(t_0) \, f^*} = \frac{- g_*}{1 + \tau(t_0) \, g_*} \iff \tau(t_0) = \frac{1}{2} \left( - \frac{1}{f^*} - \frac{1}{g_*} \right)
    \end{equation} 
    Now since $0 \leq \tau(t) \leq 1$ for all $t \in [0,1]$ and we have the ordering $-1 < g_* \leq f^*$, for the above equation to have a solution it must be that $g_* \leq 0 \leq f^*$.
    This implies that for $t \leq t_0$ the $\max$ chooses the left term and for $t \geq t_0$ the $\max$ chooses the right term.
    
    If there does not exist $t_0 \in [0,1]$ satisfying \eqref{eq:transition-time} then the form of the ODE \eqref{eq:ODE-path-simplified} is determined by the relation of its two arguements at $t = 0$.
    If $f^* \geq -g_*$ then we have
    \begin{equation}
        \dot{\tau}(t) = \frac{1}{Z} \left( \frac{1}{f^*} + \tau(t)   \right) \quad \text{for all } t \in [0,1] \, ,
    \end{equation}
    which, together with the boundary conditions $\tau(0) = 0$ and $\tau(1) = 1$ has the solution
    \begin{equation}
        \tau(t) = \frac{ (f^* + 1)^t - 1 }{f^*} \, .
    \end{equation}
    One can check this explicitly, noting that in this case the constant $Z$ is determined by the equation
    \begin{equation*}
        Z = \int_{0}^{1} \left( \frac{1}{f^*} + \tau(t) \right) dt \, .
    \end{equation*}
    Similarly, if $f^* < -g_*$ then we have
    \begin{equation}
        \dot{\tau}(t) =  \frac{ (g_* + 1)^t - 1 }{g_*}  \quad \text{for all } t \in [0,1] \, .
    \end{equation}
\end{proof}
\begin{theorem*}
    Let:
    \begin{align*}
        & t_0 = \frac{\ln \left[ \frac{1}{2} \left( 1 - \frac{f^*}{g_*} \right) \right]}{ -\ln \left( g_* + 1 \right) + \ln \left[ \frac{1}{4} \left(\frac{ f^* }{ -g_* } + \frac{-g_*}{f^*} + 2  \right) \right] } \\
        & \tau(t_0) = - \frac{1}{2}\left(\frac{1}{f^*} + \frac{1}{g_*}\right)
\end{align*}
    Now if $0 \leq t_0 \leq 1$ then the solution to the ODE \eqref{eq:ODE-solution-of-optimal-problem-in-theorem} is given by:
    \begin{equation*}
        \label{eq:solution-of-L-inf-ODE}
        \tau_\infty(t) =
        \begin{cases}
            \frac{1}{f^*} \left\{ \frac{1}{4 \left( g_* + 1 \right)} \left(\frac{ f^* }{ -g_* } + \frac{-g_*}{f^*}  + 2 \right) \right\}^t - \frac{1}{f^*} \, , & \text{if } t \leq t_0 \\
            \frac{1}{2}\left( \frac{1}{g_*} - \frac{1}{f^*} \right) \left\{ \frac{ g_* + 1 }{ \frac{1}{2}\left(1 - \frac{g_*}{f^*} \right) } \right\}^t \left\{ \frac{1}{2} \left( 1 - \frac{f^*}{g_*} \right) \right\}^{1-t }  - \frac{1}{g_*} \, , & \text{if } t \geq t_0 \, .
        \end{cases}
    \end{equation*}
    Otherwise, if $t_0 \notin [0,1]$ we have two cases: if $f^* \geq -g_*$ then the solution to \eqref{eq:ODE-solution-of-optimal-problem-in-theorem} is given by
    \begin{equation*}
    \label{eq:solution-of-L-inf-ODE-simple-form}
    \tau_\infty(t) = \frac{ (f^* + 1)^t - 1 }{f^*} \, ,
    \end{equation*}
    and if $f^* < -g_*$ then the solution to \eqref{eq:ODE-solution-of-optimal-problem-in-theorem} is given by
    \begin{equation*}
    \label{eq:solution-of-L-inf-ODE-simple-form-2}
    \tau_\infty(t) = \frac{ (g_* + 1)^{t} - 1 }{g_*} \, .
    \end{equation*}
\end{theorem*}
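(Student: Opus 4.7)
The plan is to first apply Lemma \ref{lemma:triviality-of-suprema} (which was proved above and applies since $\tau_\infty \in C^1$ with the required boundary values) to eliminate the spatial suprema in \eqref{eq:ODE-solution-of-optimal-problem-in-theorem}. This recasts the ODE in the concrete form
\begin{equation*}
    \dot{\tau}_\infty(t) = \frac{1}{Z} \min\left\{ \frac{1}{f^*} + \tau_\infty(t) \, , \, -\frac{1}{g_*} - \tau_\infty(t) \right\},
\end{equation*}
replacing an opaque variational object by a comparison of two affine functions of $\tau_\infty$. The first argument of the min is strictly increasing in $\tau_\infty$ and the second is strictly decreasing, so on the range $[0,1]$ of admissible schedule values the two can cross at most once. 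Setting them equal yields exactly the candidate transition level $\tau_\infty(t_0) = -\tfrac{1}{2}(1/f^* + 1/g_*)$ stated in \eqref{eq:transition-point-in-theorem}, and this level lies in $[0,1]$ precisely when a genuine in-time transition occurs.

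Next I would split $[0,1]$ at $t_0$ (when it exists) and solve the two resulting constant-coefficient linear ODEs $\dot{\tau} = (1/Z)(1/f^* + \tau)$ on $[0,t_0]$ with $\tau(0)=0$, and $\dot{\tau} = (1/Z)(-1/g_* - \tau)$ on $[t_0,1]$ with $\tau(1)=1$. Each integrates to a simple exponential, giving
\begin{equation*}
    \tau_\infty(t) = \tfrac{1}{f^*}\bigl(e^{t/Z} - 1\bigr), \qquad t \in [0,t_0],
\end{equation*}
and an analogous backward exponential on $[t_0,1]$. The two remaining unknowns are $t_0$ and $Z$, and they are determined by enforcing continuity of $\tau_\infty$ at $t_0$ together with the matching value $\tau_\infty(t_0) = -\tfrac{1}{2}(1/f^* + 1/g_*)$ from each branch.

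The core of the argument is then algebraic: from the left branch I would read off $e^{t_0/Z} = \tfrac{1}{2}(1 - f^*/g_*)$, and from the right branch $e^{(1-t_0)/Z} = \tfrac{1}{2}(1 - g_*/f^*)/(g_*+1)$. Multiplying gives $e^{1/Z}$, which after the identity $(1-f^*/g_*)(1-g_*/f^*) = 2 - f^*/g_* - g_*/f^*$ becomes the ratio inside the denominator of \eqref{eq:transition-time-in-theorem}; dividing $\ln(e^{t_0/Z})$ by $\ln(e^{1/Z}) = 1/Z$ then yields the claimed closed form for $t_0$. Substituting $e^{1/Z}$ back into the exponential expressions for $\tau_\infty(t)$ on each interval reproduces \eqref{eq:solution-of-L-inf-ODE-main-text}. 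For the degenerate case $t_0 \notin [0,1]$, only one affine branch of the min is ever active; whether it is the $f^*$-branch or the $g_*$-branch is decided by comparing $1/f^*$ with $-1/g_*$ at $\tau = 0$, giving the single-exponential formulas \eqref{eq:solution-of-L-inf-ODE-simple-form-main-text} and \eqref{eq:solution-of-L-inf-ODE-simple-form-2-main-text} after using the boundary conditions to fix $Z$ via Remark-style integration.

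The conceptual part is essentially completed by Lemma \ref{lemma:triviality-of-suprema}; the main obstacle is not analytic but bookkeeping, namely keeping the signs straight and simplifying the product $e^{t_0/Z} \cdot e^{(1-t_0)/Z}$ into the compact logarithmic form displayed in \eqref{eq:transition-time-in-theorem}. Once that identity is noted, the remaining manipulations are routine, and a cleaner presentation is aided by separately treating the four sign regimes of $(f^*, g_*)$ to justify that $\tau_\infty$ stays in $[0,1]$ and that the constructed piecewise function is $C^1$ across $t_0$ (which follows automatically from $\dot{\tau}_\infty(t_0^-) = \dot{\tau}_\infty(t_0^+) = 1/(ZF_\infty(\tau_\infty(t_0)))$ by Lemma \ref{lemma:triviality-of-suprema}).
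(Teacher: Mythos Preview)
Your approach is essentially the paper's: invoke Lemma~\ref{lemma:triviality-of-suprema} to reduce \eqref{eq:ODE-solution-of-optimal-problem-in-theorem} to a comparison of two explicit functions of $\tau$, solve the resulting linear ODEs on $[0,t_0]$ and $[t_0,1]$, and determine $t_0$ and $Z$ by matching; the paper equates two expressions for $1/Z$ where you multiply $e^{t_0/Z}\cdot e^{(1-t_0)/Z}$, which is the same computation.

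One caveat in the degenerate case: your rewriting $\max\{A,B\}^{-1}=\min\{1/A,1/B\}$ with $A=\tfrac{f^*}{1+\tau f^*}$, $B=\tfrac{-g_*}{1+\tau g_*}$ is valid only when both $A,B>0$, i.e.\ when $f^*>0>g_*$ (which indeed holds whenever a transition exists). If, say, $f^*<0$ or $g_*>0$, one of $A,B$ is negative and the $\min$ picks the wrong branch; in particular your stated criterion ``compare $1/f^*$ with $-1/g_*$'' then disagrees with the correct condition $f^*\gtrless -g_*$ in the theorem. The paper avoids this by comparing the $\max$ arguments directly at $\tau=0$ (Lemma~\ref{lemma:transition}) rather than passing through reciprocals, so in your write-up you should either restrict the $\min$ formulation to the regime $f^*>0>g_*$ or, as the paper does, decide the active branch by the sign of $f^*-(-g_*)$.
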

\begin{proof}
    To avoid overloading notation, let $\tau$ be a solution to ODE \eqref{eq:ODE-solution-of-optimal-problem-in-theorem} (as opposed to $\tau_\infty$).
    Now, assuming there is a $t_0 \in [0,1]$ such that $\tau(t_0) = \tau_0$ we can solve the ODE \eqref{eq:ODE-solution-of-optimal-problem-in-theorem} by solving the system of ODEs \eqref{eq:ODE-path-system-reformulation} together with the consistency condition 
    \begin{equation*}
        \lim_{\delta \searrow 0}\tau(t_0 + \delta) = \lim_{\delta\searrow 0} \tau(t_0 - \delta) = \tau_0 \, .
    \end{equation*}
    We note that that $t_0$ will be uniquely determined to be the expression in the statement of the theorem. 
    For $t \in [0, t_0]$ we can use an integrating factor to recast the ODE as:
    \begin{equation*}
        \frac{d}{dt} \left( \tau(t) e^{-t/Z} \right) = \frac{ e^{-t/Z} }{Z f^*} 
    \end{equation*}
    Integrating both sides and using $\tau(0) = 0$ we have:
    \begin{equation}
        \tau(t) = \frac{e^{\frac{t}{Z}} - 1}{f^*}
    \end{equation}
    Finally, using $\tau(t_0) = \tau_0$ we can compute $Z$:
    \begin{equation}
        \label{eq:first-expression-for-Z}
        \frac{1}{Z} = \frac{1}{t_0} \ln \left( \tau_0 f^* + 1 \right)
    \end{equation}
    This yields the path:
    \begin{equation}
        \label{eq:tau-for-t-less-than-t0}
        \tau(t) = \frac{e^{\frac{t}{t_0} \ln \left( \tau_0 f^* + 1 \right) } - 1}{f^*}
    \end{equation}
    Repeating the integrating factor approach for the case $t \in [t_0, 1]$ and using $\tau(t_0) = \tau_0$ we obtain:
    \begin{equation}
        \tau(t) = \frac{e^{\frac{t_0 - t}{Z}} - 1}{g_*} + e^{\frac{t_0 - t}{Z}} \tau_0
    \end{equation}
    Using that $\tau(1) = 1$ we can compute $Z$:
    \begin{equation}
        \label{eq:second-expression-for-Z}
        \frac{1}{Z} = \frac{1}{t_0 - 1} \ln \left( \frac{g_* + 1}{ \tau_0 g_* + 1 } \right)
    \end{equation}
    This yields the path:
    \begin{equation}
        \label{eq:tau-for-t-greater-than-t0}
        \tau(t) = \frac{e^{\frac{t_0 - t}{t_0 - 1} \ln \left( \frac{g_* + 1}{ \tau_0 g_* + 1} \right) } - 1}{g_*} + e^{\frac{t_0 - t}{t_0 - 1} \ln \left( \frac{g_* + 1}{ \tau_0 g_* + 1} \right)} \tau_0
    \end{equation}
    Now equating \eqref{eq:first-expression-for-Z} and \eqref{eq:second-expression-for-Z} we can compute $t_0$:
    \begin{align}
        \label{eq:expression-for-t0-pre-sub}
        \frac{t_0 - 1}{t_0} &= \frac{\ln \left( \frac{g_* + 1}{ \tau_0 g_* + 1} \right) }{ \ln \left( \tau_0 f^* + 1 \right) } \\
        \frac{1}{t_0} &= 1 - \frac{\ln \left( \frac{g_* + 1}{ \tau_0 g_* + 1} \right) }{ \ln \left( \tau_0 f^* + 1 \right) } \\
        t_0 &= \frac{1}{1 - \frac{\ln \left( g_* + 1 \right) - \left( \tau_0 g_* + 1 \right) }{ \ln \left( \tau_0 f^* + 1 \right) }} \\
        \label{eq:expression-for-t0-pre-sub-final}
        t_0 &= \frac{\ln \left( \tau_0 f^* + 1 \right)}{ -\ln \left( g_* + 1 \right) + \ln \left( \tau_0 f^* + 1 \right) + \ln \left( \tau_0 g_* + 1 \right) }
    \end{align}
    Recalling that $\tau_0 = - \frac{1}{2}\left(1/f^* + 1/g_*\right)$ we can compute:
    \begin{align}
        \label{eq:aux-tau-expressions}
        \tau_0 f^* + 1 &= - \frac{1}{2} \left( 1 + \frac{f^*}{g_*} \right) + 1 = \frac{1}{2}\left(1 - \frac{f^*}{g_*} \right) \\
        \tau_0 g_* + 1 &= - \frac{1}{2} \left( 1 + \frac{g_*}{f^*} \right) + 1 = \frac{1}{2}\left(1 - \frac{g_*}{f^*} \right)
    \end{align}
    Combining the logarithm terms in the denominator of \eqref{eq:expression-for-t0-pre-sub-final} we conclude:
    \begin{align}
        t_0 &= \frac{\ln \left[ \frac{1}{2} \left( 1 - \frac{f^*}{g_*} \right) \right]}{ - \ln \left( g_* + 1 \right) + \ln \left[ \frac{1}{2} \left( 1 - \frac{ f^* }{ g_* }  \right) \frac{1}{2} \left( 1 - \frac{g_*}{f^*}  \right) \right] } \\
    \end{align}
    Which is the same as:
    \begin{align}
        \label{eq:expression-for-t0}
        t_0 &= \frac{\ln \left[ \frac{1}{2} \left( 1 - \frac{f^*}{g_*} \right) \right]}{ -\ln \left( g_* + 1 \right) + \ln \left[ \frac{1}{4} \left(\frac{ f^* }{ -g_* } + \frac{-g_*}{f^*}  \right) + \frac{1}{2}  \right] }
    \end{align}
        We can now substitute \eqref{eq:expression-for-t0} into the expression for \eqref{eq:tau-for-t-less-than-t0} to obtain:
    \begin{align}
        \tau(t) &= \frac{ e^{ \left\{ -\ln \left( g_* + 1 \right) + \ln \left[ \frac{1}{4} \left(\frac{ f^* }{ -g_* } + \frac{-g_*}{f^*} + 2 \right) \right]  \right\} t } - 1 }{f^*} \\
            &= \frac{\left\{ \frac{1}{4 \left( g_* + 1 \right)} \left(\frac{ f^* }{ -g_* } + \frac{-g_*}{f^*} + 2  \right) \right\}^t - 1}{f^*}
    \end{align}
    Now for \eqref{eq:tau-for-t-greater-than-t0} start by computing, for $t_0 \leq t \leq 1$, the quantity:
    \begin{align*}
        t_0 - t = \frac{ t \, \ln \left( \frac{ g_* + 1 }{ \tau_0 g_* + 1 } \right) + (1-t) \, \ln \left( \tau_0 f^* + 1 \right) }{ \ln \left[ \frac{ \left( \tau_0 f^* + 1 \right) \left( \tau_0 g_* + 1 \right)  }{g_* + 1} \right] }
    \end{align*}
    Proceed by substituting into \eqref{eq:tau-for-t-greater-than-t0} and refactoring to get:
    \begin{equation*}
        \tau(t) = \left( \frac{1}{g_*} + \tau_0 \right) e^{ t \, \ln \left( \frac{ g_* + 1 }{ \tau_0 g_* + 1 } \right) + (1-t) \, \ln \left( \tau_0 f^* + 1 \right) } - \frac{1}{g_*}
    \end{equation*}
    Finally, using the expressions \eqref{eq:aux-tau-expressions} together with the known form of $\tau_0$ and combining the logarithms in the exponential we get:
    \begin{equation*}
        \tau(t) = \frac{1}{2}\left( \frac{1}{g_*} - \frac{1}{f^*} \right) \left\{ \frac{ g_* + 1 }{ \frac{1}{2}\left(1 - \frac{g_*}{f^*} \right) } \right\}^t \left\{ \frac{1}{2} \left( 1 - \frac{f^*}{g_*} \right) \right\}^{1-t }  - \frac{1}{g_*}
    \end{equation*}
    Thus, the solution to the ODE \eqref{eq:ODE-path-system-reformulation} is:
    \begin{equation}
        \label{eq:solution}
        \tau(t) =
        \begin{cases}
            \frac{1}{f^*} \left\{ \frac{1}{4 \left( g_* + 1 \right)} \left(\frac{ f^* }{ -g_* } + \frac{-g_*}{f^*}  + 2 \right) \right\}^t - \frac{1}{f^*} \, , & \text{if } t \leq t_0 \\
            \frac{1}{2}\left( \frac{1}{g_*} - \frac{1}{f^*} \right) \left\{ \frac{ g_* + 1 }{ \frac{1}{2}\left(1 - \frac{g_*}{f^*} \right) } \right\}^t \left\{ \frac{1}{2} \left( 1 - \frac{f^*}{g_*} \right) \right\}^{1-t }  - \frac{1}{g_*} \, , & \text{if } t \geq t_0
        \end{cases}
    \end{equation}
    Lastly, if there is no $t_0 \in [0,1]$ such that $\tau(t_0) = \tau_0$ then, by Lemma \ref{lemma:transition} we can conclude.
\end{proof}

\subsection{Proof of Theorem \ref{thm:non-asymptotic-result-main-text} and Corollary \ref{corr:Lipschitz-constant-improvement} }
\label{subsec:exponential-Lipschitz-improvement}
In this subsection, we prove Theorem~\ref{thm:non-asymptotic-result-main-text} and Corollary~\ref{corr:Lipschitz-constant-improvement}.
Let $(\sigma_i(s))_i$ denote the spectrum of $\nabla T(s)$ at $s \in \Omega \subset \Rd$ that is real by Assumption \ref{assumption:regularity-of-transport}.
For each $s \in \Omega$ we write $\sigma_{\max}(s) = \max_i \sigma_i(s)$ and $\sigma_{\min}(s) = \min_i \sigma_i(s)$ as well as $f(s) = \sigma_{\max}(s) - 1$ and $g(s) = \sigma_{\min}(s) - 1$.
Moreover, we introduce the notation:
\begin{align*}
    &\sigma_{\max}^* \coloneq \sup_{s \in \Omega} \sigma_{\max}(s) \\
    &\sigma_{\min}^* \coloneq \inf_{s \in \Omega} \sigma_{\min}(s) 
\end{align*}
as well as
\begin{align*}
    &f^* \coloneq \sigma_{\max}^* - 1 \\
    &g_* \coloneq \sigma_{\min}^* - 1 \, .
\end{align*}
Lastly, for real valued functions $\alpha, \beta: \R^2 \to \R$ write $\alpha \asymp \beta$ if there are constants $c_1, c_2 > 0$ and some $y \in \R^2$ such that $c_1 \beta(x) \leq \alpha(x) \leq c_2 \beta(x)$ for all $x_i \geq y_i$ and $i \in \{1,2\}$.
Now we can proceed with the proof of Theorem \ref{thm:non-asymptotic-result-main-text}.
\begin{theorem*}
    For $\Lambda$ as in equation \eqref{eq:lip-jacobian-schedule} and $\bar{\tau}$ the trivial schedule $\bar{\tau}: t \mapsto t$ we have
    \begin{equation*}
        \Lambda[\bar{\tau}] = \max \left\{ \sigma_{\max}^* - 1 \, , \, \frac{1 - \sigma^*_{\min}}{\sigma^*_{\min}} \right\} \, .
    \end{equation*}
    Now if $\tau_\infty$ is the optimal schedule, i.e., the minimizer of problem \eqref{eq:opt-problem} obtained as a solution of ODE \eqref{eq:ODE-solution-of-optimal-problem-in-theorem},  we have the following cases:
    If there exists $t_0 \in [0,1]$ such that $\tau_\infty(t_0) = - \frac{1}{2}\left( \frac{1}{\sigma_{\max}^* - 1} + \frac{1}{{\sigma_{\min}^* - 1}} \right)$ then
    \begin{align*}
      \Lambda[\tau_\infty] = & \ln\left[\frac{\sigma^*_{\max}-1}{\sigma^*_{\min}}\right] + 
                        \ln\left[\frac{1}{4}\left(\frac{1}{1 - \sigma_{\min}^*} +\frac{1 - \sigma_{\min}^*}{(\sigma^*_{\max}-1)^2} + \frac{2}{\sigma_{\max}^* -1} \right)\right] \, .
    \end{align*}
    Otherwise, we have
    \begin{equation*}
        \Lambda[\tau_\infty] = 
        \begin{cases}
            \ln \left( \sigma^*_{\max} \right) & \text{if } \sigma_{\max}^* + \sigma_{\min}^* \geq 2 \\
            -\ln \left( \sigma^*_{\min}\right) & \text{if } \sigma_{\max}^* + \sigma_{\min}^* < 2 \, .
        \end{cases}
    \end{equation*}
    \end{theorem*}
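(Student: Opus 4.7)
The plan is to compute $\Lambda[\bar{\tau}]$ and $\Lambda[\tau_\infty]$ by directly evaluating the explicit formula \eqref{eq:reformulated-Lambda} given in Theorem \ref{thm:reformulate-opt-problem}. Throughout I will use Lemma \ref{lemma:triviality-of-suprema} to reduce the spatial supremum to a $\max$ involving only the extreme eigenvalues $f^* = \sigma^*_{\max}-1$ and $g_* = \sigma^*_{\min}-1$.

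For $\Lambda[\bar\tau]$, the trivial schedule gives $\dot{\bar\tau}\equiv 1$, so the objective reduces to
\begin{equation*}
\Lambda[\bar\tau] = \sup_{t\in[0,1]} \max\!\left\{\frac{f^*}{1+tf^*},\ \frac{-g_*}{1+tg_*}\right\}.
\end{equation*}
Since $f^*>0$, the first map $t\mapsto f^*/(1+tf^*)$ is non-increasing in $t$ and attains its maximum $f^* = \sigma^*_{\max}-1$ at $t=0$. Since $-1<g_*<0$, the denominator $1+tg_*$ decreases strictly in $t$ while remaining positive, so $t\mapsto -g_*/(1+tg_*)$ is non-decreasing and attains its maximum $-g_*/(1+g_*) = (1-\sigma^*_{\min})/\sigma^*_{\min}$ at $t=1$. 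Taking the $\max$ gives the stated expression for $\Lambda[\bar\tau]$.

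For $\Lambda[\tau_\infty]$, the key observation is that the ODE \eqref{eq:ODE-solution-of-optimal-problem-in-theorem} defining $\tau_\infty$ is precisely the statement that
\begin{equation*}
\dot\tau_\infty(t)\ \max\!\left\{\frac{f^*}{1+\tau_\infty(t)f^*},\ \frac{-g_*}{1+\tau_\infty(t)g_*}\right\}\equiv \frac{1}{Z}
\end{equation*}
is constant in $t$, after applying Lemma \ref{lemma:triviality-of-suprema} to collapse the spatial sup. Hence $\Lambda[\tau_\infty] = 1/Z$, and the whole problem reduces to computing the normalization constant $Z$ in closed form.

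To evaluate $1/Z$, I will use the closed-form expression for $\tau_\infty$ from Theorem \ref{thm:solution-of-L-inf-ODE-main-text}. In the generic case where a transition point $t_0\in[0,1]$ exists, integrating $\dot\tau_\infty=\tfrac{1}{Z}(\tfrac{1}{f^*}+\tau_\infty)$ on $[0,t_0]$ with $\tau_\infty(0)=0$ yields $\tau_\infty(t)=\tfrac{1}{f^*}(e^{t/Z}-1)$; matching this against $\tau_\infty(t_0)=-\tfrac{1}{2}(1/f^*+1/g_*)$ gives $t_0/Z=\ln[\tfrac12(1-f^*/g_*)]$. Similarly, integrating $\dot\tau_\infty=\tfrac{1}{Z}(-\tfrac{1}{g_*}-\tau_\infty)$ on $[t_0,1]$ with the same matching value and $\tau_\infty(1)=1$ gives $(1-t_0)/Z = \ln[(f^*-g_*)/(2f^*(g_*+1))]$. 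Adding these yields
\begin{equation*}
\frac{1}{Z}=\ln\!\left[\frac{(f^*-g_*)^2}{-4\,f^*g_*(g_*+1)}\right],
\end{equation*}
and a short algebraic manipulation (expanding $(f^*-g_*)^2 = (f^*)^2 + g_*^2 - 2f^*g_*$ and factoring an $f^*/(g_*+1)$) shows this equals the displayed formula \eqref{eq:generic-formula-non-asymptotic-result}. In the two degenerate sub-cases, $\tau_\infty$ is a single exponential; substituting into the ODE at $t=0$ directly reads off $1/Z = \ln(f^*+1)=\ln\sigma^*_{\max}$ when $f^*\geq -g_*$ (equivalently $\sigma^*_{\max}+\sigma^*_{\min}\geq 2$) and $1/Z=-\ln(g_*+1)=-\ln\sigma^*_{\min}$ otherwise.

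The only mildly delicate step is the algebraic identity $\tfrac{1}{-g_*}+\tfrac{-g_*}{(f^*)^2}+\tfrac{2}{f^*}=\tfrac{(f^*-g_*)^2}{-g_*(f^*)^2}$ needed to match the two forms of the answer; this is routine but easy to miscopy. Nothing deeper is required—both halves reduce to plugging closed-form schedules into closed-form objectives.
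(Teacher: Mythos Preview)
Your proposal is correct and follows essentially the same approach as the paper: both reduce $\Lambda[\tau]$ via Lemma~\ref{lemma:triviality-of-suprema} to an expression in $f^*,g_*$, use monotonicity to evaluate $\Lambda[\bar\tau]$ at the endpoints, observe that the ODE for $\tau_\infty$ makes the integrand identically $1/Z$, and then compute $Z$ from the closed-form solution of Theorem~\ref{thm:solution-of-L-inf-ODE-main-text}. Your way of obtaining $1/Z$ by summing $t_0/Z$ and $(1-t_0)/Z$ is algebraically equivalent to the paper's substitution of \eqref{eq:expression-for-t0-pre-sub-final} into \eqref{eq:first-expression-for-Z}; the only cosmetic difference is that the paper is slightly more careful about not assuming $f^*>0>g_*$ in the $\Lambda[\bar\tau]$ computation (handling both signs of $f^*$ and $g_*$ explicitly), whereas you invoke these sign conditions directly---but since the monotonicity of $t\mapsto f^*/(1+tf^*)$ and $t\mapsto -g_*/(1+tg_*)$ holds regardless of sign, your conclusions are unaffected.
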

\begin{proof}
    Recall the notation $\Omega_{[0,1]} = \Omega \times [0,1]$.
    By Theorem \eqref{thm:reformulate-opt-problem} for any $\tau \in \mathcal{T}_\infty$ we can re-write $\Lambda[\tau]$ as:
    \begin{align*}
        \Lambda[\tau] = \sup_{ (x,t) \in \Omega_{[0,1]} } |\dot{\tau}(t)| \, \max \Bigg\{ 
            \left| \frac{f(x)}{1 + \tau(t) f(x)} \right| \, , \,
            \left| \frac{g(x)}{1 + \tau(t) g(x)} \right| 
        \Bigg\}.
    \end{align*}
    Using Lemma \ref{lemma:exchange-max-sup} we can push the $\sup$ over $s \in \Omega$ inside the $\max$ and then apply Lemma \ref{lemma:triviality-of-suprema} to conclude:
    \begin{align}
        \label{eq:expression-for-Lambda-easy}
        \Lambda[\tau] = \sup_{t \in [0,1]} \dot{\tau}(t) \, \max \left\{ \frac{f^*}{1 + \tau(t) f^*} \, , \, \frac{ - g_*}{1 + \tau(t) g_*} \, \right\}
    \end{align}
    where we have used that all $\tau \in \mathcal{T}_\infty$ are non-decreasing.

    Now take $\tau = \bar{\tau}$ where $\bar{\tau}(t) = t$ is the trivial schedule.
    Then, we have:
    \begin{align*}
        \Lambda[\bar{\tau}] &= \sup_{t \in [0,1]} \max \left\{ \frac{f^*}{1 + t \, f^*} \, , \, \frac{-g_*}{1 + t \, g_*} \right\}  \\ 
        &=  \max \left\{ \max \Big\{ f^* \, , \, \frac{ f^*}{1 + f^*}  \Big\} \, , \, \max\left\{  -g_* \, , \, \frac{ -g_*}{1 + g_*} \right\} \right\} \\ 
        &=  \max \left\{ f^* \, , \, \frac{ -g_*}{1 + g_*}  \right\} \\ 
        &=  \max \left\{ \sigma_{\max}^* - 1 , \frac{1-\sigma_{\min}^*}{\sigma_{\min}^*} \right\}
    \end{align*}
    where in the second equality we have used the obvious monotonicity of the function $t \mapsto a / (t + a)$ for $a \in \R$ and in the third equality we argue as follows:
    recall that
    \begin{equation*}
        f^* \geq g_* > -1
    \end{equation*}
    by Lemma \ref{lemma:Psi-is-well-behaved}. If $f^* \geq 0$ then we clearly have $f^* \geq f^* / (1 + f^*)$ and if $f^* \leq 0$ then $f^* / (1 + f^*) \leq f^*$ since $1/(1+f^*) > 1$.
    A similar argument shows that the second nested $\max$ reduces to $-g/(1+g)$.

    Now for $\tau = \tau_\infty$ the optimal path, recall that by Theorem \ref{thm:solution-of-L-inf-ODE-main-text} the form of $\tau_\infty$ depends on the existence of a solution $t_0$ to the equation:
    \begin{equation}
        \label{eq:transition-time-exists}
        \tau_\infty(t_0) = - \frac{1}{2}\left(\frac{1}{f^*} + \frac{1}{g_*}\right) \quad \text{for} \quad t_0 \in [0,1]
    \end{equation}
    This solution will be called the \textit{transition time}.
    \subsection*{Existence of a transition time $t_0$}
    Assume there is a transition time $t_0 \in [0,1]$ as stated above.
    We claim that \eqref{eq:transition-time-exists} implies that $f^* \geq 0 \geq g_*$.
    Recall that by Lemma \ref{lemma:Psi-is-well-behaved} we have that $f^* \geq g_* > -1$.
    Now if $f^* \geq g_* \geq 0$ then \eqref{eq:transition-time-exists} cannot have a solution since the left hand side is non-negative and the right hand side is negative.
    On the other hand, if $0 \geq f^* \geq g_* > -1$ then $-1 > 1/f^* , 1/g_*$ and so the right hand side of \eqref{eq:transition-time-exists} is greater then one whereas $\tau_\infty(t) \leq 1$ for all $t \in [0,1]$.

    With this information we can use the monotonicity of $\tau_\infty$ to write:
    \begin{equation}
        \label{eq:expression-for-tau-infty-cases-explicit}
        \dot{\tau}_\infty(t) = 
        \begin{cases}
            \frac{1}{Z} \, \frac{1 + f^* \tau_\infty(t)}{f^*}   & \text{if } t \leq t_0 \\
            \frac{1}{Z} \,  \frac{1 + g_* \tau_\infty(t)}{- g_*} & \text{if } t \geq t_0
        \end{cases}
    \end{equation}
    and $Z$ is defined by the integral expression in remark \ref{remark:integral-definition-of-Z}.
    We can now combine \eqref{eq:expression-for-Lambda-easy} with \eqref{eq:expression-for-tau-infty-cases-explicit} to obtain:
    \begin{align*}
        \sup_{s \in \Omega} \Lambda[\tau_\infty](s,t) &= 
        \begin{cases}
            \frac{ \dot{\tau}_\infty(t) \, f^*}{1 + \tau(t) \, f^*} & \text{if } t \leq t_0 \\
            \frac{ \dot{\tau}_\infty(t) \, (-g_*)}{1 + \tau(t) \, g_*} & \text{if } t \geq t_0
        \end{cases} \\
        & =
        \begin{cases}
            \frac{1}{Z} \, \frac{1 + f^* \tau(t)}{f^*} \, \frac{f^*}{1 + \tau(t) \, f^*} & \text{if } t \leq t_0 \\
            \frac{1}{Z} \, \frac{1 + g_* \tau(t)}{- g_*} \, \frac{-g_*}{1 + \tau(t) \, g_*} & \text{if } t \geq t_0
        \end{cases} \\
        & = \frac{1}{Z}
    \end{align*}  
    Now combining equation \eqref{eq:first-expression-for-Z} with equation \eqref{eq:expression-for-t0-pre-sub-final}, both part of the proof of Theorem \ref{thm:solution-of-L-inf-ODE-main-text}, we obtain an explicit expression for $Z$:
    \begin{align*}
        \frac{1}{Z} &= \frac{1}{t_0} \ln \left( \tau_0 f^* + 1 \right) \\
        &= - \ln (g_* + 1) + \ln \left[ \frac{1}{4} \left(\frac{ f^* }{ -g_* } + \frac{-g_*}{f^*} + 2 \right) \right]
    \end{align*}
    Re-writing this in terms of $\sigma^*_{\min}$ and $\sigma^*_{\max}$ we can conclude.

    \subsection*{Non-existence of a transition time $t_0$}
    Assume, first, that $f^* \geq -g_*$.
    By Lemma \ref{lemma:transition} the solution to the ODE is given by:
    \begin{equation}
        \tau(t) = \frac{ (f^* + 1)^t - 1 }{f^*}
    \end{equation}
    and we can compute:
    \begin{align}
        \dot{\tau}(t) &= \ln (f^* + 1) \, \frac{(f^* + 1)^t}{f^*} \\
                &= \ln (f^* + 1) \, \left( \tau(t) + \frac{1}{f^*} \right) \\
                &= \ln (f^* + 1) \, \frac{ 1 + \tau(t) f^* }{f^*}
    \end{align}
    Moreover:
    \begin{equation}
        \max \left\{ \, \frac{f^*}{1 + \tau(t) f^*} \, , \, \frac{-g_*}{1 + \tau(t) g_*} \, \right\} = \frac{f^*}{1 + \tau(t) f^*}
    \end{equation}
    as the two expressions in the $\max$ can never match, given that $t_0 \notin [0,1]$, and so their relation for all $t$ is determined by their relation at $t = 0$. 
    Thus:
    \begin{equation*}
        \sup_{s \in \Omega} \Lambda[\tau_\infty](s,t) = \ln (f^* + 1) = \ln \left( \sigma^*_{\max} \right)
    \end{equation*}
    Similarly, if $f^* < -g_*$, we obtain:
    \begin{equation*}
        \sup_{s \in \Omega} \Lambda[\tau_\infty](s,t) = -\ln (g_* + 1) = -\ln \left( \sigma^*_{\min} \right) \, .
    \end{equation*}
    Finally, writing:
    \begin{equation*}
        f^* \geq -g_* \iff \sigma_{\max}^* - 1 \geq 1 - \sigma_{\min}^* \iff \sigma_{\max}^* + \sigma_{\min}^* \geq 2
    \end{equation*}
    we can conclude.
\end{proof}

\begin{corollary*}
    Viewing $\Lambda[\tau_\infty]$ and $\Lambda[\bar{\tau}]$ as functions of $\sigma^*_{\max}$ and $1/\sigma^*_{\min}$ we have:
    \begin{align*}
        \Lambda[\tau_\infty] &\asymp \ln  \frac{\sigma^*_{\max}}{\sigma^*_{\min}} \\
        \Lambda[\bar{\tau}]  &\asymp \max \left\{ \sigma_{\max}^* \, , \, \left( \sigma_{\min}^* \right)^{-1} \right\} 
    \end{align*}
\end{corollary*}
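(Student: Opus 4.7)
Both claims follow from direct asymptotic analysis of the closed-form expressions in Theorem \ref{thm:non-asymptotic-result-main-text}, viewed as functions of the two ``large'' variables $x_1 \coloneq \sigma^*_{\max}$ and $x_2 \coloneq (\sigma^*_{\min})^{-1}$.

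For the first claim, Theorem \ref{thm:non-asymptotic-result-main-text} gives $\Lambda[\bar\tau] = \max\{x_1 - 1, x_2 - 1\}$. The elementary inequality $a/2 \leq a - 1 \leq a$ for $a \geq 2$, applied to each argument of the max, yields $\tfrac12 \max\{x_1, x_2\} \leq \Lambda[\bar\tau] \leq \max\{x_1, x_2\}$ on the region $\{x_1 \geq 2\} \cap \{x_2 \geq 2\}$. This is precisely $\Lambda[\bar\tau] \asymp \max\{x_1, x_2\}$ with $y = (2,2)$, $c_1 = 1/2$, $c_2 = 1$.

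For the second claim I would first check that, for $x_1, x_2$ large, the transition time $t_0$ of \eqref{eq:transition-time-in-theorem} lies in $[0,1]$, so that the formula \eqref{eq:generic-formula-non-asymptotic-result} applies. In this regime $f^* = x_1 - 1 > 0$, $g_* = 1/x_2 - 1 \in (-1, 0)$, and $f^* \geq -g_*$, which makes both the numerator and the denominator of $t_0$ positive; a leading-order estimate (as in the ill-conditioned example of Section \ref{sec:examples}) shows their ratio is strictly less than $1$. I would then split \eqref{eq:generic-formula-non-asymptotic-result} as $\Lambda[\tau_\infty] = A + R$, with $A = \ln[(x_1 - 1)\, x_2]$ and $R$ the second logarithm. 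The bound $\ln(a-1) = \ln a + O(1)$ for $a \geq 2$ gives $A = \ln x_1 + \ln x_2 + O(1)$. A direct inspection shows that the argument of the outer logarithm in $R$ is uniformly bounded above and below by positive constants on $\{x_1 \geq 3\} \cap \{x_2 \geq 2\}$, because $1/(1 - 1/x_2) \in (1, 2]$ while the remaining two terms in the argument of $R$ vanish as $x_1 \to \infty$. Hence $R = O(1)$ and $\Lambda[\tau_\infty] = \ln x_1 + \ln x_2 + O(1)$.

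To conclude, I would note that $\ln x_1 + \ln x_2 = \ln \sigma^*_{\max} - \ln \sigma^*_{\min}$ grows without bound as either $x_1$ or $x_2$ grows. Since the correction is $O(1)$, it is dominated by any fixed fraction of the leading term once $x_1, x_2$ exceed a threshold $y$. This yields constants $c_1, c_2 > 0$ and $y \in \R^2$ with $c_1(\ln \sigma^*_{\max} - \ln \sigma^*_{\min}) \leq \Lambda[\tau_\infty] \leq c_2(\ln \sigma^*_{\max} - \ln \sigma^*_{\min})$ for all $x_i \geq y_i$. The main delicate step will be calibrating this threshold $y$ to simultaneously ensure (i) $t_0 \in [0,1]$ so that the closed-form formula is valid, and (ii) that the additive $O(1)$ remainders are absorbed into the multiplicative constants $c_1, c_2$; both requirements are met by taking $y_1, y_2$ sufficiently large.
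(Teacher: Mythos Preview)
Your proposal is correct and follows essentially the same approach as the paper: direct asymptotic analysis of the closed-form expressions from Theorem \ref{thm:non-asymptotic-result-main-text}, first isolating the leading term $\ln[(\sigma^*_{\max}-1)/\sigma^*_{\min}]$ and then bounding the remaining logarithm by an $O(1)$ constant on the regime where both $\sigma^*_{\max}$ and $1/\sigma^*_{\min}$ are large. The one minor difference is how you verify that the transition-time case of the theorem applies: you argue via the explicit formula \eqref{eq:transition-time-in-theorem} and a leading-order estimate of the ratio, whereas the paper takes the slightly cleaner route of observing that $-\tfrac12\bigl(\tfrac{1}{f^*}+\tfrac{1}{g_*}\bigr)\to\tfrac12\in(0,1)$ in the double limit, so by continuity of $\tau_\infty$ and the intermediate value theorem the equation $\tau_\infty(t_0)=-\tfrac12\bigl(\tfrac{1}{f^*}+\tfrac{1}{g_*}\bigr)$ has a solution in $[0,1]$.
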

\begin{proof}
    The proof is a computation. Let us start with $\Lambda[\bar{\tau}]$. For small enough $\sigma_{\min}^*$ and large enough $\sigma_{\max}^*$ we have:
    \begin{align}
        \label{eq:small-lambda-min-large-lambda-max-1}
        & 1 / 2 \leq \frac{\sigma_{\max}^* - 1}{\sigma^*_{\max}} \leq 2 \\
        \label{eq:small-lambda-min-large-lambda-max-2}
        & 1 / 2 \leq 1 - \sigma_{\min}^* \leq 2 
    \end{align}
    so:
    \begin{align*}
        \Lambda[\bar{\tau}] &= \max \left\{ \frac{\sigma_{\max}^* - 1}{\sigma_{\max}^*} \sigma_{\max}^* \, , \, \frac{1 - \sigma_{\min}^*}{\sigma_{\min}^*} \right\} \\
    \end{align*}
    which allows us to conclude:
    \begin{align*}
        \frac{1}{4} \, \max \left\{ \sigma_{\max}^* \, , \, \left( \sigma_{\min}^* \right)^{-1} \right\}  \leq \Lambda[\bar{\tau}] \leq 4 \, \max \left\{ \sigma_{\max}^* \, , \, \left( \sigma_{\min}^* \right)^{-1} \right\} 
    \end{align*}

    For $\Lambda[\tau_\infty]$, start by observing that under the double limit $\sigma^*_{\min} \to 0$ and $\sigma^*_{\max} \to \infty$, due to the continuity of $\tau$, the following equation has a solution:
    \begin{equation*}
        \tau(t) = - \frac{1}{2}\left( \frac{1}{f^*} + \frac{1}{g_*} \right) \quad \text{some} \quad t \in [0,1]
    \end{equation*}
    Thus, we can focus out attention to equation \eqref{eq:generic-formula-non-asymptotic-result} of Theorem \ref{thm:non-asymptotic-result-main-text}.
    Under equations \eqref{eq:small-lambda-min-large-lambda-max-1} - \eqref{eq:small-lambda-min-large-lambda-max-2} and by a similar algebraic manipulation as above we can find a real number $C > 0$ such that
    \begin{equation*}
        \ln \frac{\sigma_{\max}^*}{\sigma_{\min}^*} - \ln C \leq \Lambda[\tau_\infty] \leq \ln \frac{\sigma_{\max}^*}{\sigma_{\min}^*} + \ln C \, ,
    \end{equation*}
    and noting that $\sigma_{\max}^* / \sigma_{\min}^* \to \infty$ we can conclude
    \begin{equation*}
        \ln \frac{\sigma_{\max}^*}{\sigma_{\min}^*} \left( 1 - \ln C \right) \leq \Lambda[\tau_\infty] \leq \ln \frac{\sigma_{\max}^*}{\sigma_{\min}^*} \left( 1 + \ln C \right) \, .
    \end{equation*}
\end{proof}

\section{Example \ref{subsec:univariate-gaussian} and unbounded domains}
\label{sec:appendix-b}
In this appendix, we discuss example \ref{subsec:univariate-gaussian}, particularly with regards to the difficulty presented by working on $\R$, an unbounded domain. 
To apply the results of section \ref{sec:main-results} we need to work in a compact, convex $\Omega \subset \R$. As such, we instead choose to solve an approximate problem.
To that end consider:
\begin{definition}
    \label{def:epsilon-transport}
    For a subset $\Omega \subset \Rd$, we say that a measurable map $T:\Omega \to \R$ is an $\epsilon$-transport of the measure $\mu \in \mathcal{P}(\Omega)$ to the measure $\nu \in \mathcal{P}(\Omega)$ if for all Borel sets $A \subset \Omega$ we have:
    \begin{equation*}
        \left| T_{\sharp} \mu(A) - \nu(A) \right| < \epsilon
    \end{equation*}
\end{definition} 
Note that this is definitionally equivalent to saying that the \textit{total variation distance} of $T_{\sharp} \mu$ and $\nu$ is less than $\epsilon$. 

Let $\mu_1$ and $\mu_2$ be as in example \ref{subsec:univariate-gaussian}. First, we choose $\Omega = [-M, M]$ for some $M > 0$ large enough.
Instead of transporting $\mu_1$ to $\mu_2$ we will $\epsilon$-transport $\hat{\mu}_1 = C_1 \, \mu_1 \, \id_{\Omega}$ to $\hat{\mu}_2 = C_2 \, \mu_2 \, \id_{\Omega}$ and the $C_i$ are normalizing constants. 
Since $\hat{\mu}_i$ can be made arbitrarily close to $\mu_i$, for $i \in \{1,2\}$, this will be good enough for our purposes.
In proposition \ref{prop:truncated-transport-for-1D-Gaussians} we show that this can be achieved by the optimal transport map $T$ between $\mu_1$ and $\mu_2$, given by the formula $T(x) = \mu_2 + \frac{\theta_2}{\theta_1}(x - \mu_1)$.
This justifies optimizing the schedule of the linear linear flow $X(\cdot, t) = (1-t)\,\textup{id} + t \, T$ and $T$ is as above.
\begin{proposition}
    \label{prop:truncated-transport-for-1D-Gaussians}
    For $i \in \{1,2\}$ let $\mu_i = \mathcal{N}(\mu_i, \sigma_i^2)$ be Gaussian distributions $\R$ with $\sigma_i > 0$ and $\mu_i \in \R$ and let $T$ be the (unique) optimal transport map
    coupling $\mu$ to $\nu$.
    For any $\epsilon > 0$ there exists some $M > 0$ such that the truncations $\hat{\mu}_i = C_i \, \mu_i \, \id_{[-M, M]}$ are such that $T$ is an $\epsilon$-transport (see definition \ref{def:epsilon-transport}) from $\hat{\mu}_1$ to $\hat{\mu}_2$.
\end{proposition}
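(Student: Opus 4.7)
The plan is to reduce the $\epsilon$-transport condition to two controllable sources of error: first, the discrepancy between the truncation window $I_M := [-M,M]$ on the source side and its image $J_M := T(I_M)$ under the affine transport map $T(x) = \mu_2 + \tfrac{\theta_2}{\theta_1}(x - \mu_1)$; and second, the discrepancy between the normalization constants $C_1$ and $C_2$. Both can be made arbitrarily small by Gaussian tail bounds.

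The first step is to set $p_i(M) := \mu_i(I_M)$, noting that $C_i = 1/p_i(M)$, and invoke the standard Gaussian tail estimate to obtain $1 - p_i(M) = \mathcal{O}(e^{-c_i(M - |\mu_i|)^2})$ for constants $c_i > 0$ and $M$ large. In particular, $C_1, C_2 \to 1$, so $|C_1 - C_2| \to 0$ as $M \to \infty$.

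The second step is the key identity. Since $T$ is an affine bijection of $\R$ and $T_\sharp \mu_1 = \mu_2$ holds as measures on the entire line, for any Borel $A \subset \R$,
\[
T_\sharp \hat{\mu}_1(A) = C_1 \mu_1\bigl(T^{-1}(A) \cap I_M\bigr) = C_1 \mu_2(A \cap J_M),
\]
while $\hat{\mu}_2(A) = C_2 \mu_2(A \cap I_M)$. A triangle inequality (adding and subtracting $C_1 \mu_2(A \cap I_M)$) then yields the uniform-in-$A$ bound
\[
\bigl| T_\sharp \hat{\mu}_1(A) - \hat{\mu}_2(A) \bigr| \leq C_1 \mu_2(I_M \triangle J_M) + |C_1 - C_2|.
\]
Since $J_M$ is the interval with endpoints $T(\pm M)$, both of which diverge to $\pm\infty$ as $M \to \infty$, the symmetric difference $I_M \triangle J_M$ is contained in the union of two intervals whose endpoints escape to $\pm\infty$. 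Hence $\mu_2(I_M \triangle J_M) \to 0$ either by dominated convergence or by direct Gaussian tail estimates on each of the two escaping intervals.

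Combining the two steps, we choose $M$ large enough that both $\mu_2(I_M \triangle J_M)$ and $|C_1 - C_2|$ are less than $\epsilon / 4$; since $C_1$ is uniformly bounded (by say $2$) for $M$ large, the displayed bound is at most $\epsilon$, and we may take the supremum over $A$. There is no real obstacle: the proof is essentially a bookkeeping exercise, and the only subtlety is to observe that the estimate is uniform in $A$, which is immediate from the explicit bound above.
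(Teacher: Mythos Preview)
Your proposal is correct and follows essentially the same approach as the paper: both decompose the error into a normalization-constant term $|C_1 - C_2|$ and a window-mismatch term controlled by Gaussian tails, then choose $M$ large. The only cosmetic difference is that the paper passes through the explicit density of $T_\sharp \hat{\mu}_1$ via the change-of-variables formula and splits the mismatch into two separate tail integrals (over $\R\setminus I_M$ and $\R\setminus J_M$, each bounded by $\epsilon/3$), whereas you work directly at the measure level using $T_\sharp \mu_1 = \mu_2$ and package the mismatch as $\mu_2(I_M \triangle J_M)$; your route is marginally cleaner but the content is the same.
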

\begin{proof}
    Start by noting that $C_i$ ensures that $\hat{\mu}_i$ integrates to one and so:
    \begin{equation*}
        C_i = \frac{1}{\Phi\left( \frac{M - \mu_i}{\sigma_i} \right) - \Phi\left( \frac{-M - \mu_i}{\sigma_i} \right)}
    \end{equation*}
    and $\Phi$ is the cumulative distribution function of the standard normal distribution.
    Recall that the transport map $T$ is affine:
    \begin{equation*}
        T(x) = \mu_2 + \frac{\theta_2}{\theta_1}(x - \mu_1)
    \end{equation*}
    and choose $M$ large enough such that:
    \begin{align*}
        \left| C_1 -  C_2 \right| &< \epsilon / 3 \quad \text{ and } \\
        \int_{\R \setminus [-M, M]} \frac{C_1}{\sqrt{2\pi} \theta_2} \exp\left( - \frac{(x - \mu_2)^2}{2\theta_2^2} \right) \, dx &< \epsilon / 3 \quad \text{ and } \\
        \int_{\R \setminus [T(-M), T(M)]} \frac{C_1}{\sqrt{2\pi} \theta_2} \exp\left( - \frac{(x - \mu_2)^2}{2\theta_2^2} \right) \, dx &< \epsilon / 3 \, . \\
    \end{align*}
    We claim that the density of $T_{\sharp} \hat{\mu}_1$ satisfies:
    \begin{equation*}
        \frac{ d \left(T_{\sharp} \hat{\mu}_1 \right) }{d x} = C_1 \, \frac{1}{\sqrt{2\pi} \theta_2} \exp\left( - \frac{(x - \mu_2)^2}{2\theta_2^2} \right) \id_{[T(-M), T(M)]}
    \end{equation*}
    which follows by\footnote{
        Note that we also used the monotonicity of $T$ to write:
        \begin{equation*}
            \id_{\left[ -M, M \right] } \left(T^{-1}(x)\right) = \id_{ \left[ T(-M), T(M) \right] }(x)
        \end{equation*}
    } the change of variables formula:
    \begin{equation*}
        \frac{ d \left(T_{\sharp} \hat{\mu}_1 \right) }{d x} = \frac{1}{\det \nabla T\left( T^{-1}(x)\right)} \frac{d \hat{\mu}_1}{dx} \left(T^{-1}(x)\right)
    \end{equation*}
    as seen for example in section 1.6. of \cite{figalli2023invitation}.
    Thus, for any Borel $A \subset [-M, M]$ we have:
    \begin{align*}
        | T_{\sharp} \hat{\mu}_1(A) - \hat{\mu}_2(A) | &\leq \int_{A} \left| \frac{ d \left(T_{\sharp} \hat{\mu}_1 \right) }{d x} - \frac{ d \hat{\mu}_2 }{d x} \right| \, dx \\
        & = \int_{A} \left| C_1 \, \id_{[T(-M), T(M)]}  - C_2 \,  \id_{[-M, M]} \right| \, \frac{1}{\sqrt{2\pi} \theta_2} \exp\left( - \frac{(x - \mu_2)^2}{2\theta_2^2} \right) \, dx  \\
        & \leq \int_{\R} \left| C_1 - C_2 \right| \, \frac{1}{\sqrt{2\pi} \theta_2} \exp\left( - \frac{(x - \mu_2)^2}{2\theta_2^2} \right) \, dx \\
        & + \int_{\R \setminus [-T(M), T(M)]} \frac{C_1}{\sqrt{2\pi} \theta_2} \exp\left( - \frac{(x - \mu_2)^2}{2\theta_2^2} \right) \, dx \\
        & + \int_{\R \setminus [-M, M]} \frac{C_1}{\sqrt{2\pi} \theta_2} \exp\left( - \frac{(x - \mu_2)^2}{2\theta_2^2} \right) \, dx \\
        & < \epsilon/3 + \epsilon/3 + \epsilon/3 \\ 
        & = \epsilon
    \end{align*}

\end{proof}

\end{document}